\providecommand{\customgenericname}{}
\newcommand{\newcustomtheorem}[2]{%
  \newenvironment{#1}[1]
  {%
   \renewcommand\customgenericname{#2}%
   \renewcommand\theinnercustomgeneric{##1}%
   \innercustomgeneric
  }
  {\endinnercustomgeneric}
}
\newtheorem{assump}{Assumption}
\newtheorem*{prop*}{Proposition}
\newtheorem{theorem}{Theorem}
\newtheorem*{theorem*}{Theorem}
\newtheorem*{lemma*}{Lemma}
\newtheorem*{property*}{Property}
\newtheorem*{remark}{Remark}
\newtheorem{definition}{Definition}
\newtheorem{assumption}{Assumption}
\newtheorem*{assumption*}{Assumption}
\newtheorem*{proposition*}{Proposition}
\newtheorem{proposition}{Proposition}
\newtheorem*{setting*}{Setting}
\title{Stochastic Modified Equations and  Dynamics of Dropout Algorithm}
\author{%
    Zhongwang Zhang\textsuperscript{\rm 1}, 
    Yuqing Li\textsuperscript{\rm 1,2} \thanks{Corresponding author: liyuqing\underline{~}551@sjtu.edu.cn},
    Tao Luo\textsuperscript{\rm 1,2,3,4,5} \thanks{Corresponding author: luotao41@sjtu.edu.cn}, 
    Zhi-Qin John Xu\textsuperscript{\rm 1,3,4}\thanks{Corresponding author: xuzhiqin@sjtu.edu.cn} \\
    \textsuperscript{\rm 1}  School of Mathematical Sciences, Shanghai Jiao Tong University \\
    \textsuperscript{\rm 2}  CMA-Shanghai, Shanghai Jiao Tong University\\
    \textsuperscript{\rm 3} Institute of Natural Sciences, MOE-LSC, Shanghai Jiao Tong University \\
    \textsuperscript{\rm 4} Qing Yuan Research Institute, Shanghai Jiao Tong University\\
    \textsuperscript{\rm 5} Shanghai Artificial Intelligence Laboratory
}
\begin{document}

\maketitle

\begin{abstract}

Dropout is  a widely utilized  regularization technique  in the training of neural networks,  nevertheless, its underlying mechanism and its impact on achieving  good generalization abilities remain poorly understood.  In this work, we derive the stochastic modified equations  for analyzing the dynamics of dropout, where its discrete  iteration process  is approximated by a class of stochastic differential equations. In order to investigate the underlying mechanism by which dropout facilitates the identification of flatter minima, we study the   noise structure  of the  derived stochastic modified equation for dropout. By drawing upon the structural  resemblance between the Hessian and covariance through several intuitive approximations, we empirically demonstrate the universal presence of the inverse variance-flatness relation and the Hessian-variance  relation, throughout the training process of dropout. These theoretical and empirical findings make a substantial contribution to our understanding of the  inherent tendency of dropout to locate flatter minima. 
\end{abstract}

\section{Introduction}
Dropout is used with gradient-descent-based algorithms for training neural networks (NNs) \citep{hinton2012improving,srivastava2014dropout}, which obtains the state-of-the-art test performance in deep learning \citep{tan2019efficientnet,helmbold2015inductive}. The key idea behind dropout is to randomly  remove  a subset of neurons during the training process, specifically,  the output of each neuron is multiplied with a random variable that takes the value $1/p$ with probability $p$ and zero otherwise. This random variable is independently sampled at each feedforward operation. In contrast to the widespread use and empirical success of dropout, the  mechanism by which  it  helps generalization in deep learning  remains an ongoing area of research.

The noise structure  introduced by stochastic algorithms  is  important for understanding their training behaviors. A series of recent works reveal that the noise structure inherent  in  stochastic gradient descent (SGD)  plays a crucial role in facilitating the  exploration of flatter solutions  \citep{keskar2016large,feng2021inverse,zhu2018anisotropic}. Analogously, training with dropout introduces some noise with  a specific type of architecture, 
acting as an implicit regularizer that  facilitates better generalization abilities \citep{hinton2012improving,srivastava2014dropout,wei2020implicit, zhang2022implicit,zhu2018anisotropic}.   

% \textcolor{black}{The noise structure of the loss function affects the gradient dynamics of network training. The noise structure of SGD makes it easy for the network to find a flat solution \citep{feng2021inverse,zhu2018anisotropic}.} Similar to SGD, training with dropout is equivalent to that with some specific noise. \textcolor{black}{The implicit regularization behind this specific noise structure enables dropout regularization to find more generalizable solutions \citep{wei2020implicit}. }

% The noise structure of a stochastic algorithm is often important, such SGD \citep{feng2021inverse,zhu2018anisotropic}.
In this paper, we first   employ the framework of stochastic modified equations (SMEs)~\citep{li2017stochastic}  to approximate in   distribution the training dynamics   of the dropout algorithm applied to two-layer NNs.  By employing this approach,     we are able to quantify the leading order dynamics of the dropout  algorithm and its variants in a precise manner.  Additionally, we calculate the covariance structure of the noise  generated by the stochasticity incorporated in dropout.  We then utilize the covariance structure to understand why NNs trained by dropout have the tendency to possess better generalization abilities from the perspective of flatness~\citep{keskar2016large,neyshabur2017exploring}.

We  hypothesize that   the flatness-improving ability of dropout noise   is attributed to its alignment    with the structure of the loss landscape, based on the similarity between the explicit forms of the Hessian and the dropout covariance under intuitive approximations.  To investigate this hypothesis, we conduct empirical studies using three different approaches (shown respectively in   Fig.~\ref{fig:anisotropic},   Fig.~\ref{fig:pca}(a, b), and   Fig.~\ref{fig:pca}(c, d))   to assess the similarity between the flatness of the loss landscape and the noise structure induced by dropout at the obtained minima, and    all of them consistently demonstrate two important relationships: i) Inverse variance-flatness relation: The noise is larger at the sharper direction of the loss landscape; ii) Hessian-variance alignment relation: The Hessian of the loss landscape at the found minima aligns with the noise covariance matrix. 
These two relations are compatible with each other in that  they collectively contribute to the ability of the training algorithm    to effectively identify flatter minima.  Our experiments are conducted on several representative datasets, i.e., MNIST \citep{lecun1998gradient}, CIFAR-100 \citep{krizhevsky2009learning} and Multi30k \citep{elliott2016multi30k}, and also on distinct NN structures, i.e., fully-connected neural networks (FNNs), ResNet-20 \citep{he2016deep} and transformer \citep{vaswani2017attention} to demonstrate the universality of our findings.

\section{Related works}
% Dropout is proposed as a simple technique to prevent NNs from overfitting, thus improving the generalization of the network \citep{hinton2012improving,srivastava2014dropout}. 
A flurry of recent works aims to  shed light on the regularization effect   conferred by dropout. \citet{wager2013dropout} show  that     dropout  performs a form of adaptive regularization in the context of linear regression and logistic problems. \citet{mcallester2013pac} propose  a  PAC-Bayesian bound, whereas \citet{wan2013regularization,mou2018dropout} derive   some Rademacher-complexity-type error bounds specifically tailored for dropout. \citet{mianjy2020convergence} demonstrate that dropout training with logistic loss achieves $\varepsilon$-suboptimality in test error within $O(1/\varepsilon)$ iterations. Finally, \citet{zhang2022implicit} establish that dropout enhances the flatness of the loss landscape and facilitates condensation through an additional regularization term endowed by dropout.
% All of the above works require specific settings and do not clearly describe the dropout training process or how it relates to generalization from the perspective of dropout noise structure. In this work, we show that dropout noise has a special structure, which is closely related to the loss landscape. The structure of the effective noise induced by the dropout may be a key reason why dropout can find solutions with better generalization.

Continuous formulations have been   extensively utilized to   study the dynamical behavior  of  stochastic algorithms. \citet{li2017stochastic, li2019stochastic} present an entirely
rigorous and self-contained mathematical formulation of the SME framework that applies to a wide class of stochastic algorithms.  Furthermore, \citet{feng2017semi} adopt a semigroup approach to investigate the dynamics of SGD and online PCA.  \citet{malladi2022SDE} derive  the SME approximations for the adaptive stochastic algorithms including RMSprop and
Adam, additionally, they  provide efficient experimental verification of the validity of  square root scaling rules   arising from the  SMEs.

% The flatness of the solution is an important aspect for understanding the generalization of NNs \citep{keskar2016large,neyshabur2017exploring,zhu2018anisotropic}. 
One noteworthy observation is the association between the flatness of minima and improved generalization ability~\citep {li2017visualizing, jastrzkebski2017three, jastrzkebski2018relation}. Specifically,  SGD is shown to preferentially select flat minima, especially under conditions of large learning rates and small batch sizes~\citep{jastrzkebski2017three, jastrzkebski2018relation, wu2018sgd}.  \citet{papyan2018full,papyan2019measurements}  attribute  such enhancement of flatness by SGD to the similarity between   covariance of the noise  and  Hessian  of the loss function. 
%In  pursuit of visualizing the flatness of minimizers,
%\citet{li2017visualizing} propose a   method that helps to capture  the one-dimensional cross-section of the landscape. 
Furthermore, \citet{feng2021inverse}    reveal  an   inverse   variance–flatness relation   within the dynamics of SGD.  
Additionally, \citet{zhu2018anisotropic,wu2022alignment}    unveil the Hessian-variance alignment property of SGD noise, shedding light on the role of SGD in escaping from sharper minima and locating flatter minima. %These findings collectively contribute to the understanding of the intricate dynamics underlying optimization methods and their impact on the flatness of the loss landscape.

\section{Preliminary}

In this section,  we present the notations and definitions that are utilized in our theoretical analysis. {\emph{We remark that our experimental settings are more general than the counterparts in the  theoretical analysis.}}

\subsection{Notations}\label{subsection...Notations...Main}
%%%%%%%%%%%%%%%%%%%%%%%%%%%%%%%%%%%%%%%%%%%%%%%%%%%%%%%%%%%%
% We begin this section by introducing some notations that will be used in the rest of this paper.  
We set a special vector $(1,1,1,\dots,1)^\T$ by $\vone:=(1,1,1,\dots,1)^\T$ whose dimension varies.  We set $n$ for the number of input samples and $m$ for the width of the NN.
We let $[n]=\{1,2, \ldots, n\}$. 
We denote $\otimes$ as the Kronecker tensor product, and $\left<\cdot,\cdot\right>$ for standard inner product between two vectors.
We denote vector $L^2$ norm as $\Norm{\cdot}_2$, vector or function $L_{\infty}$ norm as $\Norm{\cdot}_{\infty}$. 
Finally,  we denote the set of continuous functions $f(\cdot):\sR^D\to\sR$ possessing   continuous derivatives of order up to and including $r$ by $\fC^{r}(\sR^D)$,     the space of bounded measurable functions by $\fB_b(\sR^D)$, and  the space of bounded continuous functions by $\fC_b(\sR^D)$.  
%%%%%%%%%%%%%%%%%%%%%%%%%%%%%%%%%%%%%%%%%%%%%%%%%%%%%%%%%%%%
%%%%%%%%%%%%%%%%%%%%%%%%%%%%%%%%%%%%%%%%%%%%%%%%%%%%%%%%%%%%
\subsection{Two-layer neural networks and loss function}
%%%%%%%%%%%%%%%%%%%%%%%%%%%%%%%%%%%%%%%%%%%%%%%%%%%%%%%%%%%%
%%%%%%%%%%%%%%%%%%%%%%%%%%%%%%%%%%%%%%%%%%%%%%%%%%%%%%%%%%%%
We consider the empirical risk minimization problem given by the quadratic loss:
%%%%%%%%%%%%%%%%%%%%%%%%%%%%%%%%%%%%%%%%%%%%%%%%%%%%%%%%%%%%
\begin{equation}
\min_{\vtheta}R_{\fS}(\vtheta)=\frac{1}{2n}\sum_{i=1}^n\left({f_{\vtheta}(\vx_i)-y_i}\right)^2,
\end{equation}
%%%%%%%%%%%%%%%%%%%%%%%%%%%%%%%%%%%%%%%%%%%%%%%%%%%%%%%%%%%%
where $\fS:=\{ (\vx_i, y_i)\}_{i=1}^n$ is  the training sample,    $f_{\vtheta}(\vx)$ is the prediction function,  $\vtheta$ are the parameters, and their dependence is modeled by   a two-layer NN with $m$  hidden neurons
%%%%%%%%%%%%%%%%%%%%%%%%%%%%%%%%%%%%%%%%%%%%%%%%%%%%%%%%%%%%
\begin{equation}
    f_{\vtheta}(\vx) := \sum_{r=1}^{m}a_r\sigma(\vw_r^{\T}\vx),
\end{equation}
%%%%%%%%%%%%%%%%%%%%%%%%%%%%%%%%%%%%%%%%%%%%%%%%%%%%%%%%%%%%
where $\vx\in\sR^{d}$,   $\vtheta=\mathrm{vec}(\vtheta_a,\vtheta_{\vw})\in\sR^D$, where   $D:=m(d+1)$ throughout this paper. We remark that $\vtheta $ is the set of parameters with   $\vtheta_a=\mathrm{vec}(\{a_r\}_{r=1}^{m})$, $\vtheta_{\vw}=\mathrm{vec}(\{\vw_r\}_{r=1}^{m})$, and $\sigma(\cdot)$ is the activation function.  More precisely, $\vtheta=\mathrm{vec}(\{\vq_r\}_{r=1}^m)$, where  for each $r\in[m]$, $\vq_r:=(a_r,\vw_r^{\T})^\T$, and   the bias term $b_r$ can be incorporated by expanding $\vx$ and $\vw_r$ to $(\vx^\T,1)^\T$ and $\left(\vw_r^\T,b_r\right)^\T$. 
% Finally,  we denote hereafter that  for all $i\in[n]$, 
% %%%%%%%%%%%%%%%%%%%%%%%%%%%%%%%%%%%%%%%%%%%%%%%%%%%%%%%%%%%
% \[
% e_{i } :=e_i(\vtheta ) := f_{\vtheta}(\vx_i) - y_i.   
% \]

\subsection{Dropout}
Given fixed learning rate $\eps>0$, then  at the $N$-th iteration  where  $t_N:=N\eps$, a scaling vector $\veta_N \in \sR^{m}$ is sampled  with independent random coordinates: For each $k\in[m]$,
%%%%%%%%%%%%%%%%%%%%%%%%%%%%%%%%%%%%%%%%%%%%%%%%%%%%%%%%%%%%
\begin{equation}
    (\veta_N)_{k}= \begin{cases}\frac{1}{p} & \text { with probability } p, \\ 0 & \text { with probability } 1-p, \end{cases}
\end{equation}
%%%%%%%%%%%%%%%%%%%%%%%%%%%%%%%%%%%%%%%%%%%%%%%%%%%%%%%%%%%%
and we observe that $\{\veta_N\}_{N\geq 1}$ is an i.i.d.\ Bernoulli sequence with $\Exp \veta_N=\vone$. With slight abuse of notations, the $\sigma$-fields $\fF_N:=\left\{\sigma(\veta_1, \veta_2, \cdots \veta_N)\right\}$ forms a natural filtration. 
We then apply dropout to the two-layer NNs by computing 
%%%%%%%%%%%%%%%%%%%%%%%%%%%%%%%%%%%%%%%%%%%%%%%%%%%%%%%%%%%%
\begin{equation}
f_{\vtheta}(\vx;\veta):=\sum_{r=1}^m (\veta)_{r}a_r\sigma(\vw_r^{\T}\vx),
\end{equation}
%%%%%%%%%%%%%%%%%%%%%%%%%%%%%%%%%%%%%%%%%%%%%%%%%%%%%%%%%%%%
and   we denote the empirical risk associated with dropout   by  
%%%%%%%%%%%%%%%%%%%%%%%%%%%%%%%%%%%%%%%%%%%%%%%%%%%%%%%%%%%%
\begin{equation}
\begin{aligned}
\RS^\mathrm{drop}\left(\vtheta;\veta\right)&:= \frac{1}{2n}\sum_{i=1}^n\left(f_{\vtheta}(\vx_i;\veta)-y_i\right)^2=\frac{1}{2n}\sum_{i=1}^n\left(\sum_{r=1}^m (\veta)_{r}a_r\sigma(\vw_r^{\T}\vx_i)-y_i\right)^2.   
\end{aligned}
\end{equation}
%%%%%%%%%%%%%%%%%%%%%%%%%%%%%%%%%%%%%%%%%%%%%%%%%%%%%%%%%%%%
We observe that the parameters at the $N$-th step are updated   as follows: 
%%%%%%%%%%%%%%%%%%%%%%%%%%%%%%%%%%%%%%%%%%%%%%%%%%%%%%%%%%%
\begin{equation}\label{eq...text...SME...ModifiedLoss...ThetaUpdate...Abstract}
 \vtheta_N= \vtheta_{N-1}-\eps \nabla_{\vtheta}\RS^\mathrm{drop}\left(\vtheta_{N-1}; \veta_{N}\right),
\end{equation}
%%%%%%%%%%%%%%%%%%%%%%%%%%%%%%%%%%%%%%%%%%%%%%%%%%%%%%%%%%%
%%%%%%%%%%%%%%%%%%%%%%%%%%%%%%%%%%%%%%%%%%%%%%%%%%%%%%%%%%%%
%%%%%%%%%%%%%%%%%%%%%%%%%%%%%%%%%%%%%%%%%%%%%%%%%%%%%%%%%%%
where $\vtheta_0:=\vtheta(0)$.  Finally,  we denote hereafter that  for all $i\in[n]$, 
%%%%%%%%%%%%%%%%%%%%%%%%%%%%%%%%%%%%%%%%%%%%%%%%%%%%%%%%%%%
\[
e_{i}^N :=e_i(\vtheta_{N-1};\veta_N) := f_{\vtheta_{N-1}}(\vx_i; \veta_N) - y_i.   
\]
%%%%%%%%%%%%%%%%%%%%%%%%%%%%%%%%%%%%%%%%%%%%%%%%%%%%%%%%%%%
% and  
% %%%%%%%%%%%%%%%%%%%%%%%%%%%%%%%%%%%%%%%%%%%%%%%%%%%%%%%%%%%
% \[
% \ve_N:=\ve(\vtheta;\veta_N) := {\left[e_1(\vtheta;\veta_N), e_2(\vtheta;\veta_N), \cdots, e_n(\vtheta;\veta_N)\right]}^{\T}, 
% \]
% %%%%%%%%%%%%%%%%%%%%%%%%%%%%%%%%%%%%%%%%%%%%%%%%%%%%%%%%%%%
% hence $\RS^\mathrm{drop}\left(\vtheta; \veta_N\right)$ can be written in short by 
% %%%%%%%%%%%%%%%%%%%%%%%%%%%%%%%%%%%%%%%%%%%%%%%%%%%%%%%%%%%
% \[\RS^\mathrm{drop}\left(\vtheta; \veta_N\right)=\frac{1}{2n}\ve_N^\T\ve_N.\] 

% For simplicity, the {\emph{modified loss}} $L_S(\vtheta)$ for dropout is defined as follows\footnote{It is similar with the $R_1$ term in \citet{zhang2022implicit}.}, 
% \begin{equation}\label{eq...text...SME...ModifiedLoss}
% \begin{aligned}
% L_S(\vtheta)&:=\frac{1}{2n}\sum_{i=1}^ne_i^2 +\frac{1-p}{2np}\sum_{i=1}^n \sum_{r=1}^m a_{r}^2\sigma(\vw_{r}^{\T}\vx_i)^2.
% \end{aligned}
% \end{equation}
\section{Stochastic modified equations for dropout}

In this section, we approximate the iterative process  of dropout \eqref{eq...text...SME...ModifiedLoss...ThetaUpdate...Abstract} in the weak sense~(Definition \ref{defi...one}).  

\subsection{Modified loss}\label{subsection...ModifiedLoss}
%%%%%%%%%%%%%%%%%%%%%%%%%%%%%%%%%%%%%%%%%%%%%%%%%%%%%%%%%%%%
%%%%%%%%%%%%%%%%%%%%%%%%%%%%%%%%%%%%%%%%%%%%%%%%%%%%%%%%%%%
% Recall that the parameters at the $N$-th step are updated   as follows: 
% %%%%%%%%%%%%%%%%%%%%%%%%%%%%%%%%%%%%%%%%%%%%%%%%%%%%%%%%%%%
% \begin{equation}\label{eq...text...SME...ModifiedLoss...ThetaUpdate...Abstract}
%  \vtheta_N= \vtheta_{N-1}-\eps \nabla_{\vtheta}\RS^\mathrm{drop}\left(\vtheta_{N-1}; \veta_{N}\right),
% \end{equation}
% %%%%%%%%%%%%%%%%%%%%%%%%%%%%%%%%%%%%%%%%%%%%%%%%%%%%%%%%%%%
As the dropout iteration \eqref{eq...text...SME...ModifiedLoss...ThetaUpdate...Abstract} can be written into
%%%%%%%%%%%%%%%%%%%%%%%%%%%%%%%%%%%%%%%%%%%%%%%%%%%%%%%%%%%%
\begin{align*}
\vtheta_N- \vtheta_{N-1}&= -\eps \nabla_{\vtheta}\RS^\mathrm{drop}\left(\vtheta_{N-1}; \veta_{N}\right)=-\frac{\eps}{n}\sum_{i=1}^ne_{i}^N \nabla_{\vtheta}e_{i}^N.
%%%%%%%%%%%%%%%%%%%%%%%%
%%%%%%%%%%%%%%%%%%%%%%%%
%%%%%%%%%%%%%%%%%%%%%%%%
\end{align*}    
%%%%%%%%%%%%%%%%%%%%%%%%%%%%%%%%%%%%%%%%%%%%%%%%%%%%%%%%%%%%
Since $\vtheta=\mathrm{vec}(\{\vq_r\}_{r=1}^m)=\mathrm{vec}\left(\{(a_r, \vw_r)\}_{r=1}^m\right)$, then given $\vtheta_{N-1}$, for each $k\in [m]$,   the expectation of the  increment  restricted to $\vq_k$ reads
%%%%%%%%%%%%%%%%%%%%%%%%%%%%%%%%%%%%%%%%%%%%%%%%%%%%%%%%%%%%
\begin{align*}
&\Exp_{\vtheta_{N-1}}\left[\sum_{i=1}^ne_{i}^N \nabla_{\vq_k}e_{i}^N \right]  
= \Exp_{\vtheta_{N-1}}\left[\sum_{i=1}^ne_{i}^N(\veta_N)_{k}\nabla_{\vq_k}\left(a_k\sigma(\vw_k^{\T}\vx_i)\right) \right]  
\\
%%%%%%%%%%%%%%%%%%%%%%%%
=& \sum_{i=1}^ne_i\nabla_{\vq_k}\left(a_k\sigma(\vw_k^{\T}\vx_i)\right) +\frac{1-p}{p}\sum_{i=1}^n a_{k}\sigma(\vw_{k}^{\T}\vx_i) \nabla_{\vq_k}\left(a_k\sigma(\vw_k^{\T}\vx_i)\right),
%%%%%%%%%%%%%%%%%%%%%%%%
%%%%%%%%%%%%%%%%%%%%%%%%
%%%%%%%%%%%%%%%%%%%%%%%%
%%%%%%%%%%%%%%%%%%%%%%%%
\end{align*}
%%%%%%%%%%%%%%%%%%%%%%%%%%%%%%%%%%%%%%%%%%%%%%%%%%%%%%%%%%%%
where we denote for simplicity   that 
$
e_{i}:=e_{i}(\vtheta):=\sum_{r=1}^m a_{r}\sigma(\vw_{r}^{\T}\vx_i)-y_i,
$
%%%%%%%%%%%%%%%%%%%%%%%%%%%%%%%%%%%%%%%%%%%%%%%%%%%%%%%%%%%%
and compared with  $e_{i}^N$, $e_i$  does not depend on the random variable $\veta_N$. 
 %%%%%%%%%%%%%%%%%%%%%%%%
Hence,  the {\emph{modified loss}} $L_S(\cdot):\sR^{D}\to\sR$  for dropout can be defined as:
%%%%%%%%%%%%%%%%%%%%%%%%%%%%%%%%%%%%%%%%%%%%%%%%%%%%%%%%%%%%
\begin{equation}\label{eq...text...SME...ModifiedLoss...ModifiedLoss}
\begin{aligned}
L_S(\vtheta)&:=\frac{1}{2n}\sum_{i=1}^ne_i^2 +\frac{1-p}{2np}\sum_{i=1}^n \sum_{r=1}^m a_{r}^2\sigma(\vw_{r}^{\T}\vx_i)^2,
\end{aligned}
\end{equation}
%%%%%%%%%%%%%%%%%%%%%%%%%%%%%%%%%%%%%%%%%%%%%%%%%%%%%%%%%%%%
in that  as $\vtheta_{N-1}$ is  given, by taking conditional expectation, its increment reads
%%%%%%%%%%%%%%%%%%%%%%%%%%%%%%%%%%%%%%%%%%%%%%%%%%%%%%%%%%%%
\begin{align*}
\vtheta_N- \vtheta_{N-1}&= -\eps \Exp_{\vtheta_{N-1}}\left[\nabla_{\vtheta}\RS^\mathrm{drop}\left(\vtheta_{N-1}; \veta_{N}\right)\right]=-\eps\nabla_{\vtheta}L_S(\vtheta)\big|_{\vtheta=\vtheta_{N-1}},
%%%%%%%%%%%%%%%%%%%%%%%%
%%%%%%%%%%%%%%%%%%%%%%%%
%%%%%%%%%%%%%%%%%%%%%%%%
\end{align*}    
%%%%%%%%%%%%%%%%%%%%%%%%%%%%%%%%%%%%%%%%%%%%%%%%%%%%%%%%%%%%
then in the sense of expectations, $\{\vtheta_N\}_{N\geq 0}$ follows close to the gradient descent~(GD) trajectory of  $L_S(\vtheta)$ with fixed learning rate $\eps$.

\subsection{Stochastic modified equations}\label{subsection...ModifiedEquaitons}
%%%%%%%%%%%%%%%%%%%%%%%%%%%%%%%%%%%%%%%%%%%%%%%%%%%%%%%%%%%%
%%%%%%%%%%%%%%%%%%%%%%%%%%%%%%%%%%%%%%%%%%%%%%%%%%%%%%%%%%%%
%We then follow the strategy of \cite{li2017stochastic} to derive the SMEs for dropout.  
Firstly, from the results in Section \ref{subsection...ModifiedLoss}, we observe that given $\vtheta_{N-1}$,
%%%%%%%%%%%%%%%%%%%%%%%%%%%%%%%%%%%%%%%%%%%%%%%%%%%%%%%%%%%%
\begin{equation}\label{eq...text...SME...SME...DiscreteUpdate+Covariance}
\vtheta_N-\vtheta_{N-1} = -\eps\nabla_{\vtheta}L_S(\vtheta)\big|_{\vtheta=\vtheta_{N-1}}+\sqrt{\eps}\vV(\vtheta_{N-1}),    
\end{equation}
%%%%%%%%%%%%%%%%%%%%%%%%
%%%%%%%%%%%%%%%%%%%%%%%%
%%%%%%%%%%%%%%%%%%%%%%%%
%%%%%%%%%%%%%%%%%%%%%%%%%%%%%%%%%%%%%%%%%%%%%%%%%%%%%%%%%%%%
where $L_S(\cdot):\sR^{D}\to\sR$ is the modified loss defined in \eqref{eq...text...SME...ModifiedLoss...ModifiedLoss}, and $\vV(\cdot):\sR^{D}\to\sR^{D}$ is a $D$-dimensional random vector, and when  given $\vtheta_{N-1}$, $\vV(\vtheta_{N-1})$ has mean $\vzero$ and covariance  $\eps \mSigma(\vtheta_{N-1})$, where $\mSigma (\cdot):\sR^{D}\to \sR^{D \times {D}}$, whose expression  is deferred to Section \ref{subsection...Explicit}. 

Consider the stochastic differential equation~(SDE),  
%%%%%%%%%%%%%%%%%%%%%%%%%%%%%%%%%%%%%%%%%%%%%%%%%%%%%%%%%%%%
\begin{equation}\label{eq...text...SME...SME...SDE...Abstract}
\D \vTheta_t=\vb \left(\vTheta_t\right)\D t+\vsigma \left(\vTheta_t\right) \D \vW_t, \quad  \vTheta_0=\vTheta(0),
\end{equation}
%%%%%%%%%%%%%%%%%%%%%%%%%%%%%%%%%%%%%%%%%%%%%%%%%%%%%%%%%%%%
where $\vW_t$ is a standard $D$-dimensional Brownian motion, and its Euler–Maruyama discretization  with step size $\eps>0$  at the $N$-th step reads
%%%%%%%%%%%%%%%%%%%%%%%%%%%%%%%%%%%%%%%%%%%%%%%%%%%%%%%%%%%%
\begin{equation*}
  \vTheta_{\eps N}=  \vTheta_{\eps(N-1)}+\eps\vb \left(\vTheta_{\eps(N-1)}\right)  +\sqrt{\eps}\vsigma \left(\vTheta_{\eps(N-1)}\right) \vZ_{N}, 
\end{equation*}
%%%%%%%%%%%%%%%%%%%%%%%%%%%%%%%%%%%%%%%%%%%%%%%%%%%%%%%%%%%%
where $\vZ_N\sim\fN(\vzero, \mI_{D})$ and $\vTheta_0=\vTheta(0)$. Thus, if we set 
%%%%%%%%%%%%%%%%%%%%%%%%%%%%%%%%%%%%%%%%%%%%%%%%%%%%%%%%%%%%
\begin{equation}\label{eq...text...SME...SME...theChoiceofB+Sigma}
\begin{aligned}
\vb\left(\vTheta\right)&:=   -\nabla_{\vTheta}L_S(\vTheta),\\     
%%%%%%%%%%%%%%%%%%%%%%%%%%%%%%%
\vsigma\left(\vTheta\right)&:=\sqrt{\eps}\left(\mSigma\left(\vTheta\right)\right)^{\frac{1}{2}},\\
%%%%%%%%%%%%%%%%%%%%%%%%%%%%%%%
\vTheta_0&:=\vtheta_0,
\end{aligned}
\end{equation}
%%%%%%%%%%%%%%%%%%%%%%%%%%%%%%%%%%%%%%%%%%%%%%%%%%%%%%%%%%%% 
then we would expect \eqref{eq...text...SME...SME...SDE...Abstract} to be a `good' approximation of \eqref{eq...text...SME...SME...DiscreteUpdate+Covariance} with time identification $t=\eps N$. Based on the previous work \citep{li2017stochastic}, we use approximations in the {\emph{weak}} sense~\cite[Section 9.7]{kloeden2011numerical} since the path of dropout and the corresponding SDE are driven by noises sampled in different spaces.

To compare different discrete  time approximations, we need to take the rate of weak convergence into consideration, and we also  need to choose  an appropriate  class of functions as the space of test functions.
We introduce the following set of smooth functions:
%%%%%%%%%%%%%%%%%%%%%%%%%%%%%%%%%%%%%%%%%%%%%%%%%%%%%%%%%%%%
\begin{equation}
\fC_b^M\left(\sR^D\right)=\left\{ f \in \fC^M\left(\sR^D\right) \Bigg|\Norm{f}_{\fC^M}:=\sum_{|\beta| \leq M}  \Norm{\mathrm{D}^\beta f }_{\infty}<\infty \right\},
\end{equation}
%%%%%%%%%%%%%%%%%%%%%%%%%%%%%%%%%%%%%%%%%%%%%%%%%%%%%%%%%%%%
where $\mathrm{D}$ is the usual differential operator.
We remark that $\fC_b^M(\sR^D)$ is a subset of $\fG(\sR^D)$, the class of functions with polynomial growth, which is chosen to be the space of test functions in previous works \citep{li2017stochastic,kloeden2011numerical,malladi2022SDE}. %However, such limitation can be alleviated since $\fC_b^M(\sR^D)$ is a {\emph{dense}} subset of the class of functions with polynomial growth.
Before we proceed to the definition of weak approximation, to ensure the rigor and validity of our analysis, we   assume that
\begin{assumption}\label{theOnlyAssumption}
There exists $T^{\ast}>0$, such that for any $t\in\left[0, T^{\ast} \right]$, there exists a unique $t$-continuous solution $\vTheta_t$ to SDE \eqref{eq...text...SME...SME...SDE...Abstract}.
%%%%%%%%%%%%%%%%%%%%%%%%%%%%%%%%%%%%%%%%%%%%%%%%%%%%%%%%%%%%
Furthermore,   for each $l \in [3]$, there exists $C(T^{\ast},\vTheta_0)>0$, such that 
%%%%%%%%%%%%%%%%%%%%%%%%%%%%%%%%%%%%%%%%%%%%%%%%%%%%%%%%%%%%
\begin{equation}\label{eq...assump...UniformBDD...Cts}
\sup_{{0}\leq s\leq T^{\ast}}\Exp\left(\Norm{\vTheta_s(\cdot)}_2^{2l}\right)\leq  C(T^{\ast},\vTheta_0).
\end{equation}
%%%%%%%%%%%%%%%%%%%%%%%%%%%%%%%%%%%%%%%%%%%%%%%%%%%%%%%%%%%%
Moreover,  for the dropout iterations \eqref{eq...text...SME...ModifiedLoss...ThetaUpdate...Abstract}, let $0<\eps<1$, $T>0$ and set $N_{T,\eps}:=\lfloor \frac{T}{\eps} \rfloor$.    There exists    $\eps_0>0$, such that    given any learning rate  $\eps\leq \eps_0$,  then for  all $N\in[0:N_{T^{\ast},\eps}]$ and     for each $l \in [3]$, there exists $C(T^{\ast},\vtheta_0,\eps_0)>0$,  such that 
%%%%%%%%%%%%%%%%%%%%%%%%%%%%%%%%%%%%%%%%%%%%%%%%%%%%%%%%%%%%
\begin{equation}\label{eq...assump...UniformBDD...Discrete}
\sup_{{0}\leq N\leq [N_{T^{\ast},\eps}]}\Exp\left(\Norm{\vtheta_N}_2^{2l}\right)\leq  C(T^{\ast},\vtheta_0,\eps_0).
\end{equation}
%%%%%%%%%%%%%%%%%%%%%%%%%%%%%%%%%%%%%%%%%%%%%%%%%%%%%%%%%%%%
\end{assumption}
%%%%%%%%%%%%%%%%%%%%%%%%%%%%%%%%%%%%%%%%%%%%%%%%%%%%%%%%%%%%
\noindent
We remark that if  $\fG(\sR^D)$ is chosen to be  the test functions in~\cite{li2019stochastic},  then   similar     relations to  \eqref{eq...assump...UniformBDD...Cts} and  \eqref{eq...assump...UniformBDD...Discrete} shall be imposed, except that in our cases, we only require the second, fourth and sixth moments to be uniformly bounded, while  in their cases,  all  $2l$-moments are required for $l\geq 1$.
% Our assumptions are weaker in that    $ \fC_b^M\left(\sR^D\right)$ is chosen to be  the test functions
% In order  to establish the validity of Assumption \ref{theOnlyAssumption} regarding the existence and uniqueness of the SDE, we will need to further analyze and test the implications of this assumption on our model, all of which  will be exhibited in Section \ref{}.
%%%%%%%%%%%%%%%%%%%%%%%%%%%%%%%%%%%%%%%%%%%%%%%%%%%%%%%%%%%%
%%%%%%%%%%%%%%%%%%%%%%%%%%%%%%%%%%%%%%%%%%%%%%%%%%%%%%%%%%%%
\begin{definition}\label{defi...one}
The SDE \eqref{eq...text...SME...SME...SDE...Abstract} is an order $\alpha$ weak approximation to the dropout \eqref{eq...text...SME...ModifiedLoss...ThetaUpdate...Abstract}, if for every $g\in \fC_b^M\left(\sR^{D}\right)$, there exists $C>0$ and $\eps_0>0$,  such that     given any $\eps\leq\eps_0$ and $T\leq T^{\ast}$, then for all $N\in[N_{T,\eps}]$,
%%%%%%%%%%%%%%%%%%%%%%%%%%%%%%%%%%%%%%%%%%%%%%%%%%%%%%%%%%%%
\begin{equation}\label{eq...definition...WeakApproximationAlpha,,,Main}
\Abs{\Exp g(\vTheta_{\eps N}) -\Exp g(\vtheta_N)}    \leq C(T^{\ast}, g, \eps_0)\eps^{\alpha}. 
\end{equation}
%%%%%%%%%%%%%%%%%%%%%%%%%%%%%%%%%%%%%%%%%%%%%%%%%%%%%%%%%%%%
\end{definition}
%%%%%%%%%%%%%%%%%%%%%%%%%%%%%%%%%%%%%%%%%%%%%%%%%%%%%%%%%%%%
\noindent
We now state informally our approximation theorem.
%%%%%%%%%%%%%%%%%%%%%%%%%%%%%%%%%%%%%%%%%%%%%%%%%%%%%%%%%%%%
\begin{customthm}{1*}
Fix time $T\leq T^{\ast}$ and learning rate $\eps>0$, then if we choose  
%%%%%%%%%%%%%%%%%%%%%%%%%%%%%%%%%%%%%%%%%%%%%%%%%%%%%%%%%%%%
\begin{align*}
\vb(\vTheta)&=-\nabla_{\vTheta}L_S(\vTheta),\\
%%%%%%%%%%%%%%%%%%%%%%%%
\vsigma(\vTheta)&=\sqrt{\eps}\left(\mSigma\left(\vTheta\right)\right)^{\frac{1}{2}},
\end{align*}
%%%%%%%%%%%%%%%%%%%%%%%%%%%%%%%%%%%%%%%%%%%%%%%%%%%%%%%%%%%%
then for all $t\in[0, T]$, the stochastic processes $\vTheta_t $ satisfying
%%%%%%%%%%%%%%%%%%%%%%%%%%%%%%%%%%%%%%%%%%%%%%%%%%%%%%%%%%%%
\begin{equation*}%\label{eq...thmstar...orderOneapproximation}
\D \vTheta_t=\vb\left(\vTheta_t\right)\D t+\vsigma \left(\vTheta_t\right) \D \vW_t, 
\end{equation*}
%%%%%%%%%%%%%%%%%%%%%%%%%%%%%%%%%%%%%%%%%%%%%%%%%%%%%%%%%%%%
is an order-$1$ approximation of dropout \eqref{eq...text...SME...ModifiedLoss...ThetaUpdate...Abstract}. If we choose instead
%%%%%%%%%%%%%%%%%%%%%%%%%%%%%%%%%%%%%%%%%%%%%%%%%%%%%%%%%%%%
\begin{align*}
\vb(\vTheta)&=-\nabla_{\vTheta}\left(L_S(\vTheta)+\frac{\eps}{4}\Norm{\nabla_{\vTheta}L_S(\vTheta)}_2^2\right),\\
%%%%%%%%%%%%%%%%%%%%%%%%
\vsigma(\vTheta)&=\sqrt{\eps}\left(\mSigma\left(\vTheta\right)\right)^{\frac{1}{2}},
\end{align*}
%%%%%%%%%%%%%%%%%%%%%%%%%%%%%%%%%%%%%%%%%%%%%%%%%%%%%%%%%%%%
then $\vTheta_t$  
is an order-$2$ approximation.
\end{customthm}
It is noteworthy that our   findings reproduce the explicit  regularization effect attributed to dropout~\citep{wei2020implicit,zhang2022implicit}.  This regularization effect    modifies the expected training objective     from $R_{\fS}(\theta)$ to $L_{\fS}(\theta)$.  The regularization effect stems from the stochasticity of dropout. Unlike SGD, where the noise arises from the stochasticity involved in the selection of training samples, dropout introduces noise through the stochastic removal of parameters. In the sequel, we  focus on how such stochasticity exerts an impact on our learning results. 
%%%%%%%%%%%%%%%%%%%%%%%%%%%%%%%%%%%%%%%%%%%%%%%%%%%%%%%%%%%%
\section{The effect of the noise structure on flatness} \label{sec:noise}

We begin this section by examining the   expression of the noise structure arising from dropout. %  We first derive the explicit form of the dropout covariance matrix, then we  investigate the similarity between the Hessian matrix and the covariance matrix using different flatness measures. We find that dropout noise is larger in the sharper direction of the loss landscape, which is an important factor for dropout finding flatter solutions, thus improving generalization.

\subsection{Explicit form of the dropout noise structure}\label{subsection...Explicit}

In this subsection, we present the expression for  $\mSigma$.
Once again, as $\vtheta=\mathrm{vec}(\{\vq_r\}_{r=1}^m)=\mathrm{vec}\left(\{(a_r, \vw_r)\}_{r=1}^m\right)$, then  covariance of $\nabla_{\vtheta}\RS^\mathrm{drop}\left(\vtheta_{N-1}; \veta_{N}\right)$ equals to   $ \mSigma(\vtheta_{N-1})$.   We denote 
%%%%%%%%%%%%%%%%%%%%%%%%%%%%%%%%%%%%%%%%%%%%%%%%%%%%%%%%%%%%
 \[\mSigma_{kr}(\vtheta_{N-1}):=\mathrm{Cov}\left( \nabla_{\vq_k}\RS^\mathrm{drop}\left(\vtheta_{N-1}; \veta_{N}\right), \nabla_{\vq_r}\RS^\mathrm{drop}\left(\vtheta_{N-1}; \veta_{N}\right)\right), \]
%%%%%%%%%%%%%%%%%%%%%%%%%%%%%%%%%%%%%%%%%%%%%%%%%%%%%%%%%%%%
then 
%%%%%%%%%%%%%%%%%%%%%%%%%%%%%%%%%%%%%%%%%%%%%%%%%%%%%%%%%%%%
 \[
\mSigma=\left[\begin{array}{cccc}
 \mSigma_{11} &  \mSigma_{12} &  \cdots & \mSigma_{1m}  \\
%%%%%%%%%%%%%%%%%%%%%%
\mSigma_{21} &  \mSigma_{22} &  \cdots & \mSigma_{2m}\\ 
%%%%%%%%%%%%%%%%%%%%%%
\vdots& \vdots&\vdots&\vdots\\
%%%%%%%%%%%%%%%%%%%%%%
\mSigma_{m1} &  \mSigma_{m2} &  \cdots & \mSigma_{mm}
\end{array}\right].
\]
%%%%%%%%%%%%%%%%%%%%%%%%%%%%%%%%%%%%%%%%%%%%%%%%%%%%%%%%%%%%
 For each  $  k \in [m]$, we obtain that 
%%%%%%%%%%%%%%%%%%%%%%%%%%%%%%%%%%%%%%%%%%%%%%%%%%%%%%%%%%%%
\begin{align*}
&\mSigma_{kk}(\vtheta_{N-1})=\mathrm{Cov}\left( \nabla_{\vq_k}\RS^\mathrm{drop}\left(\vtheta_{N-1}; \veta_{N}\right), \nabla_{\vq_k}\RS^\mathrm{drop}\left(\vtheta_{N-1}; \veta_{N}\right)\right) \\
%%%%%%%%%%%%%%%%%%%%%%%%
=&\left(\frac{1}{p}-1\right)\left(\frac{1}{n}\sum_{i=1}^n\left( e_{i,\backslash k}+\frac{1}{p}a_{k}\sigma(\vw_{k}^{\T}\vx_i)\right)\nabla_{\vq_k}\left(a_k\sigma(\vw_k^{\T}\vx_i)\right) \right)\\
&~~~~~~~~~~~~~~~~~~~~~~~~~~~~~~{\otimes}\left(\frac{1}{n}\sum_{i=1}^n\left( e_{i,\backslash k}+\frac{1}{p}a_{k}\sigma(\vw_{k}^{\T}\vx_i)\right)\nabla_{\vq_k}\left(a_k\sigma(\vw_k^{\T}\vx_i)\right) \right)\\
%%%%%%%%%%%%%%%%%%%%%%%%
&~+\left(\frac{1}{p^2}-\frac{1}{p}\right)\sum_{k'=1, k'\neq k}^m\left(\frac{1}{n}\sum_{i=1}^na_{k'}\sigma(\vw_{k'}^{\T}\vx_i)\nabla_{\vq_k}\left(a_k\sigma(\vw_k^{\T}\vx_i)\right)\right)\\
&~~~~~~~~~~~~~~~~~~~~~~~~~~~~~~{\otimes}\left(\frac{1}{n}\sum_{i=1}^na_{k'}\sigma(\vw_{k'}^{\T}\vx_i)\nabla_{\vq_k}\left(a_k\sigma(\vw_k^{\T}\vx_i)\right)\right),
%%%%%%%%%%%%%%%%%%%%%%%%
%%%%%%%%%%%%%%%%%%%%%%%%
%%%%%%%%%%%%%%%%%%%%%%%%
%%%%%%%%%%%%%%%%%%%%%%%%
%%%%%%%%%%%%%%%%%%%%%%%%
%%%%%%%%%%%%%%%%%%%%%%%%
%%%%%%%%%%%%%%%%%%%%%%%%
%%%%%%%%%%%%%%%%%%%%%%%%
%%%%%%%%%%%%%%%%%%%%%%%%
%%%%%%%%%%%%%%%%%%%%%%%%
%%%%%%%%%%%%%%%%%%%%%%%%
%%%%%%%%%%%%%%%%%%%%%%%%
%%%%%%%%%%%%%%%%%%%%%%%%
\end{align*}
where $%%%%%%%%%%%%%%%%%%%%%%%%
 e_{i,\backslash k}:=e_{i,\backslash k}(\vtheta):=\sum_{l=1, l\neq k
 }^m a_{l}\sigma(\vw_{l}^{\T}\vx_i)-y_i,$
and for each $  k, r \in [m]$ with $k \neq r$,  
\begin{align*}
 \mSigma_{kr}(\vtheta_{N-1}) =&\mathrm{Cov}\left( \nabla_{\vq_k}\RS^\mathrm{drop}\left(\vtheta_{N-1}; \veta_{N}\right), \nabla_{\vq_r}\RS^\mathrm{drop}\left(\vtheta_{N-1}; \veta_{N}\right)\right) \\
%%%%%%%%%%%%%%%%%%%%%%%%
=&\left(\frac{1}{p}-1\right)\sum_{k'=1, k'\neq k, k'\neq r}^m\left(\frac{1}{n}\sum_{i=1}^na_{k'}\sigma(\vw_{k'}^{\T}\vx_i)\nabla_{\vq_k}\left(a_k\sigma(\vw_k^{\T}\vx_i)\right)\right)\\
&~~~~~~~~~~~~~~~~~~~~~~~~~~~~~~{\otimes}\left(\frac{1}{n}\sum_{i=1}^na_{k'}\sigma(\vw_{k'}^{\T}\vx_i)\nabla_{\vq_r}\left(a_r\sigma(\vw_r^{\T}\vx_i)\right)\right)\\
%%%%%%%%%%%%%%%%%%%%%%%%
&~+\left(\frac{1}{p}-1\right)\left(\frac{1}{n}\sum_{i=1}^n\left(e_{i,\backslash k, \backslash r}+\frac{1}{p}a_k\sigma(\vw_k^\T\vx_i)+\frac{1}{p}a_r\sigma(\vw_r^\T\vx_i)\right)\nabla_{\vq_k}\left(a_k\sigma(\vw_k^{\T}\vx_i)\right)\right)\\
&~~~~~~~~~~~~~~~~~~~~~~~~~~~~~~ {\otimes}\left(\frac{1}{n}\sum_{i=1}^na_k\sigma(\vw_k^\T\vx_i)\nabla_{\vq_r}\left(a_r\sigma(\vw_r^{\T}\vx_i)\right)\right)\\
%%%%%%%%%%%%%%%%%%%%%%%%
&~+\left(\frac{1}{p}-1\right)\left(\frac{1}{n}\sum_{i=1}^na_r\sigma(\vw_r^\T\vx_i)\nabla_{\vq_k}\left(a_k\sigma(\vw_k^{\T}\vx_i)\right)\right) \\
&~~~~~~~~~~~~~~~~~~~~~~~~~~~~~~{\otimes}\left(\frac{1}{n}\sum_{i=1}^n\left(e_{i,\backslash k, \backslash r}+a_k\sigma(\vw_k^\T\vx_i)+\frac{1}{p}a_r\sigma(\vw_r^\T\vx_i)\right)\nabla_{\vq_r}\left(a_r\sigma(\vw_r^{\T}\vx_i)\right)\right),
%%%%%%%%%%%%%%%%%%%%%%%%
%%%%%%%%%%%%%%%%%%%%%%%%
%%%%%%%%%%%%%%%%%%%%%%%%
%%%%%%%%%%%%%%%%%%%%%%%%
%%%%%%%%%%%%%%%%%%%%%%%%
%%%%%%%%%%%%%%%%%%%%%%%%
%%%%%%%%%%%%%%%%%%%%%%%%
%%%%%%%%%%%%%%%%%%%%%%%%
%%%%%%%%%%%%%%%%%%%%%%%%
%%%%%%%%%%%%%%%%%%%%%%%%
%%%%%%%%%%%%%%%%%%%%%%%%
\end{align*}
where $%%%%%%%%%%%%%%%%%%%%%%%%
 e_{i,\backslash k, \backslash r}:=e_{i,\backslash k,\backslash r}(\vtheta):=\sum_{l=1, l\neq k, l\neq r
 }^m a_{l}\sigma(\vw_{l}^{\T}\vx_i)-y_i$. We remark that such expression is consistent in that for the extreme case where $p=1$, dropout `degenerates' to GD, hence the covariance matrix degenerates to a zero matrix, i.e., $\mSigma=\mzero_{D\times D}$.

\subsection{Experimental results on the dropout noise structure}

In this subsection, we endeavor to show the structural similarity between  the  covariance   and the Hessian  in terms of both  Hessian-variance alignment relations and  Inverse variance-flatness relations. Intuitively,  the structural similarity between  the Hessian   and   covariance matrix is shown below:
\begin{equation}\label{eq...text...similar}
 \begin{aligned}
    \mH(\vtheta)&\approx\frac{1}{n} \sum_{i=1}^{n}\left[ \nabla_{\vtheta} f_{\vtheta}\left(\vx_{i}\right){\otimes} \nabla_{\vtheta} f_{\vtheta}\left(\vx_{i}\right)+ \frac{1-p}{p} \sum_{r=1}^{m} \nabla_{\vq_r}\left(a_r\sigma(\vw_r^{\T}\vx_i)\right)   {\otimes}\nabla_{\vq_r}\left(a_r\sigma(\vw_r^{\T}\vx_i)\right) \right],\\  
 %%%%%%%%%%%%%%%%%%%%%%%%
    \mSigma(\vtheta)&\approx\frac{1}{n}\sum_{i=1}^{n} \left[ l_{i,1} \nabla_{\vtheta}f_{\vtheta}(\vx_i){\otimes}\nabla_{\vtheta}f_{\vtheta}(\vx_i)+ l_{i,2} \frac{1-p}{p}\sum_{r=1}^{m} \nabla_{\vq_r}\left(a_r\sigma(\vw_r^{\T}\vx_i)\right)   {\otimes}\nabla_{\vq_r}\left(a_r\sigma(\vw_r^{\T}\vx_i)\right)\right],
\end{aligned}
\end{equation}
where  $\mH(\vtheta):=\nabla^2_{\vtheta}L_{\fS}(\vtheta)$ , and $l_{i,1}:=  (e_{i})^2+\frac{1-p}{p}\sum_{r=1}^m a_r^2\sigma(\vw_r^\T\vx_i)^2$, $l_{i,2}:=  (e_{i})^2 $ , and the  detailed derivation for \eqref{eq...text...similar} is deferred  to the Appendix. We remark that the expression for the covariance matrix in \eqref{eq...text...similar} differs from the counterpart in Section \ref{subsection...Explicit} since some certain assumptions, as outlined in~\cite{zhu2018anisotropic}, have been imposed. 
With the established structural similarity through the aforementioned   intuitive approximations shown in \eqref{eq...text...similar},  we proceed to the empirical  investigation concerning the intricate relationship between the Hessian   and the covariance.

\subsubsection{Random data collection methods} \label{sec:randomness}
We first introduce two types of dynamical datasets collected during dropout training to study the noise structure of dropout. These datasets are different from the training sample $\fS$. 

% \subsubsection{Random trajectory data}
% The training process usually goes through two phases, one is fast convergence phase, the other is exploration phase, in which the loss decay very slowly.
\textbf{Random trajectory data. }The training process of NNs usually consists of two phases: the fast convergence phase and the exploration phase \citep{shwartz2017opening}.  In the exploration phase, the network is often considered to be near a minimum, and the movement of parameters is largely affected by the noise structure. Based on the previous work \citep{feng2021inverse}, 
%  focus on the behavior of networks in the exploration phase.
% In this work, we follow the experimental scheme in \cite{feng2021inverse} to show the similarity between dropout and SGD. This can be understood by frequency principle \cite{xu2019training,xu2019frequency,zhang2021linear}, which states that DNNs fast learn low-frequency components but slowly learn high-frequency ones. 
we collect parameter sets $\fD_{\rm para}:=\{\vtheta_{i}\}_{i=1}^{N}$ from $N$ consecutive training steps in the exploration phase, where $\vtheta_{i}$ is the network parameter set at $i$-th sample step. 
This sampling method requires a large number of training steps, so model parameters often have large fluctuations during the sampling process. To improve the sampling accuracy, we propose another type of random data to characterize the noise structure of dropout as follows.

%However, we often need a larger time interval for enough sampling to estimate the covariance accurately. Although the loss is small, compared with the initial sampling parameters, the network parameters could have large changes during the long-time sampling. Therefore, much extra noise may be induced. Meanwhile, for dropout, it is difficult to get a small loss value on large networks and datasets, therefore, model parameters often have large fluctuations during the sampling. To overcome this problem, we propose a more appropriate sampling method to avoid additional noise caused by sampling parameters in a large time interval as follows.
% \subsubsection{Random gradient data}

\textbf{Random gradient data. }We   train the network until the loss is near  zero and then we freeze the training process, then we sample $N$ realizations of the dropout variable to get the random gradient dataset, i.e., $\fD_{\rm grad}:=\{\vg_{i}\}_{i=1}^{N}$. The $i$-th sample point $\vg_{i}$ is obtained as follows: i) Firstly, we generate a realization of the dropout variable $\veta_i$ under a given dropout rate; ii) Then, we compute the gradient of the loss function with respect to the parameters, denoted by $\vg_i(\cdot):=\nabla  \RS^\mathrm{drop}\left(\cdot;\veta_i\right)$.
Each element in $\fD_{\rm grad}$ represents an evolution direction of network parameters, determined by the dropout variable. Therefore, studying the structure of $\fD_{\rm grad}$ can help us understand how the dropout noise exerts an impact throughout the training process.

\subsubsection{Hessian-Variance alignment}

% Dropout algorithm is equivalent to the introduction of    noises  of a specific type into the gradient. Henceforth, the noise introduced by dropout is also expected to possess certain properties that provide  guidance for  the NNs to locate good minima. 
% In the following, we empirically show that the Hessian of the loss function and the covariance of dropout noise are highly correlated.

% % \cite{zhu2018anisotropic} show that the alignment indicator ${\rm Tr}(H\mSigma)$
% % plays an crucial role for stochastic processes escaping from
% % minima, where $H$ is the Hessian and $\mSigma$ is the noise covariance. 
% In this subsection, we study the alignment between the Hessian and the random gradient covariance at each training step, i.e., Hessian-variance alignment. 
% % Anisotropic noise helps escape from sharp minima. More precisely, we consider the high similarity between the covariance matrix of gradients of parameters and the Hessian matrix of parameters.
% Note that the training is performed by GD without dropout. At step $i$, we sample the gradients of parameters $\{\vg_i^j\}_{j=1}^{N}$ by tentatively adding a dropout layer between the hidden layers. For each step $i$, we compute $\operatorname{Tr}(H_i \mSigma_i)$, where $H_i$ is the Hessian of the loss at the parameter set at step $i$ and $\mSigma_i$ is the covariance of $\{\vg_i^j\}_{j=1}^{N}$.

 In this subsection, we employ a  metric  $ \operatorname{Tr}(\mH_i \mSigma_i)  $ established to be valuable~\citep{zhu2018anisotropic} in the assessment of  the degree of alignment between the noise structure and  curvature of the loss landscape, 
where $ \operatorname{Tr}(\cdot)$  stands for the trace of a square matrix,  $\mSigma_i$ is the   covariance matrix of $\fD_{\rm grad}$  sampled at the  $i$th-step, whose   definition can be found in Section \ref{sec:randomness}, and $\mH_i$ is the Hessian   of the loss function at the $i$th-step. 

To investigate  the Hessian-Variance alignment relation, we construct an isotropic noise    termed  $\bar{\mSigma}_i$ by means of averaging, i.e., $\bar{\mSigma}_i=\frac{\operatorname{Tr} (\mSigma_i)}{D} \mI_{D\times D}$, where $D$ is the total number of parameters, $\mI_{D\times D}$ is the identity matrix, and  $\bar{\mSigma}_i$ is employed for comparative purposes. As shown in Fig. \ref{fig:anisotropic},  under different learning rates and dropout rates, $\operatorname{Tr}(\mH_i \mSigma_i)$ significantly exceeds $\operatorname{Tr}(\mH_i \bar{\mSigma}_i)$ throughout the whole training process, thus indicating that dropout-induced  noise possesses an anisotropic structure that aligns well with the Hessian across all directions. It should be acknowledged that due to computational limitations, this experiment limits the trace calculation of $\bar{\mSigma}_i$  to  a subset of parameters, which can be regarded as the projection of the Hessian and the noise into some specific  directions.

\begin{figure}[h]
	\centering
	\includegraphics[width=0.5\textwidth]{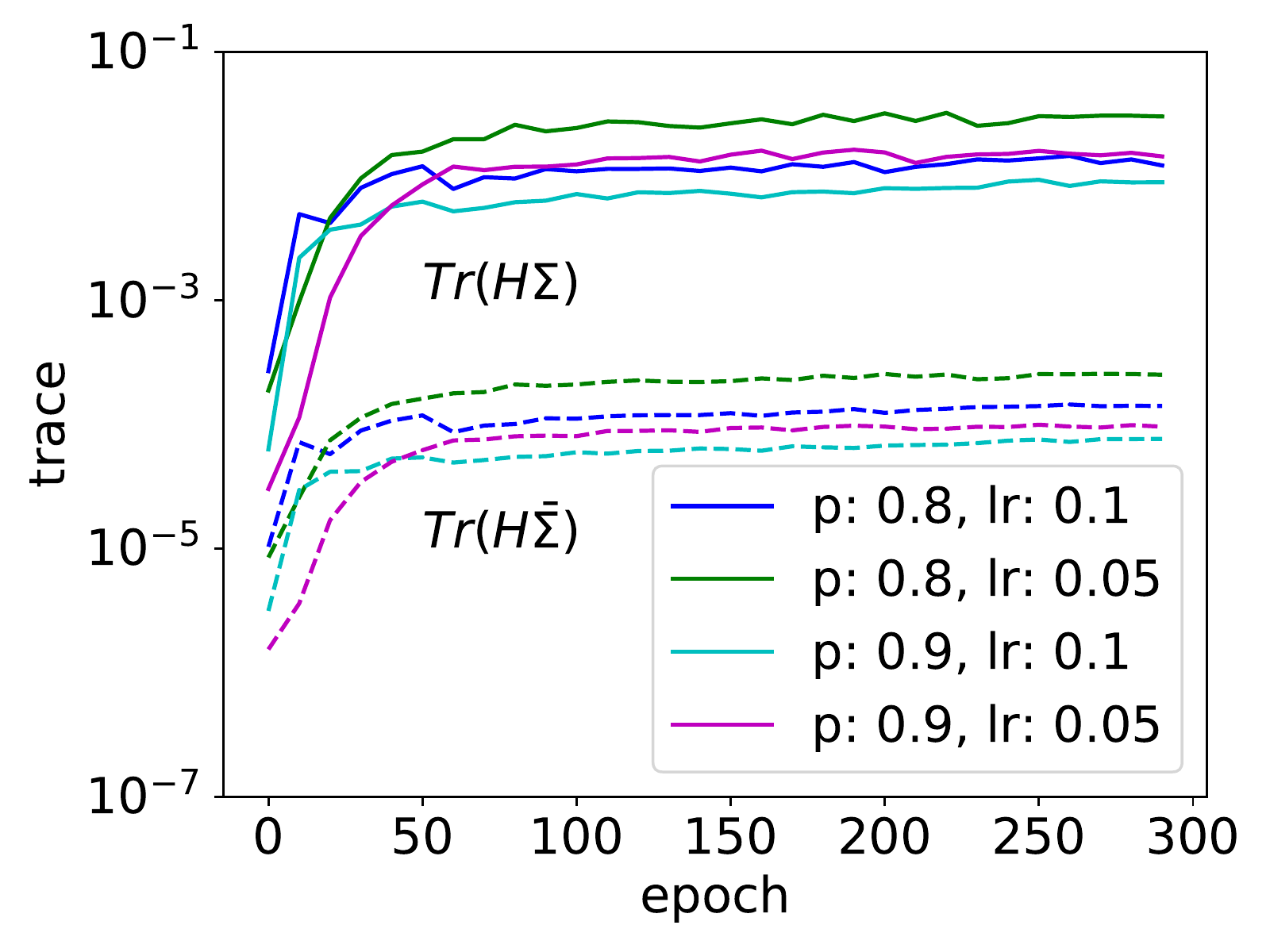}
	\caption{Comparison between $\operatorname{Tr}(\mH_i \mSigma_i)$ and $\operatorname{Tr}(\mH_i \bar{\mSigma}_i)$ in each training epoch $i$ for different choices of $p$ and learning rate $lr$. The FNN is trained on the MNIST dataset using the first 10000 examples as the training dataset. The solid and the dotted lines represent the value of $\operatorname{Tr}(\mH_i \mSigma_i)$ and $\operatorname{Tr}(\mH_i \bar{\mSigma}_i)$, respectively. \label{fig:anisotropic}}
\end{figure}

\subsubsection{Inverse variance-flatness relation} \label{sec:inv_var}
  The alignment relation studied above also implies the inverse variance-flatness relation, i.e.,  the noise variance is large  along the  sharp   direction of the loss landscape, and small along the  flat   direction. In this subsection, we verify this relation by two sets of experiments.  Firstly, we  present two different approaches to characterize the flatness of loss landscape and the covariance of noise from  the random trajectory data $\fD_{\rm para}$ and random gradient data $\fD_{\rm grad}$, then we numerically demonstrate the inverse variance-flatness relation. Due to space limitations, we defer the experiments on ResNet and Transformer to Appendix \ref{app:noise_further}. For convenience,   $\fD$   refers to either the dataset $\fD_{\rm para}$ or  the dataset  $\fD_{\rm para}$ depending on its context, so is the case for their corresponding covariance   $\mSigma$ and Hessian $\mH$. We then proceed to the  definitions of \textbf{noise variance} and \textbf{interval flatness.}
% \subsubsection{Variance vs. interval flatness}
 
\begin{definition}[\textbf{noise variance}]
% \begin{definition}[\textbf{Variance of data at an eigen direction}]
    For dataset $\fD$ and its covariance $\mSigma$, we denote $\lambda_{i}(\mSigma)$ as the $i$th eigenvalue of $\mSigma$ and its corresponding eigen direction as $\vv_i(\mSigma)$. Then we term $\lambda_{i}(\mSigma)$  \emph{ the noise variance of $\fD$ at the eigen direction  $\vv_i(\mSigma)$.}
\end{definition}
% \begin{remark}
%     From the perspective of performing perform principal component analysis on samples, $\lambda_{i}(\mSigma)$ represents the concept of variance.
% \end{remark}

The interval flatness below characterizes the flatness of the landscape around a local minimum.

\begin{definition}[\textbf{interval flatness}\footnote{This definition is also used in \cite{feng2021inverse}}
]
   For a  a local minimum $\vtheta^{*}_{0}$, the loss function profile $R_{\vv}$ along  direction $\vv$ reads:
$$R_{\vv}(\delta)\equiv R_{S}(\vtheta^{*}_{0}+\delta\vv), $$ 
 where $\delta$ represents the distance moved in the $\vv$ direction. The interval flatness $F_{\vv}$ is then defined as the width of the region within which $R_{\vv}(\delta)\leq 2R_{\vv}(0)$. We determine $F_{\vv}$ by finding two closest points $\theta_{\vv}^{l}<0$ and $\theta_{\vv}^{r}>0$ on each side of the minimum that satisfy $R_{\vv}(\theta_{\vv}^{l})=R_{\vv}(\theta_{\vv}^{r})=2R_{\vv}(0)$. The interval flatness is defined as:
\begin{equation}
  F_{\vv}\equiv \theta_{\vv}^{r}-\theta_{\vv}^{l}. \label{eq:Fv}
\end{equation}

\end{definition}

\begin{remark}
    The experiments show that the result is not sensitive to the selection of the pre-factor 2. A larger value of $F_{\vv}$ means a flatter landscape in the direction $\vv$.
\end{remark}

We use PCA to study the weight variations when the training accuracy is nearly $100\%$. 
% For FNNs, networks are trained on MNIST with the first 10000 examples as the training set for computational efficiency. For ResNets, networks are trained on CIFAR-100 with 50000 examples as the training set. For the transformer structure, the network is trained by Multi30k \cite{vaswani2017attention}.
% to make a more convincing conclusion on modern datasets and models.
% \textcolor{black}{In order to gain versatility in text translation tasks, we got similar results on transformer.}
%\red{\sout{We train the MNIST classifier with fully-connected networks. For computational efficiency, we take the first 10000 examples as the training set.}} \red{\sout{We train the network with full batch for different learning rates and dropout rates under the same random seed (that is, with the same initialization parameters).}} 
The networks are trained with full-batch GD for different learning rates and dropout rates under the same random seed. When the loss is small enough, we sample the parameters or gradients of parameters $N$ times ($N=3000$ for this experiment) and study the relationship between $\{\lambda_{i}(\mSigma)\}_{i=1}^N$ and $\{F_{\vv_{i}(\mSigma)}\}_{i=1}^N$ for both weight dataset $\fD_{\rm para}$ and gradient dataset $\fD_{\rm grad}$.
\begin{figure*}[h]
	\centering
	\subfigure[FNN, $\fD=\fD_{\rm para}$]{\includegraphics[width=0.24\textwidth]{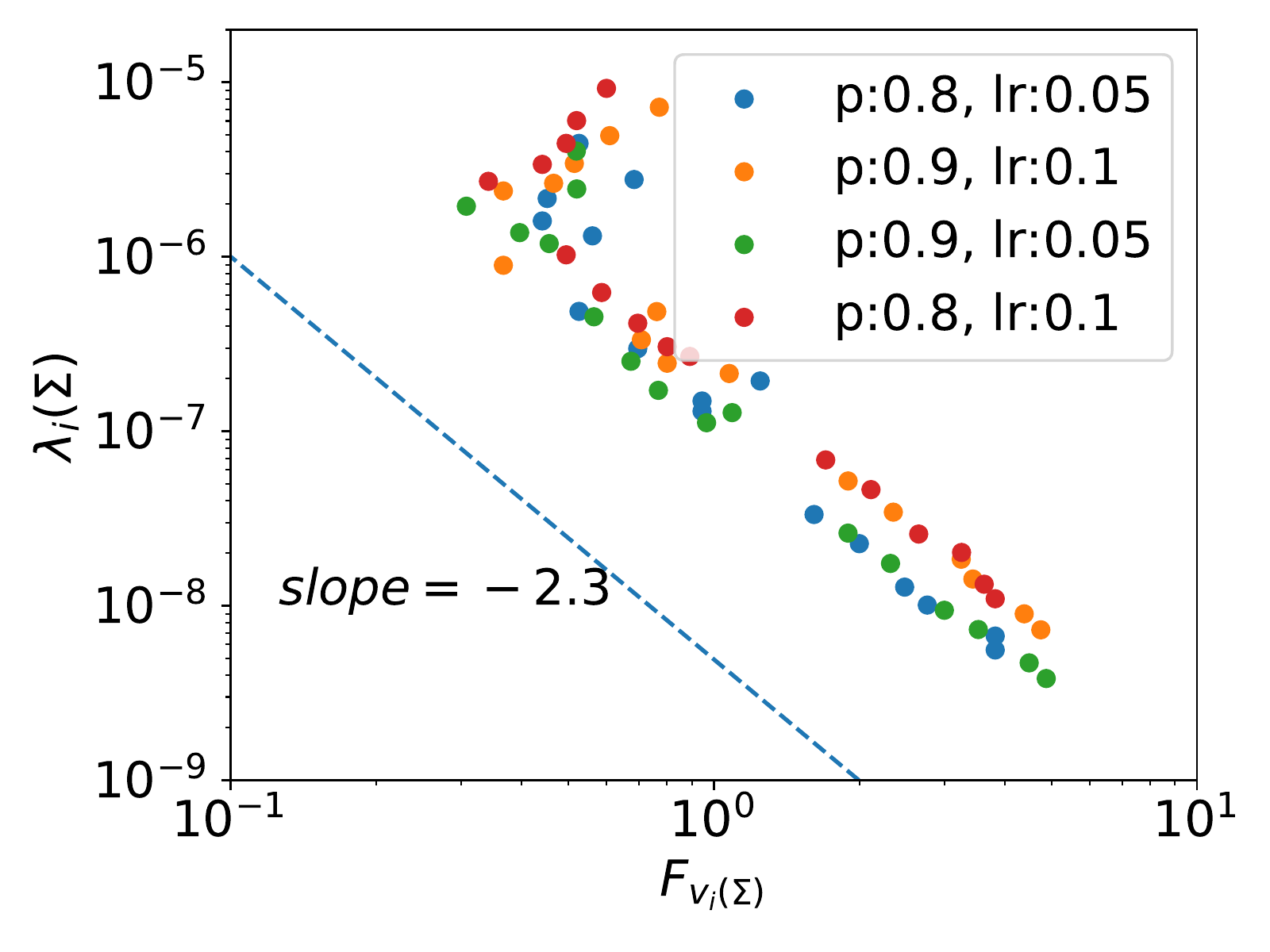}}
 \subfigure[FNN, $\fD=\fD_{\rm grad}$]{\includegraphics[width=0.24\textwidth]{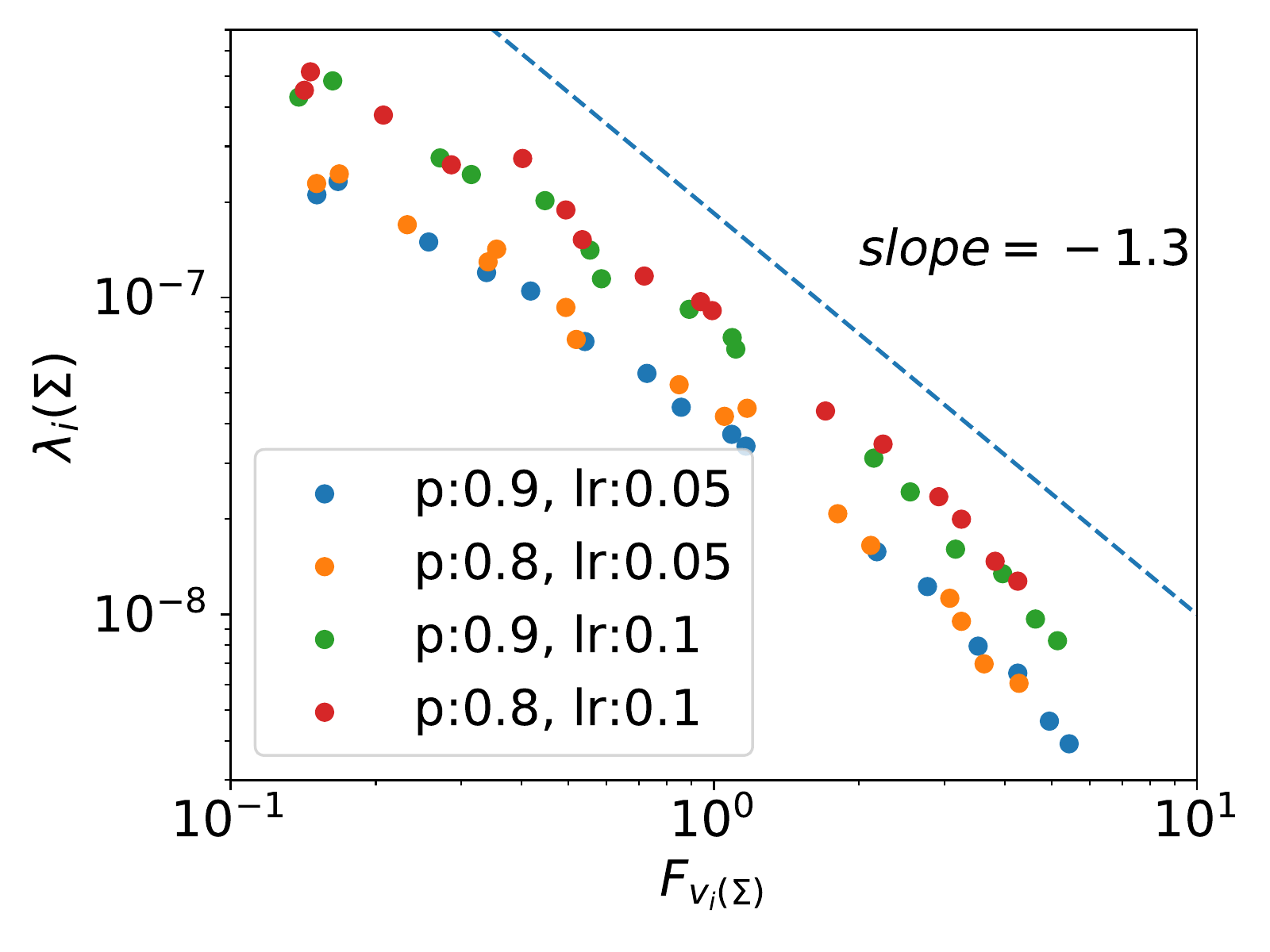}}
\subfigure[FNN, $\fD=\fD_{\rm para}$]{\includegraphics[width=0.24\textwidth]{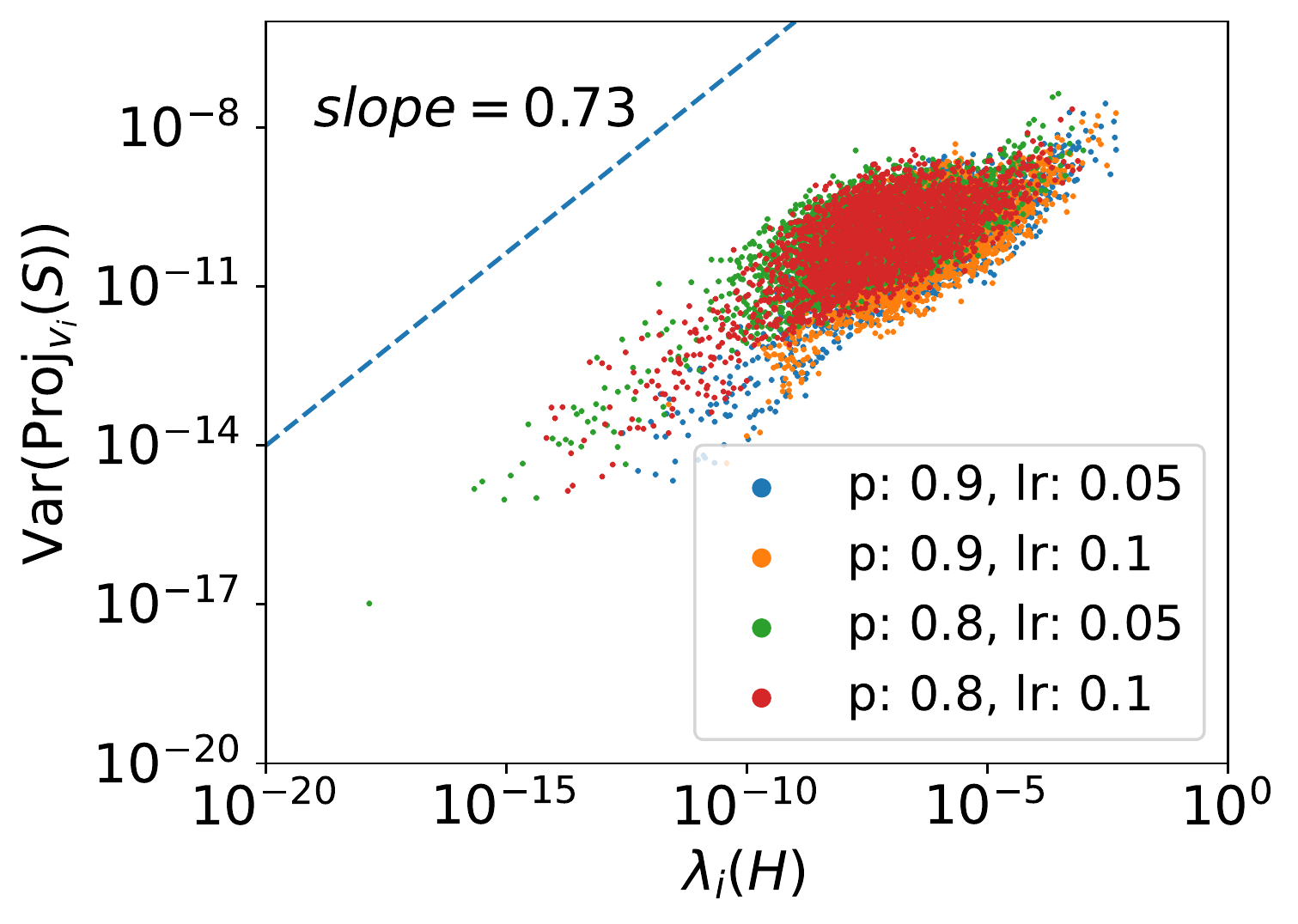}}
 \subfigure[FNN, $\fD=\fD_{\rm grad}$]{\includegraphics[width=0.24\textwidth]{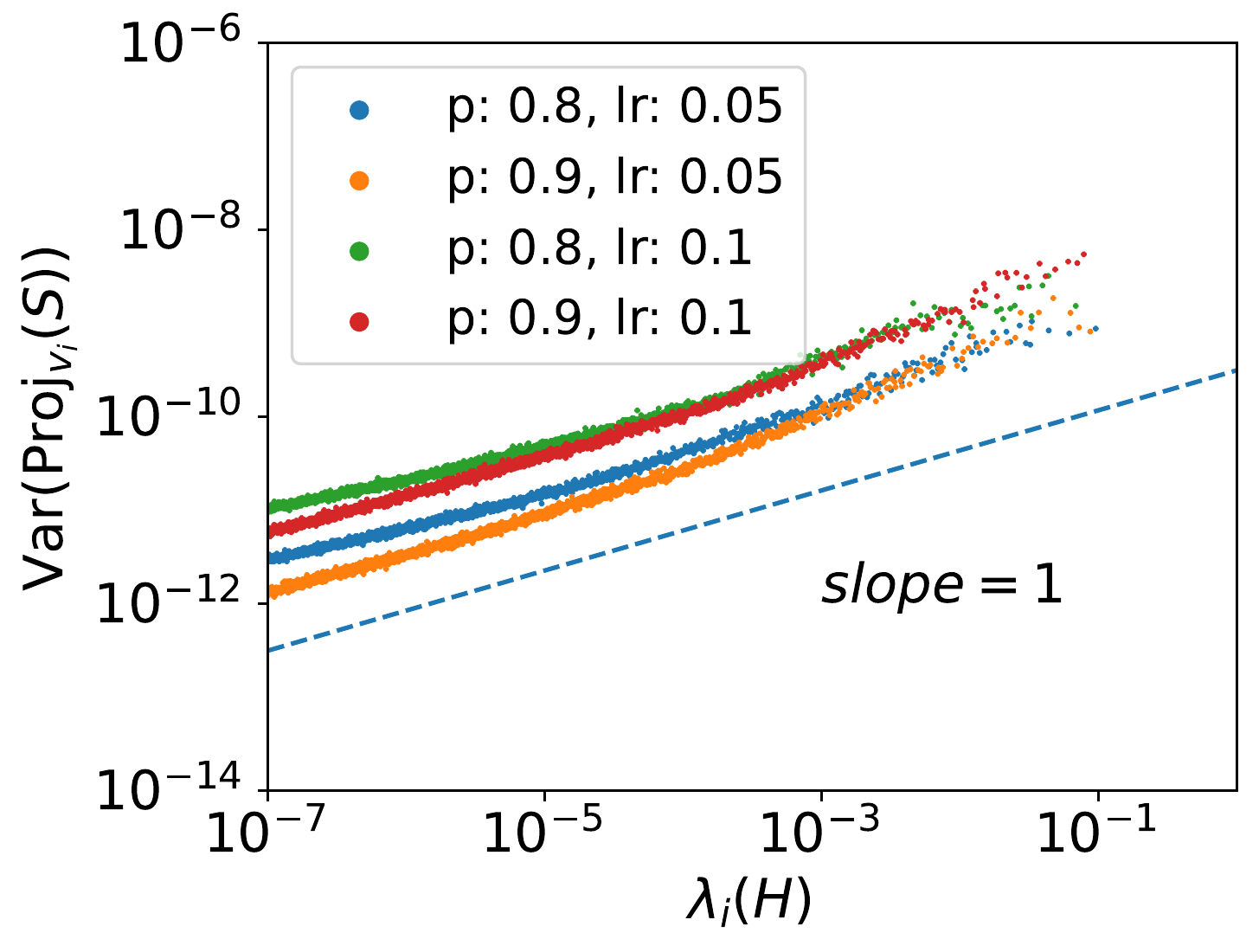}} 

  \caption{(a, b)The inverse relation between the variance $\{\lambda_{i}(\mSigma)\}_{i=1}^N$ and the interval flatness $\{F_{\vv_{i}(\mSigma)}\}_{i=1}^N$ for different choices of $p$ and learning rate $lr$ with different network structures. The PCA is done for different datasets $\fD$ sampled from parameters for the top line and sampled from gradients of parameters for the bottom line. The dashed lines give the approximate slope of the scatter. (c, d)The relation between the variance $\{{\rm Var}({\rm Proj}_{\vv_i(\mH)}(\fD))\}_{i=1}^N$ and the eigenvalue $\{\lambda_{i}(\mH)\}_{i=1}^{N}$ for different choices of $p$ and learning rate $lr$ with different network structures. The projection is done for different datasets $\fD$ sampled from parameters for the top line and sampled from gradients of parameters for the bottom line. The dashed lines give the approximate slope of the scatter. Refer to Appendix \ref{app:noise_further} for further experiments such as ResNet and Transformer.} \label{fig:pca}
\end{figure*} 

 For different learning rates and dropout rates, Fig. \ref{fig:pca}(a, b) reveal an inverse relationship between the interval flatness of the loss landscape denoted as  $\{F_{\vv_{i}(\mSigma)}\}_{i=1}^N$, and the noise variance represented by the    PCA spectrum $\{\lambda_{i}(\mSigma)\}_{i=1}^N$.  Notably, a power-law relationship  can be established    between $\{F_{\vv_{i}(\mSigma)}\}_{i=1}^N$ and $\{\lambda_{i}(\mSigma)\}_{i=1}^N$.  Specifically, in the low flatness region, the dropout-induced noise exhibits a large variance. As the loss landscape transitions into the high flatness regime, the linear relationship between variance and flatness becomes more evident. Overall,  These findings consistently demonstrate the inverse relation between variance and flatness, as exemplified in Fig. \ref{fig:pca}(a, b). Subsequently, we delve into the   definitions of \textbf{Projected variance} and \textbf{Hessian flatness.}

% Denote $\lambda_{i}(\mSigma)$ as the $i$th eigenvalue of $\mSigma$, and denote its corresponding eigen-vector as $\vv_{i}(\mSigma)$. The interval flatness of the loss landscape in the direction $\vv_{i}(\mSigma)$ is denoted as $F_{\vv_{i}(\mSigma)}$. We then experimentally explore the relation of $\{\lambda_{i}(\mSigma), F_{\vv_{i}(\mSigma)}\}_{i=1}^{N}$. 
% \subsubsection{Projected variance vs. Hessian flatness}

\begin{definition}[\textbf{projected variance}]
    For a given direction $\bm{v}\in \sR^{D}$ and dataset $\fD=\{\vtheta_{i}\}_{i=1}^{N}$, where $\vtheta_{i} \in \sR^{D}$, the inner product of $\bm{v}$ and $\vtheta_{i}$ is denoted by $ {\rm Proj}_{\bm{v}}(\vtheta_{i}):=\left<\vtheta_{i}, \bm{v}\right>$, then we can define the projected variance  for $\fD$  at the direction $\bm{v}$  as follows,
    \begin{equation*}
        {\rm Var}({\rm Proj}_{\bm{v}}(\fD))=\frac{\sum_{i=1}^{N}({\rm Proj}_{\bm{v}}(\vtheta_{i})-\vmu)^2}{N}, 
    \end{equation*}
    where $\vmu$ is the mean value of $\{{\rm Proj}_{\bm{v}}(\vtheta_{i})\}_{i=1}^{N}$.
\end{definition}

\begin{definition}[\textbf{Hessian flatness}]
    For Hessian   $\mH$, as we  denote $\lambda_{i}(\mH)$ by the $i$-th eigenvalue of $\mH$ corresponding to the eigenvector $\vv_i(\mH)$, we term $\lambda_{i}(\mH)$  the Hessian flatness along direction $\vv_i(\mH)$.
\end{definition}

The eigenvalues of the Hessian evaluated at a local minimum often serve as indicators of the flatness of the loss landscape, and larger eigenvalues correspond to sharper directions. In our investigation, we analyze the interplay between the eigenvalues of Hessian $\mH$ at the final stage of the training process and the projected variance of dropout at each of the corresponding eigen directions, i.e., $\lambda_i(\mH)$ v.s. $\{{\rm Var}({\rm Proj}_{\vv_i(\mH)}(\fD))\}_{i=1}^N$.  Specifically, we sample the parameters or gradients of parameters  $N$ times ($N=1000$ for this experiment), and examine the relationship between $\{\lambda_{i}(\mH)\}_{i=1}^{N}$ and $\{{\rm Var}({\rm Proj}_{\vv_i(\mH)}(\fD))\}_{i=1}^N$ for both the parameter dataset $\fD_{\rm para}$ and  the gradient dataset $\fD_{\rm grad}$. 

    Under various dropout rates and learning rates,  Fig. \ref{fig:pca}(c, d) presents establishes   a consistent power-law relationship between $\{\lambda_{i}(\mH)\}_{i=1}^{N}$ and $\{{\rm Var}({\rm Proj}_{\vv_i(\mH)}(\fD))\}_{i=1}^N$, and  this relationship remains robust  irrespective of   the choice between parameter dataset $\fD_{\rm para}$ or the gradient dataset $\fD_{\rm grad}$. The positive correlation observed  between the Hessian flatness and the projection variance provides  insights into the structural characteristics of the dropout-induced  noise. Specifically, these characteristics have the potential to facilitate   the escape from sharp minima and enhance the generalization capabilities of NNs. Additionally,    Fig. \ref{fig:pca} highlights the distinct linear structure exhibited by gradient sampling in comparison to parameter sampling,  which corroborates the discussions outlined  in  Section \ref{sec:randomness}.  For detailed experimental evidence,     including  our investigations involving ResNet and Transformer models, one may refer to Appendix \ref{app:noise_further}. %for further details.

\section{Conclusion}
Our main contribution is twofold. First, we derive the SMEs  that provide a weak approximation for the dynamics of the dropout algorithm for two-layer NNs. Second, we demonstrate that  dropout exhibits the inverse variance-flatness relation and the Hessian-variance alignment relation through extensive empirical analysis, which is consistent with SGD. These relations  are widely recognized to be beneficial for finding flatter minima, thus implying that dropout acts as an implicit regularizer that enhances the generalization abilities.

Given the broad applicability of  the methodologies employed in our proof,  we aim to extend the formulations of SMEs to an even wider class of stochastic algorithms applied to NNs with different architectures.  Such an extension could help us better understand the role of stochastic algorithms in NN training. Moreover, the SME framework could offer a promising approach to the examination of  the underlying mechanisms that explain the observed inverse variance-flatness relation and Hessian-variance  relation and beyond.

\section*{Acknowledgments}
This work is sponsored by the National Key R\&D Program of China  Grant No. 2022YFA1008200 (Z. X., T. L.), the Shanghai Sailing Program, the Natural Science Foundation of Shanghai Grant No. 20ZR1429000  (Z. X.), the National Natural Science Foundation of China Grant No. 62002221 (Z. X.), the National Natural Science Foundation of China Grant No. 12101401 (T. L.), Shanghai Municipal Science and Technology Key Project No. 22JC1401500 (T. L.), Shanghai Municipal of Science and Technology Major Project No. 2021SHZDZX0102, and the HPC of School of Mathematical Sciences and the Student Innovation Center, and the Siyuan-1 cluster supported by the Center for High Performance Computing at Shanghai Jiao Tong University.

\bibliographystyle{elsarticle-num-names}
\bibliography{dl}

\newpage

\appendix

\section{Experimental setups}

For Fig. \ref{fig:anisotropic},  Fig. \ref{fig:pca}, we use the FNN with size $784$-$50$-$50$-$10$ for the MNIST classification task. We train the network using GD with the first $10000$ images as the training set. We add a dropout layer behind the second layer. The dropout rate and learning rate are specified and unchanged in each experiment. We only consider the parameter matrix corresponding to the weight and the bias of the fully-connected layer between two hidden layers. Therefore, for experiments in Fig. \ref{fig:anisotropic}, $D=2500$. 

For Fig. \ref{fig:pca_further}(a, c, e, g), we add dropout layers after the convolutional layers, and for each dropout layer, $p=0.8$. We only consider the parameter matrix corresponding to the weight of the first convolutional layer of the first block of the ResNet-20. Models are trained using full-batch GD on the CIFAR100 classification task for $1200$ epochs. The learning rate is initialized at $0.01$. Since the Hessian calculation of ResNet takes much time, we only perform it at a specific dropout rate and learning rate.

For Fig. \ref{fig:pca_further}(b, d, f, h), we use transformer \cite{vaswani2017attention} with $d_{\mathrm{model}}=50, d_k=d_v=20, d_{\mathrm{ff}}=256, h=4, N=3$, the meaning of the parameters is consistent with the original paper. We only consider the parameter matrix corresponding to the weight of the fully-connected layer whose output is queried in the Multi-Head Attention layer of the first block of the decoder. We apply dropout to the output of each sub-layer before it is added to the sub-layer input and normalized. In addition, we apply dropout to the sums of the embeddings and the positional encodings in both the encoder and decoder stacks. For each dropout layer, $p=0.9$. For the English-German translation problem, we use the cross-entropy loss with label smoothing trained by full-batch Adam based on the Multi30k dataset. The learning rate strategy is the same as that in \cite{vaswani2017attention}. The warm-up step is $4000$ epochs, the training step is $10000$ epochs. We only use the first $2048$ examples for training to compromise with the computational burden.

\newpage
% \section{Additional experimental results}\label{app:exp}

\section{Extended experiments on verifying the inverse flatness} \label{app:noise_further}

In this section, we verify the inverse relation between the covariance matrix and the Hessian matrix of dropout through different data collection methods and projection methods on larger network structures, such as ResNet-20 and transformer, and more complex datasets, such as CIFAR-100 and Multi30k, as shown in Fig. \ref{fig:pca_further}. 

\begin{figure}[h]
	\centering
	\subfigure[ResNet-20, $D=D_{\rm para}$]{\includegraphics[width=0.23\textwidth]{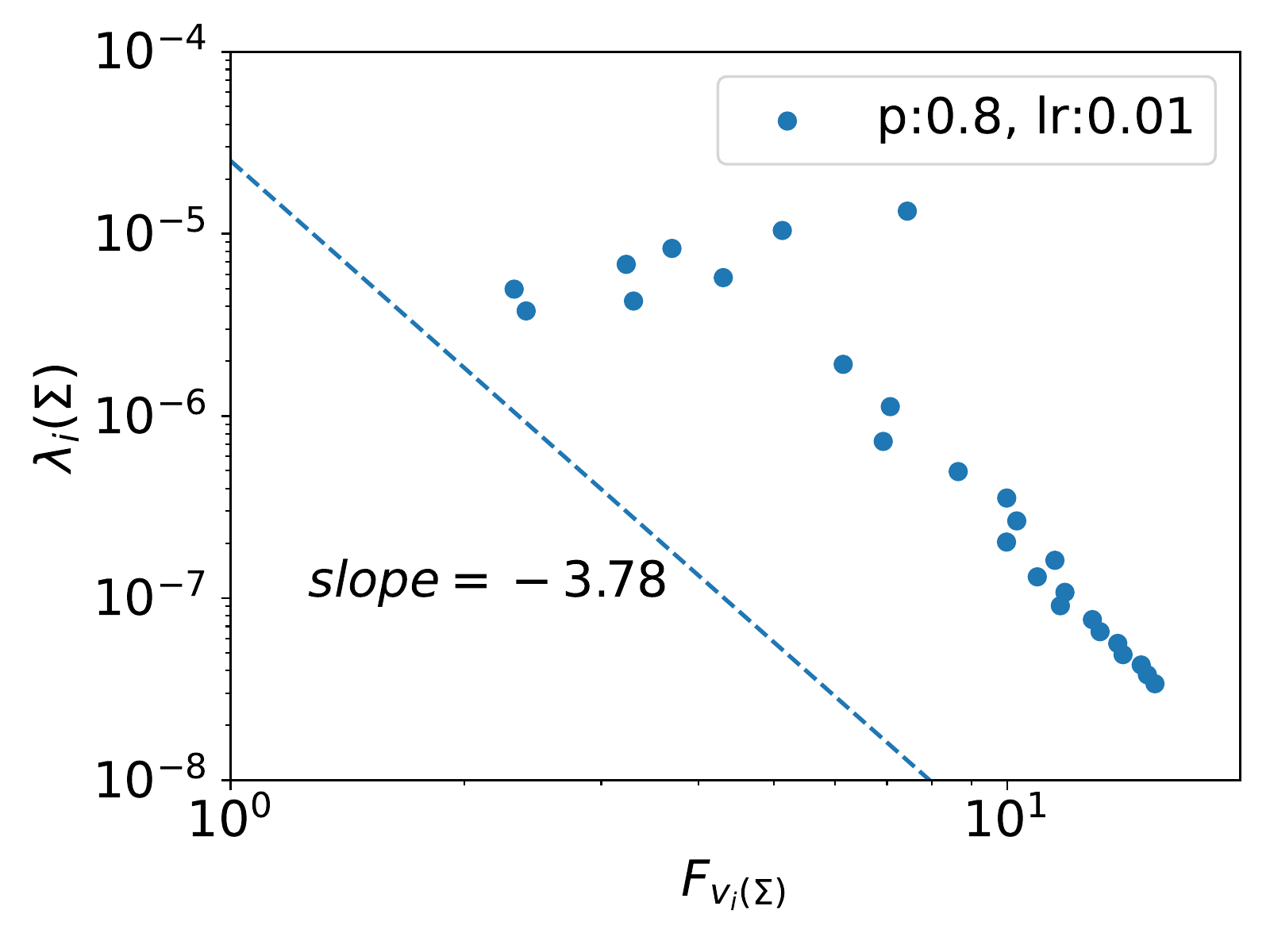}}
	\subfigure[Transformer, $D=D_{\rm para}$]{\includegraphics[width=0.23\textwidth]{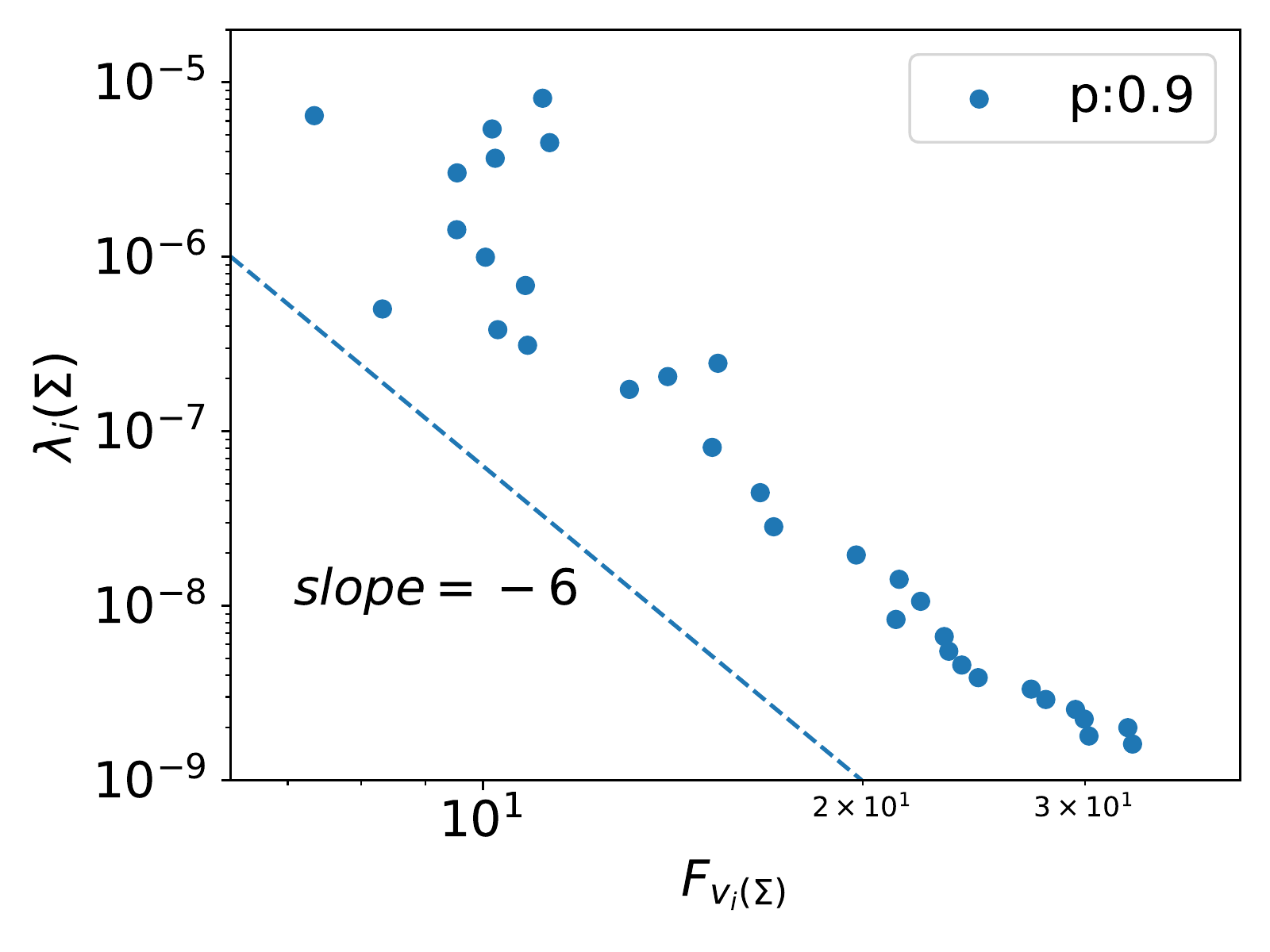}}
 \subfigure[ResNet-20, $D=D_{\rm grad}$]{\includegraphics[width=0.23\textwidth]{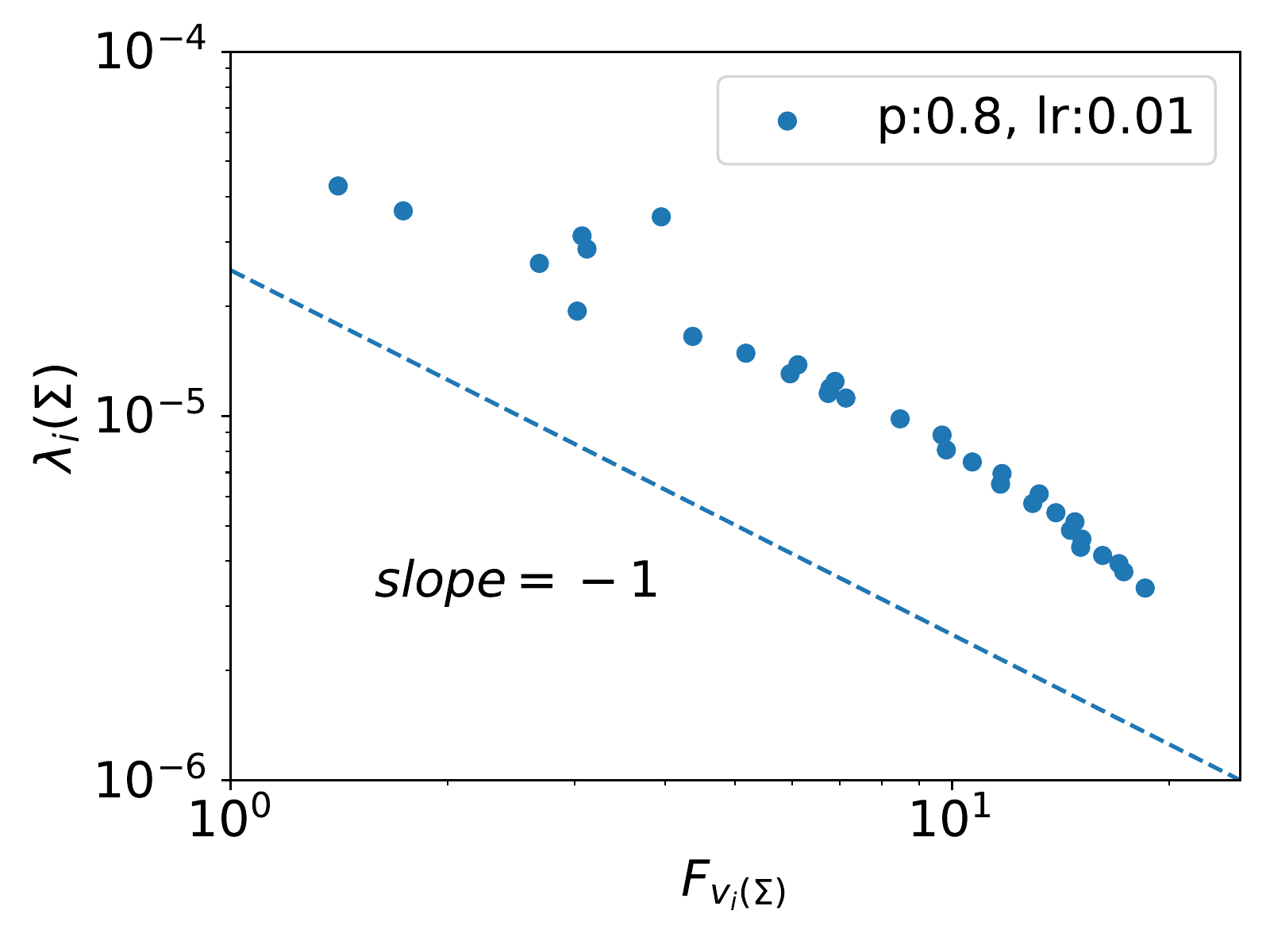}} 
\subfigure[Transformer, $D=D_{\rm grad}$]{\includegraphics[width=0.23\textwidth]{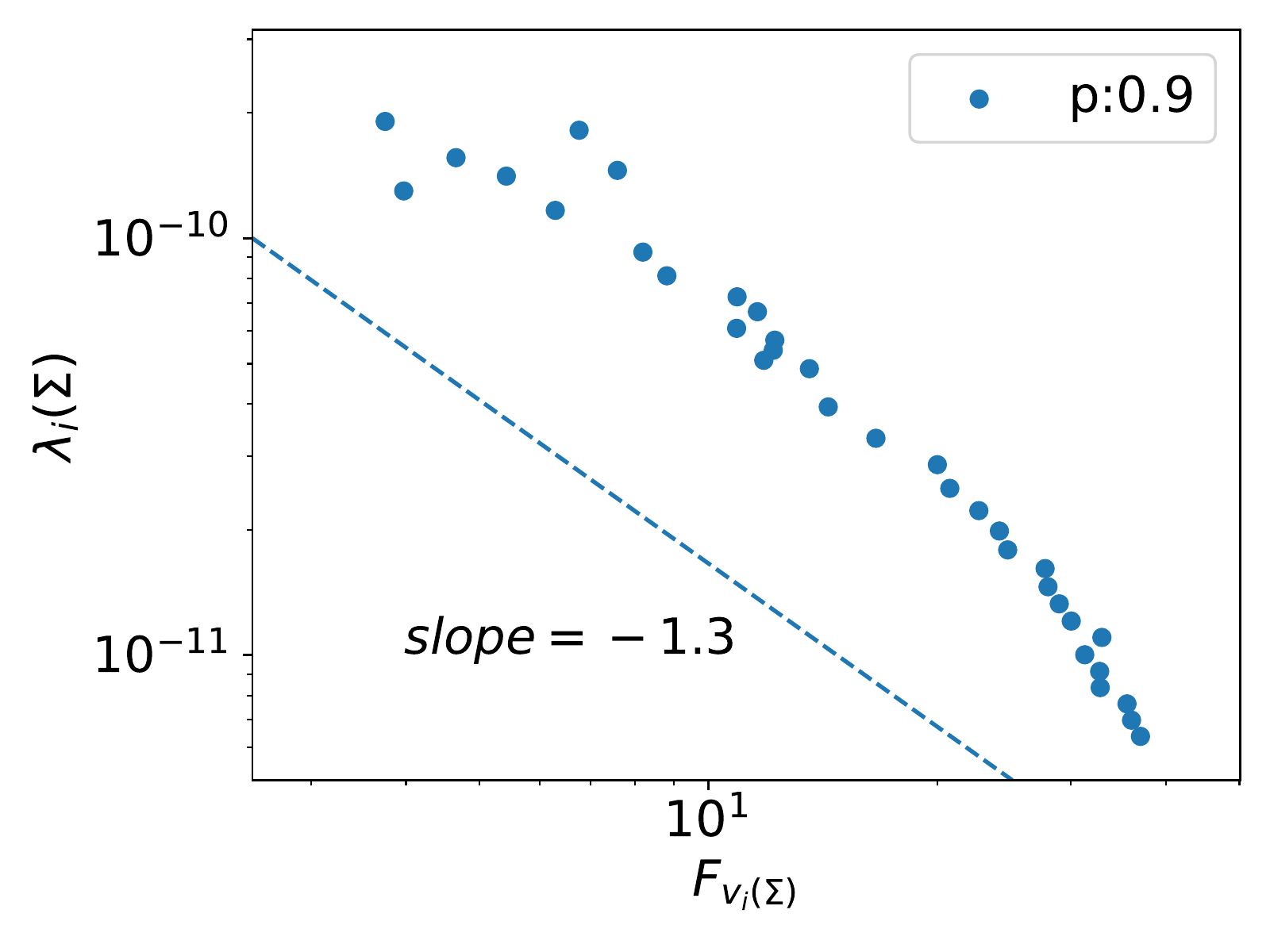}} \\
	\subfigure[ResNet-20, $D=D_{\rm para}$]{\includegraphics[width=0.23\textwidth]{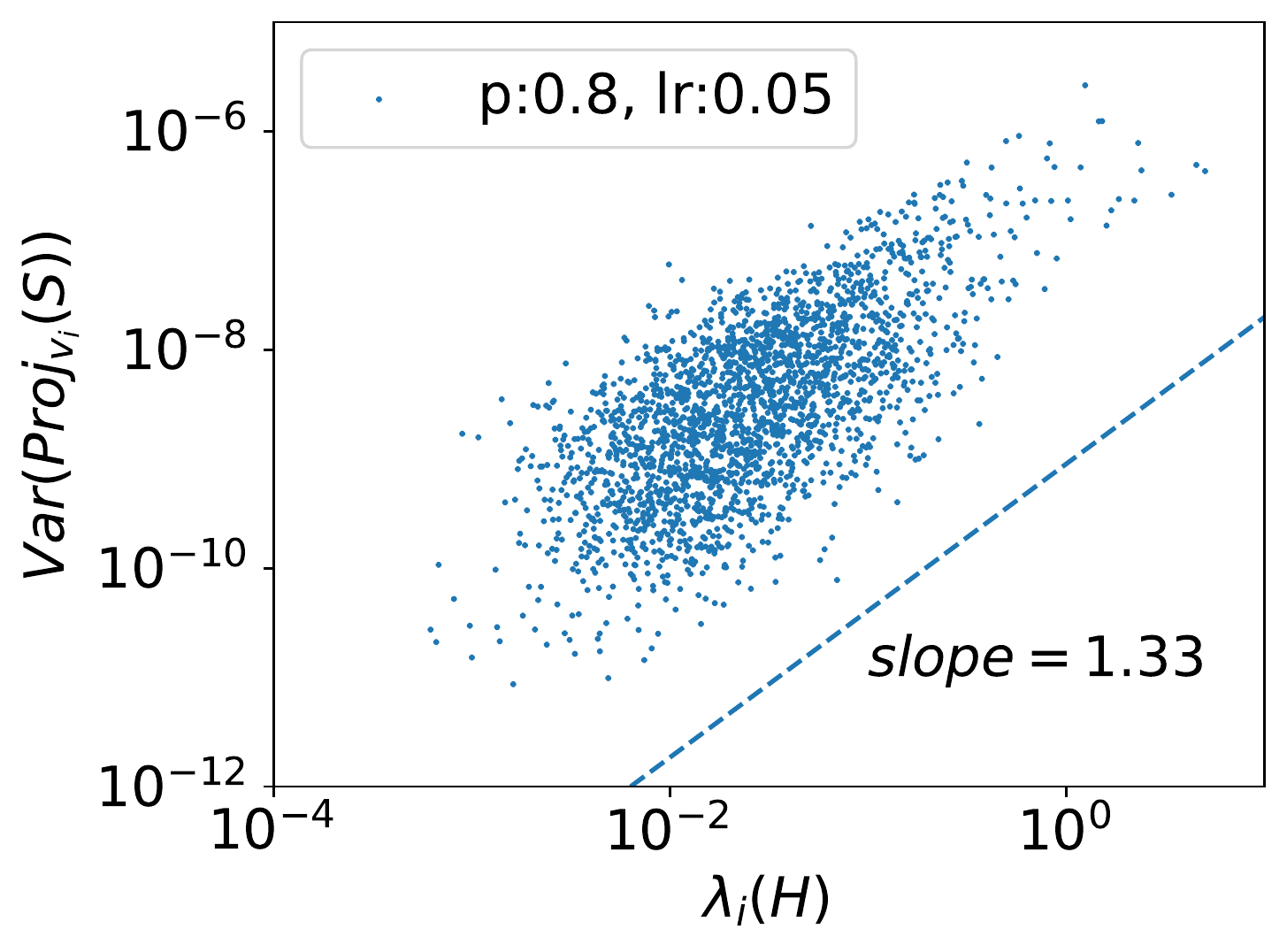}}
	\subfigure[Transformer, $D=D_{\rm para}$]{\includegraphics[width=0.23\textwidth]{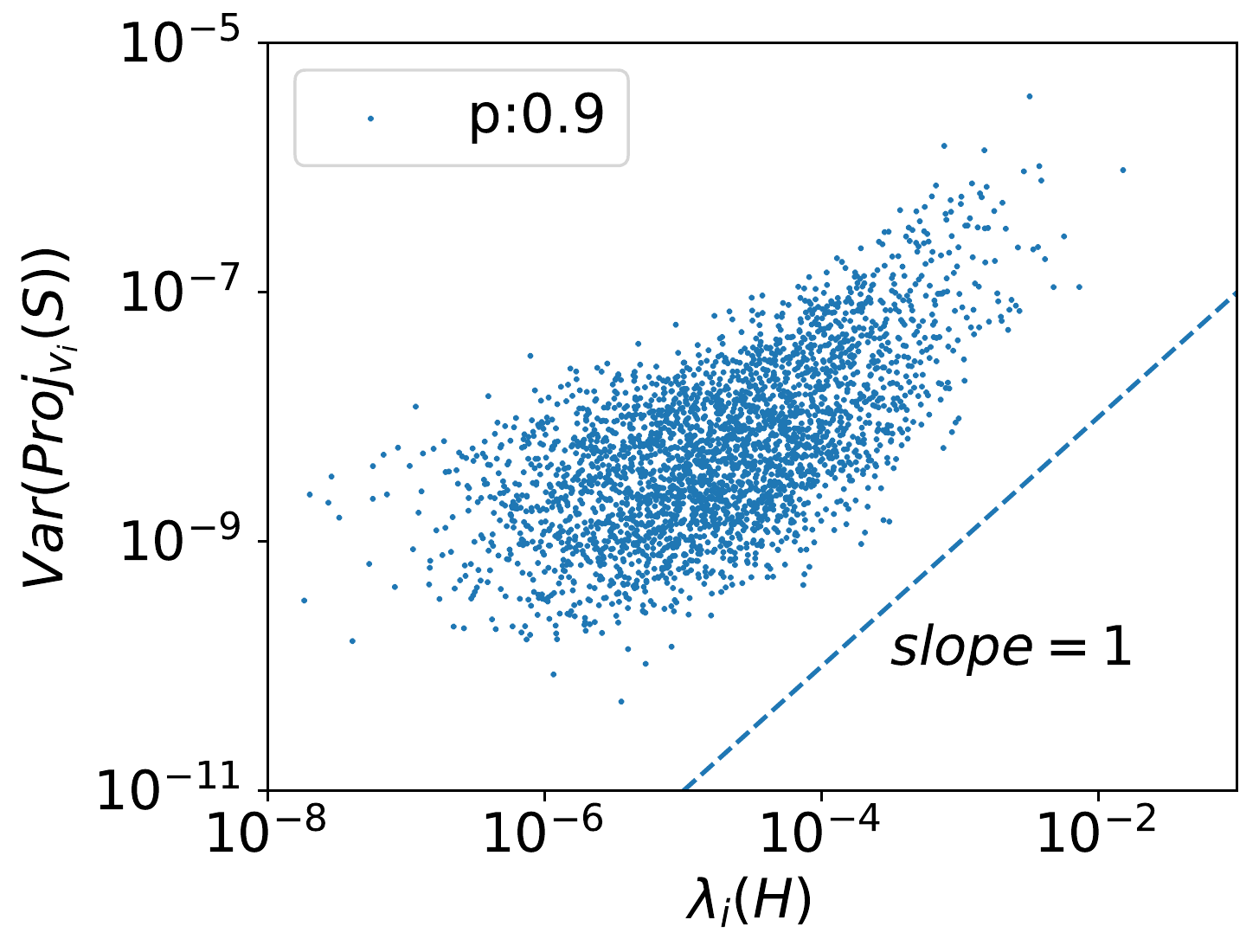}}
 	\subfigure[ResNet-20, $D=D_{\rm grad}$]{\includegraphics[width=0.23\textwidth]{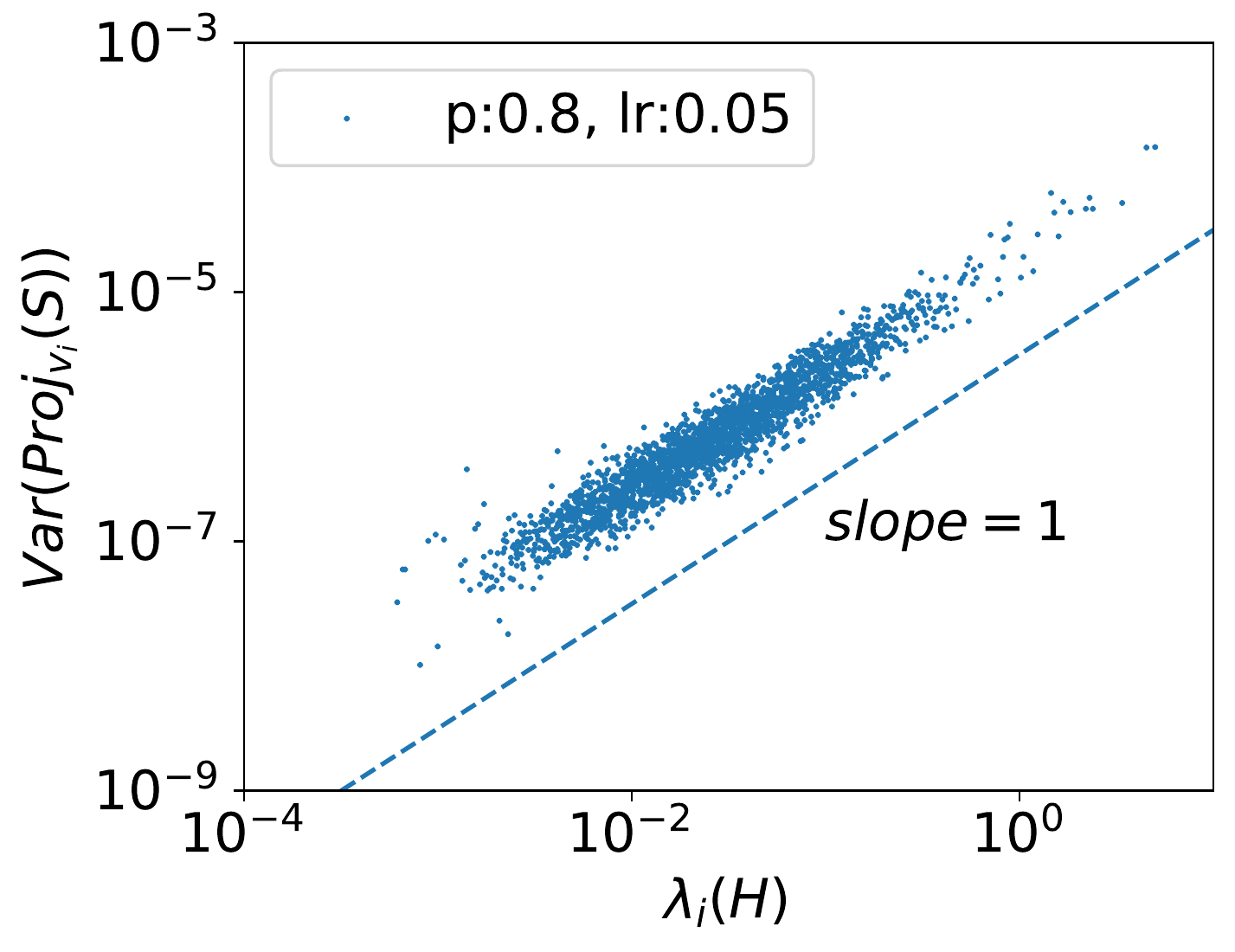}}
	\subfigure[Transformer, $D=D_{\rm grad}$]{\includegraphics[width=0.23\textwidth]{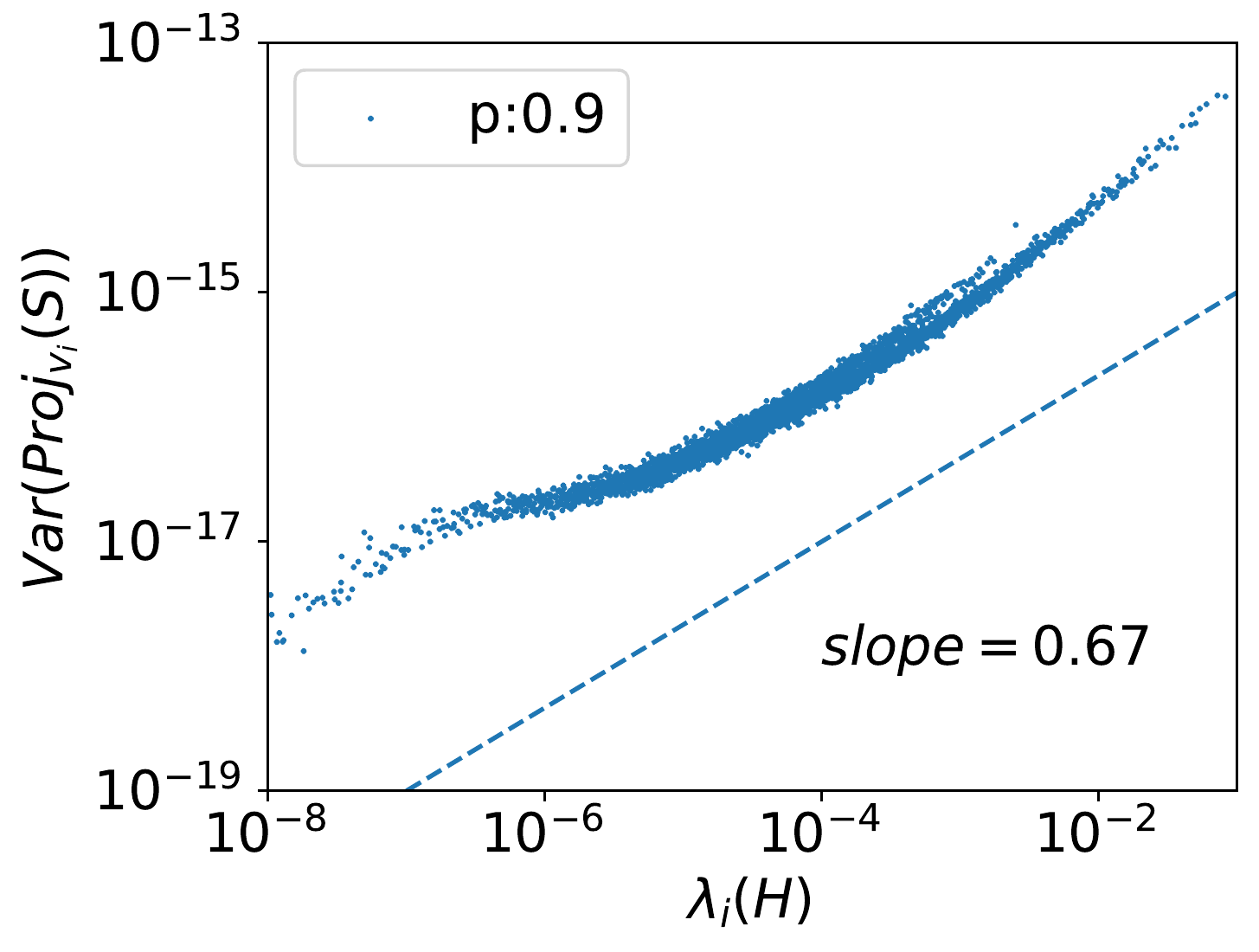}} 

  \caption{(a, b, c, d) The inverse relation between the variance $\{\lambda_{i}(\mSigma)\}_{i=1}^N$ and the interval flatness $\{F_{\vv_{i}(\mSigma)}\}_{i=1}^N$ for different choices of $p$ and learning rate $lr$ with different network structures. The PCA is done for different datasets $D$ sampled from parameters for the top line and sampled from gradients of parameters for the bottom line. The dashed lines give the approximate slope of the scatter. (e, f, g, h) The relation between the variance $\{{\rm Var}({\rm Proj}_{\vv_i(H)}(D))\}_{i=1}^N$ and the eigenvalue $\{\lambda_{i}(H)\}_{i=1}^{N}$ for different choices of $p$ and learning rate $lr$ with different network structures. The projection is done for different datasets $D$ sampled from parameters for the top line and sampled from gradients of parameters for the bottom line. The dashed lines give the approximate slope of the scatter. }
  \label{fig:pca_further}
\end{figure}

\newpage

\section{Preliminaries}\label{section....Preliminaries}
%%%%%%%%%%%%%%%%%%%%%%%%%%%%%%%%%%%%%%%%%%%%%%%%%%%%%%%%%%%%
%%%%%%%%%%%%%%%%%%%%%%%%%%%%%%%%%%%%%%%%%%%%%%%%%%%%%%%%%%%%
%%%%%%%%%%%%%%%%%%%%%%%%%%%%%%%%%%%%%%%%%%%%%%%%%%%%%%%%%%%%
\subsection{Notations}\label{subsection...Notations}
%%%%%%%%%%%%%%%%%%%%%%%%%%%%%%%%%%%%%%%%%%%%%%%%%%%%%%%%%%%%
%%%%%%%%%%%%%%%%%%%%%%%%%%%%%%%%%%%%%%%%%%%%%%%%%%%%%%%%%%%%
%we adhere wherever possible to the following notation.
%%%%%%%%%%%%%%%%%%%%%%%%%%%%%%%%%%%%%%%%%%%%%%%%%%%%%%%%%%%%
%We begin this section by introducing some notations that will be used hereafter.  
We adhere wherever possible to the following notation.  Dimensional indices are written as subscripts with a bracket to avoid confusion with other sequential indices (e.g. time, iteration number), which do not have brackets.  When more than one indices are present, we separate them with a comma, e.g. $\vx_{k,(i)}$ is the $i$-th coordinate of the vector $\vx_k$, the $k^{\text {th }}$ member of a sequence. %We adopt the Einstein's summation convention, where repeated (spatial) indices are summed, i.e.,  \[\vx_{(i)} \vx_{(i)}:=\sum_{i=1}^D \vx_{(i)} \vx_{(i)}.\] 

We set   a special vector $(1,1,1,\dots,1)^\T$ by $\vone:=(1,1,1,\dots,1)^\T$ whose dimension varies.  We set $n$ for the number of input samples, $m$ for the width of the neural network, and $D:=m(d+1)$ hereafter in this paper.
We let $[n]=\{1,2, \ldots, n\}$. We set   $\fN(\vmu, \mSigma)$ as the normal distribution with mean $\vmu$ and covariance $\mSigma$.
We denote $\otimes$ as the Kronecker tensor product,    $\left<\cdot,\cdot\right>$ for standard inner product between two vectors, and $ \mA : \mB $ for the Frobenius inner product between two matrices $\mA$ and $\mB$.
We denote vector $L^2$ norm as $\Norm{\cdot}_2$,  vector or function $L_{\infty}$ norm as $\Norm{\cdot}_{\infty}$, function $L_{1}$ norm as $\Norm{\cdot}_{1}$,  matrix infinity norm as $\Norm{\cdot}_{\infty\to\infty}$,  matrix spectral~(operator) norm as $\Norm{\cdot}_{2\to 2}$, and  matrix Frobenius norm as $\Norm{\cdot}_{\mathrm{F}}.$    
%We use $\fO(\cdot)$ and $\Omega(\cdot)$ for the standard Big-O and Big-Omega notations.  
Finally,  we denote the set of continuous functions $f(\cdot):\sR^D\to\sR$ possessing   continuous derivatives of order up to and including $r$ by $\fC^{r}(\sR^D)$, and  for a Polish space $\fX$,  we denote     the space of bounded measurable functions by $\fB_b(\fX)$, and  the space of bounded continuous functions by $\fC_b(\fX)$. In the mathematical discipline of general topology, a Polish space is  a separable complete metric space. 
%%%%%%%%%%%%%%%%%%%%%%%%%%%%%%%%%%%%%%%%%%%%%%%%%%%%%%%%%%%%
%%%%%%%%%%%%%%%%%%%%%%%%%%%%%%%%%%%%%%%%%%%%%%%%%%%%%%%%%%%%
%%%%%%%%%%%%%%%%%%%%%%%%%%%%%%%%%%%%%%%%%%%%%%%%%%%%%%%%%%%%
\subsection{Problem Setup}\label{subsection....ProblemSetup}
%%%%%%%%%%%%%%%%%%%%%%%%%%%%%%%%%%%%%%%%%%%%%%%%%%%%%%%%%%%%
%%%%%%%%%%%%%%%%%%%%%%%%%%%%%%%%%%%%%%%%%%%%%%%%%%%%%%%%%%%% 
For the empirical risk minimization problem given by the quadratic loss:
%%%%%%%%%%%%%%%%%%%%%%%%%%%%%%%%%%%%%%%%%%%%%%%%%%%%%%%%%%%%
\begin{equation}\label{eq...text...Prelim...LossFunction}
\min_{\vtheta}R_{\fS}(\vtheta)=\frac{1}{2n}\sum_{i=1}^n\left({f_{\vtheta}(\vx_i)-y_i}\right)^2,
\end{equation}
%%%%%%%%%%%%%%%%%%%%%%%%%%%%%%%%%%%%%%%%%%%%%%%%%%%%%%%%%%%%
where $\fS:=\{ (\vx_i, y_i)\}_{i=1}^n$ is  the training sample,    $f_{\vtheta}(\vx)$ is the prediction function,  $\vtheta$ are the parameters to be optimized over, and their dependence is modeled by   a two-layer neural network~(NN) with $m$  hidden neurons
%%%%%%%%%%%%%%%%%%%%%%%%%%%%%%%%%%%%%%%%%%%%%%%%%%%%%%%%%%%%
\begin{equation}\label{eq...text...Prelim...TwolayerNN}
    f_{\vtheta}(\vx) := \sum_{r=1}^{m}a_r\sigma(\vw_r^{\T}\vx),
\end{equation}
%%%%%%%%%%%%%%%%%%%%%%%%%%%%%%%%%%%%%%%%%%%%%%%%%%%%%%%%%%%%
where $\vx\in\sR^{d}$,   $\vtheta=\mathrm{vec}(\vtheta_a,\vtheta_{\vw})$ with $\vtheta_a=\mathrm{vec}(\{a_r\}_{r=1}^{m})$, $\vtheta_{\vw}=\mathrm{vec}(\{\vw_r\}_{r=1}^{m})$ is the set of parameters,  $\sigma(\cdot)$ is the activation function applied coordinate-wisely to its input, and $\sigma$ is $1$-Lipschitz with $\sigma\in\fC^{\infty}(\sR)$. More precisely, $\vtheta=\mathrm{vec}(\{\vq_r\}_{r=1}^m)$ whereas for each $r\in[m]$, $\vq_r:=(a_r,\vw_r^{\T})^\T$. We remark that the bias term $b_r$ can be incorporated by expanding $\vx$ and $\vw_r$ to $(\vx^\T,1)^\T$ and $\left(\vw_r^\T,b_r\right)^\T$.

Given fixed learning rate $\eps>0$, then  at the $N$-th iteration, where   \[t_N:=N\eps,\] and a scaling vector $\veta_N \in \sR^{m}$ is sampled  with independent random coordinates: For each $k\in[m]$,
%%%%%%%%%%%%%%%%%%%%%%%%%%%%%%%%%%%%%%%%%%%%%%%%%%%%%%%%%%%%
\begin{equation}\label{eq...text...Prelim...RandomVectorforDropout}
    (\veta_N)_{k}= \begin{cases}\frac{1}{p} & \text { with probability } p, \\ 0 & \text { with probability } 1-p, \end{cases}
\end{equation}
%%%%%%%%%%%%%%%%%%%%%%%%%%%%%%%%%%%%%%%%%%%%%%%%%%%%%%%%%%%%
and we observe that $\{\veta_N\}_{N\geq 1}$ is an i.i.d.\ Bernulli sequence with $\Exp \veta_1=\vone$, and naturally, with slight abuse of notations, the $\sigma$-fields $\fF_N:=\left\{\sigma(\veta_1, \veta_2, \cdots \veta_N)\right\}$ forms a filtration.

We then apply dropout to   two-layer NNs by computing 
%%%%%%%%%%%%%%%%%%%%%%%%%%%%%%%%%%%%%%%%%%%%%%%%%%%%%%%%%%%%
\begin{equation}\label{eq...text...Prelim...Dropout_Output}
\vf_{\vtheta}(\vx;\veta ):=\sum_{r=1}^m (\veta )_{r}a_r\sigma(\vw_r^{\T}\vx),
\end{equation}
%%%%%%%%%%%%%%%%%%%%%%%%%%%%%%%%%%%%%%%%%%%%%%%%%%%%%%%%%%%%
and   we denote the empirical risk associated with dropout   by  
%%%%%%%%%%%%%%%%%%%%%%%%%%%%%%%%%%%%%%%%%%%%%%%%%%%%%%%%%%%%
\begin{equation}\label{eq...text...Prelim...RS^drop} 
\begin{aligned}
R_{\fS}^\mathrm{drop}\left(\vtheta;\veta\right) :&= \frac{1}{2n}\sum_{i=1}^n\left(\vf_{\vtheta}(\vx_i;\veta)-y_i\right)^2\\
%%%%%%%%%%%%%%%%%%%%%%%%%%%
&=\frac{1}{2n}\sum_{i=1}^n\left(\sum_{r=1}^m (\veta)_{r}a_r\sigma(\vw_r^{\T}\vx_i)-y_i\right)^2.   
\end{aligned}
\end{equation}
%%%%%%%%%%%%%%%%%%%%%%%%%%%%%%%%%%%%%%%%%%%%%%%%%%%%%%%%%%%%
We observe that  the parameters   at the $N$-th step are updated via back propagation as follows: 
%%%%%%%%%%%%%%%%%%%%%%%%%%%%%%%%%%%%%%%%%%%%%%%%%%%%%%%%%%%
\begin{equation}\label{eq...text...Prelim...Setup...ThetaUpdate...Abstract}
 \vtheta_N= \vtheta_{N-1}-\eps \nabla_{\vtheta}R_{\fS}^\mathrm{drop}\left(\vtheta_{N-1}; \veta_{N}\right),
\end{equation}
%%%%%%%%%%%%%%%%%%%%%%%%%%%%%%%%%%%%%%%%%%%%%%%%%%%%%%%%%%%
where $\vtheta_0:=\vtheta(0)$. Finally,  we denote hereafter that  for all $i\in[n]$, 
%%%%%%%%%%%%%%%%%%%%%%%%%%%%%%%%%%%%%%%%%%%%%%%%%%%%%%%%%%%
\[
e_{i}^N :=e_i(\vtheta_{N-1};\veta_N) := \vf_{\vtheta_{N-1}}(\vx_i; \veta_N) - y_i,   
\]
%%%%%%%%%%%%%%%%%%%%%%%%%%%%%%%%%%%%%%%%%%%%%%%%%%%%%%%%%%%
hence the empirical risk associated with dropout 
 $R_{\fS}^\mathrm{drop}\left(\vtheta_{N-1}; \veta_N\right)$ can be written into 
%%%%%%%%%%%%%%%%%%%%%%%%%%%%%%%%%%%%%%%%%%%%%%%%%%%%%%%%%%%
\[R_{\fS}^\mathrm{drop}\left(\vtheta_{N-1}; \veta_N\right)=\frac{1}{2n}\sum_{i=1}^n\left(e_{i}^N\right)^2,\] 
%%%%%%%%%%%%%%%%%%%%%%%%%%%%%%%%%%%%%%%%%%%%%%%%%%%%%%%%%%%
thus the dropout iteration  
\eqref{eq...text...Prelim...Setup...ThetaUpdate...Abstract}  reads
%%%%%%%%%%%%%%%%%%%%%%%%%%%%%%%%%%%%%%%%%%%%%%%%%%%%%%%%%%%%
\begin{align*}
\vtheta_N- \vtheta_{N-1}&= -\eps \nabla_{\vtheta}R_{\fS}^\mathrm{drop}\left(\vtheta_{N-1}; \veta_{N}\right)=-\frac{\eps}{n}\sum_{i=1}^ne_{i}^N \nabla_{\vtheta}e_{i}^N,
%%%%%%%%%%%%%%%%%%%%%%%%
%%%%%%%%%%%%%%%%%%%%%%%%
%%%%%%%%%%%%%%%%%%%%%%%%
\end{align*}    
%%%%%%%%%%%%%%%%%%%%%%%%%%%%%%%%%%%%%%%%%%%%%%%%%%%%%%%%%%%%  
and we may proceed to the  introduction of the stochastic modified equation~(SME) approximation.  
%%%%%%%%%%%%%%%%%%%%%%%%%%%%%%%%%%%%%%%%%%%%%%%%%%%%%%%%%%%%
%%%%%%%%%%%%%%%%%%%%%%%%%%%%%%%%%%%%%%%%%%%%%%%%%%%%%%%%%%%%
%%%%%%%%%%%%%%%%%%%%%%%%%%%%%%%%%%%%%%%%%%%%%%%%%%%%%%%%%%%%
%%%%%%%%%%%%%%%%%%%%%%%%%%%%%%%%%%%%%%%%%%%%%%%%%%%%%%%%%%%%

\newpage

\section{Stochastic Modified Equations for Dropout}\label{section...ModifiedEquations}
%%%%%%%%%%%%%%%%%%%%%%%%%%%%%%%%%%%%%%%%%%%%%%%%%%%%%%%%%%%%
%%%%%%%%%%%%%%%%%%%%%%%%%%%%%%%%%%%%%%%%%%%%%%%%%%%%%%%%%%%%
\subsection{Modified Loss}\label{subsection...SME...ModifiedLoss}
%%%%%%%%%%%%%%%%%%%%%%%%%%%%%%%%%%%%%%%%%%%%%%%%%%%%%%%%%%%%
%%%%%%%%%%%%%%%%%%%%%%%%%%%%%%%%%%%%%%%%%%%%%%%%%%%%%%%%%%%
Recall that the parameters at the $N$-th step are updated   as follows: 
%%%%%%%%%%%%%%%%%%%%%%%%%%%%%%%%%%%%%%%%%%%%%%%%%%%%%%%%%%%
\begin{equation}\label{eq...text...SME...ModifiedLoss...ThetaUpdate...Abstract...appendix}
 \vtheta_N= \vtheta_{N-1}-\frac{\eps}{n}\sum_{i=1}^ne_{i}^N \nabla_{\vtheta}e_{i}^N,
\end{equation}
%%%%%%%%%%%%%%%%%%%%%%%%%%%%%%%%%%%%%%%%%%%%%%%%%%%%%%%%%%% 
and since $\{\veta_N\}_{N\geq 1}$ is an i.i.d.\  sequence, then the dropout iteration  \eqref{eq...text...SME...ModifiedLoss...ThetaUpdate...Abstract...appendix} updates  the  parameters   in a recursion form of 
%%%%%%%%%%%%%%%%%%%%%%%%%%%%%%%%%%%%%%%%%%%%%%%%%%%%%%%%%%%% 
\begin{equation}\label{eq...text...SME...ModifiedLoss...ThetaUpdate...RecursionForm}
\vtheta_{N}=\vF(\vtheta_{N-1},\veta_N),
\end{equation}
%%%%%%%%%%%%%%%%%%%%%%%%%%%%%%%%%%%%%%%%%%%%%%%%%%%%%%%%%%%  
where $\vF(\cdot,\cdot):\sR^D\times \sR^m\to \sR^D $ is a smooth~($\fC^{\infty}$) function, and   $\{\veta_N\}_{N\geq 1}$ is a disturbance sequence on $\sR^m$, whose marginal distribution possesses a density supported on an open subset of   $\sR^m$.
Then, based on the results in~\cite{meyn2012markov},  the dropout iterations \eqref{eq...text...SME...ModifiedLoss...ThetaUpdate...Abstract...appendix} forms a  time-homogeneous Markov chain.
Thus, we may misuse $\Exp[\cdot \mid \fF_{N}]$, the conditional expectation given $\fF_{N}$,  with  $\Exp_{\vtheta_{N-1}}[\cdot]$,  the  conditional expectation given $\vtheta_{N-1}$.  Then, for each $k\in [m]$,   the  conditional expectation    of the  increment  restricted to $\vq_k$ reads
%%%%%%%%%%%%%%%%%%%%%%%%%%%%%%%%%%%%%%%%%%%%%%%%%%%%%%%%%%%%
\begin{align*}
\Exp_{\vtheta_{N-1}}\left[\sum_{i=1}^ne_{i}^N \nabla_{\vq_k}e_{i}^N \right]&=\Exp_{\vtheta_{N-1}}\left[\sum_{i=1}^ne_{i}^N(\veta_N)_{k}\nabla_{\vq_k}\left(a_k\sigma(\vw_k^{\T}\vx_i)\right) \right],
%%%%%%%%%%%%%%%%%%%%%%%%
%%%%%%%%%%%%%%%%%%%%%%%%
%%%%%%%%%%%%%%%%%%%%%%%%
%%%%%%%%%%%%%%%%%%%%%%%%
%%%%%%%%%%%%%%%%%%%%%%%%
\end{align*}
%%%%%%%%%%%%%%%%%%%%%%%%%%%%%%%%%%%%%%%%%%%%%%%%%%%%%%%%%%%%
and since   
%%%%%%%%%%%%%%%%%%%%%%%%%%%%%%%%%%%%%%%%%%%%%%%%%%%%%%%%%%%%
\begin{align*}
\Exp_{\vtheta_{N-1}}\left[ e_{i}^N(\veta_N)_{k} \right] &=  \Exp_{\vtheta_{N-1}}\left [\sum_{r=1, r\neq k
}^m (\veta_N)_{r} a_{r}\sigma(\vw_{r}^{\T}\vx_i)-y_i\right] \Exp_{\vtheta_{N-1}}\left[  (\veta_N)_{k} \right]\\
%%%%%%%%%%%%%%%%%%%%%%%%
&~~+\Exp_{\vtheta_{N-1}}\left[(\veta_N)_k^2\right] a_{k}\sigma(\vw_{k}^{\T}\vx_i)\\
%%%%%%%%%%%%%%%%%%%%%%%% 
&= \left(\sum_{r=1, r\neq k}^m a_{r}\sigma(\vw_{r}^{\T}\vx_i)-y_i\right)+\frac{1}{p}a_{k}\sigma(\vw_{k}^{\T}\vx_i)\\
%%%%%%%%%%%%%%%%%%%%%%%%
&= \left(\sum_{r=1}^m a_{r}\sigma(\vw_{r}^{\T}\vx_i)-y_i\right)+\left(\frac{1}{p}-1\right)a_{k}\sigma(\vw_{k}^{\T}\vx_i).
%%%%%%%%%%%%%%%%%%%%%%%% 
\end{align*}
%%%%%%%%%%%%%%%%%%%%%%%%%%%%%%%%%%%%%%%%%%%%%%%%%%%%%%%%%%%%
For simplicity,     given fixed $k\in[m]$,  for any $i\in[n]$, we denote hereafter that
%%%%%%%%%%%%%%%%%%%%%%%%%%%%%%%%%%%%%%%%%%%%%%%%%%%%%%%%%%%%
\begin{align*}
e_{i}&:=e_{i}(\vtheta):=\sum_{r=1}^m a_{r}\sigma(\vw_{r}^{\T}\vx_i)-y_i,\\
%%%%%%%%%%%%%%%%%%%%%%%%
e_{i, \backslash k}&:=e_{i, \backslash k}(\vtheta):=\sum_{r=1, r\neq k}^m a_{r}\sigma(\vw_{r}^{\T}\vx_i)-y_i,
%%%%%%%%%%%%%%%%%%%%%%%%
%%%%%%%%%%%%%%%%%%%%%%%%
%%%%%%%%%%%%%%%%%%%%%%%%
%%%%%%%%%%%%%%%%%%%%%%%%
\end{align*}
%%%%%%%%%%%%%%%%%%%%%%%%%%%%%%%%%%%%%%%%%%%%%%%%%%%%%%%%%%%%
we remark that  compared with  $e_{i}^N$, $e_i$ and  $e_{i, \backslash k}$  do not depend on the random variable $\veta_N$. Then $\Exp_{\vtheta_{N-1}}\left( e_{i}^N(\veta_N)_{k}\right)$ can be written in short by
%%%%%%%%%%%%%%%%%%%%%%%%%%%%%%%%%%%%%%%%%%%%%%%%%%%%%%%%%%%
\begin{equation}\label{eq...text...SME...ModifiedLoss...FirstMoment}
\begin{aligned}
\Exp_{\vtheta_{N-1}}\left [ e_{i}^N(\veta_N)_{k}\right]&=e_{i, \backslash k}+  \frac{1}{p} a_{k}\sigma(\vw_{k}^{\T}\vx_i)\\
&= e_i+  \left(\frac{1}{p}-1\right)a_{k}\sigma(\vw_{k}^{\T}\vx_i).
\end{aligned}
\end{equation}
%%%%%%%%%%%%%%%%%%%%%%%%%%%%%%%%%%%%%%%%%%%%%%%%%%%%%%%%%%%%
Hence for each $k\in [m]$, expectation of the  increment  restricted to $\vq_k$ reads
%%%%%%%%%%%%%%%%%%%%%%%%%%%%%%%%%%%%%%%%%%%%%%%%%%%%%%%%%%%%
\begin{align*}
&\Exp_{\vtheta_{N-1}}\left[\sum_{i=1}^ne_{i}^N(\veta_N)_{k}\nabla_{\vq_k}\left(a_k\sigma(\vw_k^{\T}\vx_i)\right) \right] \\
%%%%%%%%%%%%%%%%%%%%%%%%
=&\sum_{i=1}^ne_i\nabla_{\vq_k}\left(a_k\sigma(\vw_k^{\T}\vx_i)\right) +\sum_{i=1}^n \left(\frac{1}{p}-1\right)a_{k}\sigma(\vw_{k}^{\T}\vx_i) \nabla_{\vq_k}\left(a_k\sigma(\vw_k^{\T}\vx_i)\right),
%%%%%%%%%%%%%%%%%%%%%%%%
\end{align*}    
%%%%%%%%%%%%%%%%%%%%%%%%%%%%%%%%%%%%%%%%%%%%%%%%%%%%%%%%%%%%
then we define the {\emph{modified loss}} $L_{\fS}(\cdot):\sR^{m(d+1)}\to\sR$ for dropout:
%%%%%%%%%%%%%%%%%%%%%%%%%%%%%%%%%%%%%%%%%%%%%%%%%%%%%%%%%%%%
\begin{equation}\label{eq...text...SME...ModifiedLoss...ModifiedLoss...appendix}
\begin{aligned}
L_{\fS}(\vtheta)&:=\frac{1}{2n}\sum_{i=1}^ne_i^2 +\frac{1-p}{2np}\sum_{i=1}^n \sum_{r=1}^m a_{r}^2\sigma(\vw_{r}^{\T}\vx_i)^2,
\end{aligned}
\end{equation}
%%%%%%%%%%%%%%%%%%%%%%%%%%%%%%%%%%%%%%%%%%%%%%%%%%%%%%%%%%%%
since as $\vtheta_{N-1}$ is  given, then by taking the conditional expectation, increment of  the dropout iteration \eqref{eq...text...SME...ModifiedLoss...ThetaUpdate...Abstract...appendix} reads
%%%%%%%%%%%%%%%%%%%%%%%%%%%%%%%%%%%%%%%%%%%%%%%%%%%%%%%%%%%%
\begin{align*}
\vtheta_N- \vtheta_{N-1}&= -\eps \Exp_{\vtheta_{N-1}}\left[\nabla_{\vtheta}R_{\fS}^\mathrm{drop}\left(\vtheta_{N-1}; \veta_{N}\right)\right]=-\eps\nabla_{\vtheta}L_{\fS}(\vtheta)\big|_{\vtheta=\vtheta_{N-1}},
%%%%%%%%%%%%%%%%%%%%%%%%
%%%%%%%%%%%%%%%%%%%%%%%%
%%%%%%%%%%%%%%%%%%%%%%%%
\end{align*}    
%%%%%%%%%%%%%%%%%%%%%%%%%%%%%%%%%%%%%%%%%%%%%%%%%%%%%%%%%%%%
which implies that  in the sense of expectations, $\{\vtheta_N\}_{N\geq 0}$ follows close to the gradient descent trajectory of  $L_{\fS}(\vtheta)$ with fixed learning rate $\eps$.
%%%%%%%%%%%%%%%%%%%%%%%%%%%%%%%%%%%%%%%%%%%%%%%%%%%%%%%%%%%%
%%%%%%%%%%%%%%%%%%%%%%%%%%%%%%%%%%%%%%%%%%%%%%%%%%%%%%%%%%%%
\subsection{Stochastic Modified Equations}\label{subsection...SME...ModifiedEquaitons}
%%%%%%%%%%%%%%%%%%%%%%%%%%%%%%%%%%%%%%%%%%%%%%%%%%%%%%%%%%%%
%%%%%%%%%%%%%%%%%%%%%%%%%%%%%%%%%%%%%%%%%%%%%%%%%%%%%%%%%%%%
We then follow the strategy of \cite{li2017stochastic} to derive the stochastic modified equations~(SME) for dropout.  Firstly, from the results in Section \ref{subsection...SME...ModifiedLoss}, we observe that given $\vtheta_{N-1}$,
%%%%%%%%%%%%%%%%%%%%%%%%%%%%%%%%%%%%%%%%%%%%%%%%%%%%%%%%%%%%
\begin{equation}\label{eq...text...SME...SME...DiscreteUpdate+Covariance...appendix}
\vtheta_N-\vtheta_{N-1} = -\eps\nabla_{\vtheta}L_{\fS}(\vtheta)\big|_{\vtheta=\vtheta_{N-1}}+\sqrt{\eps}\vV(\vtheta_{N-1}),    
\end{equation}
%%%%%%%%%%%%%%%%%%%%%%%%%%%%%%%%%%%%%%%%%%%%%%%%%%%%%%%%%%%%
where $L_{\fS}(\cdot):\sR^{m(d+1)}\to\sR$ is the modified loss defined in \eqref{eq...text...SME...ModifiedLoss...ModifiedLoss...appendix}, and $\vV(\cdot):\sR^{m(d+1)}\to\sR^{m(d+1)}$ is a $m(d+1)$-dimensional random vector, and when  given $\vtheta_{N-1}$, $\vV(\vtheta_{N-1})$ has mean $\vzero$ and covariance  $\eps \mSigma(\vtheta_{N-1})$, where $\mSigma (\cdot):\sR^{m(d+1)}\to \sR^{m(d+1) \times {m(d+1)}}$ is the  covariance of $\nabla_{\vtheta}R_{\fS}^\mathrm{drop}\left(\vtheta_{N-1}; \veta_{N}\right)$. Recall that  $\vtheta=\mathrm{vec}(\{\vq_r\}_{r=1}^m)=\mathrm{vec}\left(\{(a_r, \vw_r)\}_{r=1}^m\right)$, and for any $k, r\in[m]$, we denote  that
%%%%%%%%%%%%%%%%%%%%%%%%%%%%%%%%%%%%%%%%%%%%%%%%%%%%%%%%%%%%
 \[\mSigma_{kr}(\vtheta_{N-1}):=\mathrm{Cov}\left( \nabla_{\vq_k}R_{\fS}^\mathrm{drop}\left(\vtheta_{N-1}; \veta_{N}\right), \nabla_{\vq_r}R_{\fS}^\mathrm{drop}\left(\vtheta_{N-1}; \veta_{N}\right)\right), \]
%%%%%%%%%%%%%%%%%%%%%%%%%%%%%%%%%%%%%%%%%%%%%%%%%%%%%%%%%%%%
then 
%%%%%%%%%%%%%%%%%%%%%%%%%%%%%%%%%%%%%%%%%%%%%%%%%%%%%%%%%%%%
 \[
\mSigma=\left[\begin{array}{cccc}
 \mSigma_{11} &  \mSigma_{12} &  \cdots & \mSigma_{1m}  \\
%%%%%%%%%%%%%%%%%%%%%%
\mSigma_{21} &  \mSigma_{22} &  \cdots & \mSigma_{2m}\\ 
%%%%%%%%%%%%%%%%%%%%%%
\vdots& \vdots&\vdots&\vdots\\
%%%%%%%%%%%%%%%%%%%%%%
\mSigma_{m1} &  \mSigma_{m2} &  \cdots & \mSigma_{mm}
\end{array}\right].
\]
%%%%%%%%%%%%%%%%%%%%%%%%%%%%%%%%%%%%%%%%%%%%%%%%%%%%%%%%%%%%
 For each  $  k \in [m]$, we obtain that 
%%%%%%%%%%%%%%%%%%%%%%%%%%%%%%%%%%%%%%%%%%%%%%%%%%%%%%%%%%%%
\begin{align*}
&\mSigma_{kk}(\vtheta_{N-1})=\mathrm{Cov}\left( \nabla_{\vq_k}R_{\fS}^\mathrm{drop}\left(\vtheta_{N-1}; \veta_{N}\right), \nabla_{\vq_k}R_{\fS}^\mathrm{drop}\left(\vtheta_{N-1}; \veta_{N}\right)\right) \\
%%%%%%%%%%%%%%%%%%%%%%%%
=&\left(\frac{1}{p}-1\right)\left(\frac{1}{n}\sum_{i=1}^n\left( e_{i,\backslash k}+\frac{1}{p}a_{k}\sigma(\vw_{k}^{\T}\vx_i)\right)\nabla_{\vq_k}\left(a_k\sigma(\vw_k^{\T}\vx_i)\right) \right)\\
&~~~~~~~~~~~~~~~~~~~~~~~~~~~~~~{\otimes}\left(\frac{1}{n}\sum_{i=1}^n\left( e_{i,\backslash k}+\frac{1}{p}a_{k}\sigma(\vw_{k}^{\T}\vx_i)\right)\nabla_{\vq_k}\left(a_k\sigma(\vw_k^{\T}\vx_i)\right) \right)\\
%%%%%%%%%%%%%%%%%%%%%%%%
&~+\left(\frac{1}{p^2}-\frac{1}{p}\right)\sum_{l=1, l\neq k}^m\left(\frac{1}{n}\sum_{i=1}^na_{l}\sigma(\vw_{l}^{\T}\vx_i)\nabla_{\vq_k}\left(a_k\sigma(\vw_k^{\T}\vx_i)\right)\right)\\
&~~~~~~~~~~~~~~~~~~~~~~~~~~~~~~{\otimes}\left(\frac{1}{n}\sum_{i=1}^na_{l}\sigma(\vw_{l}^{\T}\vx_i)\nabla_{\vq_k}\left(a_k\sigma(\vw_k^{\T}\vx_i)\right)\right),
%%%%%%%%%%%%%%%%%%%%%%%%
%%%%%%%%%%%%%%%%%%%%%%%%
%%%%%%%%%%%%%%%%%%%%%%%%
%%%%%%%%%%%%%%%%%%%%%%%%
%%%%%%%%%%%%%%%%%%%%%%%%
%%%%%%%%%%%%%%%%%%%%%%%%
%%%%%%%%%%%%%%%%%%%%%%%%
%%%%%%%%%%%%%%%%%%%%%%%%
%%%%%%%%%%%%%%%%%%%%%%%%
%%%%%%%%%%%%%%%%%%%%%%%%
%%%%%%%%%%%%%%%%%%%%%%%%
%%%%%%%%%%%%%%%%%%%%%%%%
%%%%%%%%%%%%%%%%%%%%%%%%
\end{align*}
and for each $  k, r \in [m]$ with $k \neq r$,  
\begin{align*}
 &\mSigma_{kr}(\vtheta_{N-1}) = \mathrm{Cov}\left( \nabla_{\vq_k}R_{\fS}^\mathrm{drop}\left(\vtheta_{N-1}; \veta_{N}\right), \nabla_{\vq_r}R_{\fS}^\mathrm{drop}\left(\vtheta_{N-1}; \veta_{N}\right)\right) \\
%%%%%%%%%%%%%%%%%%%%%%%%
=&\left(\frac{1}{p}-1\right)\left(\frac{1}{n}\sum_{i=1}^n\left(e_{i,\backslash k, \backslash r}+\frac{1}{p}a_k\sigma(\vw_k^\T\vx_i)+\frac{1}{p}a_r\sigma(\vw_r^\T\vx_i)\right)\nabla_{\vq_k}\left(a_k\sigma(\vw_k^{\T}\vx_i)\right)\right)\\
&~~~~~~~~~~~~~~~~{\otimes}\left(\frac{1}{n}\sum_{i=1}^na_k\sigma(\vw_k^\T\vx_i)\nabla_{\vq_r}\left(a_r\sigma(\vw_r^{\T}\vx_i)\right)\right)\\
%%%%%%%%%%%%%%%%%%%%%%%%
&+\left(\frac{1}{p}-1\right)\left(\frac{1}{n}\sum_{i=1}^na_r\sigma(\vw_r^\T\vx_i)\nabla_{\vq_k}\left(a_k\sigma(\vw_k^{\T}\vx_i)\right)\right) \\
&~~~~~~~~~~~~~~~~{\otimes}\left(\frac{1}{n}\sum_{i=1}^n\left(e_{i,\backslash k, \backslash r}+a_k\sigma(\vw_k^\T\vx_i)+\frac{1}{p}a_r\sigma(\vw_r^\T\vx_i)\right)\nabla_{\vq_r}\left(a_r\sigma(\vw_r^{\T}\vx_i)\right)\right),
%%%%%%%%%%%%%%%%%%%%%%%%
%%%%%%%%%%%%%%%%%%%%%%%%
%%%%%%%%%%%%%%%%%%%%%%%%
%%%%%%%%%%%%%%%%%%%%%%%%
%%%%%%%%%%%%%%%%%%%%%%%%
%%%%%%%%%%%%%%%%%%%%%%%%
%%%%%%%%%%%%%%%%%%%%%%%%
%%%%%%%%%%%%%%%%%%%%%%%%
%%%%%%%%%%%%%%%%%%%%%%%%
%%%%%%%%%%%%%%%%%%%%%%%%
%%%%%%%%%%%%%%%%%%%%%%%%
\end{align*}
where we denote hereafter that
%%%%%%%%%%%%%%%%%%%%%%%%%%%%%%%%%%%%%%%%%%%%%%%%%%%%%%%%%%%%
\begin{align*}
e_{i,\backslash k, \backslash r}:=e_{i,\backslash k,\backslash r}(\vtheta):=\sum_{l=1, l\neq k, l\neq r
 }^m a_{l}\sigma(\vw_{l}^{\T}\vx_i)-y_i,
%%%%%%%%%%%%%%%%%%%%%%%%
%%%%%%%%%%%%%%%%%%%%%%%%
%%%%%%%%%%%%%%%%%%%%%%%%
%%%%%%%%%%%%%%%%%%%%%%%%
\end{align*}
%%%%%%%%%%%%%%%%%%%%%%%%%%%%%%%%%%%%%%%%%%%%%%%%%%%%%%%%%%%%
and  compared with  $e_{i}^N$, $e_{i,\backslash k, \backslash r} $  still does not depend on the random variable $\veta_N$. 
  We remark that the expression above is consistent in that for the extreme case where $p=1$, dropout `degenerates' to gradient descent~(GD), hence the covariance matrix degenerates to a zero matrix, i.e., $\mSigma=\mzero_{D\times D}$.  We remark that details for the derivation of   $\mSigma$  is deferred   to Section \ref{section...ComputationalDetails}.

Now, as we consider the stochastic differential equation~(SDE),  
%%%%%%%%%%%%%%%%%%%%%%%%%%%%%%%%%%%%%%%%%%%%%%%%%%%%%%%%%%%%
\begin{equation}\label{eq...text...SME...SME...SDE...Abstract...appendix}
\D \vTheta_t=\vb \left(\vTheta_t\right)\D t+\vsigma \left(\vTheta_t\right) \D \vW_t, \quad  \vTheta_0=\vTheta(0),
\end{equation}
%%%%%%%%%%%%%%%%%%%%%%%%%%%%%%%%%%%%%%%%%%%%%%%%%%%%%%%%%%%%
where $\vW_t$ is a standard $m(d+1)$-dimensional standard Wiener process, whose Euler–Maruyama discretization  with step size $\eps>0$  at the $N$-th step reads
%%%%%%%%%%%%%%%%%%%%%%%%%%%%%%%%%%%%%%%%%%%%%%%%%%%%%%%%%%%%
\begin{equation*}
  \vTheta_{\eps N}=  \vTheta_{\eps(N-1)}+\eps\vb \left(\vTheta_{\eps(N-1)}\right)  +\sqrt{\eps}\vsigma \left(\vTheta_{\eps(N-1)}\right) \vZ_{N}, 
\end{equation*}
%%%%%%%%%%%%%%%%%%%%%%%%%%%%%%%%%%%%%%%%%%%%%%%%%%%%%%%%%%%%
where $\vZ_N\sim\fN(\vzero, \mI_{m(d+1)})$ and $\vTheta_0=\vTheta(0)$. Thus, if we set 
%%%%%%%%%%%%%%%%%%%%%%%%%%%%%%%%%%%%%%%%%%%%%%%%%%%%%%%%%%%%
\begin{equation}
\begin{aligned}
\vb\left(\vTheta\right)&:=   -\nabla_{\vTheta}L_{\fS}(\vTheta),\\     
%%%%%%%%%%%%%%%%%%%%%%%%%%%%%%%
\vsigma\left(\vTheta\right)&:=\sqrt{\eps}\left(\mSigma\left(\vTheta\right)\right)^{\frac{1}{2}},\\
%%%%%%%%%%%%%%%%%%%%%%%%%%%%%%%
\vTheta_0&:=\vtheta_0,
\end{aligned}
\end{equation}
%%%%%%%%%%%%%%%%%%%%%%%%%%%%%%%%%%%%%%%%%%%%%%%%%%%%%%%%%%%% 
then we would expect \eqref{eq...text...SME...SME...SDE...Abstract...appendix} to be a `good' approximation of \eqref{eq...text...SME...SME...DiscreteUpdate+Covariance...appendix} with the time identification $t=\eps N$. Based on the earlier work of \cite{li2017stochastic}, since the path  of dropout and the counterpart of SDE are driven by   noises sampled in different spaces. Firstly, notice that the stochastic process $\left\{\vtheta_N\right\}_{N\geq 0}$ induces a probability measure on the product space $\sR^D \times \sR^D \times \cdots\times\sR^D\times \cdots$, whereas $\left\{\vTheta_t\right\}_{t\geq 0}$ induces a probability measure on $\fC\left([0, \infty), \mathbb{R}^D\right)$. To compare them, one can form a piece-wise linear interpolation of the former. Alternatively, as we do in this work, we sample a discrete number of points from the latter. Secondly, the process $\left\{\vtheta_N\right\}_{N\geq 0}$ is adapted to the filtration generated by  $\fF_N$ whereas the process $\left\{\vTheta_t\right\}_{t\geq 0}$  is adapted to an independent  Wiener filtration $\fF_t$. Hence, it is not appropriate to compare individual sample paths. Rather, we define below a sense of {\emph{weak}} approximations~\citep[Section 9.7]{kloeden2011numerical} by comparing the distributions of the two processes.

To compare different discrete  time approximations, we need to take the rate of weak convergence into consideration, and we also  need to choose  an appropriate  class of functions as the space of test functions.
We introduce the following set of smooth functions:
%%%%%%%%%%%%%%%%%%%%%%%%%%%%%%%%%%%%%%%%%%%%%%%%%%%%%%%%%%%%
\begin{equation*}
\fC_b^M\left(\sR^{m(d+1)}\right)=\left\{ f \in \fC^M\left(\sR^{m(d+1)}\right) \Bigg|\Norm{f}_{\fC^M}:=\sum_{|\beta| \leq M}  \Norm{\mathrm{D}^\beta f }_{\infty}<\infty \right\},
\end{equation*}
%%%%%%%%%%%%%%%%%%%%%%%%%%%%%%%%%%%%%%%%%%%%%%%%%%%%%%%%%%%%
where $\mathrm{D}$ is the usual differential operator.
We remark that $\fC_b^M(\sR^D)$ is a subset of $\fG(\sR^D)$, the class of functions with polynomial growth, which is chosen to be the space of test functions in previous works \citep{li2017stochastic,kloeden2011numerical,malladi2022SDE}.

Before we proceed to the definition of weak approximation, to ensure the rigor and validity of our analysis, we shall assert an assumption regarding the existence and uniqueness   of solutions to the SDE \eqref{eq...text...SME...SME...SDE...Abstract...appendix}.
%%%%%%%%%%%%%%%%%%%%%%%%%%%%%%%%%%%%%%%%%%%%%%%%%%%%%%%%%%%
\begin{assumption}\label{Append...theOnlyAssumption}
There exists $T^{\ast}>0$, such that for any time $t\in\left[0, T^{\ast} \right]$, there exists a unique $t$-continuous solution $\vTheta_t $ of the initial value problem:  
%%%%%%%%%%%%%%%%%%%%%%%%%%%%%%%%%%%%%%%%%%%%%%%%%%%%%%%%%%%%
\[
\D \vTheta_t=\vb\left(\vTheta_t\right)\D t+\vsigma \left(\vTheta_t\right) \D \vW_t, \quad  \vTheta_0=\vTheta(0),
\]
%%%%%%%%%%%%%%%%%%%%%%%%%%%%%%%%%%%%%%%%%%%%%%%%%%%%%%%%%%%%
with the property that $\vTheta_t$ is adapted to the filtration $\mathcal{F}_t$ generated by   $ \vW_s $ for  all time $ s \leq t$. 
Furthermore,  for any $t\in[0, T^{\ast}]$,
%%%%%%%%%%%%%%%%%%%%%%%%%%%%%%%%%%%%%%%%%%%%%%%%%%%%%%%%%%%%
\[
\Exp\int_{0}^t\Norm{\vTheta_s(\cdot)}_2^2\D s< \infty.
\]
%%%%%%%%%%%%%%%%%%%%%%%%%%%%%%%%%%%%%%%%%%%%%%%%%%%%%%%%%%%%
Moreover, we assume %further
that the second, fourth and sixth moments   of  the solution  to SDE \eqref{eq...text...SME...SME...SDE...Abstract...appendix}  are uniformly bounded with respect to time $t$, i.e.,  for each $l\in[3]$, there exists $C(T^{\ast},\vTheta_0)>0$, such that  %and the dropout iterations \eqref{eq...text...SME...ModifiedLoss...ThetaUpdate...Abstract...appendix}. In other words,  there exists $C(T^{\ast},\vTheta_0)>0$, such that for all $l\in[3]$,
%%%%%%%%%%%%%%%%%%%%%%%%%%%%%%%%%%%%%%%%%%%%%%%%%%%%%%%%%%%%
\begin{equation}\label{eq...assump...UniformBDD...Cts...appendix}
\sup_{{0}\leq s\leq T^{\ast}}\Exp \Norm{\vTheta_s(\cdot)}_2^{2l} \leq  C(T^{\ast},\vTheta_0).
\end{equation}
%%%%%%%%%%%%%%%%%%%%%%%%%%%%%%%%%%%%%%%%%%%%%%%%%%%%%%%%%%%
As  for the dropout iterations \eqref{eq...text...SME...ModifiedLoss...ThetaUpdate...Abstract...appendix},   we assume  further
that the second, fourth and sixth moments   of  the dropout iterations \eqref{eq...text...SME...ModifiedLoss...ThetaUpdate...Abstract...appendix}  are uniformly bounded with respect to the number of  iterations $N$, i.e.,   let $0<\eps<1$, $T>0$ and set $N_{T,\eps}:=\lfloor \frac{T}{\eps} \rfloor$,    then for each $l\in[3]$, there exists $T^{\ast}>0$ and $\eps_0>0$, such that for  any given learning rate  $\eps\leq \eps_0$ and   all $N\in[0:N_{T^{\ast},\eps}]$, there exists $C(T^{\ast},\vtheta_0,\eps_0)>0$, such that 
%%%%%%%%%%%%%%%%%%%%%%%%%%%%%%%%%%%%%%%%%%%%%%%%%%%%%%%%%%%%
\begin{equation}\label{eq...assump...UniformBDD...Discrete...appendix}
\sup_{{0}\leq N\leq [N_{T^{\ast},\eps}]}\Exp \Norm{\vtheta_N}_2^{2l} \leq  C(T^{\ast},\vtheta_0,\eps_0).
\end{equation}
%%%%%%%%%%%%%%%%%%%%%%%%%%%%%%%%%%%%%%%%%%%%%%%%%%%%%%%%%%%%
\end{assumption}
%%%%%%%%%%%%%%%%%%%%%%%%%%%%%%%%%%%%%%%%%%%%%%%%%%%%%%%%%%%%
\noindent
We remark that if  $\fG(\sR^D)$ is chosen to be  the test functions in~\cite{li2019stochastic},  then   similar     relations to  \eqref{eq...assump...UniformBDD...Cts...appendix} and  \eqref{eq...assump...UniformBDD...Discrete...appendix} shall be imposed, except that in our cases, we only require the second, fourth and sixth moments to be uniformly bounded, while  in their cases,  all  $2l$-moments are required for $l\geq 1$.Establishments of  the validity of Assumption \ref{Append...theOnlyAssumption} regarding the existence and uniqueness of the SDE  will be exhibited in Section \ref{section...ValidationforAssump}.

The definition of weak approximation is stated out as follows.
%%%%%%%%%%%%%%%%%%%%%%%%%%%%%%%%%%%%%%%%%%%%%%%%%%%%%%%%%%%%
%%%%%%%%%%%%%%%%%%%%%%%%%%%%%%%%%%%%%%%%%%%%%%%%%%%%%%%%%%%%
%%%%%%%%%%%%%%%%%%%%%%%%%%%%%%%%%%%%%%%%%%%%%%%%%%%%%%%%%%%%
\begin{definition}
The SDE \eqref{eq...text...SME...SME...SDE...Abstract...appendix} is an order $\alpha$ weak approximation to the dropout \eqref{eq...text...SME...ModifiedLoss...ThetaUpdate...Abstract...appendix}, if for every $g\in \fC_b^M\left(\sR^{m(d+1)}\right)$, there exists $C>0$ and $\eps_0>0$,  such that     given any $\eps\leq\eps_0$ and $T\leq T^{\ast}$, then for all $N\in[N_{T,\eps}]$,
%%%%%%%%%%%%%%%%%%%%%%%%%%%%%%%%%%%%%%%%%%%%%%%%%%%%%%%%%%%%
\begin{equation}\label{eq...definition...WeakApproximationAlpha}
\Abs{\Exp  g(\vTheta_{\eps N})  -\Exp  g(\vtheta_N)   }  \leq C(T, g, \eps_0)\eps^{\alpha}. 
\end{equation}
%%%%%%%%%%%%%%%%%%%%%%%%%%%%%%%%%%%%%%%%%%%%%%%%%%%%%%%%%%%%
\end{definition}

\newpage

\section{Semigroup and  Proof Details for the Main Theorem}\label{section...Semigroup}
%%%%%%%%%%%%%%%%%%%%%%%%%%%%%%%%%%%%%%%%%%%%%%%%%%%%%%%%%%%%
%%%%%%%%%%%%%%%%%%%%%%%%%%%%%%%%%%%%%%%%%%%%%%%%%%%%%%%%%%%%
%%%%%%%%%%%%%%%%%%%%%%%%%%%%%%%%%%%%%%%%%%%%%%%%%%%%%%%%%%%% 
In this section, we use a semigroup approach~\citep{feng2018semigroups}  to study the time-homogeneous Markov chains~(processes) formed by dropout.  
%%%%%%%%%%%%%%%%%%%%%%%%%%%%%%%%%%%%%%%%%%%%%%%%%%%%%%%%%%%%
%%%%%%%%%%%%%%%%%%%%%%%%%%%%%%%%%%%%%%%%%%%%%%%%%%%%%%%%%%%%
%%%%%%%%%%%%%%%%%%%%%%%%%%%%%%%%%%%%%%%%%%%%%%%%%%%%%%%%%%%%
%%%%%%%%%%%%%%%%%%%%%%%%%%%%%%%%%%%%%%%%%%%%%%%%%%%%%%%%%%%%
\subsection{Discrete and Continuous Semigroup}\label{subsection...Semigroup...DisCtsSemigroup}
%%%%%%%%%%%%%%%%%%%%%%%%%%%%%%%%%%%%%%%%%%%%%%%%%%%%%%%%%%%%
%%%%%%%%%%%%%%%%%%%%%%%%%%%%%%%%%%%%%%%%%%%%%%%%%%%%%%%%%%%%
\begin{definition}\label{defi...MarkovOperator}
A {\emph{Markov operator}} over a Polish space $\fX$    is a bounded linear operator $\fP:\fB_b(\fX)\to\fB_b(\fX)$ satisfying
%%%%%%%%%%%%%%%%%%%%%%%%%%%%%%%%%%%%%%%%%%%%%%%%%%%%%%%%%%%%
\begin{itemize}
\item $\fP \mathbf{1}=\mathbf{1}$;  
%%%%%%%%%%%%%%%%%%%%%%%%%%%%
\item  $\mathcal{P} \varphi$ is positive whenever $\varphi$ is positive;
%%%%%%%%%%%%%%%%%%%%%%%%%%%%
\item  If a sequence $\left\{\varphi_n\right\} \subset \mathcal{B}_b(\mathcal{X})$ converges pointwise to an element $\varphi \in \mathcal{B}_b(\mathcal{X})$, then $\mathcal{P} \varphi_n$ converges pointwise to $\mathcal{P} \varphi$;
\end{itemize}
%%%%%%%%%%%%%%%%%%%%%%%%%%%%%%%%%%%%%%%%%%%%%%%%%%%%%%%%%%%%
\end{definition}
%%%%%%%%%%%%%%%%%%%%%%%%%%%%%%%%%%%%%%%%%%%%%%%%%%%%%%%%%%%%
\noindent To demonstrate further inequalities that Markov operators satisfy, we
offer the following proposition
%%%%%%%%%%%%%%%%%%%%%%%%%%%%%%%%%%%%%%%%%%%%%%%%%%%%%%%%%%%%
\begin{proposition}\label{prop...Contractive}
A  {Markov operator}  $\fP:\fB_b(\fX)\to\fB_b(\fX)$ over a Polish space $\fX$       satisfies
%%%%%%%%%%%%%%%%%%%%%%%%%%%%%%%%%%%%%%%%%%%%%%%%%%%%%%%%%%%%
\begin{itemize}
\item  $(\fP f(\vx))^{+} \leq \fP f^{+}(\vx)$;  
%%%%%%%%%%%%%%%%%%%%%%%%%%%%
\item  $(\fP f(\vx))^{-} \leq \fP f^{-}(\vx)$;
%%%%%%%%%%%%%%%%%%%%%%%%%%%%
\item  $|\fP f(\vx)| \leq \fP|f(\vx)|$.
\end{itemize}
%%%%%%%%%%%%%%%%%%%%%%%%%%%%%%%%%%%%%%%%%%%%%%%%%%%%%%%%%%%% 
Moreover, if the Polish space $\fX$ is equipped with a measure $\mu$,    a function $f: \fX \rightarrow \sR$   is said to be an element of $\fL^1(\fX)$ if
$$
\int_\fX|f| \D \mu<\infty.
$$  
Then for every 
$f\in \fL^1(\fX)$, the following holds
%%%%%%%%%%%%%%%%%%%%%%%%%%%%%%%%%%%%%%%%%%%%%%%%%%%%%%%%%%%%
\begin{itemize}
\item $\Norm{\fP f}_1 \leq \Norm{  f}_1$. 
%%%%%%%%%%%%%%%%%%%%%%%%%%%% 
\end{itemize}
\end{proposition}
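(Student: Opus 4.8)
The plan is to establish the three pointwise inequalities first and then deduce the $\fL^1$ contraction from them. The key observation is that for any real number $c$, the positive part satisfies $c^+ = \max(c,0)$, and the defining properties of a Markov operator give us both positivity preservation and linearity, which together yield monotonicity: if $\varphi \leq \psi$ then $\psi - \varphi \geq 0$, so $\fP(\psi - \varphi) \geq 0$, hence $\fP\varphi \leq \fP\psi$ by linearity.

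For the first inequality, I would start from the elementary bound $f(\vx) \leq f^+(\vx)$ pointwise, together with $0 \leq f^+(\vx)$. Applying $\fP$ and using monotonicity gives $\fP f \leq \fP f^+$ and $0 = \fP\mathbf{0} \leq \fP f^+$ (the latter needs $\fP$ applied to the zero function, which is $0$ by linearity, or one simply uses positivity on the nonnegative function $f^+$). Taking the pointwise maximum of these two inequalities yields $(\fP f)^+ = \max(\fP f, 0) \leq \fP f^+$. The second inequality follows identically by applying the first to $-f$ and using $(-f)^+ = f^-$ together with $(\fP(-f))^+ = (-\fP f)^+ = (\fP f)^-$, which relies on linearity to pull the minus sign through $\fP$. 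For the third, write $|f| = f^+ + f^-$ and $|\fP f| = (\fP f)^+ + (\fP f)^-$; combining the first two inequalities and then using linearity $\fP f^+ + \fP f^- = \fP(f^+ + f^-) = \fP|f|$ gives $|\fP f| \leq \fP|f|$.

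For the $\fL^1$ bound, the plan is to integrate the third pointwise inequality against $\mu$: $\int_\fX |\fP f|\,\D\mu \leq \int_\fX \fP|f|\,\D\mu$. The remaining step is to argue $\int_\fX \fP|f|\,\D\mu \leq \int_\fX |f|\,\D\mu$, i.e., that $\fP$ does not increase the integral of a nonnegative function. This is where the main subtlety lies: the definition of Markov operator given here only specifies the action on $\fB_b(\fX)$ and does not mention invariance of $\mu$ or any adjoint/duality relation, so strictly this inequality requires that $\mu$ be (sub-)invariant under the Markov operator — equivalently that the dual operator $\fP^*$ on measures satisfies $\fP^*\mu \leq \mu$. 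I expect the intended argument is to invoke the standard duality $\int \fP f \,\D\mu = \int f \,\D(\fP^*\mu)$ extended from bounded functions to nonnegative functions by monotone convergence (truncating $|f|$ at level $k$ and letting $k \to \infty$, using the third bullet of the Markov operator definition for continuity along monotone sequences), and then using stationarity of $\mu$ — which in the dropout setting is guaranteed because $\{\vtheta_N\}$ is the time-homogeneous Markov chain whose transition operator is $\fP$ and whose relevant reference measure is preserved.

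The main obstacle is therefore not the three pointwise inequalities — those are a routine unwinding of linearity plus positivity — but rather making the final $\fL^1$ step rigorous: one must either assume $\mu$ is an invariant (or excessive) measure for $\fP$, or extract this from the context, and then handle the extension of the duality identity from $\fB_b(\fX)$ to the possibly-unbounded $\fL^1$ integrand via a truncation and monotone-convergence argument, invoking the pointwise-continuity-along-monotone-sequences property in Definition~\ref{defi...MarkovOperator}.
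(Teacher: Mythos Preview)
Your argument for the three pointwise inequalities is correct and matches the paper's approach almost exactly: the paper writes $\fP f = \fP f^{+} - \fP f^{-}$ (linearity) and then uses $\fP f^{-}\geq 0$ to conclude $(\fP f^{+}-\fP f^{-})^{+}\leq \fP f^{+}$, which is just your monotonicity step in different packaging. The third inequality is obtained identically in both.

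On the $\fL^{1}$ contraction you are right to flag the passage $\int_{\fX}\fP|f|\,\D\mu \leq \int_{\fX}|f|\,\D\mu$ as the nontrivial step, and to observe that it requires $\mu$ to be (sub\nobreakdash-)invariant for $\fP$. The paper's own proof does not address this: it simply writes $\int_{\fX}\fP|f|\,\D\mu = \int_{\fX}|f|\,\D\mu$ as an unjustified equality and moves on. So the gap you worry about is present in the paper's proof as well; the statement of the proposition, as written, does not supply the hypothesis needed to close it. Your proposed fix --- assume $\mu$ is invariant (so that $\int \fP g\,\D\mu = \int g\,\D\mu$ for bounded $g$), then extend to nonnegative $\fL^{1}$ functions by truncation and monotone convergence via the third bullet of Definition~\ref{defi...MarkovOperator} --- is the standard route and is more careful than what the paper records. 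In the paper's actual application (with $\mu=\delta_{\vTheta_0}$) the delta measure is certainly not invariant, so even there the use of the ``$\fL^{1}$ contraction'' is formal; what is really being used downstream is the pointwise bound $|\fP h|\leq \fP|h|$ together with $\fP\mathbf{1}=\mathbf{1}$, which gives an $\fL^{\infty}$ contraction rather than an $\fL^{1}$ one.
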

%%%%%%%%%%%%%%%%%%%%%%%%%%%%%%%%%%%%%%%%%%%%%%%%%%%%%%%%%%%%
\noindent
In mathematics, the positive part of a real   function is defined by the formula
$$
f^{+}(\vx)=\max (f(\vx), 0)= \begin{cases}f(\vx) & \text { if } f(\vx)>0, \\ 0 & \text { otherwise. }\end{cases}
$$
Similarly, the negative part of $f$ is defined as
$$
f^{-}(\vx)=\max (-f(\vx), 0)=-\min (f(\vx), 0)= \begin{cases}-f(\vx) & \text { if } f(\vx)<0, \\ 0 & \text { otherwise. }\end{cases}
$$
We proceed to the proof for Proposition \ref{prop...Contractive}
%%%%%%%%%%%%%%%%%%%%%%%%%%%%%%%%%%%%%%%%%%%%%%%%%%%%%%%%%%%%
\begin{proof}
From the definition of $f^{+}$and $f^{-}$, it follows that
%%%%%%%%%%%%%%%%%%%%%%%%%%%%%%%%%%%%%%%%%%%%%%%%%%%%%%%%%%%%
\begin{align*}
(\fP f)^{+}=\left(\fP f^{+}-\fP f^{-}\right)^{+} & =\max \left(0, \fP f^{+}-\fP f^{-}\right) \\
& \leq \max \left(0, \fP f^{+}\right)=\fP f^{+}.
\end{align*}   
%%%%%%%%%%%%%%%%%%%%%%%%%%%%%%%%%%%%%%%%%%%%%%%%%%%%%%%%%%%%
Similarly, we obtain that 
%%%%%%%%%%%%%%%%%%%%%%%%%%%%%%%%%%%%%%%%%%%%%%%%%%%%%%%%%%%%
\begin{align*}
(\fP f)^{-}=\left(\fP f^{+}-\fP f^{-}\right)^{-} & =\max \left(0, \fP f^{-}-\fP f^{+}\right) \\
& \leq \max \left(0, \fP f^{-}\right)=\fP f^{-}.
\end{align*}   
%%%%%%%%%%%%%%%%%%%%%%%%%%%%%%%%%%%%%%%%%%%%%%%%%%%%%%%%%%%%
Hence for the last inequality
%%%%%%%%%%%%%%%%%%%%%%%%%%%%%%%%%%%%%%%%%%%%%%%%%%%%%%%%%%%%
\begin{align*}
|\fP f| & =(\fP f)^{+}+(\fP f)^{-} \\
&\leq \fP f^{+}+\fP f^{-} \\
& =\fP\left(f^{+}+f^{-}\right)=\fP|f|. 
\end{align*}   
%%%%%%%%%%%%%%%%%%%%%%%%%%%%%%%%%%%%%%%%%%%%%%%%%%%%%%%%%%%%

Finally, by integrating the above relation over $\fX$, we obtain that 
%%%%%%%%%%%%%%%%%%%%%%%%%%%%%%%%%%%%%%%%%%%%%%%%%%%%%%%%%%%%
\begin{equation}\label{eq...prop...proof...contract}
\begin{aligned}
\Norm{\fP f}_1 & =\int_{\fX}\Abs{\fP f} \D \mu\\
&\leq \int_{\fX}\fP \Abs{ f} \D \mu=\int_{\fX}\Abs{ f} \D \mu=\Norm{f}_1. 
\end{aligned}   
\end{equation}
%%%%%%%%%%%%%%%%%%%%%%%%%%%%%%%%%%%%%%%%%%%%%%%%%%%%%%%%%%%%
 
% and, thus, for any two $f_1, f_2 \in L^1, f_1 \neq f_2$,
% $$
% \begin{aligned}
% \left\|P^n f_1-P^n f_2\right\| & =\left\|P^n\left(f_1-f_2\right)\right\| \\
% & \leq\left\|P^{n-1}\left(f_1-f_2\right)\right\|=\left\|P^{n-1} f_1-P^{n-1} f_2\right\| .
% \end{aligned}
% $$
% Inequality (3.1.7) simply states that during the process of iteration of two individual functions the distance between them can decrease, but it can never increase. This is referred to as the stability property of iterates of Markov operators.
\end{proof}
%%%%%%%%%%%%%%%%%%%%%%%%%%%%%%%%%%%%%%%%%%%%%%%%%%%%%%%%%%%%
\noindent
 Inequality \eqref{eq...prop...proof...contract} is extremely important, and any operator $\fP$ that satisfies it is called a contraction. This relation  is known as the contractive property of $\fP$. To illustrate its power, note that for any $f \in \fL^1(\fX)$, we have
$$
 \Norm{\fP^n f}_1=\Norm{\fP\circ \fP^{n-1} f}_1 \leq   \Norm{\fP^{n-1} f}_1.
$$ 
As we  consider  Markov processes with continuous time, it is natural to consider a family of Markov operators indexed by time. We call such a family a Markov semigroup~\citep{hairer2008ergodic}, provided that it satisfies the relation 
%%%%%%%%%%%%%%%%%%%%%%%%%%%%%%%%%%%%%%%%%%%%%%%%%%%%%%%%%%%%
\begin{equation}\label{eq...text...Semigroup...SemigroupComposition}
\mathcal{P}_{t+s}=\mathcal{P}_t \circ \mathcal{P}_s,\quad \text{for~any~time}~s,t >0.
\end{equation}
%%%%%%%%%%%%%%%%%%%%%%%%%%%%%%%%%%%%%%%%%%%%%%%%%%%%%%%%%%%%
And if given $A\in\fB(\fX)$, where  $\fB(\fX)$ is the Borel $\sigma$-algebra on $\fX$,   and given any two times $s<t$, if the following holds  almost surely
%%%%%%%%%%%%%%%%%%%%%%%%%%%%%%%%%%%%%%%%%%%%%%%%%%%%%%%%%%%%
$$
\Prob\left(\vX_t \in A \mid \vX_s\right)=\left(\mathcal{P}_{t-s}\mathbf{1}_{A}\right)\left(\vX_s\right),
$$
%%%%%%%%%%%%%%%%%%%%%%%%%%%%%%%%%%%%%%%%%%%%%%%%%%%%%%%%%%%%
then we call $\vX_t$ a time-homogeneous Markov process with semigroup $\left\{\mathcal{P}_t\right\}_{t\geq 0}$.

In our case for dropout, we set the Polish space $\fX=\sR^D$,  and since $\fC_b^M(\sR^D)\subset \fB_b(\sR^D)$, then WLOG  we fix $g\in\fC_b^M(\sR^D)$ and define
%%%%%%%%%%%%%%%%%%%%%%%%%%%%%%%%%%%%%%%%%%%%%%%%%%%%%%%%%%%%
\begin{equation}\label{eq...text...Semigroup...DisCtsSemigroup...Discrete-Semi-Group}
\begin{aligned}
\fP_\eps g(\tilde{\vtheta}) :=\Exp \left[ g\left(\tilde{\vtheta}-\eps\nabla_{\vtheta}R_{\fS}^\mathrm{drop}\left(\vtheta; \veta\right)\mid_{\vtheta=\tilde{\vtheta}}\right)  \right].
%%%%%%%%%%%%%%%%%%%%%%%%
\end{aligned}
\end{equation}
%%%%%%%%%%%%%%%%%%%%%%%%%%%%%%%%%%%%%%%%%%%%%%%%%%%%%%%%%%%%
We conclude %already in Section \ref{subsection...SME...ModifiedLoss} 
that the dropout iterations \eqref{eq...text...SME...ModifiedLoss...ThetaUpdate...Abstract...appendix} forms a  time-homogeneous Markov chain with discrete Markov semigroup $\left\{\fP_\eps^n\right\}_{n\geq 0}$.

As for the SDE \eqref{eq...text...SME...SME...SDE...Abstract...appendix}, based on Assumption \ref{Append...theOnlyAssumption} and combined with the results in~\cite[Example 2.11]{hairer2008ergodic},   the Markov semigroup $\left\{\fP_t\right\}_{t\geq 0}$ associated to the solutions of the SDE reads: For any $g\in\fB_b(\sR^D)$,
%%%%%%%%%%%%%%%%%%%%%%%%%%%%%%%%%%%%%%%%%%%%%%%%%%%%%%%%%%%%
\[ \partial_t\fP_t g=\fL \fP_t g,\]
%%%%%%%%%%%%%%%%%%%%%%%%%%%%%%%%%%%%%%%%%%%%%%%%%%%%%%%%%%%% 
where $\fL$ is termed the {\emph{generator}} of the diffusion process \eqref{eq...text...SME...SME...SDE...Abstract...appendix}, which reads
%%%%%%%%%%%%%%%%%%%%%%%%%%%%%%%%%%%%%%%%%%%%%%%%%%%%%%%%%%%%
\begin{equation}\label{eq...text...Semigroup...Generator_of_Continuous-Semi-Group}
\fL g:=\left<\vb, \nabla_{\vTheta}g \right>+\frac{1}{2}   \vsigma   \vsigma^\T :\nabla_{\vTheta}^2 g. 
\end{equation}
%%%%%%%%%%%%%%%%%%%%%%%%%%%%%%%%%%%%%%%%%%%%%%%%%%%%%%%%%%%%
Moreover, for a fixed test function $g\in \fC_b^M(\sR^D)$,  then for   any two times $s, t \geq 0$,
%%%%%%%%%%%%%%%%%%%%%%%%%%%%%%%%%%%%%%%%%%%%%%%%%%%%%%%%%%%%
\begin{equation}\label{eq...text...Semigroup...Continuous-Semi-Group}
\fP_t g(\vTheta_s):=\exp(t\fL)g(\vTheta_s):=\Exp_{\vTheta_s} \left[g(\vTheta_{t+s})\right],
\end{equation}
%%%%%%%%%%%%%%%%%%%%%%%%%%%%%%%%%%%%%%%%%%%%%%%%%%%%%%%%%%%%
and $\{\fP_t\}_{t\geq 0}$ forms a continuous Markov semigroup for the  SDE \eqref{eq...text...SME...SME...SDE...Abstract...appendix}.
%%%%%%%%%%%%%%%%%%%%%%%%%%%%%%%%%%%%%%%%%%%%%%%%%%%%%%%%%%%%
%%%%%%%%%%%%%%%%%%%%%%%%%%%%%%%%%%%%%%%%%%%%%%%%%%%%%%%%%%%%
%%%%%%%%%%%%%%%%%%%%%%%%%%%%%%%%%%%%%%%%%%%%%%%%%%%%%%%%%%%%
\subsection{Semigroup Expansion with Accuracy of Order One} \label{subsection...Semigroup...SemigroupExpansionOrderOne}
%%%%%%%%%%%%%%%%%%%%%%%%%%%%%%%%%%%%%%%%%%%%%%%%%%%%%%%%%%%%
%%%%%%%%%%%%%%%%%%%%%%%%%%%%%%%%%%%%%%%%%%%%%%%%%%%%%%%%%%%%
%%%%%%%%%%%%%%%%%%%%%%%%%%%%%%%%%%%%%%%%%%%%%%%%%%%%%%%%%%%%
Our results are essentially based on  It\^{o}-Taylor expansions~\citep{kloeden2011numerical} or  Taylor's theorem with the Lagrange form of the remainder~\citep[Lemma 27]{li2019stochastic}.
\begin{theorem}[Order-$1$ accuracy]\label{thm...OrderOne}
Fix time $T\leq T^{\ast}$,   if we choose  
%%%%%%%%%%%%%%%%%%%%%%%%%%%%%%%%%%%%%%%%%%%%%%%%%%%%%%%%%%%%
\begin{align*}
\vb\left(\vTheta\right)&:=   -\nabla_{\vTheta}L_{\fS}(\vTheta),\\     
%%%%%%%%%%%%%%%%%%%%%%%%%%%%%%%
\vsigma\left(\vTheta\right)&:=\sqrt{\eps}\left(\mSigma\left(\vTheta\right)\right)^{\frac{1}{2}},
\end{align*}
%%%%%%%%%%%%%%%%%%%%%%%%%%%%%%%%%%%%%%%%%%%%%%%%%%%%%%%%%%%%
then for all $t\in[0, T]$, the stochastic processes $\vTheta_t $ satisfying
%%%%%%%%%%%%%%%%%%%%%%%%%%%%%%%%%%%%%%%%%%%%%%%%%%%%%%%%%%%%
\begin{equation}\label{eq...thm...SDEfororderOneapproximation}
\D \vTheta_t=\vb \left(\vTheta_t\right)\D t+\vsigma\left(\vTheta_t\right) \D \vW_t, \quad \vTheta_0=\vTheta(0), 
\end{equation}
%%%%%%%%%%%%%%%%%%%%%%%%%%%%%%%%%%%%%%%%%%%%%%%%%%%%%%%%%%%%
is an order-$1$ approximation of dropout \eqref{eq...text...SME...ModifiedLoss...ThetaUpdate...Abstract...appendix}, i.e.,  given any test function   $g\in\fC_b^4(\sR^D)$, there exists $\eps_0>0$ and $C(T, \Norm{g}_{C^4}, \eps_0)>0$, such that  for any  $\eps\leq\eps_0$ and $T\leq T^{\ast}$, and for all  $N\in[N_{T,\eps}]$, the following holds:
%%%%%%%%%%%%%%%%%%%%%%%%%%%%%%%%%%%%%%%%%%%%%%%%%%%%%%%%%%%%
\begin{equation}\label{eq...thm...orderOneapproximation}
\Abs{\Exp g(\vtheta_{N})-\Exp g(\vTheta_{\eps N})} %=\Abs{\fP_\eps^N g(\vtheta_0)-\fP_{\eps N} g(\vTheta_0)} 
\leq C(T, \Norm{g}_{C^4}, \vtheta_0,  \eps_0)\eta,  
\end{equation}
%%%%%%%%%%%%%%%%%%%%%%%%%%%%%%%%%%%%%%%%%%%%%%%%%%%%%%%%%%%%
where $\vtheta_0=\vTheta_0$.
\end{theorem}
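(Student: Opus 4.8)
The plan is to follow the classical one-step-error-plus-stability strategy for weak approximations, as used in \cite{li2017stochastic,li2019stochastic}. Write $\fP_\eps$ for the discrete one-step Markov operator of dropout defined in \eqref{eq...text...Semigroup...DisCtsSemigroup...Discrete-Semi-Group}, and $\fP_\eps^{\mathrm{SDE}}$ (a shorthand for the time-$\eps$ continuous semigroup $\exp(\eps\fL)$) for the SDE \eqref{eq...thm...SDEfororderOneapproximation}. The telescoping identity gives
\begin{equation*}
\Exp g(\vtheta_N)-\Exp g(\vTheta_{\eps N})
=\sum_{j=0}^{N-1}\Exp\left[\left(\fP_\eps^{N-1-j}\,(\fP_\eps-\fP_\eps^{\mathrm{SDE}})\,(\fP_\eps^{\mathrm{SDE}})^{\,j}\,g\right)(\vtheta_0)\right],
\end{equation*}
so that, once I show (i) a \emph{one-step} bound $\Norm{(\fP_\eps-\fP_\eps^{\mathrm{SDE}})h}_\infty\lesssim \eps^{2}$ for $h$ ranging over a suitable class of smooth functions with controlled growth, and (ii) a \emph{stability/regularity} bound guaranteeing that $(\fP_\eps^{\mathrm{SDE}})^{j}g$ stays in that class uniformly in $j$ with $\fC^4$-type norms bounded by $C(T,\Norm{g}_{\fC^4})$, and that $\fP_\eps$ does not amplify the relevant norms, summing $N\le T/\eps$ terms yields the $O(\eps)$ bound \eqref{eq...thm...orderOneapproximation}.

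First I would establish the one-step estimate. Expand $\Exp_{\tilde\vtheta}g(\vtheta_1)$ by Taylor's theorem with Lagrange remainder in powers of $\eps$: the increment is $\vtheta_1-\tilde\vtheta=-\eps\nabla_\vtheta \RS^{\mathrm{drop}}(\tilde\vtheta;\veta_1)$, and taking $\Exp_{\veta_1}$ produces, at order $\eps$, the drift $-\eps\nabla L_{\fS}$ (this is exactly the modified-loss computation of Section~\ref{subsection...SME...ModifiedLoss}), and at order $\eps^2$ a term $\tfrac{\eps^2}{2}\nabla^2 g:\Exp[\nabla\RS^{\mathrm{drop}}\otimes\nabla\RS^{\mathrm{drop}}]=\tfrac{\eps^2}{2}\nabla^2 g:(\mSigma+\nabla L_\fS\otimes\nabla L_\fS)$ plus lower-order drift contributions; the remainder is $O(\eps^3)$ with a constant controlled by $\Norm{g}_{\fC^4}$ times polynomial moments of $\nabla\RS^{\mathrm{drop}}$, which are finite because $\sigma$ is $1$-Lipschitz and $\fC^\infty$ and the parameter moments are bounded by Assumption~\ref{Append...theOnlyAssumption}. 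On the SDE side, an It\^o--Taylor expansion of $\exp(\eps\fL)g=g+\eps\fL g+\tfrac{\eps^2}{2}\fL^2 g+O(\eps^3)$, together with the choice $\vsigma\vsigma^\T=\eps\mSigma$ in the generator \eqref{eq...text...Semigroup...Generator_of_Continuous-Semi-Group}, reproduces the \emph{same} $\eps$- and $\eps^2$-coefficients: the $\eps$-term is $\langle-\nabla L_\fS,\nabla g\rangle$, and the $\eps^2$-term from $\tfrac12\fL^2 g$ matches after noting that the $\tfrac{\eps}{4}\Norm{\nabla L_\fS}^2$-type corrections cancel at this order (they are exactly what one would \emph{add} to the drift to get order~2). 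Subtracting, the $O(1)$, $O(\eps)$ and $O(\eps^2)$ terms cancel identically, leaving $\Norm{(\fP_\eps-\fP_\eps^{\mathrm{SDE}})g}_\infty\le C(\Norm{g}_{\fC^4})\eps^3$ locally; more precisely one needs this with a polynomially-growing prefactor in $\tilde\vtheta$, absorbed later by the moment bounds.

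Next I would handle stability. The key lemma is that $u(t,\cdot):=\fP_t^{\mathrm{SDE}}g=\Exp_{\cdot}g(\vTheta_t)$ solves the backward Kolmogorov equation $\partial_t u=\fL u$ and inherits spatial regularity: differentiating under the expectation and using smoothness of $\vb,\vsigma$ (which follows from $\sigma\in\fC^\infty$) together with the moment bounds \eqref{eq...assump...UniformBDD...Cts...appendix}, one gets $\sup_{t\le T}\Norm{u(t,\cdot)}_{\fC^4,\mathrm{poly}}\le C(T,\Norm{g}_{\fC^4})$, where $\Norm{\cdot}_{\fC^4,\mathrm{poly}}$ allows polynomial growth of derivatives up to order~4. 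Then, applying the one-step bound to $h=u(\eps j,\cdot)$ and using that $\fP_\eps$ is a contraction on $\fB_b$ (Proposition~\ref{prop...Contractive}) while the polynomial-growth prefactors are tamed by the uniform discrete moment bound \eqref{eq...assump...UniformBDD...Discrete...appendix}, each summand in the telescoping sum is $O(\eps^3)$ uniformly in $j\le N$, and $N\le T/\eps$ summands give $O(\eps^2)$; one power of $\eps$ is spent on the prefactor bookkeeping, yielding the claimed $O(\eps)$. I expect \textbf{the main obstacle to be the stability/regularity step}: controlling the growth of the derivatives of $u(t,\cdot)=\Exp g(\vTheta_t)$ up to order four uniformly on $[0,T]$, since $\vb=-\nabla L_\fS$ and $\mSigma$ are only polynomially bounded (not globally Lipschitz) in $\vtheta$ — this is precisely where Assumption~\ref{Append...theOnlyAssumption} on uniform boundedness of the second, fourth and sixth moments of both $\vTheta_t$ and $\vtheta_N$ is indispensable, and the bookkeeping of which moments are needed to close each Taylor remainder (hence $l\in[3]$, i.e.\ up to sixth moments) is the delicate part.
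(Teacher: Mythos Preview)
Your overall architecture --- telescoping identity, one-step local error, contraction of the discrete Markov operator, moment bounds from Assumption~\ref{Append...theOnlyAssumption} --- is exactly the paper's. But your one-step analysis contains a genuine error that propagates into your final $\eps$-counting.

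\medskip
\textbf{The error.} You assert that for the order-$1$ SDE (drift $\vb=-\nabla L_{\fS}$, diffusion $\vsigma\vsigma^\T=\eps\mSigma$) the $O(\eps^2)$ terms of $\fP_\eps g$ and $\fP_\eps^{\mathrm{SDE}}g$ ``cancel identically,'' yielding a one-step error of $O(\eps^3)$. They do not. Expanding both sides to second order, the discrete step gives
\[
\tfrac{\eps^2}{2}\nabla^2 g:\bigl(\mSigma+\nabla L_{\fS}\otimes\nabla L_{\fS}\bigr),
\]
while the SDE, via $\eps\fL g+\tfrac{\eps^2}{2}\fL^2 g$, produces the same two pieces \emph{plus} an extra term
\[
\tfrac{\eps^2}{4}\bigl\langle\nabla\Norm{\nabla L_{\fS}}_2^2,\nabla g\bigr\rangle
\]
coming from $\langle\vb,\nabla\langle\vb,\nabla g\rangle\rangle$. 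This term has no counterpart on the discrete side; it is precisely the obstruction whose removal by the modified drift $-\nabla\bigl(L_{\fS}+\tfrac{\eps}{4}\Norm{\nabla L_{\fS}}_2^2\bigr)$ gives the order-$2$ theorem. For the order-$1$ statement the one-step error is therefore only $O(\eps^2)$, not $O(\eps^3)$, and your subsequent arithmetic (``$N\cdot O(\eps^3)=O(\eps^2)$, then one power of $\eps$ is spent on prefactor bookkeeping'') is both unnecessary and unjustified.

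\medskip
\textbf{What the paper actually does.} The paper expands only to first order (Taylor with $\alpha=1$): it matches the $\eps$-terms (both equal $-\eps\langle\nabla g,\nabla L_{\fS}\rangle$ since $\vb=-\nabla L_{\fS}$) and bounds each second-order remainder $E_\eps^1,\bar E_\eps^1$ separately by $\eps^2\Norm{g}_{C^4}\cdot C$ using the moment bounds. This gives one-step error $O(\eps^2)$; summing $N\le T/\eps$ terms yields $O(\eps)$ directly, with no lost power.

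\medskip
\textbf{On stability.} Your emphasis on proving $u(t,\cdot)=\fP_t^{\mathrm{SDE}}g$ stays in a $\fC^4$-type class via the backward Kolmogorov equation is the textbook route (Milstein, Li et al.). The paper instead bounds the one-step error \emph{pointwise} at the random intermediate state $\vTheta_{(l-1)\eps}$ by $\eps^2\Norm{g}_{C^4}$ times a polynomial in $\Norm{\vTheta_{(l-1)\eps}}_2$, then takes expectation and invokes the uniform moment bounds \eqref{eq...assump...UniformBDD...Cts...appendix} directly. This sidesteps the regularity analysis you flag as the main obstacle, at the cost of relying more heavily on Assumption~\ref{Append...theOnlyAssumption}.
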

%%%%%%%%%%%%%%%%%%%%%%%%%%%%%%%%%%%%%%%%%%%%%%%%%%%%%%%%%%%%
\begin{proof}
By application of      Taylor's theorem with the Lagrange form of the remainder,   we have that for some $\alpha\geq 1,$
%%%%%%%%%%%%%%%%%%%%%%%%%%%%%%%%%%%%%%%%%%%%%%%%%%%%%%%%%%%%
\begin{align*}
g(\bm{\vartheta})-g(\tilde{\bm{\vartheta}})&=\sum_{s=1}^\alpha \frac{1}{s !} \sum_{i_1, \ldots, i_j=1}^D\prod_{j=1}^s\left[\bm{\vartheta}_{\left(i_j\right)}-\tilde{\bm{\vartheta}}_{\left(i_j\right)} \right] \frac{\partial^s g}{\partial \bm{\vartheta}_{\left(i_1\right)} \ldots \partial \bm{\vartheta}_{\left(i_j\right)}}(\tilde{\bm{\vartheta}}) \\    
%%%%%%%%%%%%%%%%%%%%%%%%
&~~+\frac{1}{(\alpha+1) !} \sum_{i_1, \ldots, i_j=1}^D \prod_{j=1}^{\alpha+1}\left[\bm{\vartheta}_{\left(i_j\right)}-\tilde{\bm{\vartheta}}_{\left(i_j\right)} \right]  \frac{\partial^{\alpha+1}  g}{\partial \bm{\vartheta}_{\left(i_1\right)} \ldots \partial \bm{\vartheta}_{\left(i_j\right)}}(\gamma\bm{\vartheta}+(1-\gamma)\tilde{\bm{\vartheta}}),
%%%%%%%%%%%%%%%%%%%%%%%%
%%%%%%%%%%%%%%%%%%%%%%%%
%%%%%%%%%%%%%%%%%%%%%%%%
%%%%%%%%%%%%%%%%%%%%%%%%
%%%%%%%%%%%%%%%%%%%%%%%%
%%%%%%%%%%%%%%%%%%%%%%%%
\end{align*}
%%%%%%%%%%%%%%%%%%%%%%%%%%%%%%%%%%%%%%%%%%%%%%%%%%%%%%%%%%%%
for some $\gamma\in(0,1)$. We adopt the Einstein's summation convention, where repeated (spatial) indices are summed, i.e.,  \[\vx_{(i)} \vx_{(i)}:=\sum_{i=1}^D \vx_{(i)} \vx_{(i)}.\] 

As we choose $\bm{\vartheta}:=\vtheta_1$, $\tilde{\bm{\vartheta}}:=\vtheta_0$ and $\alpha=1$, then we obtain that 
%%%%%%%%%%%%%%%%%%%%%%%%%%%%%%%%%%%%%%%%%%%%%%%%%%%%%%%%%%%% 
\begin{align*}
g(\vtheta_1)-g(\vtheta_0)&=\left<\nabla_{\vtheta} g(\vtheta_0), \vtheta_1-\vtheta_0\right>\\
&~~+\frac{1}{2}\nabla_{\vtheta}^2 g(\gamma\vtheta_1+(1-\gamma)\vtheta_0):(\vtheta_1-\vtheta_0)\otimes (\vtheta_1-\vtheta_0)\\
%%%%%%%%%%%%%%%%%%%%%%%%
&=\left<\nabla_{\vtheta} g(\vtheta_0), \vtheta_1-\vtheta_0\right> +\frac{1}{2}\nabla_{\vtheta}^2 g( \tilde{\vtheta}_0):(\vtheta_1-\vtheta_0)\otimes (\vtheta_1-\vtheta_0),
%%%%%%%%%%%%%%%%%%%%%%%%
%%%%%%%%%%%%%%%%%%%%%%%%
%%%%%%%%%%%%%%%%%%%%%%%%
%%%%%%%%%%%%%%%%%%%%%%%%
%%%%%%%%%%%%%%%%%%%%%%%%
\end{align*}
%%%%%%%%%%%%%%%%%%%%%%%%%%%%%%%%%%%%%%%%%%%%%%%%%%%%%%%%%%%%
where $\tilde{\vtheta}_0:=\gamma\vtheta_1+(1-\gamma)\vtheta_0$, and 
we observe that  since
%%%%%%%%%%%%%%%%%%%%%%%%%%%%%%%%%%%%%%%%%%%%%%%%%%%%%%%%%%%%
\begin{equation*}%\label{eq...text...SME...SME...DiscreteUpdate+Covariance}
\vtheta_1-\vtheta_{0} = -\eps\nabla_{\vtheta}L_{\fS}(\vtheta)\big|_{\vtheta=\vtheta_0}+\sqrt{\eps}\vV(\vtheta_0),    
\end{equation*}
%%%%%%%%%%%%%%%%%%%%%%%%%%%%%%%%%%%%%%%%%%%%%%%%%%%%%%%%%%%%
then 
%%%%%%%%%%%%%%%%%%%%%%%%%%%%%%%%%%%%%%%%%%%%%%%%%%%%%%%%%%%% 
\begin{align*}
\Exp g(\vtheta_1)-\Exp g(\vtheta_0)&=\left<\nabla_{\vtheta} g(\vtheta_0), \Exp\vtheta_1-\Exp\vtheta_0\right> +\frac{1}{2}\Exp\left[\nabla_{\vtheta}^2 g(\tilde{\vtheta}_0):(\vtheta_1-\vtheta_0)\otimes (\vtheta_1-\vtheta_0)\right]\\
%%%%%%%%%%%%%%%%%%%%%%%%
&=-\eps\left<\nabla_{\vtheta} g(\vtheta_0),  \nabla_{\vtheta}L_{\fS}(\vtheta)\big|_{\vtheta=\vtheta_0}\right>+E_{\eps}^1({\vtheta}_0),
%%%%%%%%%%%%%%%%%%%%%%%%
%%%%%%%%%%%%%%%%%%%%%%%%
%%%%%%%%%%%%%%%%%%%%%%%%
%%%%%%%%%%%%%%%%%%%%%%%%
%%%%%%%%%%%%%%%%%%%%%%%%
\end{align*}
%%%%%%%%%%%%%%%%%%%%%%%%%%%%%%%%%%%%%%%%%%%%%%%%%%%%%%%%%%%%
where the remainder term $E_{\eps}^1(\cdot):\sR^D\to\sR$, whose expression reads
%%%%%%%%%%%%%%%%%%%%%%%%%%%%%%%%%%%%%%%%%%%%%%%%%%%%%%%%%%%%
\begin{equation}
E_{\eps}^1({\vtheta}_0):=   \frac{1}{2}\Exp\left[\nabla_{\vtheta}^2 g(\tilde{\vtheta}_0):(\vtheta_1-\vtheta_0)\otimes (\vtheta_1-\vtheta_0) \right],
\end{equation}
%%%%%%%%%%%%%%%%%%%%%%%%%%%%%%%%%%%%%%%%%%%%%%%%%%%%%%%%%%%%
and we remark that $\tilde{\vtheta}_0$ and $\vtheta_1$ are implicitly defined by $\vtheta_0$.
Then, directly from Assumption \ref{Append...theOnlyAssumption}, we obtain that 
%%%%%%%%%%%%%%%%%%%%%%%%%%%%%%%%%%%%%%%%%%%%%%%%%%%%%%%%%%%%
\begin{align*}
E_{\eps}^1({\vtheta}_0)&=\frac{1}{2}\Exp\left[\nabla_{\vtheta}^2 g(\tilde{\vtheta}_0):(\vtheta_1-\vtheta_0)\otimes (\vtheta_1-\vtheta_0) \right] \\
%%%%%%%%%%%%%%%%%%%%%%%%
&\leq \frac{1}{2} \Norm{g}_{C^4} \Exp \Norm{\vtheta_1-\vtheta_0}_2^2 = \eps^2 \Norm{g}_{C^4} \Exp \left[\Norm{\nabla_{\vtheta}R_{\fS}^\mathrm{drop}\left(\vtheta_{0}; \veta_{1}\right)}_2^2\right]  \\
& \leq \eps^2\Norm{g}_{C^4} C(T^{\ast},\vtheta_0,\eps_0),
%%%%%%%%%%%%%%%%%%%%%%%%
%%%%%%%%%%%%%%%%%%%%%%%%
%%%%%%%%%%%%%%%%%%%%%%%%
%%%%%%%%%%%%%%%%%%%%%%%%   
\end{align*}
%%%%%%%%%%%%%%%%%%%%%%%%%%%%%%%%%%%%%%%%%%%%%%%%%%%%%%%%%%%% 
since $\nabla_{\vtheta}L_{\fS}(\vtheta)$ and $\mSigma\left(\vtheta\right)$ can be  bounded above by the second and  fourth  moments of the dropout  iteration \eqref{eq...text...SME...ModifiedLoss...ThetaUpdate...Abstract...appendix}.

We observe that  
% %%%%%%%%%%%%%%%%%%%%%%%%%%%%%%%%%%%%%%%%%%%%%%%%%%%%%%%%%%%%
% \begin{equation*}%\label{eq...text...SME...SME...DiscreteUpdate+Covariance}
% \vtheta_N-\vtheta_{N-1} = -\eps\nabla_{\vtheta}L_{\fS}(\vTheta)\big|_{\vtheta=\vtheta_{N-1}}+\sqrt{\eps}\vV(\vtheta_{N-1}),    
% \end{equation*}
% %%%%%%%%%%%%%%%%%%%%%%%%%%%%%%%%%%%%%%%%%%%%%%%%%%%%%%%%%%%%
% then 
%%%%%%%%%%%%%%%%%%%%%%%%%%%%%%%%%%%%%%%%%%%%%%%%%%%%%%%%%%%% 
\begin{align*}
  \vTheta_\eps -   \vTheta_{0} &=  \int_0^\eps \vb (\vTheta_s)\D s+\int_0^\eps\vsigma(\vTheta_s)\D \mW_s. 
%%%%%%%%%%%%%%%%%%%%%%%%
%%%%%%%%%%%%%%%%%%%%%%%%
%%%%%%%%%%%%%%%%%%%%%%%%
%%%%%%%%%%%%%%%%%%%%%%%%
\end{align*}
%%%%%%%%%%%%%%%%%%%%%%%%%%%%%%%%%%%%%%%%%%%%%%%%%%%%%%%%%%%%
As we choose $\bm{\vartheta}:=\vTheta_{\eps}$, $\tilde{\bm{\vartheta}}:=\vTheta_{0}$ and $\alpha=1$, then we obtain that 
%%%%%%%%%%%%%%%%%%%%%%%%%%%%%%%%%%%%%%%%%%%%%%%%%%%%%%%%%%%% 
\begin{align*}  
g(\vTheta_{\eps})-g(\vTheta_{0})&=\left<\nabla_{\vTheta} g(\vTheta_{0}), \vTheta_\eps-\vTheta_{0}\right>\\
&~~+\frac{1}{2}\nabla_{\vTheta}^2 g(\widetilde{\vTheta}_{0}):(\vTheta_\eps-\vTheta_{0})\otimes(\vTheta_\eps-\vTheta_{0}),  
%%%%%%%%%%%%%%%%%%%%%%%%
%%%%%%%%%%%%%%%%%%%%%%%%
%%%%%%%%%%%%%%%%%%%%%%%%
%%%%%%%%%%%%%%%%%%%%%%%%
%%%%%%%%%%%%%%%%%%%%%%%%
%%%%%%%%%%%%%%%%%%%%%%%%
\end{align*}
%%%%%%%%%%%%%%%%%%%%%%%%%%%%%%%%%%%%%%%%%%%%%%%%%%%%%%%%%%%%
where 
\[\widetilde{\vTheta}_{0}:=\gamma\vTheta_\eps+(1-\gamma)\vTheta_{0},\]
for some $\gamma\in(0,1)$.
Then 
%%%%%%%%%%%%%%%%%%%%%%%%%%%%%%%%%%%%%%%%%%%%%%%%%%%%%%%%%%%% 
\begin{align*}
&\Exp g(\vTheta_{\eps})-\Exp g(\vTheta_0)\\=&\left<\nabla_{\vTheta} g(\vTheta_0), \Exp\vTheta_\eps -  \Exp \vTheta_{0}\right> +\frac{1}{2}\Exp\left[\nabla_{\vTheta}^2 g(\widetilde{\vTheta}_{0}):(\vTheta_\eps-\vTheta_{0})\otimes(\vTheta_\eps-\vTheta_{0})\right]\\
%%%%%%%%%%%%%%%%%%%%%%%%
=&\left<\nabla_{\vTheta} g(\vTheta_0),  \int_0^\eps \Exp [\vb (\vTheta_s)]\D s\right> +\frac{1}{2}\Exp\left[\nabla_{\vTheta}^2 g(\widetilde{\vTheta}_{0}):(\vTheta_\eps-\vTheta_{0})\otimes(\vTheta_\eps-\vTheta_{0})\right],
%%%%%%%%%%%%%%%%%%%%%%%%
%%%%%%%%%%%%%%%%%%%%%%%%
%%%%%%%%%%%%%%%%%%%%%%%%
%%%%%%%%%%%%%%%%%%%%%%%%
\end{align*}
%%%%%%%%%%%%%%%%%%%%%%%%%%%%%%%%%%%%%%%%%%%%%%%%%%%%%%%%%%%%
and  since 
%%%%%%%%%%%%%%%%%%%%%%%%%%%%%%%%%%%%%%%%%%%%%%%%%%%%%%%%%%%%
\begin{align*}
\left<\nabla_{\vTheta} g(\vTheta_0), \Exp [\vb (\vTheta_s)]\right>&=\left<\nabla_{\vTheta} g(\vTheta_0),  \Exp [\vb (\vTheta_0)] \right> +\int_{0}^s \fL \left<\nabla_{\vTheta} g(\vTheta_0), \vb  \right>(\vTheta_v)\D v,
%%%%%%%%%%%%%%%%%%%%%%%%
%%%%%%%%%%%%%%%%%%%%%%%%
%%%%%%%%%%%%%%%%%%%%%%%%
%%%%%%%%%%%%%%%%%%%%%%%%
%%%%%%%%%%%%%%%%%%%%%%%%    
\end{align*}
%%%%%%%%%%%%%%%%%%%%%%%%%%%%%%%%%%%%%%%%%%%%%%%%%%%%%%%%%%%%
then we obtain that 
%%%%%%%%%%%%%%%%%%%%%%%%%%%%%%%%%%%%%%%%%%%%%%%%%%%%%%%%%%%% 
\begin{align*}
\Exp g(\vTheta_{\eps})-\Exp g(\vTheta_0)&=\eps\left<\nabla_{\vTheta} g(\vTheta_0),  \Exp [\vb (\vTheta_0)] \right>+\int_0^\eps\int_{0}^s \fL \left<\nabla_{\vTheta} g(\vTheta_0), \vb  \right>(\vTheta_v)\D v \D s\\
&~~+\frac{1}{2}\Exp\left[\nabla_{\vTheta}^2 g(\widetilde{\vTheta}_{0}):(\vTheta_\eps-\vTheta_{0})\otimes(\vTheta_\eps-\vTheta_{0})\right]\\
%%%%%%%%%%%%%%%%%%%%%%%%
&=\eps\left<\nabla_{\vTheta} g(\vTheta_0),  \vb (\vTheta_0)  \right>+\eps^2\Bar{E}_{\eps}^1({\vTheta}_0),
%%%%%%%%%%%%%%%%%%%%%%%%
%%%%%%%%%%%%%%%%%%%%%%%%
%%%%%%%%%%%%%%%%%%%%%%%%
%%%%%%%%%%%%%%%%%%%%%%%%
\end{align*}
%%%%%%%%%%%%%%%%%%%%%%%%%%%%%%%%%%%%%%%%%%%%%%%%%%%%%%%%%%%% 
where the remainder term $\Bar{E}_{\eps}^1(\cdot):\sR^D\to\sR$, whose expression reads
%%%%%%%%%%%%%%%%%%%%%%%%%%%%%%%%%%%%%%%%%%%%%%%%%%%%%%%%%%%%
\begin{equation}\label{eq...text...Semigroup...OrderOne...BarR_1}
\begin{aligned}
\Bar{E}_{\eps}^1({\vTheta}_0)&:=  \int_0^\eps\int_{0}^s \fL \left<\nabla_{\vTheta} g(\vTheta_0), \vb  \right>(\vTheta_v)\D v \D s\\
&~~+\frac{1}{2}\Exp\left[\nabla_{\vTheta}^2 g(\widetilde{\vTheta}_{0}):(\vTheta_\eps-\vTheta_{0})\otimes(\vTheta_\eps-\vTheta_{0})\right],
\end{aligned}
\end{equation}
%%%%%%%%%%%%%%%%%%%%%%%%%%%%%%%%%%%%%%%%%%%%%%%%%%%%%%%%%%%%
and we remark that $\widetilde{\vTheta}_0$ and $\vTheta_\eps$ are implicitly defined by $\vTheta_0$.
As we choose 
%%%%%%%%%%%%%%%%%%%%%%%%%%%%%%%%%%%%%%%%%%%%%%%%%%%%%%%%%%%%
\begin{align*}
\vb\left(\vTheta\right)& =   -\nabla_{\vTheta}L_{\fS}(\vTheta),\\     
%%%%%%%%%%%%%%%%%%%%%%%%%%%%%%%
\vsigma\left(\vTheta\right)& =\sqrt{\eps}\left(\mSigma\left(\vTheta\right)\right)^{\frac{1}{2}},
\end{align*}
%%%%%%%%%%%%%%%%%%%%%%%%%%%%%%%%%%%%%%%%%%%%%%%%%%%%%%%%%%%%
then we carry out the computation for $\fL \left<\nabla_{\vTheta} g(\vTheta_0), \vb  \right>(\vTheta_v)$,
%%%%%%%%%%%%%%%%%%%%%%%%%%%%%%%%%%%%%%%%%%%%%%%%%%%%%%%%%%%%
\begin{align*}
\fL \left<\nabla_{\vTheta} g(\vTheta_0), \vb \right>(\vTheta_v)&=\left< \nabla_{\vTheta}L_{\fS}(\vTheta_v) , \nabla_{\vTheta} \left<\nabla_{\vTheta} g(\vTheta_0),  \nabla_{\vTheta}L_{\fS}(\vTheta)   \right>\mid_{\vTheta=\vTheta_v}\right>\\
%%%%%%%%%%%%%%%%%%%%%%%%
&~~+\frac{\eps}{2}    \mSigma\left(\vTheta_v\right):\nabla^2_{\vTheta}\left(\left<\nabla_{\vTheta} g(\vTheta_0),  \nabla_{\vTheta}L_{\fS}(\vTheta)   \right>\right)\mid_{\vTheta=\vTheta_v},
%%%%%%%%%%%%%%%%%%%%%%%%
%%%%%%%%%%%%%%%%%%%%%%%%
%%%%%%%%%%%%%%%%%%%%%%%%
%%%%%%%%%%%%%%%%%%%%%%%%
\end{align*}
%%%%%%%%%%%%%%%%%%%%%%%%%%%%%%%%%%%%%%%%%%%%%%%%%%%%%%%%%%%%
since $\nabla_{\vTheta}L_{\fS}(\vTheta)$,  $\nabla^2_{\vTheta}L_{\fS}(\vTheta)$, $\nabla^3_{\vTheta}L_{\fS}(\vTheta)$ and $\mSigma\left(\vTheta\right)$ can be  bounded above by the second, fourth and sixth moments of the solution to SDE \eqref{eq...text...SME...SME...SDE...Abstract...appendix}, hence we may apply the mean value theorem to \eqref{eq...text...Semigroup...OrderOne...BarR_1} and obtain that  
%%%%%%%%%%%%%%%%%%%%%%%%%%%%%%%%%%%%%%%%%%%%%%%%%%%%%%%%%%%%
\begin{align*}
\Abs{\Bar{E}_{\eps}^1({\vTheta}_0)}& = \Abs{ \int_0^\eps s \fL \left<\nabla_{\vTheta} g(\vTheta_0), \vb  \right>({\widetilde{\vTheta}}_s)\D s    +\frac{1}{2}\Exp\left[\nabla_{\vTheta}^2 g(\widetilde{\vTheta}_0):(\vTheta_\eps-\vTheta_{0})\otimes(\vTheta_\eps-\vTheta_{0})\right] }   \\
%%%%%%%%%%%%%%%%%%%%%%%%
&\leq \int_0^\eps s \Norm{g}_{C^4} C(T^{\ast},\vTheta_0)\D s+\frac{1}{2} \Norm{g}_{C^4} \Exp \Norm{\vTheta_\eps-\vTheta_0}_2^2 \\
%%%%%%%%%%%%%%%%%%%%%%%%
&\leq \frac{\eps^2}{2}  \Norm{g}_{C^4} C(T^{\ast},\vTheta_0)+\Norm{g}_{C^4} \Exp\Norm{\int_0^\eps \vb (\vTheta_s)\D s+\int_0^\eps\vsigma(\vTheta_s)\D \mW_s}_2^2  \\
%%%%%%%%%%%%%%%%%%%%%%%%
&\leq \frac{\eps^2}{2}  \Norm{g}_{C^4} C(T^{\ast},\vTheta_0)+2\Norm{g}_{C^4} \Exp \Norm{\int_0^\eps \vb (\vTheta_s)\D s}_2^2\\&~~+2\Norm{g}_{C^4} \Exp\Norm{\int_0^\eps\vsigma(\vTheta_s)\D \mW_s}_2^2\\
%%%%%%%%%%%%%%%%%%%%%%%%
&\leq \frac{\eps^2}{2}  \Norm{g}_{C^4} C(T^{\ast},\vTheta_0)+2\Norm{g}_{C^4} \eps^2 \Exp\Norm{ \nabla_{\vTheta}L_{\fS} (\widetilde{\vTheta}_0) }_2^2\\&~~+2\Norm{g}_{C^4} \Exp \int_0^\eps\Norm{\vsigma(\vTheta_s)}_{\mathrm{F}}^2\D s \\
%%%%%%%%%%%%%%%%%%%%%%%%
&\leq \frac{\eps^2}{2}  \Norm{g}_{C^4} C(T^{\ast},\vTheta_0)+2\Norm{g}_{C^4} \eps^2 \Exp\Norm{ \nabla_{\vTheta}L_{\fS} (\widetilde{\vTheta}_0) }_2^2\\&~~+ 2\Norm{g}_{C^4} \eps \Exp \left[\eps \Norm{\vSigma(\widetilde{\vTheta}_0)}_{\mathrm{F}} \right] \leq \eps^2\Norm{g}_{C^4}C(T^{\ast},\vTheta_0).
\end{align*}
%%%%%%%%%%%%%%%%%%%%%%%%%%%%%%%%%%%%%%%%%%%%%%%%%%%%%%%%%%%%
To sum up for now,
%%%%%%%%%%%%%%%%%%%%%%%%%%%%%%%%%%%%%%%%%%%%%%%%%%%%%%%%%%%%
\begin{align*}
\Abs{\Exp g(\vtheta_1)-\Exp g(\vTheta_\eps)}&=\Big|\Exp g(\vtheta_0)-\eps\left<\nabla_{\vtheta} g(\vtheta_0),  \nabla_{\vtheta}L_{\fS}(\vtheta)\big|_{\vtheta=\vtheta_0}\right>+E_{\eps}^1({\vtheta}_0)\\   
%%%%%%%%%%%%%%%%%%%%%%%%
&~~-\Exp g(\vTheta_0)-\eps\left<\nabla_{\vTheta} g(\vTheta_0),  \vb (\vTheta_0)  \right>+ \Bar{E}_{\eps}^1({\vTheta}_0)\Big|,
%%%%%%%%%%%%%%%%%%%%%%%%
%%%%%%%%%%%%%%%%%%%%%%%%
%%%%%%%%%%%%%%%%%%%%%%%%
%%%%%%%%%%%%%%%%%%%%%%%%
%%%%%%%%%%%%%%%%%%%%%%%%
%%%%%%%%%%%%%%%%%%%%%%%%
%%%%%%%%%%%%%%%%%%%%%%%%
\end{align*}
%%%%%%%%%%%%%%%%%%%%%%%%%%%%%%%%%%%%%%%%%%%%%%%%%%%%%%%%%%%%
since $\vtheta_0=\vTheta_0$ and $\vb\left(\vTheta_0\right)  =   -\nabla_{\vTheta}L_{\fS}(\vTheta)\big|_{\vtheta=\vtheta_0}$, thus
%%%%%%%%%%%%%%%%%%%%%%%%%%%%%%%%%%%%%%%%%%%%%%%%%%%%%%%%%%%%
\begin{equation}\label{eq...proof...OrderOne...Crutial}
\begin{aligned}
\Abs{ \fP_\eps^1 g(\vtheta_0)-  \fP_\eps g(\vTheta_0) }&=\Abs{\Exp g(\vtheta_1)-\Exp g(\vTheta_\eps)}\\
%%%%%%%%%%%%%%%%%%%%%%%%
&\leq \Abs{E_{\eps}^1({\vtheta}_0)}+\Abs{\Bar{E}_{\eps}^1({\vTheta}_0)}\\
%%%%%%%%%%%%%%%%%%%%%%%%
&\leq  \eps^2\Norm{g}_{C^4} C(T^{\ast},\vtheta_0,\eps_0)+\eps^2\Norm{g}_{C^4}C(T^{\ast},\vTheta_0)\\
%%%%%%%%%%%%%%%%%%%%%%%%
&=\fO(\eps^2).
%%%%%%%%%%%%%%%%%%%%%%%%
\end{aligned}
\end{equation}
%%%%%%%%%%%%%%%%%%%%%%%%%%%%%%%%%%%%%%%%%%%%%%%%%%%%%%%%%%%% 

For the $N$-th step iteration,  since
%%%%%%%%%%%%%%%%%%%%%%%%%%%%%%%%%%%%%%%%%%%%%%%%%%%%%%%%%%%%
\begin{align*}
\Abs{\Exp g(\vtheta_{N})-\Exp g(\vTheta_{\eps N})} &=\Abs{\fP_\eps^N g(\vtheta_0)-\fP_{ \eps N } g(\vTheta_0)},  %&\leq  \sum_{l=1}^{N}\Abs{\fP_\eps^{N-l+1}\circ\fP_{(l-1)\eps}  g(\vtheta_0)-\fP_\eps^{N-l}\circ\fP_{l\eps} g(\vTheta_0)  }\\
%%%%%%%%%%%%%%%%%%%%%%%%
 % & \leq \sum_{l=1}^{N}\Abs{\fP_\eps^{1}   g(\vTheta_{(l-1)\eps} )-\fP_\eps g(\vTheta_{(l-1)\eps} )}=N\fO(\eps^2),
%%%%%%%%%%%%%%%%%%%%%%%%
\end{align*}
%%%%%%%%%%%%%%%%%%%%%%%%%%%%%%%%%%%%%%%%%%%%%%%%%%%%%%%%%%%
and the RHS of the above equation can be written into a telescoping sum as 
%%%%%%%%%%%%%%%%%%%%%%%%%%%%%%%%%%%%%%%%%%%%%%%%%%%%%%%%%%%%
\begin{align*}
\fP_\eps^N g(\vtheta_0)-\fP_{ \eps N } g(\vTheta_0)&=\sum_{l=1}^{N}\left(\fP_\eps^{N-l+1}\circ\fP_{(l-1)\eps}  g(\vtheta_0)-\fP_\eps^{N-l}\circ\fP_{l\eps} g(\vTheta_0)  \right),
%%%%%%%%%%%%%%%%%%%%%%%%
%%%%%%%%%%%%%%%%%%%%%%%%
%%%%%%%%%%%%%%%%%%%%%%%%
%%%%%%%%%%%%%%%%%%%%%%%%
%%%%%%%%%%%%%%%%%%%%%%%%
%%%%%%%%%%%%%%%%%%%%%%%%
%%%%%%%%%%%%%%%%%%%%%%%%
\end{align*}
%%%%%%%%%%%%%%%%%%%%%%%%%%%%%%%%%%%%%%%%%%%%%%%%%%%%%%%%%%%%
hence by application of Proposition \ref{prop...Contractive}, we obtain that 
%%%%%%%%%%%%%%%%%%%%%%%%%%%%%%%%%%%%%%%%%%%%%%%%%%%%%%%%%%%%
\begin{align*}
\Abs{\Exp g(\vtheta_{N})-\Exp g(\vTheta_{\eps N})}  &\leq  \sum_{l=1}^{N}\Abs{\fP_\eps^{N-l+1}\circ\fP_{(l-1)\eps}  g(\vtheta_0)-\fP_\eps^{N-l}\circ\fP_{l\eps} g(\vTheta_0)  }\\
%%%%%%%%%%%%%%%%%%%%%%%%
&\leq \sum_{l=1}^{N}\Abs{\fP_\eps^{N-l}\circ\left(\fP_\eps^{1}\circ\fP_{(l-1)\eps}  - \fP_\eps\circ\fP_{(l-1)\eps}   \right) g(\vTheta_0)},
%%%%%%%%%%%%%%%%%%%%%%%%
%&=\sum_{l=1}^{N}\Abs{\fP_\eps^{N-l }\circ\fP_\eps^{1}  g(\vTheta_{(l-1)\eps} )-\fP_\eps^{N-l}\circ\fP_{ \eps} g(\vTheta_{(l-1)\eps} ) }
%%%%%%%%%%%%%%%%%%%%%%%%
%%%%%%%%%%%%%%%%%%%%%%%%
%%%%%%%%%%%%%%%%%%%%%%%%
%%%%%%%%%%%%%%%%%%%%%%%%
%%%%%%%%%%%%%%%%%%%%%%%%
\end{align*}
%%%%%%%%%%%%%%%%%%%%%%%%%%%%%%%%%%%%%%%%%%%%%%%%%%%%%%%%%%%
since $\left(\fP_\eps^{1}\circ\fP_{(l-1)\eps}  - \fP_\eps\circ\fP_{(l-1)\eps}   \right) g(\vTheta_0)$ can be regarded as $\fL^1(\sR^D)$ if we choose measure $\mu$  to be the delta measure concentrated on $\vTheta_0$. i.e., \[\mu:=\delta_{\vTheta_0},\] hence by the conctration property of Markov operators, we obtain further that 
%%%%%%%%%%%%%%%%%%%%%%%%%%%%%%%%%%%%%%%%%%%%%%%%%%%%%%%%%%%%
\begin{align*}
\Abs{\Exp g(\vtheta_{N})-\Exp g(\vTheta_{\eps N})} 
%%%%%%%%%%%%%%%%%%%%%%%%
&\leq\sum_{l=1}^{N}\Abs{ \left(\fP_\eps^{1}\circ\fP_{(l-1)\eps}  - \fP_\eps\circ\fP_{(l-1)\eps}   \right) g(\vTheta_0)}\\
%%%%%%%%%%%%%%%%%%%%%%%%
&\leq   \sum_{l=1}^{N}\Abs{\fP_\eps^{1}   g(\vTheta_{(l-1)\eps} )-\fP_\eps g(\vTheta_{(l-1)\eps} )}.
%%%%%%%%%%%%%%%%%%%%%%%%
%%%%%%%%%%%%%%%%%%%%%%%%
%%%%%%%%%%%%%%%%%%%%%%%%
%%%%%%%%%%%%%%%%%%%%%%%%
\end{align*}
%%%%%%%%%%%%%%%%%%%%%%%%%%%%%%%%%%%%%%%%%%%%%%%%%%%%%%%%%%%
By taking   expectation conditioned on $\vTheta_{(l-1)\eps}$, then similar to the relation \eqref{eq...proof...OrderOne...Crutial}, the following holds 
%%%%%%%%%%%%%%%%%%%%%%%%%%%%%%%%%%%%%%%%%%%%%%%%%%%%%%%%%%%%
\begin{equation*}%\label{eq...proof...OrderOne...Crutial}
\begin{aligned}
\Abs{ \fP_\eps^1 g(\vTheta_{(l-1)\eps})-  \fP_\eps g(\vTheta_{(l-1)\eps}) }&=\Exp\left[\left[\Abs{\Exp g(\vtheta_l)-\Exp g(\vTheta_\eps l)}\Big| \vTheta_{(l-1)\eps}\right]\right]\\
%%%%%%%%%%%%%%%%%%%%%%%%
&\leq \Exp\Abs{E_{\eps}^1(\vTheta_{(l-1)\eps})}+\Exp\Abs{\Bar{E}_{\eps}^1(\vTheta_{(l-1)\eps})}\\
%%%%%%%%%%%%%%%%%%%%%%%%
&\leq  \eps^2\Norm{g}_{C^4} C(T^{\ast},\vtheta_0,\eps_0)+\eps^2\Norm{g}_{C^4}C(T^{\ast},\vTheta_0)\\
%%%%%%%%%%%%%%%%%%%%%%%%
&=\fO(\eps^2).
%%%%%%%%%%%%%%%%%%%%%%%%
\end{aligned}
\end{equation*}
%%%%%%%%%%%%%%%%%%%%%%%%%%%%%%%%%%%%%%%%%%%%%%%%%%%%%%%%%%%% 
We remark that the last line of the above relation is essentially based on Assumption \ref{Append...theOnlyAssumption}, since  $ \Exp\Abs{E_{\eps}^1(\vTheta_{(l-1)\eps})}$ and $\Exp\Abs{\Bar{E}_{\eps}^1(\vTheta_{(l-1)\eps})}$ can be  bounded above by the second, fourth and sixth moments of the solution to SDE \eqref{eq...text...SME...SME...SDE...Abstract...appendix}, hence we may apply dominated convergence theorem to obtain the  last line of the above relation.

To sum up, as
%%%%%%%%%%%%%%%%%%%%%%%%%%%%%%%%%%%%%%%%%%%%%%%%%%%%%%%%%%%%
\begin{align*}
 \Abs{\fP_\eps^N g(\vtheta_0)-\fP_{ \eps N } g(\vTheta_0)}  &\leq  \sum_{l=1}^{N}\Abs{\fP_\eps^{N-l+1}\circ\fP_{(l-1)\eps}  g(\vtheta_0)-\fP_\eps^{N-l}\circ\fP_{l\eps} g(\vTheta_0)  } =N\fO(\eps^2),
%%%%%%%%%%%%%%%%%%%%%%%%
\end{align*}
%%%%%%%%%%%%%%%%%%%%%%%%%%%%%%%%%%%%%%%%%%%%%%%%%%%%%%%%%%%%
hence for  $N=N_{T,\eps}$, 
%%%%%%%%%%%%%%%%%%%%%%%%%%%%%%%%%%%%%%%%%%%%%%%%%%%%%%%%%%%%
\[
 \Abs{\fP_\eps^N g(\vtheta_0)-\fP_{ \eps N } g(\vTheta_0)}   =N\fO(\eps^2)=N\eps \fO(\eps)\leq T\fO(\eps)=\fO(\eps).
\]
%%%%%%%%%%%%%%%%%%%%%%%%%%%%%%%%%%%%%%%%%%%%%%%%%%%%%%%%%%%%
%%%%%%%%%%%%%%%%%%%%%%%%%%%%%%%%%%%%%%%%%%%%%%%%%%%%%%%%%%%%
%%%%%%%%%%%%%%%%%%%%%%%%%%%%%%%%%%%%%%%%%%%%%%%%%%%%%%%%%%%%
%%%%%%%%%%%%%%%%%%%%%%%%%%%%%%%%%%%%%%%%%%%%%%%%%%%%%%%%%%%%
%%%%%%%%%%%%%%%%%%%%%%%%%%%%%%%%%%%%%%%%%%%%%%%%%%%%%%%%%%%%
%%%%%%%%%%%%%%%%%%%%%%%%%%%%%%%%%%%%%%%%%%%%%%%%%%%%%%%%%%%%
%%%%%%%%%%%%%%%%%%%%%%%%%%%%%%%%%%%%%%%%%%%%%%%%%%%%%%%%%%%%
%%%%%%%%%%%%%%%%%%%%%%%%%%%%%%%%%%%%%%%%%%%%%%%%%%%%%%%%%%%%
%%%%%%%%%%%%%%%%%%%%%%%%%%%%%%%%%%%%%%%%%%%%%%%%%%%%%%%%%%%%
%%%%%%%%%%%%%%%%%%%%%%%%%%%%%%%%%%%%%%%%%%%%%%%%%%%%%%%%%%%%
%%%%%%%%%%%%%%%%%%%%%%%%%%%%%%%%%%%%%%%%%%%%%%%%%%%%%%%%%%%%
%%%%%%%%%%%%%%%%%%%%%%%%%%%%%%%%%%%%%%%%%%%%%%%%%%%%%%%%%%%%
%%%%%%%%%%%%%%%%%%%%%%%%%%%%%%%%%%%%%%%%%%%%%%%%%%%%%%%%%%%%
%%%%%%%%%%%%%%%%%%%%%%%%%%%%%%%%%%%%%%%%%%%%%%%%%%%%%%%%%%%%
%%%%%%%%%%%%%%%%%%%%%%%%%%%%%%%%%%%%%%%%%%%%%%%%%%%%%%%%%%%%
\end{proof}
%%%%%%%%%%%%%%%%%%%%%%%%%%%%%%%%%%%%%%%%%%%%%%%%%%%%%%%%%%%%
%%%%%%%%%%%%%%%%%%%%%%%%%%%%%%%%%%%%%%%%%%%%%%%%%%%%%%%%%%%%
%%%%%%%%%%%%%%%%%%%%%%%%%%%%%%%%%%%%%%%%%%%%%%%%%%%%%%%%%%%%
%%%%%%%%%%%%%%%%%%%%%%%%%%%%%%%%%%%%%%%%%%%%%%%%%%%%%%%%%%%%
%%%%%%%%%%%%%%%%%%%%%%%%%%%%%%%%%%%%%%%%%%%%%%%%%%%%%%%%%%%%
\subsection{Semigroup Expansion with Accuracy of Order Two} \label{subsection...Semigroup...SemigroupExpansionOrderTwo}
%%%%%%%%%%%%%%%%%%%%%%%%%%%%%%%%%%%%%%%%%%%%%%%%%%%%%%%%%%%%
%%%%%%%%%%%%%%%%%%%%%%%%%%%%%%%%%%%%%%%%%%%%%%%%%%%%%%%%%%%%
%%%%%%%%%%%%%%%%%%%%%%%%%%%%%%%%%%%%%%%%%%%%%%%%%%%%%%%%%%%%
%%%%%%%%%%%%%%%%%%%%%%%%%%%%%%%%%%%%%%%%%%%%%%%%%%%%%%%%%%%%
\begin{theorem}[Order-$2$ accuracy]\label{thm...OrderTwo}
Fix time $T\leq T^{\ast}$,   if we choose  
%%%%%%%%%%%%%%%%%%%%%%%%%%%%%%%%%%%%%%%%%%%%%%%%%%%%%%%%%%%%
%%%%%%%%%%%%%%%%%%%%%%%%%%%%%%%%%%%%%%%%%%%%%%%%%%%%%%%%%%%%
\begin{align*}
\vb(\vTheta)&=-\nabla_{\vTheta}\left(L_{\fS}(\vTheta)+\frac{\eps}{4}\Norm{\nabla_{\vTheta}L_S(\vTheta)}_2^2\right),\\
%%%%%%%%%%%%%%%%%%%%%%%%
\vsigma(\vTheta)&=\sqrt{\eps}\left(\mSigma\left(\vTheta\right)\right)^{\frac{1}{2}},
\end{align*}
%%%%%%%%%%%%%%%%%%%%%%%%%%%%%%%%%%%%%%%%%%%%%%%%%%%%%%%%%%%%
%%%%%%%%%%%%%%%%%%%%%%%%%%%%%%%%%%%%%%%%%%%%%%%%%%%%%%%%%%%%
then for all $t\in[0, T]$, the stochastic processes $\vTheta_t $ satisfying
%%%%%%%%%%%%%%%%%%%%%%%%%%%%%%%%%%%%%%%%%%%%%%%%%%%%%%%%%%%%
\begin{equation}\label{eq...thm...SDEfororderTwoapproximation}
\D \vTheta_t=\vb \left(\vTheta_t\right)\D t+\vsigma\left(\vTheta_t\right) \D \vW_t, \quad \vTheta_0=\vTheta(0), 
\end{equation}
%%%%%%%%%%%%%%%%%%%%%%%%%%%%%%%%%%%%%%%%%%%%%%%%%%%%%%%%%%%%
is an order-$2$ approximation of dropout \eqref{eq...text...SME...ModifiedLoss...ThetaUpdate...Abstract...appendix}, i.e.,  given any test function   $g\in\fC_b^6(\sR^D)$, there exists $\eps_0>0$ and $C(T, \Norm{g}_{C^6}, \eps_0)>0$, such that  for any  $\eps\leq\eps_0$ and $T\leq T^{\ast}$, and for all  $N\in[N_{T,\eps}]$, the following holds:
%%%%%%%%%%%%%%%%%%%%%%%%%%%%%%%%%%%%%%%%%%%%%%%%%%%%%%%%%%%%
\begin{equation}\label{eq...thm...orderTwoapproximation}
\Abs{\Exp g(\vtheta_{N})-\Exp g(\vTheta_{\eps N})} %=\Abs{\fP_\eps^N g(\vtheta_0)-\fP_{\eps N} g(\vTheta_0)} 
\leq C(T, \Norm{g}_{C^6}, \vtheta_0,  \eps_0)\eta,  
\end{equation}
%%%%%%%%%%%%%%%%%%%%%%%%%%%%%%%%%%%%%%%%%%%%%%%%%%%%%%%%%%%%
where $\vtheta_0=\vTheta_0$.
\end{theorem}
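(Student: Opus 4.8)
The plan is to run the semigroup argument of Theorem~\ref{thm...OrderOne} with one extra order of Taylor/It\^o expansion on each side. Keeping the notation of that proof --- $\fP_\eps^{1}$ for the one-step dropout operator \eqref{eq...text...Semigroup...DisCtsSemigroup...Discrete-Semi-Group}, $\{\fP_t\}_{t\geq 0}$ for the Markov semigroup \eqref{eq...text...Semigroup...Continuous-Semi-Group} of the SDE \eqref{eq...thm...SDEfororderTwoapproximation}, and $\fP_\eps$ for $\fP_t$ at $t=\eps$ --- the telescoping identity together with the contractivity of Markov operators (Proposition~\ref{prop...Contractive}) reduces \eqref{eq...thm...orderTwoapproximation} to a \emph{one-step} discrepancy of order $\eps^{3}$: $\Abs{\fP_\eps^{1}g(\vtheta)-\fP_\eps g(\vtheta)}\le C\eps^{3}$ with $C$ of polynomial growth in $\Norm{\vtheta}_2$, so that summing $N_{T,\eps}=\lfloor T/\eps\rfloor$ such terms and absorbing the polynomial factors by the uniform moment bounds of Assumption~\ref{Append...theOnlyAssumption} yields the global estimate $\fO(\eps^{2})$. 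Here the drift and diffusion of \eqref{eq...thm...SDEfororderTwoapproximation} themselves depend on $\eps$, but this affects only the bookkeeping, not the contraction step.

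For the discrete side, write $\Delta:=\vtheta_1-\vtheta_0=-\eps\nabla_{\vtheta}R_{\fS}^{\mathrm{drop}}(\vtheta_0;\veta_1)$ and apply Taylor's theorem to $g$, keeping the first two terms exactly and the third-order term in Lagrange form:
\[
\Exp g(\vtheta_1)-\Exp g(\vtheta_0)=\bigl\langle\nabla_{\vtheta}g(\vtheta_0),\Exp\Delta\bigr\rangle+\tfrac12\nabla^2_{\vtheta}g(\vtheta_0):\Exp[\Delta\otimes\Delta]+E_\eps^{2}(\vtheta_0).
\]
By Section~\ref{subsection...SME...ModifiedLoss} and the formula for $\mSigma$, $\Exp\Delta=-\eps\nabla_{\vtheta}L_{\fS}(\vtheta_0)$ and $\Exp[\Delta\otimes\Delta]=\eps^{2}\bigl(\nabla_{\vtheta}L_{\fS}(\vtheta_0)\otimes\nabla_{\vtheta}L_{\fS}(\vtheta_0)+\mSigma(\vtheta_0)\bigr)$, while $\Abs{E_\eps^{2}(\vtheta_0)}\le\tfrac16\Norm{g}_{C^{3}}\,\Exp\Norm{\Delta}_2^{3}=\eps^{3}\,\fO\!\bigl(\Exp\Norm{\nabla_{\vtheta}R_{\fS}^{\mathrm{drop}}(\vtheta_0;\veta_1)}_2^{3}\bigr)$, which is controlled by the sixth-moment bound \eqref{eq...assump...UniformBDD...Discrete...appendix}. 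Thus the one-step increment of dropout equals $-\eps\langle\nabla_{\vtheta}g,\nabla_{\vtheta}L_{\fS}\rangle+\tfrac{\eps^{2}}{2}\nabla^2_{\vtheta}g:(\nabla_{\vtheta}L_{\fS}\otimes\nabla_{\vtheta}L_{\fS}+\mSigma)+\fO(\eps^{3})$. The reason for keeping the third-order term in Lagrange form rather than extracting $\Exp[\Delta^{\otimes3}]$ is precisely to keep this remainder at the level of the sixth moment rather than the eighth.

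For the SDE side I would use the expansion $\fP_\eps g=g+\eps\fL g+\tfrac{\eps^{2}}{2}\fL^{2}g+\fO(\eps^{3})$, whose remainder is a triple time-integral of $\fL^{3}g$, involving at most sixth-order derivatives of $g$ --- whence the hypothesis $g\in\fC_b^{6}$ --- and coefficients of polynomial growth bounded in expectation by \eqref{eq...assump...UniformBDD...Cts...appendix}. Decompose the generator of \eqref{eq...thm...SDEfororderTwoapproximation} as $\fL=\fL_0+\eps\fL_1$ with $\fL_0 g=-\langle\nabla_{\vTheta}L_{\fS},\nabla_{\vTheta}g\rangle$ and $\fL_1 g=-\tfrac14\langle\nabla_{\vTheta}\Norm{\nabla_{\vTheta}L_{\fS}}_2^{2},\nabla_{\vTheta}g\rangle+\tfrac12\mSigma:\nabla^2_{\vTheta}g$; since $\fL^{2}=\fL_0^{2}+\fO(\eps)$, the increment reads $-\eps\langle\nabla_{\vTheta}g,\nabla_{\vTheta}L_{\fS}\rangle+\eps^{2}\fL_1 g+\tfrac{\eps^{2}}{2}\fL_0^{2}g+\fO(\eps^{3})$. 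The key computation is the identity
\[
\tfrac12\fL_0^{2}g=\tfrac12\,\nabla^2_{\vTheta}g:\bigl(\nabla_{\vTheta}L_{\fS}\otimes\nabla_{\vTheta}L_{\fS}\bigr)+\tfrac14\bigl\langle\nabla_{\vTheta}\Norm{\nabla_{\vTheta}L_{\fS}}_2^{2},\nabla_{\vTheta}g\bigr\rangle,
\]
obtained by applying $\fL_0$ once more to $\fL_0 g$ and using $\partial_i L_{\fS}\,\partial_i\partial_j L_{\fS}=\tfrac12\,\partial_j\Norm{\nabla_{\vTheta}L_{\fS}}_2^{2}$. Adding $\eps^{2}\fL_1 g$, the two $\langle\nabla_{\vTheta}\Norm{\nabla_{\vTheta}L_{\fS}}_2^{2},\nabla_{\vTheta}g\rangle$ contributions cancel, leaving exactly $-\eps\langle\nabla_{\vTheta}g,\nabla_{\vTheta}L_{\fS}\rangle+\tfrac{\eps^{2}}{2}\nabla^2_{\vTheta}g:(\nabla_{\vTheta}L_{\fS}\otimes\nabla_{\vTheta}L_{\fS}+\mSigma)+\fO(\eps^{3})$, term-by-term equal to the discrete increment, so that $\Abs{\fP_\eps^{1}g(\vtheta)-\fP_\eps g(\vtheta)}=\fO(\eps^{3})$ and the telescoping step concludes. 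What this makes transparent is that the additional drift $\tfrac{\eps}{4}\Norm{\nabla_{\vTheta}L_{\fS}}_2^{2}$ is engineered precisely to create the $-\tfrac14\langle\nabla_{\vTheta}\Norm{\nabla_{\vTheta}L_{\fS}}_2^{2},\nabla_{\vTheta}g\rangle$ piece of $\fL_1$ that cancels the term $\tfrac14\langle\nabla_{\vTheta}\Norm{\nabla_{\vTheta}L_{\fS}}_2^{2},\nabla_{\vTheta}g\rangle$ coming from $\tfrac12\fL_0^{2}g$ --- a term with no counterpart on the discrete side.

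I expect the main obstacle to be the uniform remainder control: verifying that every error term on both sides is genuinely $\fO(\eps^{3})$ using only the second-, fourth-, and sixth-moment bounds of Assumption~\ref{Append...theOnlyAssumption} (this is exactly why one stops the discrete Taylor expansion at the third order and the semigroup expansion at $\fL^{2}$), together with the regularity bookkeeping inside the telescoping step --- namely that the one-step estimate must be applied not to $g$ alone but to its images under the discrete and continuous semigroups, so one needs those images to remain in a $\fC_b^{6}$-type class with norms controlled uniformly for times in $[0,T^{\ast}]$.
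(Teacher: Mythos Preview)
Your proposal is correct and follows the same architecture as the paper: reduce to a one-step estimate via telescoping and the contractivity of Proposition~\ref{prop...Contractive}, match the order-$\eps$ and order-$\eps^{2}$ terms of the discrete and continuous increments, and control all remainders at $\fO(\eps^{3})$ using only the second, fourth and sixth moment bounds of Assumption~\ref{Append...theOnlyAssumption}. The one organizational difference is on the SDE side: the paper Taylor-expands $g(\vTheta_\eps)-g(\vTheta_0)$ in the increment $\vTheta_\eps-\vTheta_0$ and then computes $\Exp[\vTheta_\eps-\vTheta_0]$, $\Exp[(\vTheta_\eps-\vTheta_0)^{\otimes2}]$, $\Exp[(\vTheta_\eps-\vTheta_0)^{\otimes3}]$ one by one via It\^o, whereas you use the semigroup expansion $\fP_\eps g=g+\eps\fL g+\tfrac{\eps^{2}}{2}\fL^{2}g+\fO(\eps^{3})$ together with the decomposition $\fL=\fL_0+\eps\fL_1$. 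Your route makes the cancellation mechanism---that the $\tfrac{\eps}{4}\Norm{\nabla L_{\fS}}_2^{2}$ correction in the drift is there precisely to kill the $\tfrac14\langle\nabla\Norm{\nabla L_{\fS}}_2^{2},\nabla g\rangle$ piece of $\tfrac12\fL_0^{2}g$---more transparent, at the cost of needing the Kolmogorov-type regularity statement you flag at the end; the paper sidesteps this by working directly with moments of the increment and never invoking smoothness of $\fP_t g$.
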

%%%%%%%%%%%%%%%%%%%%%%%%%%%%%%%%%%%%%%%%%%%%%%%%%%%%%%%%%%%%
%%%%%%%%%%%%%%%%%%%%%%%%%%%%%%%%%%%%%%%%%%%%%%%%%%%%%%%%%%%%
\begin{proof}
    By application of      Taylor's theorem with the Lagrange form of the remainder,   we have that for some $\alpha\geq 1,$
%%%%%%%%%%%%%%%%%%%%%%%%%%%%%%%%%%%%%%%%%%%%%%%%%%%%%%%%%%%%
\begin{align*}
g(\bm{\vartheta})-g(\tilde{\bm{\vartheta}})&=\sum_{s=1}^\alpha \frac{1}{s !} \sum_{i_1, \ldots, i_j=1}^D\prod_{j=1}^s\left[\bm{\vartheta}_{\left(i_j\right)}-\tilde{\bm{\vartheta}}_{\left(i_j\right)} \right] \frac{\partial^s g}{\partial \bm{\vartheta}_{\left(i_1\right)} \ldots \partial \bm{\vartheta}_{\left(i_j\right)}}(\tilde{\bm{\vartheta}}) \\    
%%%%%%%%%%%%%%%%%%%%%%%%
&~~+\frac{1}{(\alpha+1) !} \sum_{i_1, \ldots, i_j=1}^D \prod_{j=1}^{\alpha+1}\left[\bm{\vartheta}_{\left(i_j\right)}-\tilde{\bm{\vartheta}}_{\left(i_j\right)} \right]  \frac{\partial^{\alpha+1}  g}{\partial \bm{\vartheta}_{\left(i_1\right)} \ldots \partial \bm{\vartheta}_{\left(i_j\right)}}(\gamma\bm{\vartheta}+(1-\gamma)\tilde{\bm{\vartheta}}),
%%%%%%%%%%%%%%%%%%%%%%%%
%%%%%%%%%%%%%%%%%%%%%%%%
%%%%%%%%%%%%%%%%%%%%%%%%
%%%%%%%%%%%%%%%%%%%%%%%%
%%%%%%%%%%%%%%%%%%%%%%%%
%%%%%%%%%%%%%%%%%%%%%%%%
\end{align*}
%%%%%%%%%%%%%%%%%%%%%%%%%%%%%%%%%%%%%%%%%%%%%%%%%%%%%%%%%%%%
for some $\gamma\in(0,1)$.

As we choose $\bm{\vartheta}:=\vtheta_1$, $\tilde{\bm{\vartheta}}:=\vtheta_0$ and $\alpha=2$, with slight misuse of the Frobenius inner product  notation, we obtain that 
%%%%%%%%%%%%%%%%%%%%%%%%%%%%%%%%%%%%%%%%%%%%%%%%%%%%%%%%%%%% 
\begin{align*}
g(\vtheta_1)-g(\vtheta_0)&=\left<\nabla_{\vtheta} g(\vtheta_0), \vtheta_1-\vtheta_0\right> +\frac{1}{2}\nabla_{\vtheta}^2 g( \vtheta_0):(\vtheta_1-\vtheta_0)\otimes (\vtheta_1-\vtheta_0)\\
%%%%%%%%%%%%%%%%%%%%%%%% 
&~~+\frac{1}{6}\nabla_{\vtheta}^3 g( \gamma\vtheta_1+(1-\gamma)\vtheta_0):(\vtheta_1-\vtheta_0)\otimes (\vtheta_1-\vtheta_0)\otimes (\vtheta_1-\vtheta_0)\\
%%%%%%%%%%%%%%%%%%%%%%%% 
%&~~+\frac{1}{24}\nabla_{\vtheta}^4 g(\gamma\vtheta_1+(1-\gamma)\vtheta_0):(\vtheta_1-\vtheta_0)\otimes (\vtheta_1-\vtheta_0)\otimes (\vtheta_1-\vtheta_0)\otimes (\vtheta_1-\vtheta_0)\\
%%%%%%%%%%%%%%%%%%%%%%%%
&=\left<\nabla_{\vtheta} g(\vtheta_0), \vtheta_1-\vtheta_0\right> +\frac{1}{2}\nabla_{\vtheta}^2 g( \vtheta_0):(\vtheta_1-\vtheta_0)\otimes (\vtheta_1-\vtheta_0)\\
%%%%%%%%%%%%%%%%%%%%%%%% 
&~~+\frac{1}{6}\nabla_{\vtheta}^3 g( \tilde{\vtheta}_0):(\vtheta_1-\vtheta_0)\otimes (\vtheta_1-\vtheta_0)\otimes (\vtheta_1-\vtheta_0),%\\
%%%%%%%%%%%%%%%%%%%%%%%% 
%&~~+\frac{1}{24}\nabla_{\vtheta}^4 g(\tilde{\vtheta}_0):(\vtheta_1-\vtheta_0)\otimes (\vtheta_1-\vtheta_0)\otimes (\vtheta_1-\vtheta_0)\otimes (\vtheta_1-\vtheta_0),
%%%%%%%%%%%%%%%%%%%%%%%%
%%%%%%%%%%%%%%%%%%%%%%%%
%%%%%%%%%%%%%%%%%%%%%%%%
%%%%%%%%%%%%%%%%%%%%%%%%
\end{align*}
%%%%%%%%%%%%%%%%%%%%%%%%%%%%%%%%%%%%%%%%%%%%%%%%%%%%%%%%%%%% 
where $\tilde{\vtheta}_0:=\gamma\vtheta_1+(1-\gamma)\vtheta_0$, and 
we observe that  since
%%%%%%%%%%%%%%%%%%%%%%%%%%%%%%%%%%%%%%%%%%%%%%%%%%%%%%%%%%%% 
\begin{equation*}%\label{eq...text...SME...SME...DiscreteUpdate+Covariance}
\vtheta_1-\vtheta_{0} = -\eps\nabla_{\vtheta}L_{\fS}(\vtheta)\big|_{\vtheta=\vtheta_0}+\sqrt{\eps}\vV(\vtheta_0),    
\end{equation*}
%%%%%%%%%%%%%%%%%%%%%%%%%%%%%%%%%%%%%%%%%%%%%%%%%%%%%%%%%%%%
then 
%%%%%%%%%%%%%%%%%%%%%%%%%%%%%%%%%%%%%%%%%%%%%%%%%%%%%%%%%%%% 
\begin{align*}
\Exp g(\vtheta_1)-\Exp g(\vtheta_0)&=\left<\nabla_{\vtheta} g(\vtheta_0), \Exp\vtheta_1-\Exp\vtheta_0\right> +\frac{1}{2}\nabla_{\vtheta}^2 g( {\vtheta}_0):\Exp\left[(\vtheta_1-\vtheta_0)\otimes (\vtheta_1-\vtheta_0)\right]\\
%%%%%%%%%%%%%%%%%%%%%%%%
&~~+\frac{1}{6}\Exp\left[\nabla_{\vtheta}^3 g( \tilde{\vtheta}_0):(\vtheta_1-\vtheta_0)\otimes (\vtheta_1-\vtheta_0)\otimes (\vtheta_1-\vtheta_0)\right]\\
%%%%%%%%%%%%%%%%%%%%%%%%
&=-\eps\left<\nabla_{\vtheta} g(\vtheta_0),  \nabla_{\vtheta}L_{\fS}(\vtheta)\big|_{\vtheta=\vtheta_0}\right>\\
%%%%%%%%%%%%%%%%%%%%%%%%
&~~+ \frac{\eps^2}{2}\nabla_{\vtheta}^2 g( {\vtheta}_0):\left( \nabla_{\vtheta}L_{\fS}(\vtheta)\big|_{\vtheta=\vtheta_0} \otimes  \nabla_{\vtheta}L_{\fS}(\vtheta)\big|_{\vtheta=\vtheta_0}+\mSigma(\vtheta_{0})\right)\\
%%%%%%%%%%%%%%%%%%%%%%%%
&~~+E_{\eps}^2({\vtheta}_0),
%%%%%%%%%%%%%%%%%%%%%%%%
%%%%%%%%%%%%%%%%%%%%%%%%
\end{align*}
%%%%%%%%%%%%%%%%%%%%%%%%%%%%%%%%%%%%%%%%%%%%%%%%%%%%%%%%%%%%
where the remainder term $E_{\eps}^2(\cdot):\sR^D\to\sR$, whose expression reads
%%%%%%%%%%%%%%%%%%%%%%%%%%%%%%%%%%%%%%%%%%%%%%%%%%%%%%%%%%%%
\begin{equation}
E_{\eps}^2({\vtheta}_0):=   \frac{1}{6}\Exp\left[\nabla_{\vtheta}^3 g(\tilde{\vtheta}_0):(\vtheta_1-\vtheta_0)\otimes (\vtheta_1-\vtheta_0)\otimes (\vtheta_1-\vtheta_0)\right],
\end{equation}
%%%%%%%%%%%%%%%%%%%%%%%%%%%%%%%%%%%%%%%%%%%%%%%%%%%%%%%%%%%%
and we remark that $\tilde{\vtheta}_0$ and $\vtheta_1$ are implicitly defined by $\vtheta_0$.
Then, directly from Assumption \ref{Append...theOnlyAssumption}, we obtain that 
%%%%%%%%%%%%%%%%%%%%%%%%%%%%%%%%%%%%%%%%%%%%%%%%%%%%%%%%%%%%
\begin{align*}
E_{\eps}^2({\vtheta}_0)%&=\frac{1}{6}\nabla_{\vtheta}^3 g(\tilde{\vtheta}_0):(\vtheta_1-\vtheta_0)\otimes (\vtheta_1-\vtheta_0)\otimes (\vtheta_1-\vtheta_0) \\
%%%%%%%%%%%%%%%%%%%%%%%%
&\leq \frac{1}{6} \Norm{g}_{C^6} \Exp \Norm{\vtheta_1-\vtheta_0}_2^3 = \eps^3 \Norm{g}_{C^6} \Exp \left[\Norm{\nabla_{\vtheta}R_{\fS}^\mathrm{drop}\left(\vtheta_{0}; \veta_{1}\right)}_2^3\right]  \\
& \leq \eps^3\Norm{g}_{C^6} C(T^{\ast},\vtheta_0,\eps_0),
%%%%%%%%%%%%%%%%%%%%%%%%
%%%%%%%%%%%%%%%%%%%%%%%%
%%%%%%%%%%%%%%%%%%%%%%%%
%%%%%%%%%%%%%%%%%%%%%%%%   
\end{align*}
%%%%%%%%%%%%%%%%%%%%%%%%%%%%%%%%%%%%%%%%%%%%%%%%%%%%%%%%%%%% 
since $\nabla_{\vtheta}L_{\fS}(\vtheta)$ and $\mSigma\left(\vtheta\right)$ can be  bounded above by the second and  fourth  moments of the dropout  iteration \eqref{eq...text...SME...ModifiedLoss...ThetaUpdate...Abstract...appendix}.

We observe that   
%%%%%%%%%%%%%%%%%%%%%%%%%%%%%%%%%%%%%%%%%%%%%%%%%%%%%%%%%%%% 
\begin{align*}
  \vTheta_\eps -   \vTheta_{0} &=  \int_0^\eps \vb (\vTheta_s)\D s+\int_0^\eps\vsigma(\vTheta_s)\D \mW_s. 
%%%%%%%%%%%%%%%%%%%%%%%%
%%%%%%%%%%%%%%%%%%%%%%%%
%%%%%%%%%%%%%%%%%%%%%%%%
%%%%%%%%%%%%%%%%%%%%%%%%
\end{align*}
%%%%%%%%%%%%%%%%%%%%%%%%%%%%%%%%%%%%%%%%%%%%%%%%%%%%%%%%%%%%
As we choose $\bm{\vartheta}:=\vTheta_{\eps}$, $\tilde{\bm{\vartheta}}:=\vTheta_{0}$ and $\alpha=3$, then we obtain that 
%%%%%%%%%%%%%%%%%%%%%%%%%%%%%%%%%%%%%%%%%%%%%%%%%%%%%%%%%%%% 
\begin{align*}  
g(\vTheta_{\eps})-g(\vTheta_{0})&=\left<\nabla_{\vTheta} g(\vTheta_{0}), \vTheta_\eps-\vTheta_{0}\right>\\
%%%%%%%%%%%%%%%%%%%%%%%%
&~~+\frac{1}{2}\nabla_{\vTheta}^2 g( {\vTheta}_{0}):(\vTheta_\eps-\vTheta_{0})\otimes(\vTheta_\eps-\vTheta_{0})\\
%%%%%%%%%%%%%%%%%%%%%%%%
&~~+\frac{1}{6}\nabla_{\vTheta}^3 g(  {\vTheta}_{0}):(\vTheta_\eps-\vTheta_{0})\otimes(\vTheta_\eps-\vTheta_{0})  \otimes(\vTheta_\eps-\vTheta_{0})\\
%%%%%%%%%%%%%%%%%%%%%%%%
&~~+\frac{1}{24}\nabla_{\vTheta}^4 g( \widetilde{\vTheta}_{0}):(\vTheta_\eps-\vTheta_{0})\otimes(\vTheta_\eps-\vTheta_{0})  \otimes(\vTheta_\eps-\vTheta_{0})\otimes(\vTheta_\eps-\vTheta_{0}),  
%%%%%%%%%%%%%%%%%%%%%%%%
%%%%%%%%%%%%%%%%%%%%%%%%
%%%%%%%%%%%%%%%%%%%%%%%%
%%%%%%%%%%%%%%%%%%%%%%%%
%%%%%%%%%%%%%%%%%%%%%%%%
%%%%%%%%%%%%%%%%%%%%%%%%
\end{align*}
%%%%%%%%%%%%%%%%%%%%%%%%%%%%%%%%%%%%%%%%%%%%%%%%%%%%%%%%%%%%
where 
\[\widetilde{\vTheta}_{0}:=\gamma\vTheta_\eps+(1-\gamma)\vTheta_{0},\]
for some $\gamma\in(0,1)$. Then 
%%%%%%%%%%%%%%%%%%%%%%%%%%%%%%%%%%%%%%%%%%%%%%%%%%%%%%%%%%%% 
\begin{align*}
&\Exp g(\vTheta_{\eps})-\Exp g(\vTheta_0)\\=&\left<\nabla_{\vTheta} g(\vTheta_0), \Exp\vTheta_\eps -  \Exp \vTheta_{0}\right> +\frac{1}{2}\nabla_{\vTheta}^2 g({\vTheta}_{0}):\Exp\left[(\vTheta_\eps-\vTheta_{0})\otimes(\vTheta_\eps-\vTheta_{0})\right]\\
%%%%%%%%%%%%%%%%%%%%%%%%
&~~+\frac{1}{6}\nabla_{\vTheta}^3 g( {\vTheta}_{0}):\Exp\left[(\vTheta_\eps-\vTheta_{0})\otimes(\vTheta_\eps-\vTheta_{0})  \otimes(\vTheta_\eps-\vTheta_{0})\right]\\
%%%%%%%%%%%%%%%%%%%%%%%%
&~~+\frac{1}{24}\Exp\left[\nabla_{\vTheta}^4 g( \widetilde{\vTheta}_{0}):(\vTheta_\eps-\vTheta_{0})\otimes(\vTheta_\eps-\vTheta_{0})  \otimes(\vTheta_\eps-\vTheta_{0})\otimes(\vTheta_\eps-\vTheta_{0})\right]\\
%%%%%%%%%%%%%%%%%%%%%%%%
=&\left<\nabla_{\vTheta} g(\vTheta_0),  \int_0^\eps \Exp [\vb (\vTheta_s)]\D s\right> +\frac{1}{2}\nabla_{\vTheta}^2 g({\vTheta}_{0}):\Exp\left[(\vTheta_\eps-\vTheta_{0})\otimes(\vTheta_\eps-\vTheta_{0})\right]\\&~~+\frac{1}{6}\nabla_{\vTheta}^3 g( {\vTheta}_{0}):\Exp\left[(\vTheta_\eps-\vTheta_{0})\otimes(\vTheta_\eps-\vTheta_{0})  \otimes(\vTheta_\eps-\vTheta_{0})\right]\\
%%%%%%%%%%%%%%%%%%%%%%%%
&~~+\frac{1}{24}\Exp\left[\nabla_{\vTheta}^4 g( \widetilde{\vTheta}_{0}):(\vTheta_\eps-\vTheta_{0})\otimes(\vTheta_\eps-\vTheta_{0})  \otimes(\vTheta_\eps-\vTheta_{0})\otimes(\vTheta_\eps-\vTheta_{0})\right],
%%%%%%%%%%%%%%%%%%%%%%%%
%%%%%%%%%%%%%%%%%%%%%%%%
%%%%%%%%%%%%%%%%%%%%%%%%
%%%%%%%%%%%%%%%%%%%%%%%%
\end{align*}
%%%%%%%%%%%%%%%%%%%%%%%%%%%%%%%%%%%%%%%%%%%%%%%%%%%%%%%%%%%%
and  since 
%%%%%%%%%%%%%%%%%%%%%%%%%%%%%%%%%%%%%%%%%%%%%%%%%%%%%%%%%%%%
\begin{align*}
\left<\nabla_{\vTheta} g(\vTheta_0), \Exp [\vb (\vTheta_s)]\right>&=\left<\nabla_{\vTheta} g(\vTheta_0),  \Exp [\vb (\vTheta_0)] \right> +\int_{0}^s \fL \left<\nabla_{\vTheta} g(\vTheta_0), \vb  \right>(\vTheta_v)\D v,
%%%%%%%%%%%%%%%%%%%%%%%%
%%%%%%%%%%%%%%%%%%%%%%%%
%%%%%%%%%%%%%%%%%%%%%%%%
%%%%%%%%%%%%%%%%%%%%%%%%
%%%%%%%%%%%%%%%%%%%%%%%%    
\end{align*}
%%%%%%%%%%%%%%%%%%%%%%%%%%%%%%%%%%%%%%%%%%%%%%%%%%%%%%%%%%%%
then we obtain that 
%%%%%%%%%%%%%%%%%%%%%%%%%%%%%%%%%%%%%%%%%%%%%%%%%%%%%%%%%%%% 
\begin{align*}
\Exp g(\vTheta_{\eps})-\Exp g(\vTheta_0)&=\eps\left<\nabla_{\vTheta} g(\vTheta_0),  \Exp [\vb (\vTheta_0)] \right>+\int_0^\eps\int_{0}^s \fL \left<\nabla_{\vTheta} g(\vTheta_0), \vb  \right>(\vTheta_v)\D v \D s\\
&~~+\frac{1}{2}\nabla_{\vTheta}^2 g({\vTheta}_{0}):\Exp\left[(\vTheta_\eps-\vTheta_{0})\otimes(\vTheta_\eps-\vTheta_{0})\right]\\
%%%%%%%%%%%%%%%%%%%%%%%%
& ~~+\frac{1}{6}\nabla_{\vTheta}^3 g( {\vTheta}_{0}):\Exp\left[(\vTheta_\eps-\vTheta_{0})\otimes(\vTheta_\eps-\vTheta_{0})  \otimes(\vTheta_\eps-\vTheta_{0})\right]\\
%%%%%%%%%%%%%%%%%%%%%%%%
&~~+\frac{1}{24}\Exp\left[\nabla_{\vTheta}^4 g( \widetilde{\vTheta}_{0}):(\vTheta_\eps-\vTheta_{0})\otimes(\vTheta_\eps-\vTheta_{0})  \otimes(\vTheta_\eps-\vTheta_{0})\otimes(\vTheta_\eps-\vTheta_{0})\right],
%%%%%%%%%%%%%%%%%%%%%%%%
%%%%%%%%%%%%%%%%%%%%%%%%
%%%%%%%%%%%%%%%%%%%%%%%%
%%%%%%%%%%%%%%%%%%%%%%%%
\end{align*}
%%%%%%%%%%%%%%%%%%%%%%%%%%%%%%%%%%%%%%%%%%%%%%%%%%%%%%%%%%%% 
and once again since 
%%%%%%%%%%%%%%%%%%%%%%%%%%%%%%%%%%%%%%%%%%%%%%%%%%%%%%%%%%%%
\begin{align*}
 \fL \left<\nabla_{\vTheta} g(\vTheta_0), \vb  \right>(\vTheta_v)&= \fL \left<\nabla_{\vTheta} g(\vTheta_0), \vb  \right>(\vTheta_0)+\int_0^v\fL\left(\fL \left<\nabla_{\vTheta} g(\vTheta_0), \vb  \right>\right)(\vTheta_u)\D u,
%%%%%%%%%%%%%%%%%%%%%%%%
%%%%%%%%%%%%%%%%%%%%%%%%
%%%%%%%%%%%%%%%%%%%%%%%%
%%%%%%%%%%%%%%%%%%%%%%%%
%%%%%%%%%%%%%%%%%%%%%%%%    
\end{align*}
%%%%%%%%%%%%%%%%%%%%%%%%%%%%%%%%%%%%%%%%%%%%%%%%%%%%%%%%%%%%
then we obtain that
%%%%%%%%%%%%%%%%%%%%%%%%%%%%%%%%%%%%%%%%%%%%%%%%%%%%%%%%%%%% 
\begin{align*}
\Exp g(\vTheta_{\eps})-\Exp g(\vTheta_0)&=\eps\left<\nabla_{\vTheta} g(\vTheta_0),  \Exp [\vb (\vTheta_0)] \right>+\int_0^\eps\int_{0}^s \fL \left<\nabla_{\vTheta} g(\vTheta_0), \vb  \right>(\vTheta_v)\D v \D s\\
&~~+\frac{1}{2}\nabla_{\vTheta}^2 g({\vTheta}_{0}):\Exp\left[(\vTheta_\eps-\vTheta_{0})\otimes(\vTheta_\eps-\vTheta_{0})\right]\\
%%%%%%%%%%%%%%%%%%%%%%%%
& ~~+\frac{1}{6}\nabla_{\vTheta}^3 g( {\vTheta}_{0}):\Exp\left[(\vTheta_\eps-\vTheta_{0})\otimes(\vTheta_\eps-\vTheta_{0})  \otimes(\vTheta_\eps-\vTheta_{0})\right]\\
%%%%%%%%%%%%%%%%%%%%%%%%
&~~+\frac{1}{24}\Exp\left[\nabla_{\vTheta}^4 g( \widetilde{\vTheta}_{0}):(\vTheta_\eps-\vTheta_{0})\otimes(\vTheta_\eps-\vTheta_{0})  \otimes(\vTheta_\eps-\vTheta_{0})\otimes(\vTheta_\eps-\vTheta_{0})\right]\\
%%%%%%%%%%%%%%%%%%%%%%%% \\
%%%%%%%%%%%%%%%%%%%%%%%%
&=\eps\left<\nabla_{\vTheta} g(\vTheta_0),  \Exp [\vb (\vTheta_0)] \right>+\int_0^\eps\int_{0}^s  \fL \left<\nabla_{\vTheta} g(\vTheta_0), \vb  \right>(\vTheta_0) \D v \D s\\ 
%%%%%%%%%%%%%%%%%%%%%%%%
&~~+\int_0^\eps\int_{0}^s\int_0^v \fL\left(\fL \left<\nabla_{\vTheta} g(\vTheta_0), \vb  \right>\right)(\vTheta_u)\D u\D v \D s\\
%%%%%%%%%%%%%%%%%%%%%%%%
&~~+\frac{1}{2}\nabla_{\vTheta}^2 g({\vTheta}_{0}):\Exp\left[(\vTheta_\eps-\vTheta_{0})\otimes(\vTheta_\eps-\vTheta_{0})\right]\\
%%%%%%%%%%%%%%%%%%%%%%%%
& ~~+\frac{1}{6}\nabla_{\vTheta}^3 g( {\vTheta}_{0}):\Exp\left[(\vTheta_\eps-\vTheta_{0})\otimes(\vTheta_\eps-\vTheta_{0})  \otimes(\vTheta_\eps-\vTheta_{0})\right]\\
%%%%%%%%%%%%%%%%%%%%%%%%
&~~+\frac{1}{24}\Exp\left[\nabla_{\vTheta}^4 g( \widetilde{\vTheta}_{0}):(\vTheta_\eps-\vTheta_{0})\otimes(\vTheta_\eps-\vTheta_{0})  \otimes(\vTheta_\eps-\vTheta_{0})\otimes(\vTheta_\eps-\vTheta_{0})\right] \\ 
%%%%%%%%%%%%%%%%%%%%%%%%
&=\eps\left<\nabla_{\vTheta} g(\vTheta_0),  \Exp [\vb (\vTheta_0)] \right>+ \frac{\eps^2}{2}   \fL \left<\nabla_{\vTheta} g(\vTheta_0), \vb  \right>(\vTheta_0)  \\ 
%%%%%%%%%%%%%%%%%%%%%%%%
&~~+\frac{1}{2}\nabla_{\vTheta}^2 g({\vTheta}_{0}):\Exp\left[(\vTheta_\eps-\vTheta_{0})\otimes(\vTheta_\eps-\vTheta_{0})\right]+\Bar{E}_{\eps}^2(\vTheta_0),
%%%%%%%%%%%%%%%%%%%%%%%%
%  \frac{1}{6}\nabla_{\vTheta}^3 g( {\vTheta}_{0}):\Exp\left[(\vTheta_\eps-\vTheta_{0})\otimes(\vTheta_\eps-\vTheta_{0})  \otimes(\vTheta_\eps-\vTheta_{0})\right]
%%%%%%%%%%%%%%%%%%%%%%%%
% &~~+\int_0^\eps\int_{0}^s\int_0^v \fL\left(\fL \left<\nabla_{\vTheta} g(\vTheta_0), \vb  \right>\right)(\vTheta_u)\D u\D v \D s\\
% %%%%%%%%%%%%%%%%%%%%%%%%
% &~~+\frac{1}{6}\nabla_{\vTheta}^3 g( \widetilde{\vTheta}_{0}):\Exp\left[(\vTheta_\eps-\vTheta_{0})\otimes(\vTheta_\eps-\vTheta_{0})  \otimes(\vTheta_\eps-\vTheta_{0})\right]\\ 
%%%%%%%%%%%%%%%%%%%%%%%%
\end{align*}
%%%%%%%%%%%%%%%%%%%%%%%%%%%%%%%%%%%%%%%%%%%%%%%%%%%%%%%%%%%%  
where the remainder term $\Bar{E}_{\eps}^2(\cdot):\sR^D\to\sR$, whose expression reads
%%%%%%%%%%%%%%%%%%%%%%%%%%%%%%%%%%%%%%%%%%%%%%%%%%%%%%%%%%%%
\begin{equation}\label{eq...text...Semigroup...OrderTwo...BarR_2}
\begin{aligned}
\Bar{E}_{\eps}^2({\vTheta}_0)  &:=\int_0^\eps\int_{0}^s\int_0^v \fL\left(\fL \left<\nabla_{\vTheta} g(\vTheta_0), \vb  \right>\right)(\vTheta_u)\D u\D v \D s\\
%%%%%%%%%%%%%%%%%%%%%%%%
&~~+\frac{1}{6}\nabla_{\vTheta}^3 g( {\vTheta}_{0}):\Exp\left[(\vTheta_\eps-\vTheta_{0})\otimes(\vTheta_\eps-\vTheta_{0})  \otimes(\vTheta_\eps-\vTheta_{0})\right]\\
%%%%%%%%%%%%%%%%%%%%%%%%
&~~+\frac{1}{24}\Exp\left[\nabla_{\vTheta}^4 g( \widetilde{\vTheta}_{0}):(\vTheta_\eps-\vTheta_{0})\otimes(\vTheta_\eps-\vTheta_{0})  \otimes(\vTheta_\eps-\vTheta_{0})\otimes(\vTheta_\eps-\vTheta_{0})\right],
\end{aligned}
\end{equation}
%%%%%%%%%%%%%%%%%%%%%%%%%%%%%%%%%%%%%%%%%%%%%%%%%%%%%%%%%%%%
and we remark that $\widetilde{\vTheta}_0$ and $\vTheta_\eps$ are implicitly defined by $\vTheta_0$.  As we choose 
%%%%%%%%%%%%%%%%%%%%%%%%%%%%%%%%%%%%%%%%%%%%%%%%%%%%%%%%%%%%
\begin{align*}
\vb\left(\vTheta\right)& =   - \nabla_{\vTheta}\left(L_{\fS}(\vTheta)+\frac{\eps}{4}\Norm{\nabla_{\vTheta}L_S(\vTheta)}_2^2\right),\\     
%%%%%%%%%%%%%%%%%%%%%%%%%%%%%%%
\vsigma\left(\vTheta\right)& =\sqrt{\eps}\left(\mSigma\left(\vTheta\right)\right)^{\frac{1}{2}},
\end{align*}
%%%%%%%%%%%%%%%%%%%%%%%%%%%%%%%%%%%%%%%%%%%%%%%%%%%%%%%%%%%%
 then we carry out the computation for   $\fL\left(\fL \left<\nabla_{\vTheta} g(\vTheta_0), \vb  \right>\right)(\vTheta_u)$,
%%%%%%%%%%%%%%%%%%%%%%%%%%%%%%%%%%%%%%%%%%%%%%%%%%%%%%%%%%%%
\begin{align*}
&\fL\left(\fL \left<\nabla_{\vTheta} g(\vTheta_0), \vb  \right>\right)(\vTheta_u)\\
%%%%%%%%%%%%%%%%%%%%%%%%
=&\fL\left(\left<\vb, \nabla_{\vTheta}\left(\left<\nabla_{\vTheta} g(\vTheta_0), \vb  \right>\right)  \right>\right)(\vTheta_u)+\fL\left(\frac{\eps}{2} \mSigma:\nabla_{\vTheta}^2\left( \left<\nabla_{\vTheta} g(\vTheta_0), \vb  \right>  \right)\right)(\vTheta_u)\\
%%%%%%%%%%%%%%%%%%%%%%%%
=&\left<\vb, \nabla_{\vTheta}\left(\left<\vb, \nabla_{\vTheta}\left(\left<\nabla_{\vTheta} g(\vTheta_0), \vb  \right>\right)  \right>\right)\right>+\frac{\eps}{2} \mSigma:\nabla_{\vTheta}\left(\left<\vb, \nabla_{\vTheta}^2\left(\left<\nabla_{\vTheta} g(\vTheta_0), \vb  \right>\right)  \right>\right)\\
%%%%%%%%%%%%%%%%%%%%%%%%
&~+\frac{\eps}{2}\left<\vb, \nabla_{\vTheta}\left( \mSigma:\nabla_{\vTheta}^2\left( \left<\nabla_{\vTheta} g(\vTheta_0), \vb  \right>  \right)\right)\right>+\frac{\eps^2}{4} \mSigma:\nabla^2_{\vTheta}\left( \mSigma:\nabla_{\vTheta}^2\left( \left<\nabla_{\vTheta} g(\vTheta_0), \vb  \right>  \right)\right)\\
%%%%%%%%%%%%%%%%%%%%%%%%
=& \vb^\T \nabla_{\vTheta}\left( \vb^\T\nabla_{\vTheta}\vb\nabla_{\vTheta} g(\vTheta_0) \right)(\vTheta_u)+\eps R_\eps(\vTheta_u)\\
%%%%%%%%%%%%%%%%%%%%%%%%
=&\left<\nabla_{\vTheta}L_{\fS}(\vTheta_u),\nabla_{\vTheta}\left( \left<\frac{1}{2}\nabla_{\vTheta} \left(\Norm{\nabla_{\vTheta}L_{\fS}(\vTheta_u)}_2^2\right),  \nabla_{\vTheta}   g(\vTheta_0)\right> \right)\right>+\eps R'_\eps(\vTheta_u),
%%%%%%%%%%%%%%%%%%%%%%%%
%%%%%%%%%%%%%%%%%%%%%%%%
%%%%%%%%%%%%%%%%%%%%%%%%
%%%%%%%%%%%%%%%%%%%%%%%%
%%%%%%%%%%%%%%%%%%%%%%%%
%%%%%%%%%%%%%%%%%%%%%%%%
%%%%%%%%%%%%%%%%%%%%%%%%
%%%%%%%%%%%%%%%%%%%%%%%%
%%%%%%%%%%%%%%%%%%%%%%%%
\end{align*}
%%%%%%%%%%%%%%%%%%%%%%%%%%%%%%%%%%%%%%%%%%%%%%%%%%%%%%%%%%%%
since $\nabla_{\vTheta}L_{\fS}(\vTheta)$,  $\nabla^2_{\vTheta}L_{\fS}(\vTheta)$, $\nabla^3_{\vTheta}L_{\fS}(\vTheta)$, $\mSigma\left(\vTheta\right)$, $  R_\eps(\vTheta_u)$ and   $  R'_\eps(\vTheta_u)$ can be  bounded above by the second, fourth and sixth moments of the solution to SDE \eqref{eq...text...SME...SME...SDE...Abstract...appendix}. Moreover, we observe that 
%%%%%%%%%%%%%%%%%%%%%%%%%%%%%%%%%%%%%%%%%%%%%%%%%%%%%%%%%%%%
\begin{align*} 
&\Exp\left[(\vTheta_\eps-\vTheta_{0})\otimes(\vTheta_\eps-\vTheta_{0})  \otimes(\vTheta_\eps-\vTheta_{0})\right]\\
%%%%%%%%%%%%%%%%%%%%%%%%
=&\Exp\Bigg[\left(\int_0^\eps \vb (\vTheta_s)\D s+\int_0^\eps\vsigma(\vTheta_s)\D \mW_s\right)\otimes\left(\int_0^\eps \vb (\vTheta_s)\D s+\int_0^\eps\vsigma(\vTheta_s)\D \mW_s\right)\\
%%%%%%%%%%%%%%%%%%%%%%%%
&~\otimes\left(\int_0^\eps \vb (\vTheta_s)\D s+\int_0^\eps\vsigma(\vTheta_s)\D \mW_s\right)\Bigg],
%%%%%%%%%%%%%%%%%%%%%%%%
%%%%%%%%%%%%%%%%%%%%%%%%
%%%%%%%%%%%%%%%%%%%%%%%%
%%%%%%%%%%%%%%%%%%%%%%%%
%%%%%%%%%%%%%%%%%%%%%%%%
%%%%%%%%%%%%%%%%%%%%%%%%
%%%%%%%%%%%%%%%%%%%%%%%%
%%%%%%%%%%%%%%%%%%%%%%%%
%%%%%%%%%%%%%%%%%%%%%%%%
\end{align*}
%%%%%%%%%%%%%%%%%%%%%%%%%%%%%%%%%%%%%%%%%%%%%%%%%%%%%%%%%%%%
and its entry can be categorized into four types.  The first one is the pure drift part, i.e., 
%%%%%%%%%%%%%%%%%%%%%%%%%%%%%%%%%%%%%%%%%%%%%%%%%%%%%%%%%%%%
\[\int_0^\eps \vb (\vTheta_s)\D s\otimes\int_0^\eps \vb (\vTheta_s)\D s\otimes\int_0^\eps \vb (\vTheta_s)\D s,\]
%%%%%%%%%%%%%%%%%%%%%%%%%%%%%%%%%%%%%%%%%%%%%%%%%%%%%%%%%%%%
then by application of the mean value theorem and the fact that  $\nabla_{\vTheta}L_{\fS}(\vTheta)$,  $\nabla^2_{\vTheta}L_{\fS}(\vTheta)$, $\nabla^3_{\vTheta}L_{\fS}(\vTheta)$, and $\mSigma\left(\vTheta\right)$  can be
bounded above by the second, fourth and sixth moments of the solution   to SDE \eqref{eq...text...SME...SME...SDE...Abstract...appendix}, we obtain that 
%%%%%%%%%%%%%%%%%%%%%%%%%%%%%%%%%%%%%%%%%%%%%%%%%%%%%%%%%%%%
\begin{align*}
  &\Exp \int_0^\eps \vb (\vTheta_s)\D s\otimes\int_0^\eps \vb (\vTheta_s)\D s\otimes\int_0^\eps \vb (\vTheta_s)\D s \\
%%%%%%%%%%%%%%%%%%%%%%%%
=&\eps^3  \Exp\vb (\widetilde{\vTheta}_s)\otimes\vb (\widetilde{\vTheta}_s)\otimes\vb (\widetilde{\vTheta}_s)=\fO(\eps^3).
%%%%%%%%%%%%%%%%%%%%%%%%
%%%%%%%%%%%%%%%%%%%%%%%%
%%%%%%%%%%%%%%%%%%%%%%%%
%%%%%%%%%%%%%%%%%%%%%%%%
%%%%%%%%%%%%%%%%%%%%%%%%
%%%%%%%%%%%%%%%%%%%%%%%%
%%%%%%%%%%%%%%%%%%%%%%%%
%%%%%%%%%%%%%%%%%%%%%%%%  
\end{align*} 
%%%%%%%%%%%%%%%%%%%%%%%%%%%%%%%%%%%%%%%%%%%%%%%%%%%%%%%%%%%%
The second one is the pure noise part, i.e., 
%%%%%%%%%%%%%%%%%%%%%%%%%%%%%%%%%%%%%%%%%%%%%%%%%%%%%%%%%%%%
\[\left( \int_0^\eps\vsigma(\vTheta_s)\D \mW_s\right) \otimes\left(  \int_0^\eps\vsigma(\vTheta_s)\D \mW_s\right) \otimes\left(  \int_0^\eps\vsigma(\vTheta_s)\D \mW_s\right),\]
%%%%%%%%%%%%%%%%%%%%%%%%%%%%%%%%%%%%%%%%%%%%%%%%%%%%%%%%%%%%
and as the odd moments of  zero mean Gaussian variables are zero, hence we have 
%%%%%%%%%%%%%%%%%%%%%%%%%%%%%%%%%%%%%%%%%%%%%%%%%%%%%%%%%%%%
\[\Exp\left[\left( \int_0^\eps\vsigma(\vTheta_s)\D \mW_s\right) \otimes\left(  \int_0^\eps\vsigma(\vTheta_s)\D \mW_s\right) \otimes\left(  \int_0^\eps\vsigma(\vTheta_s)\D \mW_s\right)\right]=\mzero,\]
%%%%%%%%%%%%%%%%%%%%%%%%%%%%%%%%%%%%%%%%%%%%%%%%%%%%%%%%%%%%
the third and fourth one are both of  the mixed part, for the third one 
%%%%%%%%%%%%%%%%%%%%%%%%%%%%%%%%%%%%%%%%%%%%%%%%%%%%%%%%%%%%
\[\int_0^\eps \vb (\vTheta_s)\D s\otimes\int_0^\eps \vb (\vTheta_s)\D s\otimes\left( \int_0^\eps\vsigma(\vTheta_s)\D \mW_s\right) ,\]
%%%%%%%%%%%%%%%%%%%%%%%%%%%%%%%%%%%%%%%%%%%%%%%%%%%%%%%%%%%%
whose expectation is of course zero since the drift part and the noise part is independent,  and the fact the odd moments of  zero mean Gaussian variables are zero, 
and for the fourth one 
%%%%%%%%%%%%%%%%%%%%%%%%%%%%%%%%%%%%%%%%%%%%%%%%%%%%%%%%%%%%
\[\int_0^\eps \vb (\vTheta_s)\D s\otimes\left( \int_0^\eps\vsigma(\vTheta_s)\D \mW_s\right)\otimes\left( \int_0^\eps\vsigma(\vTheta_s)\D \mW_s\right) ,\]
%%%%%%%%%%%%%%%%%%%%%%%%%%%%%%%%%%%%%%%%%%%%%%%%%%%%%%%%%%%%
we obtain that 
%%%%%%%%%%%%%%%%%%%%%%%%%%%%%%%%%%%%%%%%%%%%%%%%%%%%%%%%%%%%
\begin{align*}
  &\Exp\left[\int_0^\eps \vb (\vTheta_s)\D s\otimes\left( \int_0^\eps\vsigma(\vTheta_s)\D \mW_s\right)\otimes\left( \int_0^\eps\vsigma(\vTheta_s)\D \mW_s\right) \right]\\
%%%%%%%%%%%%%%%%%%%%%%%%
=&\eps   \Exp\vb (\widetilde{\vTheta}_s)\otimes\Exp\left[\left( \int_0^\eps\vsigma(\vTheta_s)\D \mW_s\right)\otimes\left( \int_0^\eps\vsigma(\vTheta_s)\D \mW_s\right)\right]=\fO(\eps^3).
%%%%%%%%%%%%%%%%%%%%%%%%
%%%%%%%%%%%%%%%%%%%%%%%%
%%%%%%%%%%%%%%%%%%%%%%%%
%%%%%%%%%%%%%%%%%%%%%%%%
%%%%%%%%%%%%%%%%%%%%%%%%
%%%%%%%%%%%%%%%%%%%%%%%%
%%%%%%%%%%%%%%%%%%%%%%%%
%%%%%%%%%%%%%%%%%%%%%%%%  
\end{align*} 
%%%%%%%%%%%%%%%%%%%%%%%%%%%%%%%%%%%%%%%%%%%%%%%%%%%%%%%%%%%%
As we denote \[\Bar{R}^3({\vTheta}_0):=\Exp\left[(\vTheta_\eps-\vTheta_{0})\otimes(\vTheta_\eps-\vTheta_{0})  \otimes(\vTheta_\eps-\vTheta_{0})\right],\] then we obtain that  \[\Norm{\mathrm{vec}(\Bar{R}^3({\vTheta}_0))}_2\leq \eps^3C(T^{\ast},\vTheta_0). \]
Hence we may apply the mean value theorem to \eqref{eq...text...Semigroup...OrderTwo...BarR_2} and obtain that  
%%%%%%%%%%%%%%%%%%%%%%%%%%%%%%%%%%%%%%%%%%%%%%%%%%%%%%%%%%%%
\begin{align*}
\Abs{\Bar{E}_{\eps}^2({\vTheta}_0)}  &=\Big|\int_0^\eps\int_{0}^s v \fL\left(\fL \left<\nabla_{\vTheta} g(\vTheta_0), \vb  \right>\right)(\widetilde{\vTheta}_u)\D v \D s\\
%%%%%%%%%%%%%%%%%%%%%%%%
&~~+\frac{1}{6}\nabla_{\vTheta}^3 g( {\vTheta}_{0}):\Exp\left[(\vTheta_\eps-\vTheta_{0})\otimes(\vTheta_\eps-\vTheta_{0})  \otimes(\vTheta_\eps-\vTheta_{0})\right]\\
%%%%%%%%%%%%%%%%%%%%%%%%
&~~+\frac{1}{24}\Exp\left[\nabla_{\vTheta}^4 g( \widetilde{\vTheta}_{0}):(\vTheta_\eps-\vTheta_{0})\otimes(\vTheta_\eps-\vTheta_{0})  \otimes(\vTheta_\eps-\vTheta_{0})\otimes(\vTheta_\eps-\vTheta_{0})\right]\Big|\\
%%%%%%%%%%%%%%%%%%%%%%%%
&\leq \int_0^\eps\int_{0}^s v \Norm{g}_{C^6} C(T^{\ast},\vTheta_0)\D v \D s+\frac{1}{6}\Norm{g}_{C^6} \eps^3C(T^{\ast},\vTheta_0) \\
%%%%%%%%%%%%%%%%%%%%%%%%
&~~+\frac{1}{24}\Norm{g}_{C^6}\Norm{\vTheta_\eps-\vTheta_{0}}_2^4\\
%%%%%%%%%%%%%%%%%%%%%%%%
&=\frac{\eps^3}{6} \Norm{g}_{C^6} C(T^{\ast},\vTheta_0)+\frac{1}{6}\Norm{g}_{C^6} \eps^3C(T^{\ast},\vTheta_0)\\&~~+\frac{1}{24}\Norm{g}_{C^6} \Exp\Norm{\int_0^\eps \vb (\vTheta_s)\D s+\int_0^\eps\vsigma(\vTheta_s)\D \mW_s}_2^4\\
%%%%%%%%%%%%%%%%%%%%%%%%
&\leq \eps^3\Norm{g}_{C^6} C(T^{\ast},\vTheta_0) +\frac{1}{6}\Norm{g}_{C^6} \eps^3C(T^{\ast},\vTheta_0)\\&~~+\frac{4}{24} \Norm{g}_{C^6} \eps^3 \Exp\Norm{ \nabla_{\vTheta}L_{\fS} (\widetilde{\vTheta}_0) }_2^2+ \frac{4}{24} \Norm{g}_{C^6}    \Exp \Norm{\int_0^\eps\vsigma(\vTheta_s)\D \mW_s}_2^4\\
%%%%%%%%%%%%%%%%%%%%%%%%
&\leq \eps^3\Norm{g}_{C^6} C(T^{\ast},\vTheta_0) +\frac{1}{6}\Norm{g}_{C^6} \eps^3C(T^{\ast},\vTheta_0)\\&~~+\frac{4}{24} \Norm{g}_{C^6} \eps^3 \Exp\Norm{ \nabla_{\vTheta}L_{\fS} (\widetilde{\vTheta}_0) }_2^2+ \frac{C}{24} \Norm{g}_{C^6}    \Exp \int_0^\eps\Norm{\vsigma(\vTheta_s)}_{\mathrm{F}}^4\D  s\\
%%%%%%%%%%%%%%%%%%%%%%%%
%%%%%%%%%%%%%%%%%%%%%%%%
&\leq \eps^3  \Norm{g}_{C^6} C(T^{\ast},\vTheta_0)+\frac{1}{6}\Norm{g}_{C^6} \eps^3C(T^{\ast},\vTheta_0)\\&~~ +\eps^3\Norm{g}_{C^6} \Exp\Norm{ \nabla_{\vTheta}L_{\fS} (\widetilde{\vTheta}_0) }_2^2 +C\Norm{g}_{C^6}   \eps\Exp \left[\eps^2 \Norm{\vSigma(\widetilde{\vTheta}_0)}_{\mathrm{F}}^2 \right] \\&\leq \eps^3\Norm{g}_{C^6}C(T^{\ast},\vTheta_0).
%%%%%%%%%%%%%%%%%%%%%%%%
%%%%%%%%%%%%%%%%%%%%%%%%
%%%%%%%%%%%%%%%%%%%%%%%%
%%%%%%%%%%%%%%%%%%%%%%%%
%%%%%%%%%%%%%%%%%%%%%%%%
%%%%%%%%%%%%%%%%%%%%%%%%
\end{align*}
%%%%%%%%%%%%%%%%%%%%%%%%%%%%%%%%%%%%%%%%%%%%%%%%%%%%%%%%%%%%
We remark that for  the last but third line we apply the Burkholder-Davis-Gundy inequality. 

To sum up for now,
\begin{align*}
\Exp g(\vtheta_1)-\Exp g(\vtheta_0)& =-\eps\left<\nabla_{\vtheta} g(\vtheta_0),  \nabla_{\vtheta}L_{\fS}(\vtheta)\big|_{\vtheta=\vtheta_0}\right>\\
%%%%%%%%%%%%%%%%%%%%%%%%
&~~+ \frac{\eps^2}{2}\nabla_{\vtheta}^2 g( {\vtheta}_0):\left( \nabla_{\vtheta}L_{\fS}(\vtheta)\big|_{\vtheta=\vtheta_0} \otimes  \nabla_{\vtheta}L_{\fS}(\vtheta)\big|_{\vtheta=\vtheta_0}+\mSigma(\vtheta_{0})\right)+E_{\eps}^2({\vtheta}_0),
%%%%%%%%%%%%%%%%%%%%%%%%
%%%%%%%%%%%%%%%%%%%%%%%%
\end{align*}
%%%%%%%%%%%%%%%%%%%%%%%%%%%%%%%%%%%%%%%%%%%%%%%%%%%%%%%%%%%%
and 
%%%%%%%%%%%%%%%%%%%%%%%%%%%%%%%%%%%%%%%%%%%%%%%%%%%%%%%%%%%% 
\begin{align*}
\Exp g(\vTheta_{\eps})-\Exp g(\vTheta_0)&= \eps\left<\nabla_{\vTheta} g(\vTheta_0),  \Exp [\vb (\vTheta_0)] \right>+ \frac{\eps^2}{2}   \fL \left<\nabla_{\vTheta} g(\vTheta_0), \vb  \right>(\vTheta_0)  \\ 
%%%%%%%%%%%%%%%%%%%%%%%%
&~~+\frac{1}{2}\nabla_{\vTheta}^2 g({\vTheta}_{0}):\Exp\left[(\vTheta_\eps-\vTheta_{0})\otimes(\vTheta_\eps-\vTheta_{0})\right]+\Bar{E}_{\eps}^2(\vTheta_0)\\
%%%%%%%%%%%%%%%%%%%%%%%%
 &= \eps\left<\nabla_{\vTheta} g(\vTheta_0),  \Exp [\vb (\vTheta_0)] \right>+ \frac{\eps^2}{2}   \fL \left<\nabla_{\vTheta} g(\vTheta_0), \vb  \right>(\vTheta_0)  \\ 
%%%%%%%%%%%%%%%%%%%%%%%%
&~~+\frac{1}{2}\nabla_{\vTheta}^2 g({\vTheta}_{0}):\Exp\Bigg[\left(\int_0^\eps \vb (\vTheta_s)\D s+\int_0^\eps\vsigma(\vTheta_s)\D \mW_s\right)\\&~~~~~~~~~~~~~~~~~~~~~~~~~\otimes\left(\int_0^\eps \vb (\vTheta_s)\D s+\int_0^\eps\vsigma(\vTheta_s)\D \mW_s\right)\Bigg]+\Bar{E}_{\eps}^2(\vTheta_0)\\
%%%%%%%%%%%%%%%%%%%%%%%%
&=\eps\left<\nabla_{\vTheta} g(\vTheta_0),  \Exp [\vb (\vTheta_0)] \right>+ \frac{\eps^2}{2}   \fL \left<\nabla_{\vTheta} g(\vTheta_0), \vb  \right>(\vTheta_0)\\&~~+\frac{1}{2}\nabla_{\vTheta}^2 g({\vTheta}_{0}):\Exp\left[\int_0^\eps \vb (\vTheta_s)\D s\otimes\int_0^\eps \vb (\vTheta_s)\D s \right]\\
%%%%%%%%%%%%%%%%%%%%%%%%
&~~+\frac{1}{2}\nabla_{\vTheta}^2 g({\vTheta}_{0}):\Exp\left[\int_0^\eps \vsigma (\vTheta_s)\D \mW_s\otimes\int_0^\eps \vsigma (\vTheta_s)\D \mW_s\right]+\Bar{E}_{\eps}^2(\vTheta_0)\\
%%%%%%%%%%%%%%%%%%%%%%%%
&=\eps\left<\nabla_{\vTheta} g(\vTheta_0),  \Exp [\vb (\vTheta_0)] \right>+ \frac{\eps^2}{2}   \fL \left<\nabla_{\vTheta} g(\vTheta_0), \vb  \right>(\vTheta_0)\\&~~+\frac{1}{2}\nabla_{\vTheta}^2 g({\vTheta}_{0}):\Exp\left[\int_0^\eps\int_0^\eps \vb (\vTheta_s)\otimes  \vb (\vTheta_u)\D s\D u \right]\\
%%%%%%%%%%%%%%%%%%%%%%%%
&~~+\frac{1}{2}\nabla_{\vTheta}^2 g({\vTheta}_{0}):\Exp\left[\int_0^\eps \vsigma (\vTheta_s)\D \mW_s\otimes\int_0^\eps \vsigma (\vTheta_s)\D \mW_s\right]+\Bar{E}_{\eps}^2(\vTheta_0),
\end{align*}
%%%%%%%%%%%%%%%%%%%%%%%%%%%%%%%%%%%%%%%%%%%%%%%%%%%%%%%%%%%% 
we observe that 
\begin{align*}
&\frac{1}{2}\nabla_{\vTheta}^2 g({\vTheta}_{0}):\Exp\left[\int_0^\eps \vsigma (\vTheta_s)\D \mW_s\otimes\int_0^\eps \vsigma (\vTheta_s)\D \mW_s\right] \\
=&\Exp\left[\int_0^\eps    \frac{1}{2}\nabla_{\vTheta}^2 g({\vTheta}_{0}):\vsigma \vsigma ^\T (\vTheta_s)\D s\right]\\
=&\frac{\eps}{2}\Exp\left[\int_0^\eps    \nabla_{\vTheta}^2 g({\vTheta}_{0}):\mSigma(\vTheta_s)\D s\right],
\end{align*}
%%%%%%%%%%%%%%%%%%%%%%%%%%%%%%%%%%%%%%%%%%%%%%%%%%%%%%%%%%%%
thus 
%%%%%%%%%%%%%%%%%%%%%%%%%%%%%%%%%%%%%%%%%%%%%%%%%%%%%%%%%%%% 
\begin{align*}
\Exp g(\vTheta_{\eps})-\Exp g(\vTheta_0)& =\eps\left<\nabla_{\vTheta} g(\vTheta_0),  \Exp [\vb (\vTheta_0)] \right>+ \frac{\eps^2}{2}   \fL \left<\nabla_{\vTheta} g(\vTheta_0), \vb  \right>(\vTheta_0)\\&~~+\frac{1}{2}\nabla_{\vTheta}^2 g({\vTheta}_{0}):\Exp\left[\int_0^\eps\int_0^\eps \vb (\vTheta_s)\otimes  \vb (\vTheta_u)\D s\D u \right]\\
%%%%%%%%%%%%%%%%%%%%%%%%
&~~+\frac{\eps}{2}\Exp\left[\int_0^\eps    \nabla_{\vTheta}^2 g({\vTheta}_{0}):\mSigma(\vTheta_s)\D s\right]+\Bar{E}_{\eps}^2(\vTheta_0).
\end{align*}
%%%%%%%%%%%%%%%%%%%%%%%%%%%%%%%%%%%%%%%%%%%%%%%%%%%%%%%%%%%% 
Since 
%%%%%%%%%%%%%%%%%%%%%%%%%%%%%%%%%%%%%%%%%%%%%%%%%%%%%%%%%%%%
\begin{align*}
&\nabla_{\vTheta}^2 g({\vTheta}_{0}):\Exp\left[ \vb (\vTheta_s)\otimes \vb (\vTheta_u) \right] 
\\
=&\nabla^2_{\vTheta} g(\vTheta_0): \Exp [\vb (\vTheta_s)\otimes\vb (\vTheta_0)]   + \int_{0}^u \fL \left( \nabla^2_{\vTheta} g(\vTheta_0):\vb (\vTheta_s)\otimes\vb (\vTheta_v)\right) \D v \\
%%%%%%%%%%%%%%%%%%%%%%%%
=&\nabla^2_{\vTheta} g(\vTheta_0): \Exp [\vb (\vTheta_0)\otimes\vb (\vTheta_0)] + \int_{0}^s \fL \left( \nabla^2_{\vTheta} g(\vTheta_0):\vb (\vTheta_w)\otimes\vb (\vTheta_0)\right) \D w \\
%%%%%%%%%%%%%%%%%%%%%%%%
&~+\int_{0}^u \fL \left( \nabla^2_{\vTheta} g(\vTheta_0):\vb (\vTheta_s)\otimes\vb (\vTheta_v)\right) \D v, 
%%%%%%%%%%%%%%%%%%%%%%%%
%%%%%%%%%%%%%%%%%%%%%%%%
%%%%%%%%%%%%%%%%%%%%%%%%    
\end{align*}
%%%%%%%%%%%%%%%%%%%%%%%%%%%%%%%%%%%%%%%%%%%%%%%%%%%%%%%%%%%%
and  since 
%%%%%%%%%%%%%%%%%%%%%%%%%%%%%%%%%%%%%%%%%%%%%%%%%%%%%%%%%%%%
\begin{align*}
&\nabla_{\vTheta}^2 g({\vTheta}_{0}):\Exp\left[   \mSigma(\vTheta_s) \right] 
\\
=&\nabla^2_{\vTheta} g(\vTheta_0): \Exp\left[   \mSigma(\vTheta_0) \right]  + \int_{0}^s \fL \left( \nabla^2_{\vTheta} g(\vTheta_0):\mSigma(\vTheta_s) \right) \D v, 
%%%%%%%%%%%%%%%%%%%%%%%%
%%%%%%%%%%%%%%%%%%%%%%%%
%%%%%%%%%%%%%%%%%%%%%%%%    
\end{align*}
we are one step away to finish our proof,  
%%%%%%%%%%%%%%%%%%%%%%%%%%%%%%%%%%%%%%%%%%%%%%%%%%%%%%%%%%%% 
\begin{align*}
\Exp g(\vTheta_{\eps})-\Exp g(\vTheta_0)& =\eps\left<\nabla_{\vTheta} g(\vTheta_0),  \Exp [\vb (\vTheta_0)] \right>+ \frac{\eps^2}{2}   \fL \left<\nabla_{\vTheta} g(\vTheta_0), \vb  \right>(\vTheta_0)\\&~~+\frac{1}{2}\nabla_{\vTheta}^2 g({\vTheta}_{0}):\Exp\left[\int_0^\eps\int_0^\eps \vb (\vTheta_0)\otimes  \vb (\vTheta_0)\D s\D u \right]\\
%%%%%%%%%%%%%%%%%%%%%%%%
&~~+\frac{\eps}{2}\Exp\left[\int_0^\eps    \nabla_{\vTheta}^2 g({\vTheta}_{0}):\mSigma(\vTheta_0)\D s\right]+\Bar{E}_{\eps}^2(\vTheta_0),
\end{align*}
%%%%%%%%%%%%%%%%%%%%%%%%%%%%%%%%%%%%%%%%%%%%%%%%%%%%%%%%%%%% 
where we misuse our notations for $\Bar{E}_{\eps}^2(\vTheta_0)$, and the term  \begin{align*}
&\int_0^\eps\int_0^\eps\int_{0}^s \fL \left( \nabla^2_{\vTheta} g(\vTheta_0):\vb (\vTheta_w)\otimes\vb (\vTheta_0)\right) \D w \D s\D u\\
%%%%%%%%%%%%%%%%%%%%%%%%
 &~~+\int_0^\eps\int_0^\eps\int_{0}^u \fL \left( \nabla^2_{\vTheta} g(\vTheta_0):\vb (\vTheta_s)\otimes\vb (\vTheta_v)\right) \D v  \D s\D u   \\
 &~~+\int_0^\eps \int_{0}^s \fL \left( \nabla^2_{\vTheta} g(\vTheta_0):\mSigma(\vTheta_s) \right) \D v \D s,
\end{align*} 
is included,  and $\Bar{E}_{\eps}^2(\vTheta_0)$ is still of order $\fO(\eps^3)$ by similar reasoning and we omit its demonstration. Thus
%%%%%%%%%%%%%%%%%%%%%%%%%%%%%%%%%%%%%%%%%%%%%%%%%%%%%%%%%%%% 
\begin{align*}
\Exp g(\vTheta_{\eps})-\Exp g(\vTheta_0)& =\eps\left<\nabla_{\vTheta} g(\vTheta_0),  \Exp [\vb (\vTheta_0)] \right>+ \frac{\eps^2}{2}   \left<\vb (\vTheta_0), \nabla_{\vTheta}\left<\nabla_{\vTheta} g(\vTheta_0), \vb  \right>(\vTheta_0)\right>\\
%%%%%%%%%%%%%%%%%%%%%%%%
&~~+\frac{\eps^3}{2}\mSigma(\vTheta_0):  \nabla_{\vTheta}^2\left<\nabla_{\vTheta} g(\vTheta_0), \vb  \right>(\vTheta_0)\\
%%%%%%%%%%%%%%%%%%%%%%%%
&~~+\frac{\eps^2}{2}\nabla_{\vTheta}^2 g({\vTheta}_{0}):\Exp\left[  \vb (\vTheta_0)\otimes  \vb (\vTheta_0) \right]\\
%%%%%%%%%%%%%%%%%%%%%%%%
&~~+\frac{\eps^2}{2}\Exp\left[    \nabla_{\vTheta}^2 g({\vTheta}_{0}):\mSigma(\vTheta_0)\right]+\Bar{E}_{\eps}^2(\vTheta_0),
%%%%%%%%%%%%%%%%%%%%%%%%
%%%%%%%%%%%%%%%%%%%%%%%%  
%%%%%%%%%%%%%%%%%%%%%%%%
%%%%%%%%%%%%%%%%%%%%%%%%
%%%%%%%%%%%%%%%%%%%%%%%%  
\end{align*}
%%%%%%%%%%%%%%%%%%%%%%%%%%%%%%%%%%%%%%%%%%%%%%%%%%%%%%%%%%%% 
and recall that since we choose 
%%%%%%%%%%%%%%%%%%%%%%%%%%%%%%%%%%%%%%%%%%%%%%%%%%%%%%%%%%%%
\begin{align*}
\vb\left(\vTheta\right)& =   - \nabla_{\vTheta}\left(L_{\fS}(\vTheta)+\frac{\eps}{4}\Norm{\nabla_{\vTheta}L_S(\vTheta)}_2^2\right),\\     
%%%%%%%%%%%%%%%%%%%%%%%%%%%%%%%
\vsigma\left(\vTheta\right)& =\sqrt{\eps}\left(\mSigma\left(\vTheta\right)\right)^{\frac{1}{2}},
\end{align*}
%%%%%%%%%%%%%%%%%%%%%%%%%%%%%%%%%%%%%%%%%%%%%%%%%%%%%%%%%%%%
then
%%%%%%%%%%%%%%%%%%%%%%%%%%%%%%%%%%%%%%%%%%%%%%%%%%%%%%%%%%%% 
\begin{align*}
\Exp g(\vTheta_{\eps})-\Exp g(\vTheta_0)& =-\eps\left<\nabla_{\vTheta} g(\vTheta_0),  \nabla_{\vTheta}\left(L_{\fS}(\vTheta)\right)\mid_{\vTheta=\vTheta_0}\right>\\
%%%%%%%%%%%%%%%%%%%%%%%%
&~~-\frac{\eps^2}{4}\left<\nabla_{\vTheta} g(\vTheta_0),  \nabla_{\vTheta}\left(\Norm{\nabla_{\vTheta}L_S(\vTheta)}_2^2\right)\mid_{\vTheta=\vTheta_0}\right>\\
%%%%%%%%%%%%%%%%%%%%%%%%
&~~+ \frac{\eps^2}{2}   \left< \nabla_{\vTheta}\left(L_{\fS}(\vTheta)\right)\mid_{\vTheta=\vTheta_0}, \nabla_{\vTheta}\left<\nabla_{\vTheta} g(\vTheta_0),  \nabla_{\vTheta}\left(L_{\fS}(\vTheta)\right) \right>\mid_{\vTheta=\vTheta_0}\right>\\
%%%%%%%%%%%%%%%%%%%%%%%%
%%%%%%%%%%%%%%%%%%%%%%%%
&~~+\frac{\eps^2}{2}\nabla_{\vTheta}^2 g({\vTheta}_{0}):\left(\nabla_{\vTheta}\left(L_{\fS}(\vTheta)\right)\mid_{\vTheta=\vTheta_0})\otimes \nabla_{\vTheta}\left(L_{\fS}(\vTheta)\right)\mid_{\vTheta=\vTheta_0}\right)\\
%%%%%%%%%%%%%%%%%%%%%%%%
&~~+\frac{\eps^2}{2}      \nabla_{\vTheta}^2 g({\vTheta}_{0}):\mSigma(\vTheta_0) +\Bar{E}_{\eps}^2(\vTheta_0)\\
%%%%%%%%%%%%%%%%%%%%%%%%
&=-\eps\left<\nabla_{\vTheta} g(\vTheta_0),  \nabla_{\vTheta}\left(L_{\fS}(\vTheta)\right)\mid_{\vTheta=\vTheta_0}\right>\\
%%%%%%%%%%%%%%%%%%%%%%%%
&~~+ \frac{\eps^2}{2}\nabla_{\vTheta}^2 g({\vTheta}_{0}):\left(\nabla_{\vTheta}\left(L_{\fS}(\vTheta)\right)\mid_{\vTheta=\vTheta_0})\otimes \nabla_{\vTheta}\left(L_{\fS}(\vTheta)\right)\mid_{\vTheta=\vTheta_0}\right)\\
%%%%%%%%%%%%%%%%%%%%%%%%
&~~+\frac{\eps^2}{2}      \nabla_{\vTheta}^2 g({\vTheta}_{0}):\mSigma(\vTheta_0) +\Bar{E}_{\eps}^2(\vTheta_0),
%%%%%%%%%%%%%%%%%%%%%%%%  
% &=  -\eps\left<\nabla_{\vTheta} g(\vTheta_0),  \nabla_{\vTheta}\left(L_{\fS}(\vTheta)\right)\mid_{\vTheta=\vTheta_0}\right>\\
% %%%%%%%%%%%%%%%%%%%%%%%%
% &~~+ \frac{\eps^2}{2}\nabla_{\vTheta}^2 g({\vTheta}_{0}):\left(\nabla_{\vTheta}\left(L_{\fS}(\vTheta)\right)\mid_{\vTheta=\vTheta_0})\otimes \nabla_{\vTheta}\left(L_{\fS}(\vTheta)\right)\mid_{\vTheta=\vTheta_0}\right)\\
% %%%%%%%%%%%%%%%%%%%%%%%%
% &~~+\frac{\eps^2}{2}      \nabla_{\vTheta}^2 g({\vTheta}_{0}):\mSigma(\vTheta_0) +\Bar{E}_{\eps}^2(\vTheta_0)
%%%%%%%%%%%%%%%%%%%%%%%%
%%%%%%%%%%%%%%%%%%%%%%%%
%%%%%%%%%%%%%%%%%%%%%%%%  
\end{align*}
thus, we have 
%%%%%%%%%%%%%%%%%%%%%%%%%%%%%%%%%%%%%%%%%%%%%%%%%%%%%%%%%%%%
\begin{align*}
\Abs{\Exp g(\vtheta_1)-\Exp g(\vTheta_\eps)}&=\Big|\Exp g(\vtheta_0)-\eps\left<\nabla_{\vtheta} g(\vtheta_0),  \nabla_{\vtheta}L_{\fS}(\vtheta)\big|_{\vtheta=\vtheta_0}\right>\\
%%%%%%%%%%%%%%%%%%%%%%%%
&~~+ \frac{\eps^2}{2}\nabla_{\vtheta}^2 g( {\vtheta}_0):\left( \nabla_{\vtheta}L_{\fS}(\vtheta)\big|_{\vtheta=\vtheta_0} \otimes  \nabla_{\vtheta}L_{\fS}(\vtheta)\big|_{\vtheta=\vtheta_0}+\mSigma(\vtheta_{0})\right)\\
&~~+E_{\eps}^2({\vtheta}_0) \\
%%%%%%%%%%%%%%%%%%%%%%%%
&~~-\Exp g(\vTheta_0)+\eps\left<\nabla_{\vTheta} g(\vTheta_0),  \nabla_{\vTheta}\left(L_{\fS}(\vTheta)\right)\mid_{\vTheta=\vTheta_0}\right>\\
%%%%%%%%%%%%%%%%%%%%%%%%
&~~- \frac{\eps^2}{2}\nabla_{\vTheta}^2 g({\vTheta}_{0}):\left(\nabla_{\vTheta}\left(L_{\fS}(\vTheta)\right)\mid_{\vTheta=\vTheta_0})\otimes \nabla_{\vTheta}\left(L_{\fS}(\vTheta)\right)\mid_{\vTheta=\vTheta_0}\right)\\
%%%%%%%%%%%%%%%%%%%%%%%%
&~~-\frac{\eps^2}{2}      \nabla_{\vTheta}^2 g({\vTheta}_{0}):\mSigma(\vTheta_0) +\Bar{E}_{\eps}^2(\vTheta_0) \Big|\\
%%%%%%%%%%%%%%%%%%%%%%%%
&\leq \Abs{E_{\eps}^2({\vtheta}_0)}+\Abs{\Bar{E}_{\eps}^2(\vTheta_0) }\\
%%%%%%%%%%%%%%%%%%%%%%%%
&\leq  \eps^3\Norm{g}_{C^6} C(T^{\ast},\vtheta_0,\eps_0)+\eps^3\Norm{g}_{C^6}C(T^{\ast},\vTheta_0)\\
%%%%%%%%%%%%%%%%%%%%%%%%
&=\fO(\eps^3).
%%%%%%%%%%%%%%%%%%%%%%%%
\end{align*}
%%%%%%%%%%%%%%%%%%%%%%%%%%%%%%%%%%%%%%%%%%%%%%%%%%%%%%%%%%%% 

For the $N$-th step iteration,  since
%%%%%%%%%%%%%%%%%%%%%%%%%%%%%%%%%%%%%%%%%%%%%%%%%%%%%%%%%%%%
\begin{align*}
\Abs{\Exp g(\vtheta_{N})-\Exp g(\vTheta_{\eps N})} &=\Abs{\fP_\eps^N g(\vtheta_0)-\fP_{ \eps N } g(\vTheta_0)},  %&\leq  \sum_{l=1}^{N}\Abs{\fP_\eps^{N-l+1}\circ\fP_{(l-1)\eps}  g(\vtheta_0)-\fP_\eps^{N-l}\circ\fP_{l\eps} g(\vTheta_0)  }\\
%%%%%%%%%%%%%%%%%%%%%%%%
 % & \leq \sum_{l=1}^{N}\Abs{\fP_\eps^{1}   g(\vTheta_{(l-1)\eps} )-\fP_\eps g(\vTheta_{(l-1)\eps} )}=N\fO(\eps^2),
%%%%%%%%%%%%%%%%%%%%%%%%
\end{align*}
%%%%%%%%%%%%%%%%%%%%%%%%%%%%%%%%%%%%%%%%%%%%%%%%%%%%%%%%%%%
and the RHS of the above equation can be written into a telescoping sum as 
%%%%%%%%%%%%%%%%%%%%%%%%%%%%%%%%%%%%%%%%%%%%%%%%%%%%%%%%%%%%
\begin{align*}
\fP_\eps^N g(\vtheta_0)-\fP_{ \eps N } g(\vTheta_0)&=\sum_{l=1}^{N}\left(\fP_\eps^{N-l+1}\circ\fP_{(l-1)\eps}  g(\vtheta_0)-\fP_\eps^{N-l}\circ\fP_{l\eps} g(\vTheta_0)  \right),
%%%%%%%%%%%%%%%%%%%%%%%%
%%%%%%%%%%%%%%%%%%%%%%%%
%%%%%%%%%%%%%%%%%%%%%%%%
%%%%%%%%%%%%%%%%%%%%%%%%
%%%%%%%%%%%%%%%%%%%%%%%%
%%%%%%%%%%%%%%%%%%%%%%%%
%%%%%%%%%%%%%%%%%%%%%%%%
\end{align*}
%%%%%%%%%%%%%%%%%%%%%%%%%%%%%%%%%%%%%%%%%%%%%%%%%%%%%%%%%%%%
hence by application of Proposition \ref{prop...Contractive}, we obtain that 
%%%%%%%%%%%%%%%%%%%%%%%%%%%%%%%%%%%%%%%%%%%%%%%%%%%%%%%%%%%%
\begin{align*}
\Abs{\Exp g(\vtheta_{N})-\Exp g(\vTheta_{\eps N})}  &\leq  \sum_{l=1}^{N}\Abs{\fP_\eps^{N-l+1}\circ\fP_{(l-1)\eps}  g(\vtheta_0)-\fP_\eps^{N-l}\circ\fP_{l\eps} g(\vTheta_0)  }\\
%%%%%%%%%%%%%%%%%%%%%%%%
&\leq \sum_{l=1}^{N}\Abs{\fP_\eps^{N-l}\circ\left(\fP_\eps^{1}\circ\fP_{(l-1)\eps}  - \fP_\eps\circ\fP_{(l-1)\eps}   \right) g(\vTheta_0)},
%%%%%%%%%%%%%%%%%%%%%%%%
%&=\sum_{l=1}^{N}\Abs{\fP_\eps^{N-l }\circ\fP_\eps^{1}  g(\vTheta_{(l-1)\eps} )-\fP_\eps^{N-l}\circ\fP_{ \eps} g(\vTheta_{(l-1)\eps} ) }
%%%%%%%%%%%%%%%%%%%%%%%%
%%%%%%%%%%%%%%%%%%%%%%%%
%%%%%%%%%%%%%%%%%%%%%%%%
%%%%%%%%%%%%%%%%%%%%%%%%
%%%%%%%%%%%%%%%%%%%%%%%%
\end{align*}
%%%%%%%%%%%%%%%%%%%%%%%%%%%%%%%%%%%%%%%%%%%%%%%%%%%%%%%%%%%
since $\left(\fP_\eps^{1}\circ\fP_{(l-1)\eps}  - \fP_\eps\circ\fP_{(l-1)\eps}   \right) g(\vTheta_0)$ can be regarded as $\fL^1(\sR^D)$ if we choose measure $\mu$  to be the delta measure concentrated on $\vTheta_0$. i.e., \[\mu:=\delta_{\vTheta_0},\] hence by the conctration property of Markov operators, we obtain further that 
%%%%%%%%%%%%%%%%%%%%%%%%%%%%%%%%%%%%%%%%%%%%%%%%%%%%%%%%%%%%
\begin{align*}
\Abs{\Exp g(\vtheta_{N})-\Exp g(\vTheta_{\eps N})} 
%%%%%%%%%%%%%%%%%%%%%%%%
&\leq\sum_{l=1}^{N}\Abs{ \left(\fP_\eps^{1}\circ\fP_{(l-1)\eps}  - \fP_\eps\circ\fP_{(l-1)\eps}   \right) g(\vTheta_0)}\\
%%%%%%%%%%%%%%%%%%%%%%%%
&\leq   \sum_{l=1}^{N}\Abs{\fP_\eps^{1}   g(\vTheta_{(l-1)\eps} )-\fP_\eps g(\vTheta_{(l-1)\eps} )}.
%%%%%%%%%%%%%%%%%%%%%%%%
%%%%%%%%%%%%%%%%%%%%%%%%
%%%%%%%%%%%%%%%%%%%%%%%%
%%%%%%%%%%%%%%%%%%%%%%%%
\end{align*}
%%%%%%%%%%%%%%%%%%%%%%%%%%%%%%%%%%%%%%%%%%%%%%%%%%%%%%%%%%%
By taking   expectation conditioned on $\vTheta_{(l-1)\eps}$, then similar to the relation \eqref{eq...proof...OrderOne...Crutial}, the following holds 
%%%%%%%%%%%%%%%%%%%%%%%%%%%%%%%%%%%%%%%%%%%%%%%%%%%%%%%%%%%%
\begin{equation*}%\label{eq...proof...OrderOne...Crutial}
\begin{aligned}
\Abs{ \fP_\eps^1 g(\vTheta_{(l-1)\eps})-  \fP_\eps g(\vTheta_{(l-1)\eps}) }&=\Exp\left[\left[\Abs{\Exp g(\vtheta_l)-\Exp g(\vTheta_\eps l)}\Big| \vTheta_{(l-1)\eps}\right]\right]\\
%%%%%%%%%%%%%%%%%%%%%%%%
&\leq \Exp\Abs{E_{\eps}^2(\vTheta_{(l-1)\eps})}+\Exp\Abs{\Bar{E}_{\eps}^2(\vTheta_{(l-1)\eps})}\\
%%%%%%%%%%%%%%%%%%%%%%%%
&\leq  \eps^3\Norm{g}_{C^6} C(T^{\ast},\vtheta_0,\eps_0)+\eps^3\Norm{g}_{C^6}C(T^{\ast},\vTheta_0)\\
%%%%%%%%%%%%%%%%%%%%%%%%
&=\fO(\eps^3).
%%%%%%%%%%%%%%%%%%%%%%%%
\end{aligned}
\end{equation*}
%%%%%%%%%%%%%%%%%%%%%%%%%%%%%%%%%%%%%%%%%%%%%%%%%%%%%%%%%%%% 
We remark that the last line of the above relation is essentially based on Assumption \ref{Append...theOnlyAssumption}, since  $ \Exp\Abs{E_{\eps}^2(\vTheta_{(l-1)\eps})}$ and $\Exp\Abs{\Bar{E}_{\eps}^2(\vTheta_{(l-1)\eps})}$ can be  bounded above by the second, fourth and sixth moments of the solution to SDE \eqref{eq...text...SME...SME...SDE...Abstract...appendix}, hence we may apply dominated convergence theorem to obtain the  last line of the above relation.

To sum up, as
%%%%%%%%%%%%%%%%%%%%%%%%%%%%%%%%%%%%%%%%%%%%%%%%%%%%%%%%%%%%
\begin{align*}
 \Abs{\fP_\eps^N g(\vtheta_0)-\fP_{ \eps N } g(\vTheta_0)}  &\leq  \sum_{l=1}^{N}\Abs{\fP_\eps^{N-l+1}\circ\fP_{(l-1)\eps}  g(\vtheta_0)-\fP_\eps^{N-l}\circ\fP_{l\eps} g(\vTheta_0)  } =N\fO(\eps^3),
%%%%%%%%%%%%%%%%%%%%%%%%
\end{align*}
%%%%%%%%%%%%%%%%%%%%%%%%%%%%%%%%%%%%%%%%%%%%%%%%%%%%%%%%%%%%
hence for  $N=N_{T,\eps}$, 
%%%%%%%%%%%%%%%%%%%%%%%%%%%%%%%%%%%%%%%%%%%%%%%%%%%%%%%%%%%%
\[
 \Abs{\fP_\eps^N g(\vtheta_0)-\fP_{ \eps N } g(\vTheta_0)}   =N\fO(\eps^3)=N\eps \fO(\eps)\leq T\fO(\eps^2)=\fO(\eps^2).
\]
\end{proof}
%%%%%%%%%%%%%%%%%%%%%%%%%%%%%%%%%%%%%%%%%%%%%%%%%%%%%%%%%%%%
%%%%%%%%%%%%%%%%%%%%%%%%%%%%%%%%%%%%%%%%%%%%%%%%%%%%%%%%%%%%
%%%%%%%%%%%%%%%%%%%%%%%%%%%%%%%%%%%%%%%%%%%%%%%%%%%%%%%%%%%%
%%%%%%%%%%%%%%%%%%%%%%%%%%%%%%%%%%%%%%%%%%%%%%%%%%%%%%%%%%%%
%%%%%%%%%%%%%%%%%%%%%%%%%%%%%%%%%%%%%%%%%%%%%%%%%%%%%%%%%%%%

\newpage

\section{Validation for Assumption 1}\label{section...ValidationforAssump}
%%%%%%%%%%%%%%%%%%%%%%%%%%%%%%%%%%%%%%%%%%%%%%%%%%%%%%%%%%%%
%%%%%%%%%%%%%%%%%%%%%%%%%%%%%%%%%%%%%%%%%%%%%%%%%%%%%%%%%%%%
In this section, we endeavor to demonstrate the validity of Assumption 1. We begin this section by making some estimates on the modified loss $L_{\fS}$ and covariance $\mSigma$.
%%%%%%%%%%%%%%%%%%%%%%%%%%%%%%%%%%%%%%%%%%%%%%%%%%%%%%%%%%%%
%%%%%%%%%%%%%%%%%%%%%%%%%%%%%%%%%%%%%%%%%%%%%%%%%%%%%%%%%%%%
\subsection{Estimates on Modified Loss and Covariance}\label{subsection...EstimatesonModified+Covariance}
%%%%%%%%%%%%%%%%%%%%%%%%%%%%%%%%%%%%%%%%%%%%%%%%%%%%%%%%%%%%
%%%%%%%%%%%%%%%%%%%%%%%%%%%%%%%%%%%%%%%%%%%%%%%%%%%%%%%%%%%%
For the modified loss, recall that  $\vtheta=\mathrm{vec}(\{\vq_r\}_{r=1}^m)=\mathrm{vec}\left(\{(a_r, \vw_r)\}_{r=1}^m\right)$,   as we have
%%%%%%%%%%%%%%%%%%%%%%%%%%%%%%%%%%%%%%%%%%%%%%%%%%%%%%%%%%%%
\begin{align*}
 \nabla_{\vq_k}L_S(\vTheta)&= \frac{1}{n}\sum_{i=1}^ne_i\nabla_{\vq_k}\left(a_k\sigma(\vw_k^{\T}\vx_i)\right)+ \frac{1-p}{np}\sum_{i=1}^na_{k}\sigma(\vw_{k}^{\T}\vx_i) \nabla_{\vq_k}\left(a_k\sigma(\vw_k^{\T}\vx_i)\right),
%%%%%%%%%%%%%%%%%%%%%%%%
%%%%%%%%%%%%%%%%%%%%%%%%
%%%%%%%%%%%%%%%%%%%%%%%%
%%%%%%%%%%%%%%%%%%%%%%%%
%%%%%%%%%%%%%%%%%%%%%%%%
%%%%%%%%%%%%%%%%%%%%%%%%
%%%%%%%%%%%%%%%%%%%%%%%%
%%%%%%%%%%%%%%%%%%%%%%%%
\end{align*}
%%%%%%%%%%%%%%%%%%%%%%%%%%%%%%%%%%%%%%%%%%%%%%%%%%%%%%%%%%%%
and under the usual  convention that for all $i\in[n]$, 
%%%%%%%%%%%%%%%%%%%%%%%%%%%%%%%%%%%%%%%%%%%%%%%%%%%%%%%%%%%%
\[  \frac{1}{c}\leq \Norm{\vx_{i}}_2, \quad\Abs{y_{i}}\leq c,\] 
%%%%%%%%%%%%%%%%%%%%%%%%%%%%%%%%%%%%%%%%%%%%%%%%%%%%%%%%%%%%
where $c$ is some universal constant, and that $\sigma(0)=0$,  we obtain that
%%%%%%%%%%%%%%%%%%%%%%%%%%%%%%%%%%%%%%%%%%%%%%%%%%%%%%%%%%%%
\begin{align*}
\Abs{e_i}&=\Abs{\sum_{r=1}^m a_{r}\sigma(\vw_{r}^{\T}\vx_i)-y_i}\\
%%%%%%%%%%%%%%%%%%%%%%%%
&\leq 1+\sum_{r=1}^m \Abs{a_{r}}\Norm{\vw_{r}}_2\\
%%%%%%%%%%%%%%%%%%%%%%%%
&\leq 1+\frac{1}{2}\sum_{r=1}^m \left(\Abs{a_{r}}^2+\Norm{\vw_{r}}_2^2\right)\\
%%%%%%%%%%%%%%%%%%%%%%%%
&\leq 1+\Norm{\vTheta}_2^2,
%%%%%%%%%%%%%%%%%%%%%%%%
%%%%%%%%%%%%%%%%%%%%%%%%
%%%%%%%%%%%%%%%%%%%%%%%%
%%%%%%%%%%%%%%%%%%%%%%%%
%%%%%%%%%%%%%%%%%%%%%%%%   
\end{align*}
%%%%%%%%%%%%%%%%%%%%%%%%%%%%%%%%%%%%%%%%%%%%%%%%%%%%%%%%%%%%
hence 
%%%%%%%%%%%%%%%%%%%%%%%%%%%%%%%%%%%%%%%%%%%%%%%%%%%%%%%%%%%%
\begin{align*}
 \Norm{\nabla_{\vq_k}L_S(\vTheta)}_2&\leq  \left( 1+\Norm{\vTheta}_2^2\right) \Norm{\vq_k}_2 +\frac{1-p}{p} \Norm{\vq_k}_2^3,
%%%%%%%%%%%%%%%%%%%%%%%%
%%%%%%%%%%%%%%%%%%%%%%%%
%%%%%%%%%%%%%%%%%%%%%%%%
%%%%%%%%%%%%%%%%%%%%%%%%
%%%%%%%%%%%%%%%%%%%%%%%%
%%%%%%%%%%%%%%%%%%%%%%%% 
%%%%%%%%%%%%%%%%%%%%%%%%
%%%%%%%%%%%%%%%%%%%%%%%%
%%%%%%%%%%%%%%%%%%%%%%%%
%%%%%%%%%%%%%%%%%%%%%%%%  
\end{align*}
%%%%%%%%%%%%%%%%%%%%%%%%%%%%%%%%%%%%%%%%%%%%%%%%%%%%%%%%%%%%
thus we have  
%%%%%%%%%%%%%%%%%%%%%%%%%%%%%%%%%%%%%%%%%%%%%%%%%%%%%%%%%%%%
\begin{align*}
 \Norm{\nabla_{\vTheta}L_S(\vTheta)}_2&\leq  \left( 1+\Norm{\vTheta}_2^2\right) \Norm{\vTheta}_2 +\frac{1-p}{p} \Norm{\vTheta}_2^3\\
%%%%%%%%%%%%%%%%%%%%%%%%
&\leq C_p(1+\Norm{\vTheta}_2^3).
%%%%%%%%%%%%%%%%%%%%%%%%
%%%%%%%%%%%%%%%%%%%%%%%%
%%%%%%%%%%%%%%%%%%%%%%%%
%%%%%%%%%%%%%%%%%%%%%%%%
%%%%%%%%%%%%%%%%%%%%%%%% 
%%%%%%%%%%%%%%%%%%%%%%%%
%%%%%%%%%%%%%%%%%%%%%%%%
%%%%%%%%%%%%%%%%%%%%%%%%
%%%%%%%%%%%%%%%%%%%%%%%%  
\end{align*}
%%%%%%%%%%%%%%%%%%%%%%%%%%%%%%%%%%%%%%%%%%%%%%%%%%%%%%%%%%%%
Moreover,   
since
%%%%%%%%%%%%%%%%%%%%%%%%%%%%%%%%%%%%%%%%%%%%%%%%%%%%%%%%%%%%
\begin{align*}
\nabla_{\vTheta}^2L_S(\vTheta) &=\frac{1}{n}\sum_{i=1}^n \left(\nabla_{\vTheta}e_i\otimes \nabla_{\vTheta}e_i +
e_i\nabla_{\vTheta}^2e_i\right)\\
%%%%%%%%%%%%%%%%%%%%%%%%
&~~+\frac{1-p}{np}\sum_{i=1}^n \mathrm{diag}\left\{\nabla_{\vq_k}^2\left(a_k^2\sigma(\vw_k^{\T}\vx_i)^2\right)\right\},
%%%%%%%%%%%%%%%%%%%%%%%%
%%%%%%%%%%%%%%%%%%%%%%%%
%%%%%%%%%%%%%%%%%%%%%%%%
%%%%%%%%%%%%%%%%%%%%%%%%
%%%%%%%%%%%%%%%%%%%%%%%% 
%%%%%%%%%%%%%%%%%%%%%%%%
%%%%%%%%%%%%%%%%%%%%%%%%
%%%%%%%%%%%%%%%%%%%%%%%%
%%%%%%%%%%%%%%%%%%%%%%%%     
\end{align*}
%%%%%%%%%%%%%%%%%%%%%%%%%%%%%%%%%%%%%%%%%%%%%%%%%%%%%%%%%%%%
as  we denote only for now $\times$ as matrix multiplication, 
%%%%%%%%%%%%%%%%%%%%%%%%%%%%%%%%%%%%%%%%%%%%%%%%%%%%%%%%%%%%
\begin{align*} 
& \nabla_{\vTheta}^2L_S(\vTheta)\nabla_{\vTheta}L_S(\vTheta) \\
%%%%%%%%%%%%%%%%%%%%%%%%
=&\left(\frac{1}{n}\sum_{i=1}^n \left(\nabla_{\vTheta}e_i\otimes \nabla_{\vTheta}e_i +
e_i\nabla_{\vTheta}^2e_i\right) +\frac{1-p}{np}\sum_{i=1}^n \mathrm{diag}\left\{\nabla_{\vq_k}^2\left(a_k^2\sigma(\vw_k^{\T}\vx_i)^2\right)\right\}\right)\\
%%%%%%%%%%%%%%%%%%%%%%%%
&~~\times\left(\frac{1}{n}\sum_{i=1}^ne_i\nabla_{\vTheta}e_i+ \frac{1-p}{np}\sum_{i=1}^n  \nabla_{\vTheta}\left(a_k^2\sigma(\vw_k^{\T}\vx_i)^2\right)\right),
%%%%%%%%%%%%%%%%%%%%%%%%
%%%%%%%%%%%%%%%%%%%%%%%%
%%%%%%%%%%%%%%%%%%%%%%%% 
%%%%%%%%%%%%%%%%%%%%%%%%
%%%%%%%%%%%%%%%%%%%%%%%%
%%%%%%%%%%%%%%%%%%%%%%%%
%%%%%%%%%%%%%%%%%%%%%%%%     
\end{align*}
%%%%%%%%%%%%%%%%%%%%%%%%%%%%%%%%%%%%%%%%%%%%%%%%%%%%%%%%%%%%
then the components in $ \nabla_{\vTheta}^2L_S(\vTheta)\nabla_{\vTheta}L_S(\vTheta)  $  can be categorized into six different types: Firstly,
%%%%%%%%%%%%%%%%%%%%%%%%%%%%%%%%%%%%%%%%%%%%%%%%%%%%%%%%%%%%
\begin{align*}
&\Norm{\left(\nabla_{\vTheta}e_i\otimes \nabla_{\vTheta}e_i\right)e_j\nabla_{\vTheta}e_j}_2\\
%%%%%%%%%%%%%%%%%%%%%%%%
\leq& \Abs{e_j}\Norm{\nabla_{\vTheta}e_i}_2^2\Norm{\nabla_{\vTheta}e_j}_2\\
%%%%%%%%%%%%%%%%%%%%%%%%
 \leq &\left( 1+\Norm{\vTheta}_2^2\right)\Norm{\vTheta}_2^3\\
%%%%%%%%%%%%%%%%%%%%%%%%
\leq &\left( 1+\Norm{\vTheta}_2^5\right).
%%%%%%%%%%%%%%%%%%%%%%%% 
%%%%%%%%%%%%%%%%%%%%%%%%
%%%%%%%%%%%%%%%%%%%%%%%%
%%%%%%%%%%%%%%%%%%%%%%%%
%%%%%%%%%%%%%%%%%%%%%%%%      
\end{align*}
%%%%%%%%%%%%%%%%%%%%%%%%%%%%%%%%%%%%%%%%%%%%%%%%%%%%%%%%%%%%
Secondly,
%%%%%%%%%%%%%%%%%%%%%%%%%%%%%%%%%%%%%%%%%%%%%%%%%%%%%%%%%%%%
\begin{align*}
&\Norm{\left(e_i\nabla_{\vTheta}^2e_i\right)e_j\nabla_{\vTheta}e_j}_2\\
%%%%%%%%%%%%%%%%%%%%%%%%
\leq &\left( 1+\Norm{\vTheta}_2^2\right)^2\Norm{\nabla_{\vTheta}^2e_i}_{2\to 2}\Norm{\nabla_{\vTheta}e_j}_2\\
%%%%%%%%%%%%%%%%%%%%%%%% 
\leq & \left( 1+\Norm{\vTheta}_2^4\right)\Norm{\vTheta}_2^2\\
%%%%%%%%%%%%%%%%%%%%%%%%
\leq &\left( 1+\Norm{\vTheta}_2^6\right). 
%%%%%%%%%%%%%%%%%%%%%%%%
%%%%%%%%%%%%%%%%%%%%%%%%
%%%%%%%%%%%%%%%%%%%%%%%%
%%%%%%%%%%%%%%%%%%%%%%%%      
\end{align*}
%%%%%%%%%%%%%%%%%%%%%%%%%%%%%%%%%%%%%%%%%%%%%%%%%%%%%%%%%%%%
Thirdly,
%%%%%%%%%%%%%%%%%%%%%%%%%%%%%%%%%%%%%%%%%%%%%%%%%%%%%%%%%%%%
\begin{align*}
&\Norm{\left(\mathrm{diag}\left\{\nabla_{\vq_k}^2\left(a_k^2\sigma(\vw_k^{\T}\vx_i)^2\right)\right\}\right)e_j\nabla_{\vTheta}e_j}_2\\
%%%%%%%%%%%%%%%%%%%%%%%%
\leq &\left( 1+\Norm{\vTheta}_2^2\right)\Norm{\mathrm{diag}\left\{\nabla_{\vq_k}^2\left(a_k^2\sigma(\vw_k^{\T}\vx_i)^2\right)\right\}}_{2\to 2}\Norm{\vTheta}_2\\
%%%%%%%%%%%%%%%%%%%%%%%% 
\leq & \left( 1+\Norm{\vTheta}_2^2\right)\left( 1+\Norm{\vTheta}_2^3\right)\Norm{\vTheta}_2\\
%%%%%%%%%%%%%%%%%%%%%%%%
\leq &\left( 1+\Norm{\vTheta}_2^6\right).
%%%%%%%%%%%%%%%%%%%%%%%%
%%%%%%%%%%%%%%%%%%%%%%%%
%%%%%%%%%%%%%%%%%%%%%%%%      
\end{align*}
%%%%%%%%%%%%%%%%%%%%%%%%%%%%%%%%%%%%%%%%%%%%%%%%%%%%%%%%%%%%
Fourthly, 
%%%%%%%%%%%%%%%%%%%%%%%%%%%%%%%%%%%%%%%%%%%%%%%%%%%%%%%%%%%%
\begin{align*}
&\Norm{\left(\nabla_{\vTheta}e_i\otimes \nabla_{\vTheta}e_i\right)\nabla_{\vTheta}\left(a_k^2\sigma(\vw_k^{\T}\vx_j)^2\right)}_2\\
%%%%%%%%%%%%%%%%%%%%%%%%
\leq &\Norm{\nabla_{\vTheta}e_i}_2^2\Norm{\vTheta}_2^3 \\ 
%%%%%%%%%%%%%%%%%%%%%%%%
\leq &\left( 1+\Norm{\vTheta}_2^5\right).
%%%%%%%%%%%%%%%%%%%%%%%% 
%%%%%%%%%%%%%%%%%%%%%%%%
%%%%%%%%%%%%%%%%%%%%%%%%
%%%%%%%%%%%%%%%%%%%%%%%%
%%%%%%%%%%%%%%%%%%%%%%%%      
\end{align*}
%%%%%%%%%%%%%%%%%%%%%%%%%%%%%%%%%%%%%%%%%%%%%%%%%%%%%%%%%%%%
Fifthly,
%%%%%%%%%%%%%%%%%%%%%%%%%%%%%%%%%%%%%%%%%%%%%%%%%%%%%%%%%%%%
\begin{align*}
&\Norm{\left(e_i\nabla_{\vTheta}^2e_i\right)\nabla_{\vTheta}\left(a_k^2\sigma(\vw_k^{\T}\vx_j)^2\right)}_2\\
%%%%%%%%%%%%%%%%%%%%%%%%
\leq &\left( 1+\Norm{\vTheta}_2^2\right) \Norm{\nabla_{\vTheta}^2e_i}_{2\to 2}\Norm{\vTheta}_2^3\\ 
%%%%%%%%%%%%%%%%%%%%%%%%
\leq &\left( 1+\Norm{\vTheta}_2^2\right)\Norm{\vTheta}_2^4\\ 
%%%%%%%%%%%%%%%%%%%%%%%%
\leq &\left( 1+\Norm{\vTheta}_2^6\right). 
%%%%%%%%%%%%%%%%%%%%%%%%
%%%%%%%%%%%%%%%%%%%%%%%%
%%%%%%%%%%%%%%%%%%%%%%%%
%%%%%%%%%%%%%%%%%%%%%%%%      
\end{align*}
%%%%%%%%%%%%%%%%%%%%%%%%%%%%%%%%%%%%%%%%%%%%%%%%%%%%%%%%%%%%
Finally, 
%%%%%%%%%%%%%%%%%%%%%%%%%%%%%%%%%%%%%%%%%%%%%%%%%%%%%%%%%%%%
\begin{align*}
&\Norm{\left(\mathrm{diag}\left\{\nabla_{\vq_k}^2\left(a_k^2\sigma(\vw_k^{\T}\vx_i)^2\right)\right\}\right)\nabla_{\vTheta}\left(a_k^2\sigma(\vw_k^{\T}\vx_j)^2\right)}_2\\
%%%%%%%%%%%%%%%%%%%%%%%% 
\leq & \Norm{\mathrm{diag}\left\{\nabla_{\vq_k}^2\left(a_k^2\sigma(\vw_k^{\T}\vx_i)^2\right)\right\}}_{2\to 2}\Norm{\vTheta}_2^3\\
%%%%%%%%%%%%%%%%%%%%%%%% 
\leq & \left( 1+\Norm{\vTheta}_2^3\right)\Norm{\vTheta}_2^3\\
%%%%%%%%%%%%%%%%%%%%%%%% 
\leq &\left( 1+\Norm{\vTheta}_2^6\right).
%%%%%%%%%%%%%%%%%%%%%%%%
%%%%%%%%%%%%%%%%%%%%%%%%
%%%%%%%%%%%%%%%%%%%%%%%%
%%%%%%%%%%%%%%%%%%%%%%%%      
\end{align*}
%%%%%%%%%%%%%%%%%%%%%%%%%%%%%%%%%%%%%%%%%%%%%%%%%%%%%%%%%%%%
To sum up, for the drift term  $\vb(\vTheta)$, regardless of the choice of first order or second order accuracy, we obtain that 
%%%%%%%%%%%%%%%%%%%%%%%%%%%%%%%%%%%%%%%%%%%%%%%%%%%%%%%%%%%%
\[\Norm{\vb(\vTheta)}_2\leq 1+\Norm{\vTheta}_2^6.\]
%%%%%%%%%%%%%%%%%%%%%%%%%%%%%%%%%%%%%%%%%%%%%%%%%%%%%%%%%%%%

As for the covariance $\mSigma$,  recall that  $\vtheta=\mathrm{vec}(\{\vq_r\}_{r=1}^m)=\mathrm{vec}\left(\{(a_r, \vw_r)\}_{r=1}^m\right)$, then   we obtain   that the covariance $\mSigma$ reads
%%%%%%%%%%%%%%%%%%%%%%%%%%%%%%%%%%%%%%%%%%%%%%%%%%%%%%%% 
 \[
\mSigma=\left[\begin{array}{cccc}
 \mSigma_{11} &  \mSigma_{12} &  \cdots & \mSigma_{1m}  \\
%%%%%%%%%%%%%%%%%%%%%%
\mSigma_{21} &  \mSigma_{22} &  \cdots & \mSigma_{2m}\\ 
%%%%%%%%%%%%%%%%%%%%%%
\vdots& \vdots&\vdots&\vdots\\
%%%%%%%%%%%%%%%%%%%%%%
\mSigma_{m1} &  \mSigma_{m2} &  \cdots & \mSigma_{mm}
\end{array}\right].
\]
%%%%%%%%%%%%%%%%%%%%%%%%%%%%%%%%%%%%%%%%%%%%%%%%%%%%%%%%%%%%
 For each  $  k \in [m]$, we obtain that 
%%%%%%%%%%%%%%%%%%%%%%%%%%%%%%%%%%%%%%%%%%%%%%%%%%%%%%%%%%%%
\begin{align*}
 \mSigma_{kk}(\vTheta )  =&\left(\frac{1}{p}-1\right)\left(\frac{1}{n}\sum_{i=1}^n\left( e_{i,\backslash k}+\frac{1}{p}a_{k}\sigma(\vw_{k}^{\T}\vx_i)\right)\nabla_{\vq_k}\left(a_k\sigma(\vw_k^{\T}\vx_i)\right) \right)\\
&~~~~~~~~~~~~~~~~~~~~~~~~~~~~~~{\otimes}\left(\frac{1}{n}\sum_{i=1}^n\left( e_{i,\backslash k}+\frac{1}{p}a_{k}\sigma(\vw_{k}^{\T}\vx_i)\right)\nabla_{\vq_k}\left(a_k\sigma(\vw_k^{\T}\vx_i)\right) \right)\\
%%%%%%%%%%%%%%%%%%%%%%%%
&~+\left(\frac{1}{p^2}-\frac{1}{p}\right)\sum_{l=1, l\neq k}^m\left(\frac{1}{n}\sum_{i=1}^na_{l}\sigma(\vw_{l}^{\T}\vx_i)\nabla_{\vq_k}\left(a_k\sigma(\vw_k^{\T}\vx_i)\right)\right)\\
&~~~~~~~~~~~~~~~~~~~~~~~~~~~~~~{\otimes}\left(\frac{1}{n}\sum_{i=1}^na_{l}\sigma(\vw_{l}^{\T}\vx_i)\nabla_{\vq_k}\left(a_k\sigma(\vw_k^{\T}\vx_i)\right)\right),
%%%%%%%%%%%%%%%%%%%%%%%%
%%%%%%%%%%%%%%%%%%%%%%%%
%%%%%%%%%%%%%%%%%%%%%%%%
%%%%%%%%%%%%%%%%%%%%%%%%
%%%%%%%%%%%%%%%%%%%%%%%%
%%%%%%%%%%%%%%%%%%%%%%%%
%%%%%%%%%%%%%%%%%%%%%%%%
%%%%%%%%%%%%%%%%%%%%%%%%
%%%%%%%%%%%%%%%%%%%%%%%%
%%%%%%%%%%%%%%%%%%%%%%%%
%%%%%%%%%%%%%%%%%%%%%%%%
%%%%%%%%%%%%%%%%%%%%%%%%
%%%%%%%%%%%%%%%%%%%%%%%%
\end{align*}
and for each $  k, r \in [m]$ with $k \neq r$,  
\begin{align*}
  \mSigma_{kr}(\vTheta)  
=&\left(\frac{1}{p}-1\right)\left(\frac{1}{n}\sum_{i=1}^n\left(e_{i,\backslash k, \backslash r}+\frac{1}{p}a_k\sigma(\vw_k^\T\vx_i)+\frac{1}{p}a_r\sigma(\vw_r^\T\vx_i)\right)\nabla_{\vq_k}\left(a_k\sigma(\vw_k^{\T}\vx_i)\right)\right)\\
&~~~~~~~~~~~~~~~~{\otimes}\left(\frac{1}{n}\sum_{i=1}^na_k\sigma(\vw_k^\T\vx_i)\nabla_{\vq_r}\left(a_r\sigma(\vw_r^{\T}\vx_i)\right)\right)\\
%%%%%%%%%%%%%%%%%%%%%%%%
&+\left(\frac{1}{p}-1\right)\left(\frac{1}{n}\sum_{i=1}^na_r\sigma(\vw_r^\T\vx_i)\nabla_{\vq_k}\left(a_k\sigma(\vw_k^{\T}\vx_i)\right)\right) \\
&~~~~~~~~~~~~~~~~{\otimes}\left(\frac{1}{n}\sum_{i=1}^n\left(e_{i,\backslash k, \backslash r}+a_k\sigma(\vw_k^\T\vx_i)+\frac{1}{p}a_r\sigma(\vw_r^\T\vx_i)\right)\nabla_{\vq_r}\left(a_r\sigma(\vw_r^{\T}\vx_i)\right)\right),
%%%%%%%%%%%%%%%%%%%%%%%%
%%%%%%%%%%%%%%%%%%%%%%%%
%%%%%%%%%%%%%%%%%%%%%%%%
%%%%%%%%%%%%%%%%%%%%%%%%
%%%%%%%%%%%%%%%%%%%%%%%%
%%%%%%%%%%%%%%%%%%%%%%%%
%%%%%%%%%%%%%%%%%%%%%%%%
%%%%%%%%%%%%%%%%%%%%%%%%
%%%%%%%%%%%%%%%%%%%%%%%%
%%%%%%%%%%%%%%%%%%%%%%%%
%%%%%%%%%%%%%%%%%%%%%%%%
\end{align*}
%%%%%%%%%%%%%%%%%%%%%%%%%%%%%%%%%%%%%%%%%%%%%%%%%%%%%%%%%%%
hence we obtain that 
%%%%%%%%%%%%%%%%%%%%%%%%%%%%%%%%%%%%%%%%%%%%%%%%%%%%%%%%%%%%
\begin{align*}
\Norm{\mSigma_{kk}(\vTheta)}_{\mathrm{F}}^2&\leq C_p\left(\Abs{e_{i,\backslash k}+\frac{1}{p}a_{k}\sigma(\vw_{k}^{\T}\vx_i)}^2+\sum_{l=1, l\neq k}^m a_l^2\sigma(\vw_l^\T\vx_i)^2\right)\Norm{\nabla_{\vTheta} e_i}_2^2\\
%%%%%%%%%%%%%%%%%%%%%%%%
&\leq C_p(1+\Norm{\vTheta}_2^2)^2\Norm{\vTheta}_2^2\\
%%%%%%%%%%%%%%%%%%%%%%%%
&\leq (1+\Norm{\vTheta}_2^6),
%%%%%%%%%%%%%%%%%%%%%%%%
%%%%%%%%%%%%%%%%%%%%%%%%
%%%%%%%%%%%%%%%%%%%%%%%%
%%%%%%%%%%%%%%%%%%%%%%%%
%%%%%%%%%%%%%%%%%%%%%%%%
%%%%%%%%%%%%%%%%%%%%%%%%
%%%%%%%%%%%%%%%%%%%%%%%%
%%%%%%%%%%%%%%%%%%%%%%%%
%%%%%%%%%%%%%%%%%%%%%%%%    
\end{align*}
%%%%%%%%%%%%%%%%%%%%%%%%%%%%%%%%%%%%%%%%%%%%%%%%%%%%%%%%%%%%
and by similar reasoning 
%%%%%%%%%%%%%%%%%%%%%%%%%%%%%%%%%%%%%%%%%%%%%%%%%%%%%%%%%%%
\begin{align*}
\Norm{\mSigma_{kr}(\vTheta)}_{\mathrm{F}}^2 \leq (1+\Norm{\vTheta}_2^6).
%%%%%%%%%%%%%%%%%%%%%%%%
%%%%%%%%%%%%%%%%%%%%%%%%
%%%%%%%%%%%%%%%%%%%%%%%%
%%%%%%%%%%%%%%%%%%%%%%%%
%%%%%%%%%%%%%%%%%%%%%%%%
%%%%%%%%%%%%%%%%%%%%%%%%
%%%%%%%%%%%%%%%%%%%%%%%%
%%%%%%%%%%%%%%%%%%%%%%%%
%%%%%%%%%%%%%%%%%%%%%%%%    
\end{align*}
%%%%%%%%%%%%%%%%%%%%%%%%%%%%%%%%%%%%%%%%%%%%%%%%%%%%%%%%%%%%
%%%%%%%%%%%%%%%%%%%%%%%%%%%%%%%%%%%%%%%%%%%%%%%%%%%%%%%%%%%%
\subsection{Existence, Uniqueness and Moment Estimates of  the Solution to SDE}\label{subsection...Existence+Uniqueness}
%%%%%%%%%%%%%%%%%%%%%%%%%%%%%%%%%%%%%%%%%%%%%%%%%%%%%%%%%%%%
%%%%%%%%%%%%%%%%%%%%%%%%%%%%%%%%%%%%%%%%%%%%%%%%%%%%%%%%%%%%
% Uniqueness of the solution to SDE \eqref{eq...text...SME...SME...SDE...Abstract...appendix} can be easily obtained since $\sigma \in\fC^{\infty}(\sR)$ and $1$-Lipschitz, 

Existence of the solution to SDE \eqref{eq...text...SME...SME...SDE...Abstract...appendix} is proved by a truncation procedure:
For each $M \geq 1$, define the truncation function
$$
\vb_M(\vTheta):= \begin{cases}\vb (\vTheta) & \text { if }\Norm{\vTheta}_2 \leq M, \\ 
\vb(M \frac{\vTheta}{\Norm{\vTheta}_2}) & \text { if }\Norm{\vTheta}_2 >M.\end{cases}
$$
%%%%%%%%%%%%%%%%%%%%%%%%%%%%%%%%%%%%%%%%%%%%%%%%%%%%%%%%%%%%
We also perform similar truncation to $\vsigma(\vTheta)$ and obtain its truncation $\vsigma_M(\vTheta)$. Then $\vb_M$ and $\vsigma_M$ satisfy the Lipschitz condition   and the linear growth condition, hence by application of the classical results~\citep[Theorem 5.2.1]{oksendal2013stochastic} in SDE, there exists a unique solution $\vTheta_M(\cdot)$   to the truncated  SDE
%%%%%%%%%%%%%%%%%%%%%%%%%%%%%%%%%%%%%%%%%%%%%%%%%%%%%%%%%%%%
\begin{equation}\label{eq...truncated}\D \vTheta_t=\vb_M \left(\vTheta_t\right)\D t+\vsigma_M \left(\vTheta_t\right) \D \vW_t, \quad  \vTheta_0=\vTheta(0).\end{equation}
%%%%%%%%%%%%%%%%%%%%%%%%%%%%%%%%%%%%%%%%%%%%%%%%%%%%%%%%%%%%
We may choose $M$ large enough, such that 
\[
\Norm{\vTheta_0}_2< M,
\]
and the solution to SDE \eqref{eq...text...SME...SME...SDE...Abstract...appendix} coincides with the solution to SDE \eqref{eq...truncated} at least for a period of time $T^{\ast}>0$ since $\Norm{\vTheta_0}_2< M$. We remark that $T^{\ast}$ is the desired time in Assumption \ref{Append...theOnlyAssumption}. We also remark that not only for any time $t\in\left[0, T^{\ast} \right]$, the second, fourth and sixth  moments of  the solution  to SDE \eqref{eq...text...SME...SME...SDE...Abstract...appendix}  are uniformly bounded with respect to time $t$, but also that for any time $t\in\left[0, T^{\ast} \right]$,  all  moments of  the solution  to SDE \eqref{eq...text...SME...SME...SDE...Abstract...appendix}  are uniformly bounded with respect to time $t$. 

At this point, it is important to discuss that we prove   is that for fixed time $T$, we can take the learning rate $\eps>0$ small
enough so that the SME   is a good approximation of the
distribution of the dropout iterates. What we did not prove is that for fixed $\eps$, the approximations hold for arbitrary time $T$. In particular, it is not hard to construct systems where
for fixed $\eps$, both the SME and the asymptotic expansion fails when time $T$ is large enough.
%%%%%%%%%%%%%%%%%%%%%%%%%%%%%%%%%%%%%%%%%%%%%%%%%%%%%%%%%%%%
%%%%%%%%%%%%%%%%%%%%%%%%%%%%%%%%%%%%%%%%%%%%%%%%%%%%%%%%%%%%
\subsection{ Moment Estimates of  the Dropout Iteration}\label{subsection...Moments}
%%%%%%%%%%%%%%%%%%%%%%%%%%%%%%%%%%%%%%%%%%%%%%%%%%%%%%%%%%%% 
%%%%%%%%%%%%%%%%%%%%%%%%%%%%%%%%%%%%%%%%%%%%%%%%%%%%%%%%%%%%
Recall that the dropout iteration reads
%%%%%%%%%%%%%%%%%%%%%%%%%%%%%%%%%%%%%%%%%%%%%%%%%%%%%%%%%%%%
\[
\vtheta_N= \vtheta_{N-1}-\eps \nabla_{\vtheta}R_{\fS}^\mathrm{drop}\left(\vtheta_{N-1}; \veta_{N}\right),
\]
%%%%%%%%%%%%%%%%%%%%%%%%%%%%%%%%%%%%%%%%%%%%%%%%%%%%%%%%%%%%
then we obtain that 
%%%%%%%%%%%%%%%%%%%%%%%%%%%%%%%%%%%%%%%%%%%%%%%%%%%%%%%%%%%%
\begin{align*}
\Exp\Norm{\vtheta_N}_2^{2l}&=   \Exp\Norm{\vtheta_{N-1}}_2^{2l}-2l\eps \Exp\left[\Norm{\vtheta_{N-1}}_2^{2l-2}\left<\vtheta_{N-1},\nabla_{\vtheta}R_{\fS}^\mathrm{drop}\left(\vtheta_{N-1}; \veta_{N}\right)\right>\right]   +\fO(\eps^2),
\end{align*}
%%%%%%%%%%%%%%%%%%%%%%%%%%%%%%%%%%%%%%%%%%%%%%%%%%%%%%%%%%%%
then for learning rate $\eps$ small enough, we observe that $\{\Exp\Norm{\vtheta_N}_2^{2l}\}_{N\geq 0}$ follows close to the trajectory of a ordinary differential equation~(ODE). Moreover, from the estimates obtained in Section \ref{subsection...EstimatesonModified+Covariance}, 
%%%%%%%%%%%%%%%%%%%%%%%%%%%%%%%%%%%%%%%%%%%%%%%%%%%%%%%%%%%%
\begin{align*}
 &\Norm{\vtheta_{N-1}}_2^{2l-2}\left<\vtheta_{N-1},\nabla_{\vtheta}R_{\fS}^\mathrm{drop}\left(\vtheta_{N-1}; \veta_{N}\right)\right> \\
%%%%%%%%%%%%%%%%%%%%%%%% 
 \leq & \Norm{\vtheta_{N-1}}_2^{2l-1}\Norm{\nabla_{\vtheta}R_{\fS}^\mathrm{drop}\left(\vtheta_{N-1}; \veta_{N}\right)}_2\\
%%%%%%%%%%%%%%%%%%%%%%%%
=&\Norm{\vtheta_{N-1}}_2^{2l-1}\Abs{e_{i}^N} \Norm{\nabla_{\vtheta}e_{i}^N}_2\\
%%%%%%%%%%%%%%%%%%%%%%%%
\leq &\Norm{\vtheta_{N-1}}_2^{2l-1} C_p(1+\Norm{\vtheta_{N-1}}_2^2)\Norm{\vtheta_{N-1}}_2\\
%%%%%%%%%%%%%%%%%%%%%%%%
\leq & C_p(1+\Norm{\vtheta_{N-1}}_2^{2l+2}),
\end{align*}
%%%%%%%%%%%%%%%%%%%%%%%%%%%%%%%%%%%%%%%%%%%%%%%%%%%%%%%%%%%%
we remark that as the above estimates hold almost surely, 
then  for learning rate $\eps$ small enough, we may apply Gronwall inequality to 
$ \{\Exp\Norm{\vtheta_N}_2^{2l}\}_{N\geq 0}$ and shows that for some $N^{\ast}$,  all moments of  the dropout iterations  are uniformly
bounded with respect to $N$, since for   the  ODE
\begin{equation}\label{eq:ode}
\frac{\D u}{\D t}=1+u^{1+\lambda},\quad u_0:=u(0),
\end{equation}
with $\lambda >0$.   There exists time $T^{\ast}>0$, such that  for any time $t\in\left[0, T^{\ast} \right]$, its solution $\{u_t\}_{t\geq 0}$ is  uniformly
bounded with respect to time $t$. And since for small enough learning rate,  all moments of  the dropout iterations  $ \{\Exp\Norm{\vtheta_N}_2^{2l}\}_{N\geq 0}$ follows close to the trajectory of ODEs of \eqref{eq:ode} type, hence all these moments are also uniformly
bounded with respect to $N$.

\newpage

\section{Some Computations on the Covariance}\label{section...ComputationalDetails}
%%%%%%%%%%%%%%%%%%%%%%%%%%%%%%%%%%%%%%%%%%%%%%%%%%%%%%%%%%%%
%%%%%%%%%%%%%%%%%%%%%%%%%%%%%%%%%%%%%%%%%%%%%%%%%%%%%%%%%%%%
Once again, since $\vtheta=\mathrm{vec}(\{\vq_r\}_{r=1}^m)=\mathrm{vec}\left(\{(a_r, \vw_r)\}_{r=1}^m\right)$, then the covariance of $\nabla_{\vtheta}\RS^\mathrm{drop}\left(\vtheta_{N-1}; \veta_{N}\right)$ equals to the matrix  $ \mSigma(\vtheta_{N-1})$, and as we denote for any $k,r\in[m]$,
%%%%%%%%%%%%%%%%%%%%%%%%%%%%%%%%%%%%%%%%%%%%%%%%%%%%%%%%%%%%
 \[\mSigma_{kr}(\vtheta_{N-1}):=\mathrm{Cov}\left( \nabla_{\vq_k}\RS^\mathrm{drop}\left(\vtheta_{N-1}; \veta_{N}\right), \nabla_{\vq_r}\RS^\mathrm{drop}\left(\vtheta_{N-1}; \veta_{N}\right)\right), \]
%%%%%%%%%%%%%%%%%%%%%%%%%%%%%%%%%%%%%%%%%%%%%%%%%%%%%%%%%%%%
then 
%%%%%%%%%%%%%%%%%%%%%%%%%%%%%%%%%%%%%%%%%%%%%%%%%%%%%%%%%%%%
 \[
\mSigma=\left[\begin{array}{cccc}
 \mSigma_{11} &  \mSigma_{12} &  \cdots & \mSigma_{1m}  \\
%%%%%%%%%%%%%%%%%%%%%%
\mSigma_{21} &  \mSigma_{22} &  \cdots & \mSigma_{2m}\\ 
%%%%%%%%%%%%%%%%%%%%%%
\vdots& \vdots&\vdots&\vdots\\
%%%%%%%%%%%%%%%%%%%%%%
\mSigma_{m1} &  \mSigma_{m2} &  \cdots & \mSigma_{mm}
\end{array}\right].
\]
%%%%%%%%%%%%%%%%%%%%%%%%%%%%%%%%%%%%%%%%%%%%%%%%%%%%%%%%%%%%
%%%%%%%%%%%%%%%%%%%%%%%%%%%%%%%%%%%%%%%%%%%%%%%%%%%%%%%%%%%%
%%%%%%%%%%%%%%%%%%%%%%%%%%%%%%%%%%%%%%%%%%%%%%%%%%%%%%%%%%%%
\subsection{Elements on the Diagonal}
%%%%%%%%%%%%%%%%%%%%%%%%%%%%%%%%%%%%%%%%%%%%%%%%%%%%%%%%%%%%
%%%%%%%%%%%%%%%%%%%%%%%%%%%%%%%%%%%%%%%%%%%%%%%%%%%%%%%%%%%%
%%%%%%%%%%%%%%%%%%%%%%%%%%%%%%%%%%%%%%%%%%%%%%%%%%%%%%%%%%%%
In this part, we compute $\mSigma_{kk}$ for all $k\in[m]$.
%%%%%%%%%%%%%%%%%%%%%%%%%%%%%%%%%%%%%%%%%%%%%%%%%%%%%%%%%%%%
\begin{align*}
 \mSigma_{kk}(\vtheta_{N-1})&=\mathrm{Cov}\left( \nabla_{\vq_k}\RS^\mathrm{drop}\left(\vtheta_{N-1}; \veta_{N}\right), \nabla_{\vq_k}\RS^\mathrm{drop}\left(\vtheta_{N-1}; \veta_{N}\right)\right)   \\ 
%%%%%%%%%%%%%%%%%%%%%%%%
&=\frac{1}{n^2}\sum_{i,j=1}^n\mathrm{Cov}\left(e_{i}^N (\veta_N)_{k},  e_{j}^N (\veta_N)_{k}\right)\nabla_{\vq_k}\left(a_k\sigma(\vw_k^{\T}\vx_i)\right) \otimes  \nabla_{\vq_k}\left(a_k\sigma(\vw_k^{\T}\vx_j)\right),
%%%%%%%%%%%%%%%%%%%%%%%%
%%%%%%%%%%%%%%%%%%%%%%%%
%%%%%%%%%%%%%%%%%%%%%%%%
\end{align*}
%%%%%%%%%%%%%%%%%%%%%%%%%%%%%%%%%%%%%%%%%%%%%%%%%%%%%%%%%%%%
in order to compute $\mathrm{Cov}\left(e_{i}^N (\veta_N)_{k},  e_{j}^N (\veta_N)_{k}\right)$, we need to compute firstly   $\Exp\left[e_{i}^Ne_{j}^N(\veta_N)_{k}^2\right]$,
and since $\Exp\left[e_{i}^Ne_{j}^N(\veta_N)_{k}^2\right]$ consists of four parts, one of which is 
%%%%%%%%%%%%%%%%%%%%%%%%%%%%%%%%%%%%%%%%%%%%%%%%%%%%%%%%%%%%
%%%%%%%%%%%%%%%%%%%%%%%%%%%%%%%%%%%%%%%%%%%%%%%%%%%%%%%%%%%%
\begin{align*}
%%%%%%%%%%%%%%%%%%%%%%%%
&\Exp\left[ \left(\sum_{k'=1, k'\neq k
 }^m (\veta_N)_{k'}a_{k'}\sigma(\vw_{k'}^{\T}\vx_i)-y_i\right)\left(\sum_{l=1, l\neq k
 }^m (\veta_N)_{l}a_{l}\sigma(\vw_{l}^{\T}\vx_j)-y_j\right)(\veta_N)_{k}^2\right]\\
%%%%%%%%%%%%%%%%%%%%%%%%
=&\Exp\left[ \left(\sum_{k'=1, k'\neq k
 }^m (\veta_N)_{k'}a_{k'}\sigma(\vw_{k'}^{\T}\vx_i)-y_i\right)\left(\sum_{l=1, l\neq k
 }^m (\veta_N)_{l}a_{l}\sigma(\vw_{l}^{\T}\vx_j)-y_j\right)\right]\Exp\left[ (\veta_N)_{k}^2\right]\\
%%%%%%%%%%%%%%%%%%%%%%%%
=&\frac{1}{p}\Bigg(\Exp \left[\sum_{k'=1, k'\neq k}^m  (\veta_N)_{k'}^2a_{k'}^2\sigma(\vw_{k'}^{\T}\vx_i)\sigma(\vw_{k'}^{\T}\vx_j)\right]+ \Exp \left[\sum_{k'\neq l ,\ k',l\neq k}(\veta_N)_{k'}(\veta_N)_{l} a_{k'}a_l\sigma(\vw_{k'}^{\T}\vx_i) \sigma(\vw_{l}^{\T}\vx_j)\right]\\
%%%%%%%%%%%%%%%%%%%%%%%%
&~-y_i\Exp \left[\sum_{k'=1, k'\neq k
 }^m (\veta_N)_{k'}a_{k'}\sigma(\vw_{k'}^{\T}\vx_j)\right]-y_j\Exp \left[\sum_{k'=1, k'\neq k
 }^m (\veta_N)_{k'}a_{k'}\sigma(\vw_{k'}^{\T}\vx_i)\right]+y_iy_j\Bigg)\\
%%%%%%%%%%%%%%%%%%%%%%%%
%%%%%%%%%%%%%%%%%%%%%%%%
=&\frac{1}{p^2}\sum_{k'=1, k'\neq k}^ma_{k'}^2\sigma(\vw_{k'}^{\T}\vx_i)\sigma(\vw_{k'}^{\T}\vx_j)+\frac{1}{p}\sum_{k'\neq l ,\ k',l\neq k} a_{k'}a_l\sigma(\vw_{k'}^{\T}\vx_i) \sigma(\vw_{l}^{\T}\vx_j)\\
%%%%%%%%%%%%%%%%%%%%%%%%
&~-\frac{y_i}{p}\sum_{k'=1, k'\neq k
 }^m a_{k'}\sigma(\vw_{k'}^{\T}\vx_j)-\frac{y_j}{p}\sum_{k'=1, k'\neq k
 }^m a_{k'}\sigma(\vw_{k'}^{\T}\vx_i)+\frac{y_iy_j}{p}\\
%%%%%%%%%%%%%%%%%%%%%%%%
=&\frac{1}{p}\left[\sum_{k'=1, k'\neq k
 }^m a_{k'}\sigma(\vw_{k'}^{\T}\vx_i)-y_i \right]\left[\sum_{k'=1, k'\neq k
 }^m a_{k'}\sigma(\vw_{k'}^{\T}\vx_j)-y_j \right]\\
%%%%%%%%%%%%%%%%%%%%%%%%
&~+\left(\frac{1}{p^2}-\frac{1}{p}\right)\left(\sum_{k'=1, k'\neq k}^ma_{k'}^2\sigma(\vw_{k'}^{\T}\vx_i)\sigma(\vw_{k'}^{\T}\vx_j)\right),
%%%%%%%%%%%%%%%%%%%%%%%%
%%%%%%%%%%%%%%%%%%%%%%%%
%%%%%%%%%%%%%%%%%%%%%%%%
\end{align*}
%%%%%%%%%%%%%%%%%%%%%%%%%%%%%%%%%%%%%%%%%%%%%%%%%%%%%%%%%%%%
and the second part reads
%%%%%%%%%%%%%%%%%%%%%%%%%%%%%%%%%%%%%%%%%%%%%%%%%%%%%%%%%%%%
\begin{align*}
&\Exp\left[ (\veta_N)_{k}a_{k}\sigma(\vw_{k}^{\T}\vx_i)\left(\sum_{l=1, l\neq k
 }^m (\veta_N)_{l}a_{l}\sigma(\vw_{l}^{\T}\vx_j)-y_j\right)(\veta_N)_{k}^2\right]\\
%%%%%%%%%%%%%%%%%%%%%%%%
=&\frac{a_k \sigma(\vw_{k}^{\T}\vx_i)}{p^2}\left(\sum_{k'=1, k'\neq k
 }^m a_{k'} \sigma(\vw_{k'}^{\T}\vx_j)-y_j\right),
%%%%%%%%%%%%%%%%%%%%%%%%
%%%%%%%%%%%%%%%%%%%%%%%%
%%%%%%%%%%%%%%%%%%%%%%%%
\end{align*}
%%%%%%%%%%%%%%%%%%%%%%%%%%%%%%%%%%%%%%%%%%%%%%%%%%%%%%%%%%%%
and by symmetry, the third part reads
%%%%%%%%%%%%%%%%%%%%%%%%%%%%%%%%%%%%%%%%%%%%%%%%%%%%%%%%%%%%
%%%%%%%%%%%%%%%%%%%%%%%%%%%%%%%%%%%%%%%%%%%%%%%%%%%%%%%%%%%%
\begin{align*}
&\Exp\left[ (\veta_N)_{k}a_{k}\sigma(\vw_{k}^{\T}\vx_j)\left(\sum_{l=1, l\neq k
 }^m (\veta_N)_{l}a_{l}\sigma(\vw_{l}^{\T}\vx_i)-y_i\right)(\veta_N)_{k}^2\right]\\
%%%%%%%%%%%%%%%%%%%%%%%%
=&\frac{a_k \sigma(\vw_{k}^{\T}\vx_j)}{p^2}\left(\sum_{k'=1, k'\neq k
 }^m a_{k'} \sigma(\vw_{k'}^{\T}\vx_i)-y_i\right),
%%%%%%%%%%%%%%%%%%%%%%%%
%%%%%%%%%%%%%%%%%%%%%%%%
%%%%%%%%%%%%%%%%%%%%%%%%
\end{align*}
%%%%%%%%%%%%%%%%%%%%%%%%%%%%%%%%%%%%%%%%%%%%%%%%%%%%%%%%%%%%
and finally, the fourth part reads
%%%%%%%%%%%%%%%%%%%%%%%%%%%%%%%%%%%%%%%%%%%%%%%%%%%%%%%%%%%%
\begin{align*}
&\Exp\left[ (\veta_N)_{k}a_{k}\sigma(\vw_{k}^{\T}\vx_i)(\veta_N)_{k}a_{k}\sigma(\vw_{k}^{\T}\vx_j)(\veta_N)_{k}^2\right]=\frac{1}{p^3}a_k^2\sigma(\vw_{k}^{\T}\vx_i)\sigma(\vw_{k}^{\T}\vx_j).
%%%%%%%%%%%%%%%%%%%%%%%%
%%%%%%%%%%%%%%%%%%%%%%%%
%%%%%%%%%%%%%%%%%%%%%%%%
%%%%%%%%%%%%%%%%%%%%%%%%
\end{align*}
%%%%%%%%%%%%%%%%%%%%%%%%%%%%%%%%%%%%%%%%%%%%%%%%%%%%%%%%%%%%
To sum up,  
%%%%%%%%%%%%%%%%%%%%%%%%%%%%%%%%%%%%%%%%%%%%%%%%%%%%%%%%%%%%
\begin{align*}
   \Exp\left[e_{i}^Ne_{j}^N(\veta_N)_{k}^2\right] &= \left(\frac{1}{p^2}-\frac{1}{p}\right)\left(\sum_{k'=1, k'\neq k}^ma_{k'}^2\sigma(\vw_{k'}^{\T}\vx_i)\sigma(\vw_{k'}^{\T}\vx_j)\right)\\
%%%%%%%%%%%%%%%%%%%%%%
&~~+\frac{1}{p} e_{i,\backslash k}e_{j,\backslash k}+\frac{a_k \sigma(\vw_{k}^{\T}\vx_j)}{p^2}e_{i,\backslash k}+\frac{a_k \sigma(\vw_{k}^{\T}\vx_i)}{p^2}e_{j,\backslash k}\\
%%%%%%%%%%%%%%%%%%%%%%
&~~+\frac{1}{p^3}a_k^2\sigma(\vw_{k}^{\T}\vx_i)\sigma(\vw_{k}^{\T}\vx_j),
%%%%%%%%%%%%%%%%%%%%%%%%
%%%%%%%%%%%%%%%%%%%%%%%%
%%%%%%%%%%%%%%%%%%%%%%%%
%%%%%%%%%%%%%%%%%%%%%%%%
%%%%%%%%%%%%%%%%%%%%%%%%
%%%%%%%%%%%%%%%%%%%%%%%%
%%%%%%%%%%%%%%%%%%%%%%%%
\end{align*}
%%%%%%%%%%%%%%%%%%%%%%%%%%%%%%%%%%%%%%%%%%%%%%%%%%%%%%%%%%%%
and 
%%%%%%%%%%%%%%%%%%%%%%%%%%%%%%%%%%%%%%%%%%%%%%%%%%%%%%%%%%%%
\begin{align*}
&\Exp\left[ e_{i}^N(\veta_N)_{k}\right]\Exp\left[ e_{j}^N(\veta_N)_{k}\right]\\
%%%%%%%%%%%%%%%%%%%%%%   
=&\left(e_{i,\backslash k}+\frac{1}{p}a_k \sigma(\vw_{k}^{\T}\vx_i)\right)\left(e_{j,\backslash k}+\frac{1}{p}a_k \sigma(\vw_{k}^{\T}\vx_j)\right)\\
%%%%%%%%%%%%%%%%%%%%%%%%
=& e_{i,\backslash k}e_{j,\backslash k}+\frac{a_k \sigma(\vw_{k}^{\T}\vx_j)}{p}e_{i,\backslash k}+\frac{a_k \sigma(\vw_{k}^{\T}\vx_i)}{p}e_{j,\backslash k}+\frac{1}{p^2}a_k^2\sigma(\vw_{k}^{\T}\vx_i)\sigma(\vw_{k}^{\T}\vx_j),
%%%%%%%%%%%%%%%%%%%%%%%%
%%%%%%%%%%%%%%%%%%%%%%%%
%%%%%%%%%%%%%%%%%%%%%%%%
%%%%%%%%%%%%%%%%%%%%%%%%
%%%%%%%%%%%%%%%%%%%%%%%%
%%%%%%%%%%%%%%%%%%%%%%%%
\end{align*}
%%%%%%%%%%%%%%%%%%%%%%%%%%%%%%%%%%%%%%%%%%%%%%%%%%%%%%%%%%%%
hence
%%%%%%%%%%%%%%%%%%%%%%%%%%%%%%%%%%%%%%%%%%%%%%%%%%%%%%%%%%%%
\begin{align*}
&\mathrm{Cov}\left(e_{i}^N (\veta_N)_{k},  e_{j}^N (\veta_N)_{k}\right)\\
%%%%%%%%%%%%%%%%%%%%%%%%
=&    \Exp\left[e_{i}^Ne_{j}^N(\veta_N)_{k}^2\right]- \Exp\left[ e_{i}^N(\veta_N)_{k}\right ]\Exp\left[e_{i}^N(\veta_N)_{k}\right]\\
%%%%%%%%%%%%%%%%%%%%%%%%
=&\left(\frac{1}{p^2}-\frac{1}{p}\right)\left(\sum_{k'=1, k'\neq k}^ma_{k'}^2\sigma(\vw_{k'}^{\T}\vx_i)\sigma(\vw_{k'}^{\T}\vx_j)\right)\\
%%%%%%%%%%%%%%%%%%%%%%%%
&~+\left(\frac{1}{p}-1\right)e_{i,\backslash k}e_{j,\backslash k} +\left(\frac{1}{p^2}-\frac{1}{p}\right) {a_k \sigma(\vw_{k}^{\T}\vx_i)} e_{j,\backslash k} \\
%%%%%%%%%%%%%%%%%%%%%%%%
&~+\left(\frac{1}{p^2}-\frac{1}{p}\right){a_k \sigma(\vw_{k}^{\T}\vx_j)} e_{i,\backslash k}   +\left(\frac{1}{p^3}-\frac{1}{p^2}\right)a_k^2\sigma(\vw_{k}^{\T}\vx_i)\sigma(\vw_{k}^{\T}\vx_j)\\
%%%%%%%%%%%%%%%%%%%%%%
=&\left(\frac{1}{p}-1\right)\Exp\left( e_{i}^N(\veta_N)_{k}\right)\Exp\left( e_{j}^N(\veta_N)_{k}\right)+\left(\frac{1}{p^2}-\frac{1}{p}\right)\left(\sum_{k'=1, k'\neq k}^ma_{k'}^2\sigma(\vw_{k'}^{\T}\vx_i)\sigma(\vw_{k'}^{\T}\vx_j)\right),
%%%%%%%%%%%%%%%%%%%%%%%%
%%%%%%%%%%%%%%%%%%%%%%%%
%%%%%%%%%%%%%%%%%%%%%%%%
%%%%%%%%%%%%%%%%%%%%%%%%
%%%%%%%%%%%%%%%%%%%%%%%%
%%%%%%%%%%%%%%%%%%%%%%%%
%%%%%%%%%%%%%%%%%%%%%%%%
\end{align*}
%%%%%%%%%%%%%%%%%%%%%%%%%%%%%%%%%%%%%%%%%%%%%%%%%%%%%%%%%%%%
by summation over the indices $i$ and $j$, for each  $  k \in [m]$,  the covariance matrix reads:  
%%%%%%%%%%%%%%%%%%%%%%%%%%%%%%%%%%%%%%%%%%%%%%%%%%%%%%%%%%%%
\begin{align*}
&\mSigma_{kk}(\vtheta_{N-1})=\mathrm{Cov}\left( \nabla_{\vq_k}R_{\fS}^\mathrm{drop}\left(\vtheta_{N-1}; \veta_{N}\right), \nabla_{\vq_k}R_{\fS}^\mathrm{drop}\left(\vtheta_{N-1}; \veta_{N}\right)\right) \\
%%%%%%%%%%%%%%%%%%%%%%%%
=&\left(\frac{1}{p}-1\right)\left(\frac{1}{n}\sum_{i=1}^n\left( e_{i,\backslash k}+\frac{1}{p}a_{k}\sigma(\vw_{k}^{\T}\vx_i)\right)\nabla_{\vq_k}\left(a_k\sigma(\vw_k^{\T}\vx_i)\right) \right)\\
&~~~~~~~~~~~~~~~~~~~~~~~~~~~~~~{\otimes}\left(\frac{1}{n}\sum_{i=1}^n\left( e_{i,\backslash k}+\frac{1}{p}a_{k}\sigma(\vw_{k}^{\T}\vx_i)\right)\nabla_{\vq_k}\left(a_k\sigma(\vw_k^{\T}\vx_i)\right) \right)\\
%%%%%%%%%%%%%%%%%%%%%%%%
&~+\left(\frac{1}{p^2}-\frac{1}{p}\right)\sum_{l=1, l\neq k}^m\left(\frac{1}{n}\sum_{i=1}^na_{l}\sigma(\vw_{l}^{\T}\vx_i)\nabla_{\vq_k}\left(a_k\sigma(\vw_k^{\T}\vx_i)\right)\right)\\
&~~~~~~~~~~~~~~~~~~~~~~~~~~~~~~{\otimes}\left(\frac{1}{n}\sum_{i=1}^na_{l}\sigma(\vw_{l}^{\T}\vx_i)\nabla_{\vq_k}\left(a_k\sigma(\vw_k^{\T}\vx_i)\right)\right).
%%%%%%%%%%%%%%%%%%%%%%%%
%%%%%%%%%%%%%%%%%%%%%%%%
%%%%%%%%%%%%%%%%%%%%%%%%
%%%%%%%%%%%%%%%%%%%%%%%%
%%%%%%%%%%%%%%%%%%%%%%%%
%%%%%%%%%%%%%%%%%%%%%%%%
%%%%%%%%%%%%%%%%%%%%%%%%
%%%%%%%%%%%%%%%%%%%%%%%%
%%%%%%%%%%%%%%%%%%%%%%%%
%%%%%%%%%%%%%%%%%%%%%%%%
%%%%%%%%%%%%%%%%%%%%%%%%
%%%%%%%%%%%%%%%%%%%%%%%%
%%%%%%%%%%%%%%%%%%%%%%%%
\end{align*}
%%%%%%%%%%%%%%%%%%%%%%%%%%%%%%%%%%%%%%%%%%%%%%%%%%%%%%%%%%%%
%%%%%%%%%%%%%%%%%%%%%%%%%%%%%%%%%%%%%%%%%%%%%%%%%%%%%%%%%%%%
%%%%%%%%%%%%%%%%%%%%%%%%%%%%%%%%%%%%%%%%%%%%%%%%%%%%%%%%%%%%
%%%%%%%%%%%%%%%%%%%%%%%%%%%%%%%%%%%%%%%%%%%%%%%%%%%%%%%%%%%%
\subsection{Elements off the Diagonal}
%%%%%%%%%%%%%%%%%%%%%%%%%%%%%%%%%%%%%%%%%%%%%%%%%%%%%%%%%%%%
%%%%%%%%%%%%%%%%%%%%%%%%%%%%%%%%%%%%%%%%%%%%%%%%%%%%%%%%%%%%
%%%%%%%%%%%%%%%%%%%%%%%%%%%%%%%%%%%%%%%%%%%%%%%%%%%%%%%%%%%%
In this part, we compute $\mSigma_{kr}$ for all $k, r\in[m]$, where $k\neq r$.
%%%%%%%%%%%%%%%%%%%%%%%%%%%%%%%%%%%%%%%%%%%%%%%%%%%%%%%%%%%%
%%%%%%%%%%%%%%%%%%%%%%%%%%%%%%%%%%%%%%%%%%%%%%%%%%%%%%%%%%%%
\begin{align*}
 \mSigma_{kr}(\vtheta_{N-1})&=\mathrm{Cov}\left( \nabla_{\vq_k}\RS^\mathrm{drop}\left(\vtheta_{N-1}; \veta_{N}\right), \nabla_{\vq_r}\RS^\mathrm{drop}\left(\vtheta_{N-1}; \veta_{N}\right)\right)   \\ 
%%%%%%%%%%%%%%%%%%%%%%%%
&=\frac{1}{n^2}\sum_{i,j=1}^n\mathrm{Cov}\left(e_{i}^N (\veta_N)_{k},  e_{j}^N (\veta_N)_{r}\right)\nabla_{\vq_k}\left(a_k\sigma(\vw_k^{\T}\vx_i)\right) \otimes  \nabla_{\vq_r}\left(a_k\sigma(\vw_k^{\T}\vx_j)\right),
%%%%%%%%%%%%%%%%%%%%%%%%
%%%%%%%%%%%%%%%%%%%%%%%%
%%%%%%%%%%%%%%%%%%%%%%%%
\end{align*}
%%%%%%%%%%%%%%%%%%%%%%%%%%%%%%%%%%%%%%%%%%%%%%%%%%%%%%%%%%%%
in order to compute $\mathrm{Cov}\left(e_{i}^N (\veta_N)_{k},  e_{j}^N (\veta_N)_{r}\right)$, we need to compute firstly   $\Exp\left[e_{i}^Ne_{j}^N(\veta_N)_{k}(\veta_N)_{r} \right]$,
and since   $\Exp\left[e_{i}^Ne_{j}^N(\veta_N)_{k}(\veta_N)_{r} \right]$ consists of nine parts, one of which is  
%%%%%%%%%%%%%%%%%%%%%%%%%%%%%%%%%%%%%%%%%%%%%%%%%%%%%%%%%%%%
\begin{align*}
%%%%%%%%%%%%%%%%%%%%%%%%
&\Exp\left[ \left(\sum_{k'=1, k'\neq k, k'\neq r
 }^m (\veta_N)_{k'}a_{k'}\sigma(\vw_{k'}^{\T}\vx_i)-y_i\right)\left(\sum_{l=1, l\neq k,l\neq r
 }^m (\veta_N)_{l}a_{l}\sigma(\vw_{l}^{\T}\vx_j)-y_j\right)(\veta_N)_{k}(\veta_N)_{r}\right]\\
%%%%%%%%%%%%%%%%%%%%%%%%
=&\Exp\left[ \left(\sum_{k'=1, k'\neq k, k'\neq r
 }^m (\veta_N)_{k'}a_{k'}\sigma(\vw_{k'}^{\T}\vx_i)-y_i\right)\left(\sum_{l=1, l\neq k, l\neq r
 }^m (\veta_N)_{l}a_{l}\sigma(\vw_{l}^{\T}\vx_j)-y_j\right)\right]\Exp\left[ (\veta_N)_{k}(\veta_N)_{r}\right]\\
%%%%%%%%%%%%%%%%%%%%%%%%
=&\frac{1}{p}\sum_{k'=1, k'\neq k, k'\neq r}^ma_{k'}^2\sigma(\vw_{k'}^{\T}\vx_i)\sigma(\vw_{k'}^{\T}\vx_j)+\sum_{k'\neq l  \ \text{and} \ k',l\neq k, r} a_{k'}a_l\sigma(\vw_{k'}^{\T}\vx_i) \sigma(\vw_{l}^{\T}\vx_j)\\
%%%%%%%%%%%%%%%%%%%%%%%%
&~-{y_i}\sum_{k'=1, k'\neq k, k'\neq r
 }^m a_{k'}\sigma(\vw_{k'}^{\T}\vx_j)-{y_j}\sum_{k'=1, k'\neq k, k'\neq r
 }^m a_{k'}\sigma(\vw_{k'}^{\T}\vx_i)+{y_iy_j}\\
%%%%%%%%%%%%%%%%%%%%%%%%
=& \left[\sum_{k'=1, k'\neq k, k'\neq r
 }^m a_{k'}\sigma(\vw_{k'}^{\T}\vx_i)-y_i \right]\left[\sum_{k'=1, k'\neq k, k'\neq r
 }^m a_{k'}\sigma(\vw_{k'}^{\T}\vx_j)-y_j \right]\\
%%%%%%%%%%%%%%%%%%%%%%%%
&~+\left(\frac{1}{p}-1\right)\left(\sum_{k'=1, k'\neq k, k'\neq r}^ma_{k'}^2\sigma(\vw_{k'}^{\T}\vx_i)\sigma(\vw_{k'}^{\T}\vx_j)\right)\\
%%%%%%%%%%%%%%%%%%%%%%%%
&=e_{i,\backslash k, \backslash r}e_{j,\backslash k, \backslash r}+\left(\frac{1}{p}-1\right)\left(\sum_{k'=1, k'\neq k, k'\neq r}^ma_{k'}^2\sigma(\vw_{k'}^{\T}\vx_i)\sigma(\vw_{k'}^{\T}\vx_j)\right),
%%%%%%%%%%%%%%%%%%%%%%%%
%%%%%%%%%%%%%%%%%%%%%%%%
\end{align*}
%%%%%%%%%%%%%%%%%%%%%%%%%%%%%%%%%%%%%%%%%%%%%%%%%%%%%%%%%%%%
and the second part reads
%%%%%%%%%%%%%%%%%%%%%%%%%%%%%%%%%%%%%%%%%%%%%%%%%%%%%%%%%%%%
\begin{align*}
%%%%%%%%%%%%%%%%%%%%%%%%
&\Exp\left[ \left(\sum_{k'=1, k'\neq k, k'\neq r
 }^m (\veta_N)_{k'}a_{k'}\sigma(\vw_{k'}^{\T}\vx_i)-y_i\right)(\veta_N)_{k}a_k\sigma(\vw_k^\T\vx_j)(\veta_N)_{k}(\veta_N)_{r}\right]\\
%%%%%%%%%%%%%%%%%%%%%%%%
=&\Exp\left[\sum_{k'=1, k'\neq k, k'\neq r
 }^m (\veta_N)_{k'}a_{k'}\sigma(\vw_{k'}^{\T}\vx_i)-y_i 
 \right]a_k\sigma(\vw_k^\T\vx_j)\Exp\left[ (\veta_N)_{k}^2(\veta_N)_{r}\right] =\frac{a_k\sigma(\vw_k^\T\vx_j)}{p}e_{i,\backslash k, \backslash r} ,
%%%%%%%%%%%%%%%%%%%%%%%%
%%%%%%%%%%%%%%%%%%%%%%%%
\end{align*}
%%%%%%%%%%%%%%%%%%%%%%%%%%%%%%%%%%%%%%%%%%%%%%%%%%%%%%%%%%%%
by similar reasoning and symmetry, the third part reads 
%%%%%%%%%%%%%%%%%%%%%%%%%%%%%%%%%%%%%%%%%%%%%%%%%%%%%%%%%%%%
\begin{align*}
%%%%%%%%%%%%%%%%%%%%%%%%
&\Exp\left[ \left(\sum_{k'=1, k'\neq k, k'\neq r
 }^m (\veta_N)_{k'}a_{k'}\sigma(\vw_{k'}^{\T}\vx_i)-y_i\right)(\veta_N)_{r}a_r\sigma(\vw_r^\T\vx_j)(\veta_N)_{k}(\veta_N)_{r}\right]\\
%%%%%%%%%%%%%%%%%%%%%%%%
=&\Exp\left[\sum_{k'=1, k'\neq k, k'\neq r
 }^m (\veta_N)_{k'}a_{k'}\sigma(\vw_{k'}^{\T}\vx_i)-y_i 
 \right]a_r\sigma(\vw_r^\T\vx_j)\Exp\left[ (\veta_N)_{k}(\veta_N)_{r}^2\right] =\frac{a_r\sigma(\vw_r^\T\vx_j)}{p}e_{i,\backslash k, \backslash r} ,
%%%%%%%%%%%%%%%%%%%%%%%%
%%%%%%%%%%%%%%%%%%%%%%%%
\end{align*}
%%%%%%%%%%%%%%%%%%%%%%%%%%%%%%%%%%%%%%%%%%%%%%%%%%%%%%%%%%%%
also by similar reasoning and symmetry, the  fourth part reads 
%%%%%%%%%%%%%%%%%%%%%%%%%%%%%%%%%%%%%%%%%%%%%%%%%%%%%%%%%%%%
%%%%%%%%%%%%%%%%%%%%%%%%%%%%%%%%%%%%%%%%%%%%%%%%%%%%%%%%%%%%
\begin{align*}
%%%%%%%%%%%%%%%%%%%%%%%%
&\Exp\left[ \left(\sum_{k'=1, k'\neq k, k'\neq r
 }^m (\veta_N)_{k'}a_{k'}\sigma(\vw_{k'}^{\T}\vx_j)-y_j\right)(\veta_N)_{k}a_k\sigma(\vw_k^\T\vx_i)(\veta_N)_{k}(\veta_N)_{r}\right]\\
%%%%%%%%%%%%%%%%%%%%%%%%
=&\Exp\left[\sum_{k'=1, k'\neq k, k'\neq r
 }^m (\veta_N)_{k'}a_{k'}\sigma(\vw_{k'}^{\T}\vx_j)-y_j 
 \right]a_k\sigma(\vw_k^\T\vx_i)\Exp\left[ (\veta_N)_{k}^2(\veta_N)_{r}\right] =\frac{a_k\sigma(\vw_k^\T\vx_i)}{p}e_{j,\backslash k, \backslash r} ,
%%%%%%%%%%%%%%%%%%%%%%%%
%%%%%%%%%%%%%%%%%%%%%%%%
\end{align*}
%%%%%%%%%%%%%%%%%%%%%%%%%%%%%%%%%%%%%%%%%%%%%%%%%%%%%%%%%%%%
and  the  fifth part reads 
%%%%%%%%%%%%%%%%%%%%%%%%%%%%%%%%%%%%%%%%%%%%%%%%%%%%%%%%%%%%
\begin{align*}
%%%%%%%%%%%%%%%%%%%%%%%%
 \Exp\left[  (\veta_N)_{k}a_k\sigma(\vw_k^\T\vx_i)(\veta_N)_{k}a_k\sigma(\vw_k^\T\vx_j)(\veta_N)_{k}(\veta_N)_{r}\right]&=\Exp\left[(\veta_N)_{k}^3(\veta_N)_{r }a_k^2\sigma(\vw_k^\T\vx_i)\sigma(\vw_k^\T\vx_j)\right]\\
%%%%%%%%%%%%%%%%%%%%%%%%
&=\frac{1}{p^2}a_k^2\sigma(\vw_k^\T\vx_i)\sigma(\vw_k^\T\vx_j),
%%%%%%%%%%%%%%%%%%%%%%%%
\end{align*}
%%%%%%%%%%%%%%%%%%%%%%%%%%%%%%%%%%%%%%%%%%%%%%%%%%%%%%%%%%%%
and  the sixth part reads 
%%%%%%%%%%%%%%%%%%%%%%%%%%%%%%%%%%%%%%%%%%%%%%%%%%%%%%%%%%%%
\begin{align*}
%%%%%%%%%%%%%%%%%%%%%%%%
& \Exp\left[  (\veta_N)_{k}a_k\sigma(\vw_k^\T\vx_i)(\veta_N)_{r}a_r\sigma(\vw_r^\T\vx_j)(\veta_N)_{k}(\veta_N)_{r}\right]\\
%%%%%%%%%%%%%%%%%%%%%%%%
=&\Exp\left[(\veta_N)_{k}^2(\veta_N)_{r}^2a_k a_r\sigma(\vw_k^\T\vx_i)\sigma(\vw_r^\T\vx_j)\right] =\frac{1}{p^2}a_k a_r\sigma(\vw_k^\T\vx_i)\sigma(\vw_r^\T\vx_j),
%%%%%%%%%%%%%%%%%%%%%%%%
\end{align*}
%%%%%%%%%%%%%%%%%%%%%%%%%%%%%%%%%%%%%%%%%%%%%%%%%%%%%%%%%%%%
also by similar reasoning and symmetry, the  seventh  part reads 
%%%%%%%%%%%%%%%%%%%%%%%%%%%%%%%%%%%%%%%%%%%%%%%%%%%%%%%%%%%% 
\begin{align*}
%%%%%%%%%%%%%%%%%%%%%%%%
&\Exp\left[ \left(\sum_{k'=1, k'\neq k, k'\neq r
 }^m (\veta_N)_{k'}a_{k'}\sigma(\vw_{k'}^{\T}\vx_j)-y_j\right)(\veta_N)_{r}a_r\sigma(\vw_r^\T\vx_i)(\veta_N)_{k}(\veta_N)_{r}\right]\\
%%%%%%%%%%%%%%%%%%%%%%%%
=&\Exp\left[\sum_{k'=1, k'\neq k, k'\neq r
 }^m (\veta_N)_{k'}a_{k'}\sigma(\vw_{k'}^{\T}\vx_j)-y_j 
 \right]a_r\sigma(\vw_r^\T\vx_i)\Exp\left[ (\veta_N)_{k}(\veta_N)_{r}^2\right] =\frac{a_r\sigma(\vw_r^\T\vx_i)}{p}e_{j,\backslash k, \backslash r} ,
%%%%%%%%%%%%%%%%%%%%%%%%
%%%%%%%%%%%%%%%%%%%%%%%%
\end{align*}
%%%%%%%%%%%%%%%%%%%%%%%%%%%%%%%%%%%%%%%%%%%%%%%%%%%%%%%%%%%%
and  the  eighth part reads 
%%%%%%%%%%%%%%%%%%%%%%%%%%%%%%%%%%%%%%%%%%%%%%%%%%%%%%%%%%%%
\begin{align*}
%%%%%%%%%%%%%%%%%%%%%%%%
 \Exp\left[  (\veta_N)_{r}a_r\sigma(\vw_r^\T\vx_i)(\veta_N)_{k}a_k\sigma(\vw_k^\T\vx_j)(\veta_N)_{k}(\veta_N)_{r}\right]&=\Exp\left[(\veta_N)_{k}^2(\veta_N)_{r }^2a_k a_r\sigma(\vw_k^\T\vx_i)\sigma(\vw_r^\T\vx_j)\right]\\
%%%%%%%%%%%%%%%%%%%%%%%%
&=\frac{1}{p^2}a_k a_r\sigma(\vw_k^\T\vx_i)\sigma(\vw_r^\T\vx_j),
%%%%%%%%%%%%%%%%%%%%%%%%
\end{align*}
%%%%%%%%%%%%%%%%%%%%%%%%%%%%%%%%%%%%%%%%%%%%%%%%%%%%%%%%%%%%
and  the ninth part reads 
%%%%%%%%%%%%%%%%%%%%%%%%%%%%%%%%%%%%%%%%%%%%%%%%%%%%%%%%%%%%
\begin{align*}
%%%%%%%%%%%%%%%%%%%%%%%%
& \Exp\left[  (\veta_N)_{r}a_r\sigma(\vw_r^\T\vx_i)(\veta_N)_{r}a_r\sigma(\vw_r^\T\vx_j)(\veta_N)_{k}(\veta_N)_{r}\right]\\
%%%%%%%%%%%%%%%%%%%%%%%%
=&\Exp\left[(\veta_N)_{k}(\veta_N)_{r}^3a_r^2\sigma(\vw_r^\T\vx_i)\sigma(\vw_r^\T\vx_j)\right] =\frac{1}{p^2}a_r^2\sigma(\vw_r^\T\vx_i)\sigma(\vw_r^\T\vx_j).
%%%%%%%%%%%%%%%%%%%%%%%%
\end{align*}
%%%%%%%%%%%%%%%%%%%%%%%%%%%%%%%%%%%%%%%%%%%%%%%%%%%%%%%%%%%%
To sum up,  
%%%%%%%%%%%%%%%%%%%%%%%%%%%%%%%%%%%%%%%%%%%%%%%%%%%%%%%%%%%%
\begin{align*}
&\Exp\left[e_{i}^Ne_{j}^N(\veta_N)_{k}(\veta_N)_{r}\right] \\
%%%%%%%%%%%%%%%%%%%%%%%%
=&e_{i,\backslash k, \backslash r}e_{j,\backslash k, \backslash r}+\left(\frac{1}{p}-1\right)\left(\sum_{k'=1, k'\neq k, k'\neq r}^ma_{k'}^2\sigma(\vw_{k'}^{\T}\vx_i)\sigma(\vw_{k'}^{\T}\vx_j)\right)+\frac{a_k\sigma(\vw_k^\T\vx_j)}{p}e_{i,\backslash k, \backslash r}\\
%%%%%%%%%%%%%%%%%%%%%%%%
&~+\frac{a_r\sigma(\vw_r^\T\vx_j)}{p}e_{i,\backslash k, \backslash r}+\frac{a_k\sigma(\vw_k^\T\vx_i)}{p}e_{j,\backslash k, \backslash r}+\frac{1}{p^2}a_k^2\sigma(\vw_k^\T\vx_i)\sigma(\vw_k^\T\vx_j)\\
%%%%%%%%%%%%%%%%%%%%%%%%
&~+\frac{1}{p^2}a_k a_r\sigma(\vw_k^\T\vx_i)\sigma(\vw_r^\T\vx_j)+\frac{a_r\sigma(\vw_r^\T\vx_i)}{p}e_{j,\backslash k, \backslash r}\\
%%%%%%%%%%%%%%%%%%%%%%%%
&~+\frac{1}{p^2}a_k a_r\sigma(\vw_k^\T\vx_i)\sigma(\vw_r^\T\vx_j)+\frac{1}{p^2}a_r^2\sigma(\vw_r^\T\vx_i)\sigma(\vw_r^\T\vx_j),
%%%%%%%%%%%%%%%%%%%%%%%%
%%%%%%%%%%%%%%%%%%%%%%%%
%%%%%%%%%%%%%%%%%%%%%%%%
%%%%%%%%%%%%%%%%%%%%%%%%
%%%%%%%%%%%%%%%%%%%%%%%%
%%%%%%%%%%%%%%%%%%%%%%%%
%%%%%%%%%%%%%%%%%%%%%%%%
\end{align*}
%%%%%%%%%%%%%%%%%%%%%%%%%%%%%%%%%%%%%%%%%%%%%%%%%%%%%%%%%%%%
and 
%%%%%%%%%%%%%%%%%%%%%%%%%%%%%%%%%%%%%%%%%%%%%%%%%%%%%%%%%%%%
\begin{align*}
&\Exp\left[e_{i}^N(\veta_N)_{k}\right]\Exp\left[e_{j}^N(\veta_N)_{r}\right] \\
%%%%%%%%%%%%%%%%%%%%%%%%
=&\left(e_{i,\backslash k, \backslash r}+  a_r \sigma(\vw_r^\T\vx_i)+\frac{1}{p}a_k \sigma(\vw_k^\T\vx_i)\right)\left(e_{j,\backslash k, \backslash r}+  a_k \sigma(\vw_k^\T\vx_j)+\frac{1}{p}a_r \sigma(\vw_r^\T\vx_j)\right) \\
%%%%%%%%%%%%%%%%%%%%%%%%
=&e_{i,\backslash k, \backslash r}e_{j,\backslash k, \backslash r}+ e_{i,\backslash k, \backslash r}a_k \sigma(\vw_k^\T\vx_j)+\frac{1}{p}e_{i,\backslash k, \backslash r}a_r \sigma(\vw_r^\T\vx_j)+a_r \sigma(\vw_r^\T\vx_i)e_{j,\backslash k, \backslash r}\\
%%%%%%%%%%%%%%%%%%%%%%%%
&~+a_ra_k\sigma(\vw_r^\T\vx_i)\sigma(\vw_k^\T\vx_j)+\frac{1}{p}a_r^2\sigma(\vw_r^\T\vx_i)\sigma(\vw_r^\T\vx_j)+\frac{1}{p}a_k\sigma(\vw_k^\T\vx_i)e_{j,\backslash k, \backslash r}\\
%%%%%%%%%%%%%%%%%%%%%%%%
&~+\frac{1}{p}a_k^2\sigma(\vw_k^\T\vx_i)\sigma(\vw_k^\T\vx_j)+\frac{1}{p^2}a_ra_k\sigma(\vw_k^\T\vx_i)\sigma(\vw_r^\T\vx_j),
%%%%%%%%%%%%%%%%%%%%%%%%
%%%%%%%%%%%%%%%%%%%%%%%%
%%%%%%%%%%%%%%%%%%%%%%%%
%%%%%%%%%%%%%%%%%%%%%%%%
\end{align*}
%%%%%%%%%%%%%%%%%%%%%%%%%%%%%%%%%%%%%%%%%%%%%%%%%%%%%%%%%%%% 
hence
%%%%%%%%%%%%%%%%%%%%%%%%%%%%%%%%%%%%%%%%%%%%%%%%%%%%%%%%%%%%
\begin{align*}
&\mathrm{Cov}\left(e_{i}^N (\veta_N)_{k},  e_{j}^N (\veta_N)_{r}\right)\\
%%%%%%%%%%%%%%%%%%%%%%%%
=&    \Exp\left[e_{i}^Ne_{j}^N(\veta_N)_{k}(\veta_N)_{r} \right]- \Exp\left[e_{i}^N(\veta_N)_{k}\right]\Exp\left[ e_{i}^N(\veta_N)_{r}\right]\\
%%%%%%%%%%%%%%%%%%%%%%%%
=&\left(\frac{1}{p}-1\right)\left(\sum_{k'=1, k'\neq k, k'\neq r}^ma_{k'}^2\sigma(\vw_{k'}^{\T}\vx_i)\sigma(\vw_{k'}^{\T}\vx_j)\right)+\left(\frac{1}{p}-1\right)a_k\sigma(\vw_k^\T\vx_j)e_{i,\backslash k, \backslash r}\\
%%%%%%%%%%%%%%%%%%%%%%%%
&~+\left(\frac{1}{p}-1\right)a_r\sigma(\vw_r^\T\vx_i)e_{j,\backslash k, \backslash r}+\left(\frac{1}{p^2}-\frac{1}{p}\right)a_r^2\sigma(\vw_r^\T\vx_i)\sigma(\vw_r^\T\vx_j)\\
%%%%%%%%%%%%%%%%%%%%%%%%
&~+\left(\frac{1}{p^2}-\frac{1}{p}\right)a_k^2\sigma(\vw_k^\T\vx_i)\sigma(\vw_k^\T\vx_j)+\left(\frac{1}{p^2}-1\right)a_ra_k\sigma(\vw_r^\T\vx_i)\sigma(\vw_k^\T\vx_j),
%%%%%%%%%%%%%%%%%%%%%%%%
%%%%%%%%%%%%%%%%%%%%%%%%
%%%%%%%%%%%%%%%%%%%%%%%%
%%%%%%%%%%%%%%%%%%%%%%%%
%%%%%%%%%%%%%%%%%%%%%%%%
\end{align*}
%%%%%%%%%%%%%%%%%%%%%%%%%%%%%%%%%%%%%%%%%%%%%%%%%%%%%%%%%%%% 
by summation over the indices $i$ and $j$, the covariance matrix reads
%%%%%%%%%%%%%%%%%%%%%%%%%%%%%%%%%%%%%%%%%%%%%%%%%%%%%%%%%%%%
\begin{align*}
&\mSigma_{kr}(\vtheta_{N-1}) = \mathrm{Cov}\left( \nabla_{\vq_k}R_{\fS}^\mathrm{drop}\left(\vtheta_{N-1}; \veta_{N}\right), \nabla_{\vq_r}R_{\fS}^\mathrm{drop}\left(\vtheta_{N-1}; \veta_{N}\right)\right) \\
%%%%%%%%%%%%%%%%%%%%%%%%
=&\left(\frac{1}{p}-1\right)\left(\frac{1}{n}\sum_{i=1}^n\left(e_{i,\backslash k, \backslash r}+\frac{1}{p}a_k\sigma(\vw_k^\T\vx_i)+\frac{1}{p}a_r\sigma(\vw_r^\T\vx_i)\right)\nabla_{\vq_k}\left(a_k\sigma(\vw_k^{\T}\vx_i)\right)\right)\\
&~~~~~~~~~~~~~~~~{\otimes}\left(\frac{1}{n}\sum_{i=1}^na_k\sigma(\vw_k^\T\vx_i)\nabla_{\vq_r}\left(a_r\sigma(\vw_r^{\T}\vx_i)\right)\right)\\
%%%%%%%%%%%%%%%%%%%%%%%%
&+\left(\frac{1}{p}-1\right)\left(\frac{1}{n}\sum_{i=1}^na_r\sigma(\vw_r^\T\vx_i)\nabla_{\vq_k}\left(a_k\sigma(\vw_k^{\T}\vx_i)\right)\right) \\
&~~~~~~~~~~~~~~~~{\otimes}\left(\frac{1}{n}\sum_{i=1}^n\left(e_{i,\backslash k, \backslash r}+a_k\sigma(\vw_k^\T\vx_i)+\frac{1}{p}a_r\sigma(\vw_r^\T\vx_i)\right)\nabla_{\vq_r}\left(a_r\sigma(\vw_r^{\T}\vx_i)\right)\right),
\end{align*}

\newpage

\section{The structural similarity between Hessian and covariance}
We can derive the Hessian of the loss landscape in the expectation sense with respect to the dropout noise $\veta$ and the covariance matrix of dropout noise under intuitive approximations. We first show our assumptions as follows:

\begin{assump}\label{ass:1}
 The NN piece-wise linear activation.
\end{assump}

\begin{assump}\label{ass:2}
The parameters of NN's output layer are fixed during training.
\end{assump}

\begin{assump}\label{ass:3}
    We study the loss landscape after training reaches a stable stage, i.e., the loss function in the sense of expectation is small enough,
    \begin{equation*}
        \Exp_{\veta}\nabla_{\vtheta} R_{S}^{\mathrm{drop}}(\vtheta;\veta)\approx \mzero.
    \end{equation*}
\end{assump}

\textbf{Hessian matrix with dropout regularization }Based on the Assumption \ref{ass:1}, \ref{ass:2}, the Hessian matrix of the loss function with respect to $f_{\vtheta,\veta}^{\mathrm{drop}}(\vx)$ can be written in the mean sense as:
% \begin{equation*}
%     \tilde{R}_S^{\mathrm{drop}}(\vtheta)=\RS \left(\vtheta\right)+ \frac{q}{2n(1-q)}\sum_{i=1}^{n}\left( \sum_{j=1}^{m}\Vert \mW^{[L]}_{j}\sigma(f^{[L-1]}_{\vtheta, j}(\vx_i)) \Vert^2\right)+\frac{\epsilon}{4K}\sum_{k=1}^{K} \Vert \nabla_{\vtheta} \RS ^\mathrm{drop}\left(\vtheta, \veta_{k}\right) \Vert ^2,
% \end{equation*}
\begin{equation*}
        \mH(\vtheta)\approx\frac{1}{n} \sum_{i=1}^{n}\left[ \nabla_{\vtheta} f_{\vtheta}\left(\vx_{i}\right){\otimes} \nabla_{\vtheta} f_{\vtheta}\left(\vx_{i}\right)+ \frac{1-p}{p} \sum_{r=1}^{m} \nabla_{\vq_r}\left(a_r\sigma(\vw_r^{\T}\vx_i)\right)   {\otimes}\nabla_{\vq_r}\left(a_r\sigma(\vw_r^{\T}\vx_i)\right) \right],
\end{equation*}
where  $\mH(\vtheta):=\nabla^2_{\vtheta}L_{\fS}(\vtheta)$.

% \begin{equation}
%     \Exp_{\veta}\tilde{R}_S^{\mathrm{drop}}(\vtheta)=\RS \left(\vtheta\right)+ \frac{q}{2n(1-q)}\sum_{i=1}^{n} \sum_{r=1}^{m}\Vert \mW^{[L]}_{j}\sigma(f^{[L-1]}_{\vtheta, j}(\vx_i)) \Vert^2+\frac{\epsilon}{4}\Exp_{\veta} \Vert \nabla_{\vtheta} \RS ^\mathrm{drop}\left(\vtheta, \veta\right) \Vert ^2,\label{equ:equal}
% \end{equation}

\label{thm:1}

\begin{proof}

We first compute the Hessian matrix after taking expectations with respect to the dropout variable, 
\begin{equation}
    \nabla^2_{\vtheta}L_{\fS}(\vtheta)=\nabla_{\vtheta}^{2}R_{S}(\vtheta)+\frac{1-p}{2np}\sum_{i=1}^{n}\sum_{r=1}^{m}\nabla_{\vq_r}^{2}\left(a_r\sigma(\vw_r^{\T}\vx_i)\right)  ^2.
    \label{equ:hess}
\end{equation}

The first and second terms on the RHS of the Eq. (\ref{equ:hess}) are as follows, 

\begin{equation*}
    \nabla_{\vtheta}^{2}R_{S}(\vtheta)=\frac{1}{n} \sum_{i=1}^{n}\left( \nabla_{\vtheta} f_{\vtheta}\left(\vx_{i}\right) \otimes \nabla_{\vtheta} f_{\vtheta}\left(\vx_{i}\right)+(f_{\vtheta}\left(\vx_{i}\right)-y_{i})\cdot \nabla_{\vtheta}^{2}f_{\vtheta}\left(\vx_{i}\right) \right)
\end{equation*}
\begin{equation*}
\begin{aligned}
        &\frac{1-p}{2np}\sum_{i=1}^{n}\sum_{r=1}^{m}\nabla_{\vq_r}^{2}\left(a_r\sigma(\vw_r^{\T}\vx_i)\right)  ^2 \\
        ~&= \frac{1-p}{np} \sum_{i=1}^{n} \sum_{r=1}^{m}\left( \nabla_{\vq_r}\left(a_r\sigma(\vw_r^{\T}\vx_i)\right) \otimes \nabla_{\vq_r}\left(a_r\sigma(\vw_r^{\T}\vx_i)\right)+\left(a_r\sigma(\vw_r^{\T}\vx_i)\right) \cdot\nabla_{\vq_r}^{2}\left(a_r\sigma(\vw_r^{\T}\vx_i)\right)  ^2\right). 
\end{aligned}
\end{equation*}
Note that for linear activate function, $\nabla_{\vtheta}^{2}f_{\vtheta}\left(\vx_{i}\right)=\nabla_{\vq_r}^{2}\left(a_r\sigma(\vw_r^{\T}\vx_i)\right)  ^2=\mzero,\ a.e. \ \forall i \in [n], \forall r \in [m]$, we have

\begin{equation*}
    \nabla_{\vtheta}^{2}R_{S}(\vtheta)=\frac{1}{n} \sum_{i=1}^{n} \nabla_{\vtheta} f_{\vtheta}\left(\vx_{i}\right) \otimes \nabla_{\vtheta} f_{\vtheta}\left(\vx_{i}\right)
\end{equation*}
\begin{equation*}
    \frac{1-p}{2np}\sum_{i=1}^{n}\sum_{r=1}^{m}\nabla_{\vq_r}^{2}\left(a_r\sigma(\vw_r^{\T}\vx_i)\right)  ^2 = \frac{1-p}{np} \sum_{i=1}^{n} \sum_{r=1}^{m} \nabla_{\vq_r}\left(a_r\sigma(\vw_r^{\T}\vx_i)\right) \otimes  \nabla_{\vq_r}\left(a_r\sigma(\vw_r^{\T}\vx_i)\right). 
\end{equation*}

Thus the Eq. (\ref{equ:hess}) can be rewritten as
\begin{equation*}
    \mH(\vtheta)=\frac{1}{n} \sum_{i=1}^{n}\left( \nabla_{\vtheta} f_{\vtheta}\left(\vx_{i}\right) \otimes \nabla_{\vtheta} f_{\vtheta}\left(\vx_{i}\right)+ \frac{1-p}{p} \sum_{r=1}^{m} \nabla_{\vq_r}\left(a_r\sigma(\vw_r^{\T}\vx_i)\right) \otimes  \nabla_{\vq_r}\left(a_r\sigma(\vw_r^{\T}\vx_i)\right) \right).
\end{equation*}
\end{proof}

\textbf{Covariance matrix with dropout regularization} Based on the Assumption \ref{ass:3}, the covariance matrix of the loss function under the randomness of dropout variable $\veta$ and data $\vx$ can be written as:
% \begin{equation*}
%     \tilde{R}_S^{\mathrm{drop}}(\vtheta)=\RS \left(\vtheta\right)+ \frac{q}{2n(1-q)}\sum_{i=1}^{n}\left( \sum_{r=1}^{m}\Vert \mW^{[L]}_{j}\sigma(f^{[L-1]}_{\vtheta, j}(\vx_i)) \Vert^2\right)+\frac{\epsilon}{4K}\sum_{k=1}^{K} \Vert \nabla_{\vtheta} \RS ^\mathrm{drop}\left(\vtheta, \veta_{k}\right) \Vert ^2,
% \end{equation*}
\begin{equation*}
    \begin{aligned}
    \mSigma(\vtheta)&\approx\frac{1}{n}\sum_{i=1}^{n} \left[ l_{i,1} \nabla_{\vtheta}f_{\vtheta}(\vx_i){\otimes}\nabla_{\vtheta}f_{\vtheta}(\vx_i)+ l_{i,2} \frac{1-p}{p}\sum_{r=1}^{m} \nabla_{\vq_r}\left(a_r\sigma(\vw_r^{\T}\vx_i)\right)   {\otimes}\nabla_{\vq_r}\left(a_r\sigma(\vw_r^{\T}\vx_i)\right)\right],
    \end{aligned}
\end{equation*}
where $l_{i,1}:=  (e_{i})^2+\frac{1-p}{p}\sum_{r=1}^m a_r^2\sigma(\vw_r^\T\vx_i)^2$, $l_{i,2}:=  (e_{i})^2 $ .
% \begin{equation}
%     \Exp_{\veta}\tilde{R}_S^{\mathrm{drop}}(\vtheta)=\RS \left(\vtheta\right)+ \frac{q}{2n(1-q)}\sum_{i=1}^{n} \sum_{r=1}^{m}\Vert \mW^{[L]}_{j}\sigma(f^{[L-1]}_{\vtheta, j}(\vx_i)) \Vert^2+\frac{\epsilon}{4}\Exp_{\veta} \Vert \nabla_{\vtheta} \RS ^\mathrm{drop}\left(\vtheta, \veta\right) \Vert ^2,\label{equ:equal}
% \end{equation}

\label{thm:2}

\begin{proof}
For simplicity, we approximate the loss function through Taylor expansion, which is also used in \cite{wei2020implicit},  
\begin{equation*}
    \ell(f_{\vtheta}(\vx_i;\veta),y_{i}) \approx \ell(f_{\vtheta}(\vx_i),y_{i})+(f_{\vtheta}(\vx_i)-y_{i})\sum_{r=1}^{m}a_r (\veta-\vone)_{r} \sigma(\vw_r^{\T}\vx_i),
\end{equation*}
where $\ell(f_{\vtheta}(\vx_i;\veta),y_{i})=\frac{1}{2}\left(f_{\vtheta}(\vx_i;\veta)-y_i\right)^2$, $\ell(f_{\vtheta}(\vx_i),y_{i})=\frac{1}{2}\left(f_{\vtheta}(\vx_i)-y_i\right)^2$. The covariance matrix under dropout regularization is
\begin{equation*}
\begin{aligned}
    \mSigma(\vtheta)
    &\approx\frac{1}{n}\sum_{i=1}^{n}\Exp_{\veta}\left(\nabla_{\vtheta}\ell(f_{\vtheta}(\vx_i;\veta),y_{i}) \otimes \nabla_{\vtheta}\ell(f_{\vtheta}(\vx_i;\veta),y_{i}) \right) - \nabla_{\vtheta} \Exp_{\veta}R_{S}^{\mathrm{drop}}(\vtheta;\veta) \otimes \nabla_{\vtheta} \Exp_{\veta}R_{S}^{\mathrm{drop}}(\vtheta;\veta)\\
    &\approx\frac{1}{n}\sum_{i=1}^{n}\Exp_{\veta}\left(\nabla_{\vtheta}\ell(f_{\vtheta}(\vx_i;\veta),y_{i}) \otimes \nabla_{\vtheta}\ell(f_{\vtheta}(\vx_i;\veta),y_{i}) \right) .
\end{aligned}
\end{equation*}
Combining the properties of the dropout variable $\veta$, we have, 
\begin{equation}
\begin{aligned}
    \mSigma(\vtheta)&\approx\frac{1}{n}\sum_{i=1}^{n}\nabla_{\vtheta}\ell(f_{\vtheta}(\vx_i),y_{i}) \otimes \nabla_{\vtheta}\ell(f_{\vtheta}(\vx_i),y_{i}) \\
    &+\frac{1}{n}\sum_{i=1}^{n} \Exp_{\veta} \left(\sum_{r=1}^{m}(\veta-\vone)_{r}\nabla_{\vq_r}(a_r\sigma(\vw_r^{\T}\vx_i)e_{i}) \otimes \sum_{r=1}^{m}(\veta-\vone)_{r}\nabla_{\vq_r}(a_r\sigma(\vw_r^{\T}\vx_i)e_{i}) \right) \\
    &=\frac{1}{n}\sum_{i=1}^{n}\left(\nabla_{\vtheta}\ell(f_{\vtheta}(\vx_i),y_{i}) \otimes \nabla_{\vtheta}\ell(f_{\vtheta}(\vx_i),y_{i})+\frac{1-p}{p} \sum_{r=1}^{m}\nabla_{\vq_r}(a_r\sigma(\vw_r^{\T}\vx_i)e_{i})\otimes \nabla_{\vq_r}(a_r\sigma(\vw_r^{\T}\vx_i)e_{i}) \right)\\
    &:= \frac{1}{n}\sum_{i=1}^{n}\left(\mSigma_{1}(\vx_{i},y_{i})+\frac{1-p}{p}\mSigma_{2}(\vx_{i},y_{i})\right).
\end{aligned}    
\label{equ:cov}
\end{equation}

We calculate the two terms on the RHS of the Eq. (\ref{equ:cov}) separately:

\begin{equation*}
    \mSigma_{1}(\vx_{i},y_{i})=(e_i)^2 \cdot \nabla_{\vtheta}f_{\vtheta}(\vx_i)\otimes \nabla_{\vtheta}f_{\vtheta}(\vx_i),
\end{equation*}

\begin{equation*}
\begin{aligned}
    \mSigma_{2}(\vx_{i},y_{i})&=(e_{i})^2\sum_{r=1}^{m}\nabla_{\vq_r}(a_r\sigma(\vw_r^{\T}\vx_i))\otimes \nabla_{\vq_r}(a_r\sigma(\vw_r^{\T}\vx_i))+\nabla_{\vtheta}f_{\vtheta}(\vx_i)\otimes \nabla_{\vtheta}f_{\vtheta}(\vx_i)\sum_{r=1}^{m}(a_r\sigma(\vw_r^{\T}\vx_i))^2\\
    &+2\sum_{r=1}^{m}e_{i}a_r\sigma(\vw_r^{\T}\vx_i)\cdot\nabla_{\vtheta}e_{i} \otimes
    \nabla_{\vq_r}(a_r\sigma(\vw_r^{\T}\vx_i))\\
    &=(e_{i})^2\sum_{r=1}^{m}\nabla_{\vq_r}(a_r\sigma(\vw_r^{\T}\vx_i))\otimes \nabla_{\vq_r}(a_r\sigma(\vw_r^{\T}\vx_i))+\nabla_{\vtheta}f_{\vtheta}(\vx_i)\otimes \nabla_{\vtheta}f_{\vtheta}(\vx_i)\sum_{r=1}^{m}(a_r\sigma(\vw_r^{\T}\vx_i))^2\\
    &+\frac{1}{2}\sum_{r=1}^{m}\nabla_{\vtheta}(e_{i})^2 \otimes \nabla_{\vq_r}(a_r\sigma(\vw_r^{\T}\vx_i))^2.
\end{aligned}
\end{equation*}
Under the assumption that $\nabla_{\vtheta}(e_{i})^2=2\cdot \nabla_{\vtheta}\ell(f_{\vtheta}(\vx_i),y_{i})=\mzero$, $\forall i \in [n]$, we have 
\begin{equation*}
    \mSigma_{2}(\vx_{i},y_{i})=(e_{i})^2\sum_{r=1}^{m}\nabla_{\vq_r}(a_r\sigma(\vw_r^{\T}\vx_i))\otimes \nabla_{\vq_r}(a_r\sigma(\vw_r^{\T}\vx_i))+\nabla_{\vtheta}f_{\vtheta}(\vx_i)\otimes \nabla_{\vtheta}f_{\vtheta}(\vx_i)\sum_{r=1}^{m}(a_r\sigma(\vw_r^{\T}\vx_i))^2.
\end{equation*}
Thus the Eq. (\ref{equ:cov}) can be rewritten as
\begin{equation*}
\begin{aligned}
    \mSigma(\vtheta)&=\frac{1}{n}\sum_{i=1}^{n}\nabla_{\vtheta}f_{\vtheta}(\vx_i)\otimes \nabla_{\vtheta}f_{\vtheta}(\vx_i)\left((e_i)^2+ \frac{1-p}{p}\sum_{r=1}^{m}(a_r\sigma(\vw_r^{\T}\vx_i))^2\right)\\
    &+\frac{1-p}{np}\sum_{i=1}^{n}\sum_{r=1}^{m}(e_{i})^2\cdot\nabla_{\vq_r}(a_r\sigma(\vw_r^{\T}\vx_i))\otimes \nabla_{\vq_r}(a_r\sigma(\vw_r^{\T}\vx_i)).
\end{aligned}
\end{equation*}
Note that 
\begin{equation*}
    (e_i)^2+ \frac{1-p}{p}\sum_{r=1}^{m}(a_r\sigma(\vw_r^{\T}\vx_i))^2=\Exp_{\veta}2\ell(f_{\vtheta}(\vx_i;\veta),y_{i}), 
\end{equation*}
we have
\begin{equation*}
    \begin{aligned}
    \mSigma(\vtheta)&=\frac{2}{n}\sum_{i=1}^{n}\Exp_{\veta}\ell(f_{\vtheta}(\vx_i;\veta),y_{i})\cdot \nabla_{\vtheta}f_{\vtheta}(\vx_i)\otimes \nabla_{\vtheta}f_{\vtheta}(\vx_i)\\
    &+\frac{2(1-p)}{np}\sum_{i=1}^{n}\sum_{r=1}^{m}(\ell(f_{\vtheta}(\vx_i),y_{i}))\cdot\nabla_{\vq_r}(a_r\sigma(\vw_r^{\T}\vx_i))\otimes \nabla_{\vq_r}(a_r\sigma(\vw_r^{\T}\vx_i)).
    \end{aligned}
\end{equation*}
\end{proof}

%%%%%%%%%%%%%%%%%%%%%%%%%%%%%%%%%%%%%%%%%%%%%%%%%%%%%%%%%%%%

\end{document}